\def\eqref#1{equation~\ref{#1}}
\def\Eqref#1{Equation~\ref{#1}}
\def\1{\bm{1}}
\DeclareMathAlphabet{\mathsfit}{\encodingdefault}{\sfdefault}{m}{sl}
\SetMathAlphabet{\mathsfit}{bold}{\encodingdefault}{\sfdefault}{bx}{n}
\def\sD{{\mathbb{D}}}
\DeclareMathOperator*{\argmax}{arg\,max}
\DeclareMathOperator*{\argmin}{arg\,min}
\theoremstyle{plain}
\newtheorem{theorem}{Theorem}[section]
\newtheorem{proposition}[theorem]{Proposition}
\newtheorem{lemma}[theorem]{Lemma}
\newtheorem{corollary}[theorem]{Corollary}
\theoremstyle{definition}
\newtheorem{definition}[theorem]{Definition}
\newtheorem{assumption}[theorem]{Assumption}
\theoremstyle{remark}
\definecolor{goodcolor}{RGB}{9, 160, 63}
\definecolor{goodorange}{RGB}{255, 127, 0}
\definecolor{goodpurple}{RGB}{228, 26, 28}
\definecolor{goodblue}{RGB}{55, 126, 184}
\definecolor{goodgreen}{RGB}{77, 175, 74}
\definecolor{goodred}{RGB}{220, 20, 60}
\newcommand\myshade{90}
\newcommand{\maxucb}[1]{\colorbox{goodblue!20}{#1}}
\newcommand{\ucb}[1]{\colorbox{goodred!20}{#1}}
\newcommand{\OURALGO}[0]{MaxUCB}
\newcommand{\tabrepo}[1][]{TabRepo\ifx\relax#1\relax\else[#1]\fi}
\newcommand{\yahpogym}[1][]{YaHPOGym\ifx\relax#1\relax\else[#1]\fi}
\newcommand{\tabreporaw}[1][]{TabRepoRaw\ifx\relax#1\relax\else[#1]\fi}
\newcommand{\hebo}[1][]{Reshuffling\ifx\relax#1\relax\else[#1]\fi}
\newcommand{\numberOfBenchmarks}{four}
\newcommand{\QuantileBayesUCB}{\textit{Quantile~Bayes~UCB}}
\newcommand{\ERUCBS}{\textit{ER-UCB-S}}
\newcommand{\RisingBandits}{\textit{Rising~Bandits}}
\newcommand{\QoMaxSDA}{\textit{QoMax-SDA}}
\newcommand{\MaxMedian}{\textit{Max-Median}}
\newcommand{\UCB}{\textit{UCB}}
\newcommand{\SuccessiveHalving}{\textit{Successive~Halving}}
\newcommand{\randomsearch}{\textit{random~search}}
\newcommand{\SMAC}{\textit{SMAC}}
\newcommand{\combinedsearch}{\textit{combined search}}
\newcommand{\decomposedCASH}{\textit{two-level CASH}}
\newcommand{\ourProposition}{Lemma~\ref{theorem:lemma}}
\title{Put CASH on Bandits: A Max K-Armed Problem for \\
Automated Machine Learning}
\author{%
  Amir Rezaei Balef, Claire Vernade and Katharina Eggensperger \\
  Department of Computer Science, University of Tübingen \\
  \texttt{\{amir.rezaei-balef, claire.vernade, katharina.eggensperger\}@uni-tuebingen.de} \\
}
\begin{document}

\maketitle

\begin{abstract}
The Combined Algorithm Selection and Hyperparameter optimization (CASH) is a challenging resource allocation problem in the field of AutoML. We propose \OURALGO{}, a max $k$-armed bandit method to trade off exploring different model classes and conducting hyperparameter optimization. \OURALGO{} is specifically designed for the light-tailed and bounded reward distributions arising in this setting and, thus, provides an efficient alternative compared to classic max $k$-armed bandit methods assuming heavy-tailed reward distributions. We theoretically and empirically evaluate our method on four standard AutoML benchmarks demonstrating superior performance over prior approaches. 
We make our code and data available at \url{https://github.com/amirbalef/CASH_with_Bandits}.
\end{abstract}
\section{Introduction} 
\label{Sec:Introduction}
The performance of machine learning (ML) solutions is highly sensitive to the choice of algorithms and their hyperparameter configurations which can make finding an effective solution a challenging task. AutoML aims to reduce this complexity and make ML more accessible by automating these critical choices~\citep{hutter-book19a,baratchi-air24a}. 
For example, Hyperparameter optimization (HPO) methods focus on finding well-performing hyperparameter settings given a resource constraint, such as an iteration count or a time limit. 
However, in practice, it is often unclear which ML model class would perform best on a given dataset \citep{bischl2025}. The problem of jointly searching the model class and the appropriate hyperparameters has been coined CASH, Combined Algorithm Selection and Hyperparameter optimization~\citep{thornton-kdd13a}. As a prime example, on tabular data, a ubiquitous data modality~\citep{van2024tabular}, 
the state-of-the-art ML landscape covers classic ML methods, ensembles of gradient-boosted decision trees and modern deep learning approaches~\citep{kadra-neurips21a,gorishniy-neurips21a,gorishniy-iclr24a,mcelfresh-neuripsdbt23a,kohli2024towards,hollmann-iclr23a,holzmueller-neurips24a}.

\begin{figure}[htbp]
\centering
\includegraphics[height=3cm]{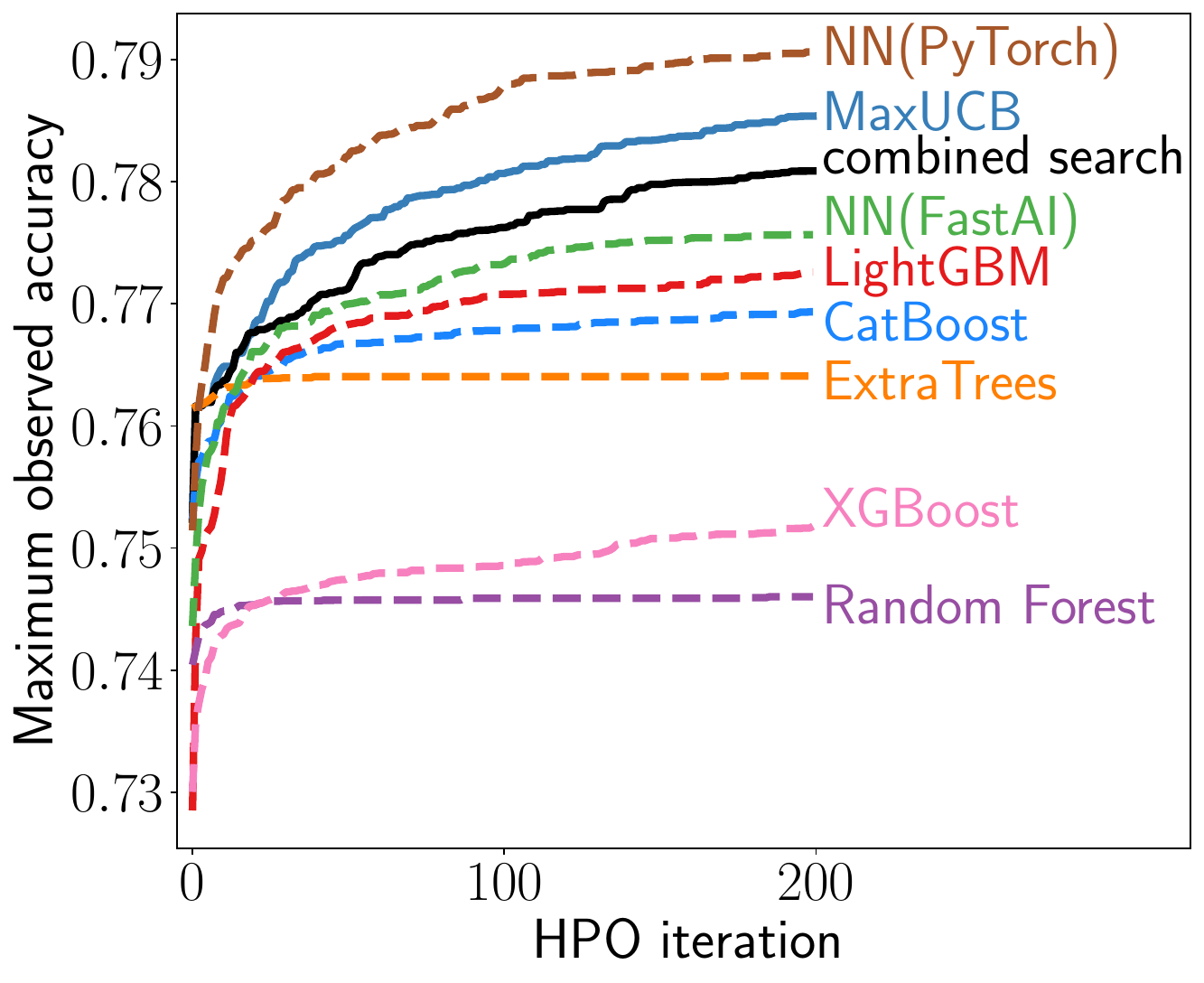}
\includegraphics[height=3.05cm]{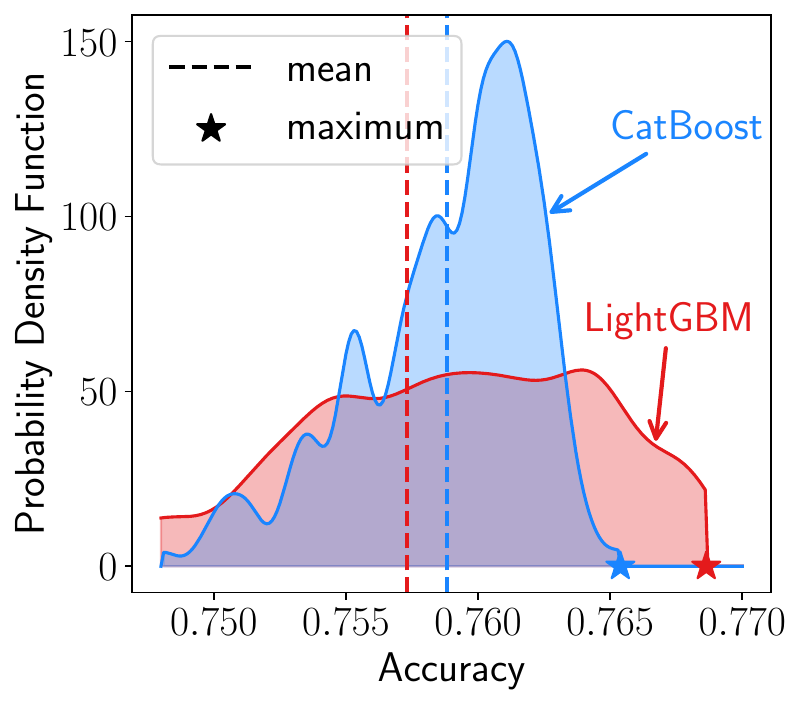}
\includegraphics[clip, trim=0.0cm -0.7cm 0.0cm 0cm,,height=3cm]{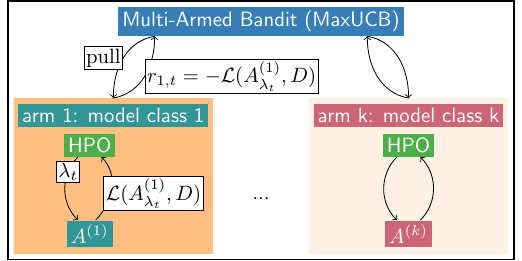}
\caption{(Left) \OURALGO{} (blue line) \textit{outperforms} \combinedsearch{} (black line) to identify the best-performing model class (brown line). (Middle) The irregular distribution of the empirical performance of model classes is left-skewed, and \textit{a higher mean may not correspond to a higher maximum}.
(Right) \OURALGO{} selects for which model to run one iteration of HPO during \textit{two-level optimization}.}
\label{fig:HPO_runs}
\end{figure}

A popular approach to address the CASH problem is to use categorical and conditional hyperparameters to run HPO directly on the combined hierarchical search space of models and hyperparameters. AutoML systems use this approach, which we call  \combinedsearch{}, to search well-performing ML pipelines~\citep{thornton-kdd13a,feurer-nips15a, komer-automl14a,kotthoff2017auto,feurer-jmlr22a}, but HPO remains inefficient in high-dimensional and hierarchical search spaces. 
A naive solution to address the scalability limitation is to run HPO independently for the smaller search spaces of each ML model class and then compare the found solutions. However, this solution often exceeds available computational resources and does not scale well with an increasing number of ML models.
Figure~\ref{fig:HPO_runs} (left) illustrates the difference between searching each space individually (colored dashed lines) and \combinedsearch{} (black line) on an exemplary dataset.

To leverage the efficiency of HPO in low dimensions, we use a Multi-Armed Bandit (MAB) method \citep{lattimore2020bandit} to dynamically allocate our budget. As shown in Figure~\ref{fig:HPO_runs} (right), each time the bandit strategy pulls an arm, it runs one iteration of HPO to evaluate a new configuration, resulting in a loss. The negative of the loss is used as reward feedback for the bandit algorithm. This approach is known as \decomposedCASH{} (\textit{decomposed CASH}) ~\citep{hoffman2014correlation, liu2020admm}.

While most classical MAB problems aim to maximize the average rewards over time, the goal of the bandit algorithm for \decomposedCASH{} should be to maximize the maximum reward observed over time: as illustrated in Figure~\ref{fig:HPO_runs} (middle), tuning the LightGBM (red) will eventually outperform CatBoost. This goal aligns with Max $K$-Armed Bandit (MKB) problems \citep{carpentier2014extreme,achab2017max,baudry2022efficient}, often referred to as Extreme Bandits.

We precisely address this open question through a thorough statistical analysis of the empirical reward distributions of HPO tasks. 
\textbf{Our main contribution is a state-of-the-art algorithm for \decomposedCASH{}  based on a novel extreme bandit algorithm we call \OURALGO{}} (Algorithm \ref{alg:pseudocode_MaxUCB}). We demonstrate the performance of our method on \numberOfBenchmarks{} benchmarks, which highlights the relevance of our assumptions for a wide variety of CASH problems. We analyze the theoretical performance of \OURALGO{} (Theorem \ref{theorem:number_suboptimal_MaxUCB}) through regret bounds that also justify our novel choice of exploration bonus for the type of distributions relevant to the CASH problem. Importantly, our objective is rather to propose a practical algorithm with good empirical performance on CASH rather than a general multi-purpose bandit algorithm, so our guarantees hold under carefully crafted assumptions that resolve previously open questions \citep{nishihara2016no}. 

\section{Solving CASH using Bandits}
\label{Sec:Problem_Formulation}
The CASH problem for supervised learning tasks is defined as follows~\citep{thornton-kdd13a}. Given a dataset $\sD = \{ D_{train}, D_{valid}\}$ of a supervised learning task, let $\mathcal{A} = \{A^{(1)}, ..., A^{(K)} \}$ be the set of $K$ candidate ML algorithms, where each algorithm $A^{(i)}$ has its own hyperparameter search space $\pmb{\Lambda}^{(i)}$. The goal is to search the joint algorithm and hyperparameter configuration space to find the optimal algorithm $A^{(i^*)}$ and its optimal hyperparameter configuration $\pmb{\lambda}^{*}$ that minimizes a loss metric $\mathcal{L}$, e.g., the validation error\footnote{We note that $\mathcal{L}$ can also be the result of k-fold cross-validation or other evaluation protocols measuring the expected performance of a model on unseen data~\citep{raschka-arxiv20a}.}. Formally,
\begin{equation}
A^{(i^*)}_{\pmb{\lambda}^{*}} \in \argmin_{ A^{(i)} \in \mathcal{A}, \pmb{\lambda} \in \pmb{\Lambda}^{(i)}  } \mathcal{L}(A^{(i)}_{\pmb{\lambda}}  ,\sD).
\end{equation}
For our approach, we study the decomposed variant \citep{hoffman2014correlation, liu2020admm} and address the following two-level optimization problem depicted in Figure~\ref{fig:HPO_runs} (right): at the upper level, we aim to find the overall best-performing ML model $A^{(i^*)}$ by selecting model  $A^{(i)} \in \mathcal{A}$ iteratively, and at the lower level, we aim to find the best-performing configuration $\pmb{\lambda^{*}}\in\pmb{\Lambda}^{(i)}$ for the selected model $A^{(i)}$. Formally,

\begin{align}
A^{(i^*)} \in \argmin_{A^{(i)} \in \mathcal{A}} \mathcal{L}(A^{(i)}_{\pmb{\lambda}^*}, \sD), \quad \text{s.t.} \quad \pmb{\lambda}^{*} \in \argmin_{\pmb{\lambda} \in \pmb{\Lambda^{(i)}}} \mathcal{L}(A^{(i)}_{\pmb{\lambda}}, \sD). \label{eq:decomposed_optimzation}
\end{align}

The right-hand side of \Eqref{eq:decomposed_optimzation} in the lower level can be efficiently addressed by existing iterative HPO methods such as Bayesian optimization (BO)~\citep{jones-jgo98a,garnett-book22a}, which has been demonstrated to perform well in practical settings~\citep{snoek-nips12a,chen-arxiv18a,cowenrivers-jair22a}. BO fits a surrogate model and uses an acquisition function to find a promising configuration to evaluate next. 
On the upper level, or left-hand side of \Eqref{eq:decomposed_optimzation}, the challenge is to carefully allocate the budget $T$ of HPO runs to the $K$ models in a manner that trades off exploration of the hyperparameter space of all models and exploitation (optimization) of the most promising model. As already noted in previous work, this is a typical MAB problem~\citep{cicirello-aaai05a, streeter2006asymptotically,nishihara2016no,metelli2022stochastic}.

At time $t$, the bandit algorithm chooses model $I_t \in \mathcal{A}$, and we denote $\pmb{\lambda}_t$ the configuration proposed by the HPO method in the lower level. As a reward $r_{i,t}$, we feed back to the bandit algorithm an evaluation of the negative loss:
\begin{equation}
\label{eq:rewards}
r_{i,t} =  -\mathcal{L}( A^{(i)}_{\pmb{\lambda}_t}  ,\sD). 
\end{equation}
In general, and as opposed to standard MAB, this reward process is not i.i.d. conditionally on the arm choices because the loss of the models depends on the progress of HPO on each model class (arm) as well as additional loss evaluation noise. To be able to design a tractable bandit algorithm, it is crucial to find an appropriate way to model this process to build controllable estimators. We focus on this aspect in the next section. 

To complete the bandit model of this problem, we need to choose a regret metric that defines the oracle objective we compare to, and indeed aligns with \Eqref{eq:decomposed_optimzation}. For HPO, the regret should target max-value objectives~\citep{jamieson2016non, nishihara2016no}. 
\begin{equation}
R(T) = \max_{k\leq K}\mathop{\mathbbm{E}[\max r_{k,t}]}_{t \leq T} - \mathop{\mathbbm{E}[\max r_{I_t,t}]}_{t \leq T}. 
\label{eq:extremebandit_regret}
\end{equation}
This regret describes the gap between the highest-possible oracle reward that could be obtained by pulling only the arm with the highest performance (left part) and the actual observed rewards obtained by applying our strategy (right part). Notably, the expectation is necessary to account for the inherent stochasticity of both the HPO procedure (e.g., random search or Bayesian optimization) and the ML models themselves (e.g., random initialization and training variability).

Instead of using MKB algorithms to directly address~\Eqref{eq:extremebandit_regret}, related prior works focus on alternative methods.
For example,~\citet{hu2021cascaded} proposed and analyzed the \textit{Extreme-Region Upper Confidence Bounds (ER-UCB)} algorithm maximizing the extreme region of the feedback distribution, assuming Gaussian rewards.
More recently, \citet{balef2024towards} have shown that existing MKB algorithms underperform when applied to \decomposedCASH{}, and they proposed methods for maximizing the quantile values instead of the maximum value.

As another alternative method, \citet{li2020efficient} framed the CASH problem as a Best Arm Identification (BAI) task and introduced the \textit{Rising Bandits} algorithm~\citep{li2020efficient}.
This MAB method assumes that the reward function for each arm increases with each pull, following a rested bandit model with non-decreasing payoffs~\citep{heidari2016tight} (which has been shown to have linear regret when the reward increment per pull exceeds a threshold~\citep{metelli2022stochastic}). \textit{Rising Bandits} can be used for our setting using the maximum observed performance of the HPO history as the reward. However, this algorithm assumes deterministic rewards and increasing concave reward functions. To weaken this assumption,~\citet{li2020efficient} introduced a hyperparameter to increase initial exploration.  \citet{mussi2024best} further weakened this assumption by assuming that the moving average of the rewards is an increasing concave function.

Generally, in BAI approaches, the goal is to identify the arm with the highest mean reward. Here, rewards are the result of HPO runs, so this objective does not align well with \Eqref{eq:decomposed_optimzation}: there is no reason to measure the quality of a model on average over a random subset of hyperparameters chosen by HPO. This approximation made by prior work is justified by the complexity of this modeling problem and the existence of solid foundations on BAI to build upon. But in this work, we propose a fully data-driven model that better fits the true CASH objective and results in better empirical results. 

\section{Data Analysis of HPO Tasks}
\label{sec:dataanalysis}
The reward process in \Eqref{eq:rewards} is complex as it depends on the model and on the chosen HPO algorithm. So, as discussed, it is necessary to model it. Rather than choosing a convenient parametric family, we conduct a thorough analysis of typical sequences of losses obtained on real benchmarks.  For each ML model class (corresponding to arms in our setup), we run $T=200$ iterations of HPO, with $32$ repetitions (each using a different seed) on a varying number of datasets on \numberOfBenchmarks{} AutoML benchmarks (see Appendix~\ref{app:experimentalsetup} and ~\ref{app:reward_distribution_analysis}).

We first analyze the \textit{survival function} of the reward distributions. Recall that for a random variable $X\sim d$, the survival function is defined by:   
$
x \mapsto G(x)=P_{X\sim d}(X \geq x).
$
Figure \ref{fig:HPO_ecdf_arms_distributions} shows the average empirical survival function of all observed performances (normalized between $0$ and $1$ for each task) for each arm. We rank model classes (i.e., arms) based on their best performance per dataset and report results on two benchmarks (see Appendix~\ref{app:details_on_assumptions} for more details and results on all benchmarks).
We observe that the reward distributions are 
\begin{enumerate}
    \item \textbf{bounded}. The reward, which measures the performance of a model, is determined by a score metric, e.g., accuracy. The extreme values vary for each arm and depend on the capability of the model class and the complexity of the task. Therefore, even if we run HPO indefinitely, achieving an infinite reward is impossible.
    Consequently, each arm has a bounded support with different maximum values, and at least a single optimal arm exists.
    $\rightarrow$ \textbf{We can define a sub-optimality gap (Definition ~\ref{def:suboptimality_gap}).} 
    \item \textbf{short-tailed, left-skewed, and the right tail is not heavy}.
    The rewards are concentrated near the maximal value per model class, and extreme events are not outliers. As the HPO method's performance reaches a certain level, further optimization often yields only small gains as optima tend to have flat regions~\citep{pushak-acm22a}.
    Therefore, many configurations perform similarly well, resulting in a skewed distribution.\footnote{This has also been observed for related tasks, e.g., neural architecture search on CIFAR-10~\citep{su2021prioritized}.}
    $\rightarrow$ \textbf{We expect to observe many extreme rewards.}
    \item \textbf{nearly stationary}. This means that the optimal arm does not change over time. This is clear for \tabreporaw{}. We observe significant changes in the distributions for \yahpogym{}, but most of the sub-optimal arms remain sub-optimal over time.
    $\rightarrow$ \textbf{Ranking of well-performing arms does not change over time.}
\end{enumerate}

\begin{figure}[htb]
\centering
\setlength{\tabcolsep}{0pt}
\renewcommand{\arraystretch}{0} 
\begin{tabular}{cccc}
\yahpogym{}
&\tabreporaw{}
&\yahpogym{}
&\tabreporaw{}\\
\includegraphics[clip, trim=0.0cm 0cm 0cm 0cm, height=2.05cm]{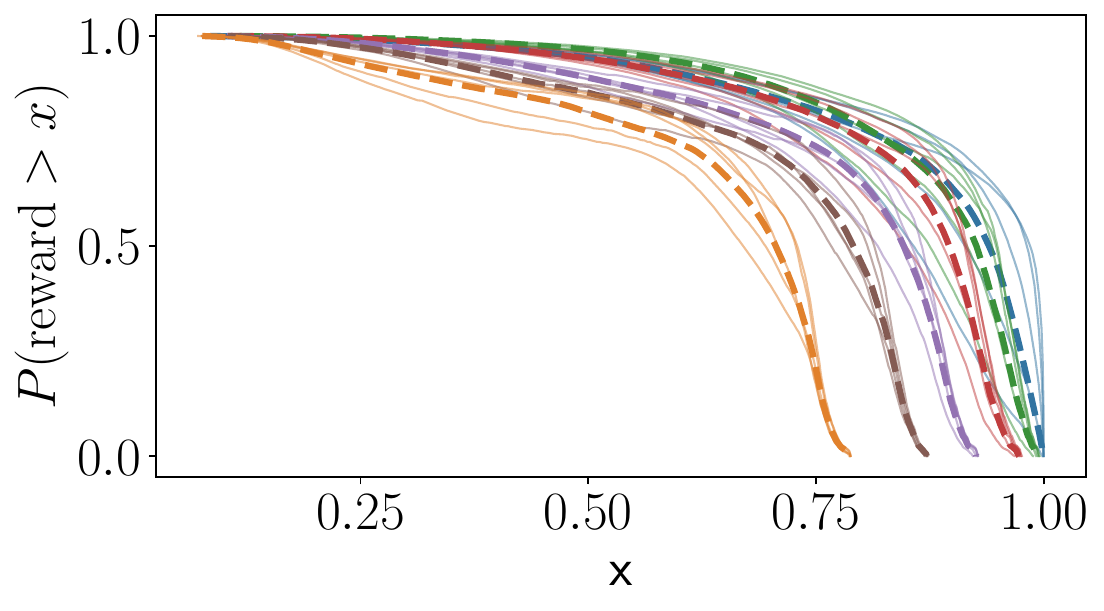}
&\includegraphics[clip, trim=2.5cm 0cm 60cm 0cm, height=2.05cm]{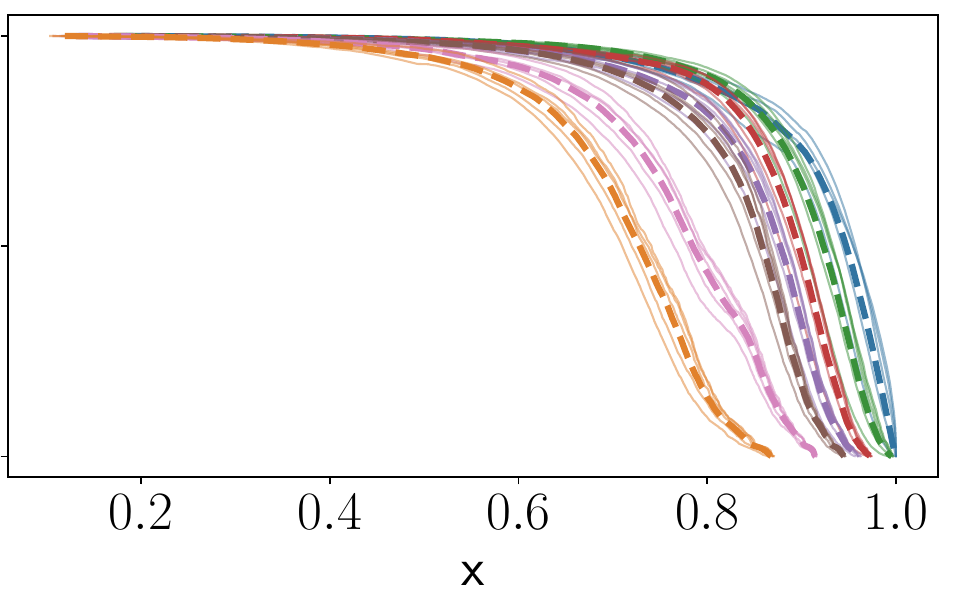}
&\includegraphics[clip, trim=0.0cm 0cm 0cm 0cm, height=2.05cm]{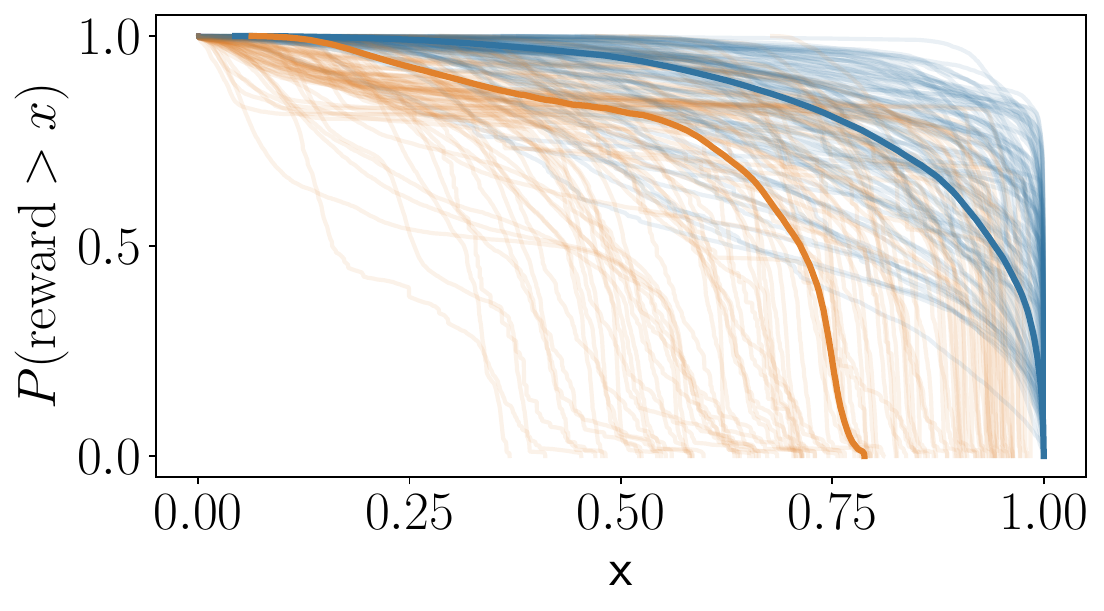}
&\includegraphics[clip, trim=2.5cm 0cm 0cm 0cm, height=2.05cm]{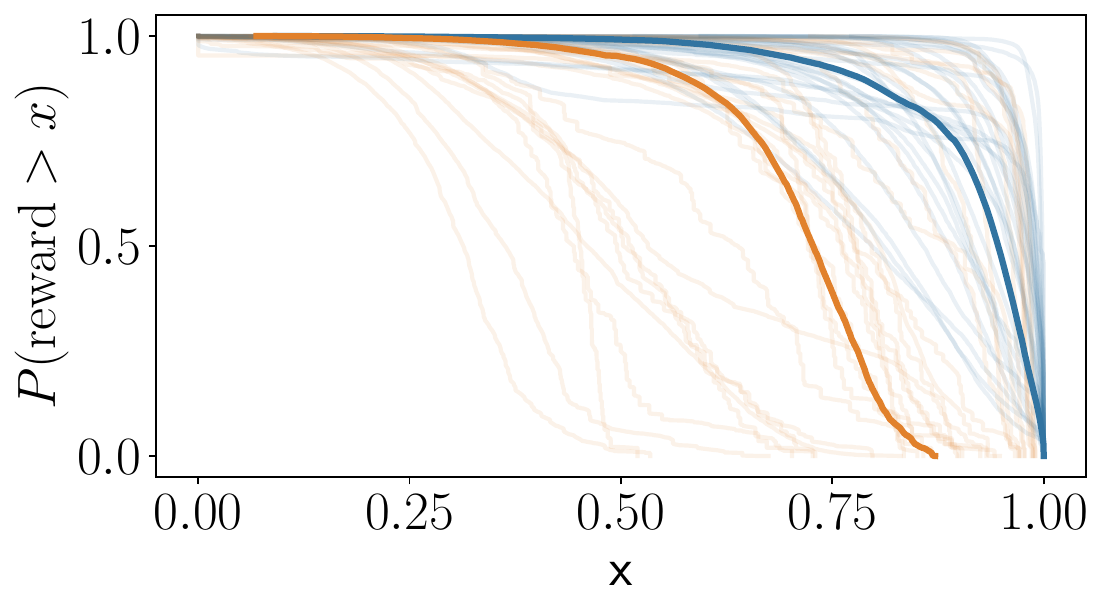} 
\\
\multicolumn{4}{c}{
\includegraphics[clip, trim=18.5cm 3.5cm 0cm 2cm, height=1.0cm]{fig_reward_dist__TabRepoRaw_HPO_eCDF_shifts.pdf}
}
\end{tabular} 
\caption{(Left) The average empirical survival function of the reward distribution per arm ranked per dataset.
Thin lines correspond to segments of the reward sequence and show the distribution change over time.
(Right) The average empirical survival function per dataset for the best and worst arm. Thin lines correspond to individual datasets.
\label{fig:HPO_ecdf_arms_distributions}
}
\end{figure}

Prior research has shown that existing MKB algorithms can underperform if their assumptions about the distributions are too weak or misaligned~\citep{nishihara2016no, balef2024towards}. This underperformance is largely due to our second observation, which significantly differs from the common assumptions of the existing algorithms.

\textbf{Preliminaries.}
Based on these observations, we can formulate the following definition and assumption on which we develop and analyze our algorithm. 
\begin{definition}
\label{def:suboptimality_gap}
The \emph{suboptimality gap} $\Delta_i \ge 0$ for arm \(i\) is defined as:
\[
\Delta_i = \mathbbm{E}\left[\max_{t \leq T} r_{i^*,t}\right] - \mathbbm{E}\left[\max_{t \leq T} r_{i,t}\right]
\]
where \(r_{i^*,t}\) and \(r_{i,t}\) are the rewards observed from optimal arm $i^*$ and arm \(i\) at time \(t\), respectively.
\end{definition}

\begin{assumption} We assume that the i.i.d. random variable \(X\), representing the rewards, follows a bounded distribution with support in \([a, b]\) and continuous survival function \(G\).
\label{theorem:assumption}
\end{assumption}
\begin{lemma}\label{theorem:lemma}
Suppose Assumption~\ref{theorem:assumption} holds. Then, there exists $L,U \geq 0$ such that the survival function \( G \) can be bounded near $b$ by two linear functions:
\begin{align}
\forall  \epsilon \in (0,b-a), \quad L \epsilon \leq  G( b - \epsilon) \leq U\epsilon.
\end{align}
\end{lemma}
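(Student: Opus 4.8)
The plan is to reduce the claim to a statement about the local decay rate of $G$ at the right endpoint $b$ and then realize $L$ and $U$ as extremal chord slopes. First I would record the boundary values: since $X$ is supported in $[a,b]$ we have $G(a)=P(X\geq a)=1$, and because a continuous $G$ has no atom at $b$ while there is no mass above $b$, we get $G(b)=P(X\geq b)=0$. The map $\epsilon\mapsto G(b-\epsilon)$ is then a continuous, non-decreasing function on $(0,b-a)$ running from $0$ up to $G(a)=1$, and the two desired inequalities say exactly that the chord slope $s(\epsilon):=G(b-\epsilon)/\epsilon$ stays in a band $[L,U]$. Since $s$ is continuous on $(0,b-a)$ with the harmless finite positive limit $1/(b-a)$ as $\epsilon\to(b-a)^-$, the only regime that can threaten either side is $\epsilon\to 0^+$, i.e. the behaviour of $G$ in a neighbourhood of its maximum.

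For the upper bound I would pass through the density. Assuming (as the data analysis of Section~\ref{sec:dataanalysis} suggests, and as I will need) that $G$ admits a density $g=-G'\geq 0$ that is bounded near $b$, the fundamental theorem of calculus gives $G(b-\epsilon)=\int_{b-\epsilon}^{b} g(x)\,dx\leq \big(\sup_{[b-\delta,b]} g\big)\,\epsilon$ for small $\epsilon$, and compactness of the support lets me take a single finite $U$ valid on all of $(0,b-a)$. The lower bound is symmetric, $G(b-\epsilon)=\int_{b-\epsilon}^{b} g(x)\,dx\geq \big(\inf_{[b-\delta,b]} g\big)\,\epsilon$, so a density bounded below by some $m>0$ near $b$ yields $L>0$. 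Intuitively this is where the two empirical properties enter: a non-heavy right tail forces $g$ to be bounded above (finite $U$), while the left-skew and the concentration of good configurations near the optimum supply the anti-concentration $g\geq m>0$ near $b$ (positive $L$).

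The step I expect to be the genuine obstacle is precisely securing these two constants, because they do \emph{not} follow from continuity alone. If one only assumes $G$ continuous, the band can collapse at either end: $G(b-\epsilon)=\sqrt{\epsilon/(b-a)}$ is a perfectly valid continuous survival function for which $s(\epsilon)\to\infty$, so no finite $U$ exists, whereas $G(b-\epsilon)=(\epsilon/(b-a))^2$ forces $s(\epsilon)\to 0$, so no positive $L$ exists. The real content of the lemma is therefore a regularity condition on the tail of $X$ at $b$ — cleanest is that $G$ is differentiable near $b$ with $0<m\leq g\leq M<\infty$, equivalently $G(b-\epsilon)=\Theta(\epsilon)$, i.e. $X$ lies in the Weibull extreme-value domain with tail index $1$. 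Under that strengthening the argument above closes with $U=M$ and $L=m$ (or globally $U=\sup_\epsilon s(\epsilon)$, $L=\inf_\epsilon s(\epsilon)$, both now finite and positive). I would thus either make this density condition explicit, or, if the aim is merely to name the constants for later use, define $L$ and $U$ as the extremal chord slopes and carry their finiteness and positivity forward as the substantive hypothesis that the empirical survival curves in Figure~\ref{fig:HPO_ecdf_arms_distributions} justify.
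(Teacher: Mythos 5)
Your proof takes essentially the same route as the paper's: the paper applies the mean value theorem to write $G(b-\epsilon)=f(c)\,\epsilon$ for some $c\in(b-\epsilon,b)$ and then sets $L$ and $U$ to be the minimum and maximum of the density (equivalently, of the chord slope $G(b-\epsilon)/\epsilon$) over $\epsilon\in(0,b-a)$, which is the same integral/extremal-slope argument you give via the fundamental theorem of calculus. Your additional observation is also correct and applies verbatim to the paper's own proof: Assumption~\ref{theorem:assumption} only posits continuity of $G$, yet both arguments silently require a density that is bounded above near $b$ (for $U<\infty$) and bounded away from zero there (for a nontrivial $L>0$); your counterexample $G(b-\epsilon)=\sqrt{\epsilon/(b-a)}$ shows the upper bound genuinely fails without this, so the regularity condition you make explicit is the real hypothesis, which the paper supports only empirically.
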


\ourProposition{} (proof in Appendix~\ref{app:our_proposition_proof}) provides a way to characterize the shape of any bounded distribution near its maximum value through distribution-dependent constants, $L$ and $U$ (see Appendix~\ref{app:examples_validation_assumption} for more details and a visualization). Intuitively, it quantifies the behavior of the distribution of the ML model performance in a given hyperparameter space near the optimum. A large value of $L$  indicates a steep drop in the survival function near the maximum, while a small value of $U$  leads to a more gradual decay of the survival function and conversely (see Figure \ref{fig:L_U_distributions}).

\vspace{0.3cm}
\begin{minipage}[t]{0.48\textwidth}
\centering
\setlength{\tabcolsep}{0pt}
\renewcommand{\arraystretch}{0} 
\begin{tabular}{c c c}
\small\yahpogym{}
&\small\tabreporaw{}
& {} \\
\includegraphics[clip, trim=0.0cm 0cm 0cm 0cm, height=2.8cm]{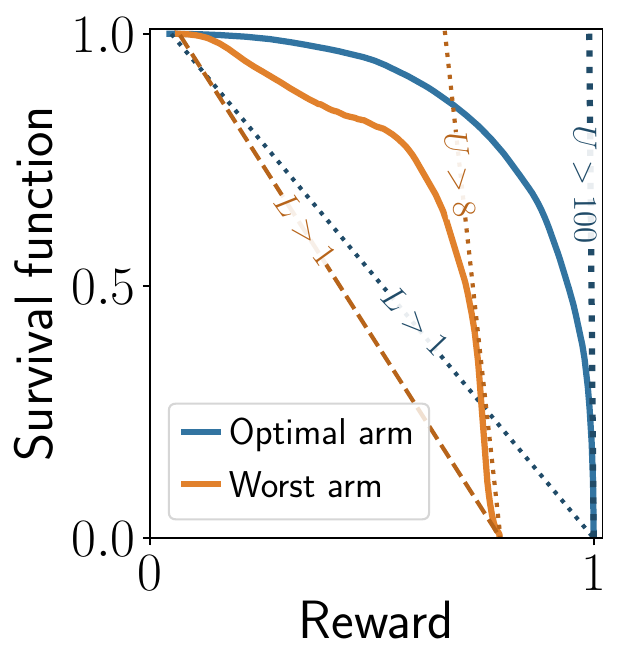}
&\includegraphics[clip, trim=2.3cm 0cm 0cm 0cm, height=2.8cm]{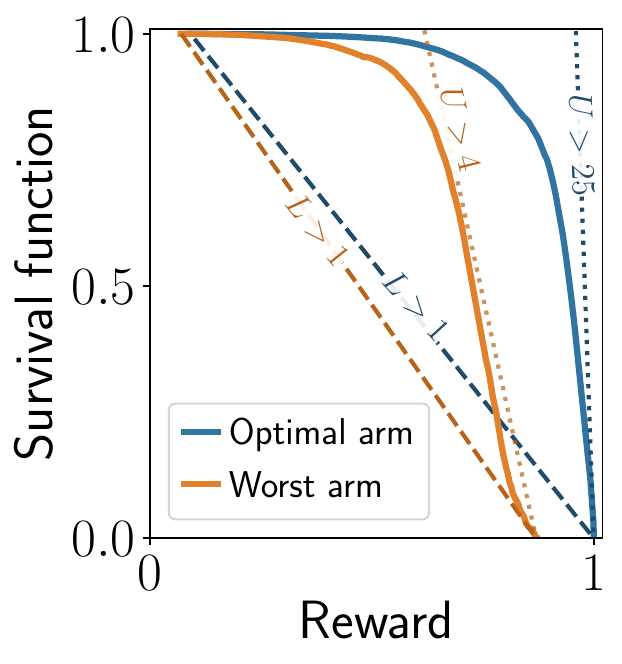}
&\includegraphics[clip, trim=2.3cm 0cm 0cm 0cm, height=2.8cm]{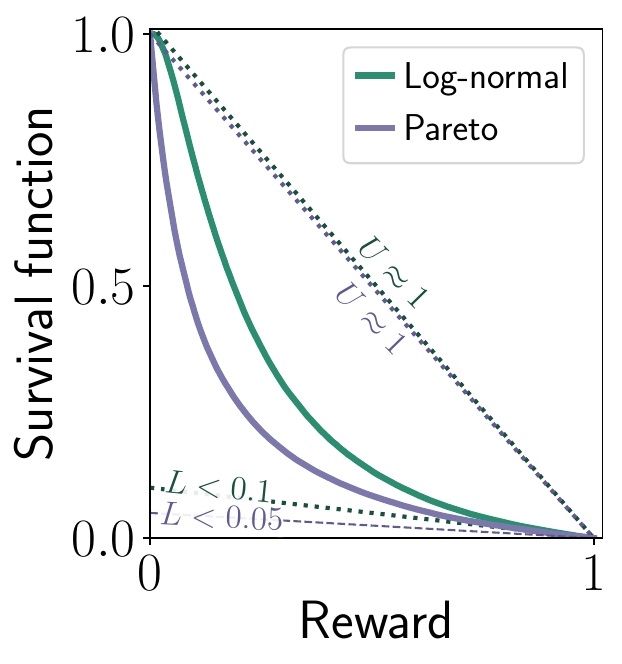}
\end{tabular} 

\captionof{figure}{(Left, Middle) L and U for the average survival functions in Figure~\ref{fig:HPO_ecdf_arms_distributions}. (Right) We highlight the difference to right-skewed Log-normal and Pareto distributions.}
\label{fig:L_U_distributions}
\end{minipage} \hfill
\begin{minipage}[t]{0.48\textwidth}
\centering
\setlength{\tabcolsep}{0pt}
\renewcommand{\arraystretch}{0} 
\begin{tabular}{cc}
\small\yahpogym{}
& \small\tabreporaw{} \\
\includegraphics[clip, trim=0.0cm 0cm 0cm 0cm, height=2.cm]{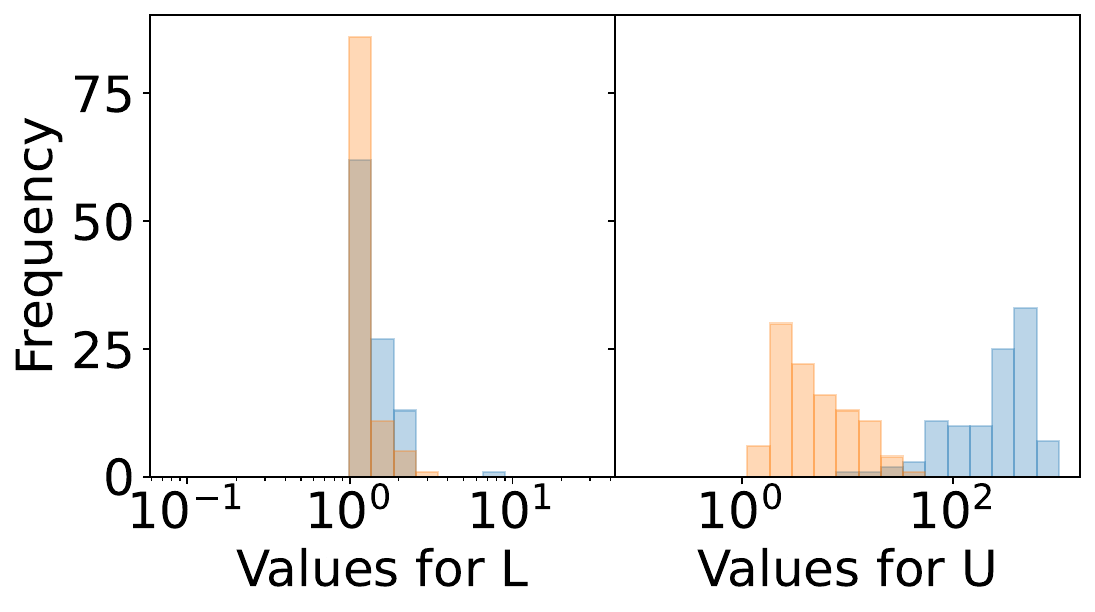}
&\includegraphics[clip, trim=1.2cm 0cm 0cm 0cm, height=2cm]{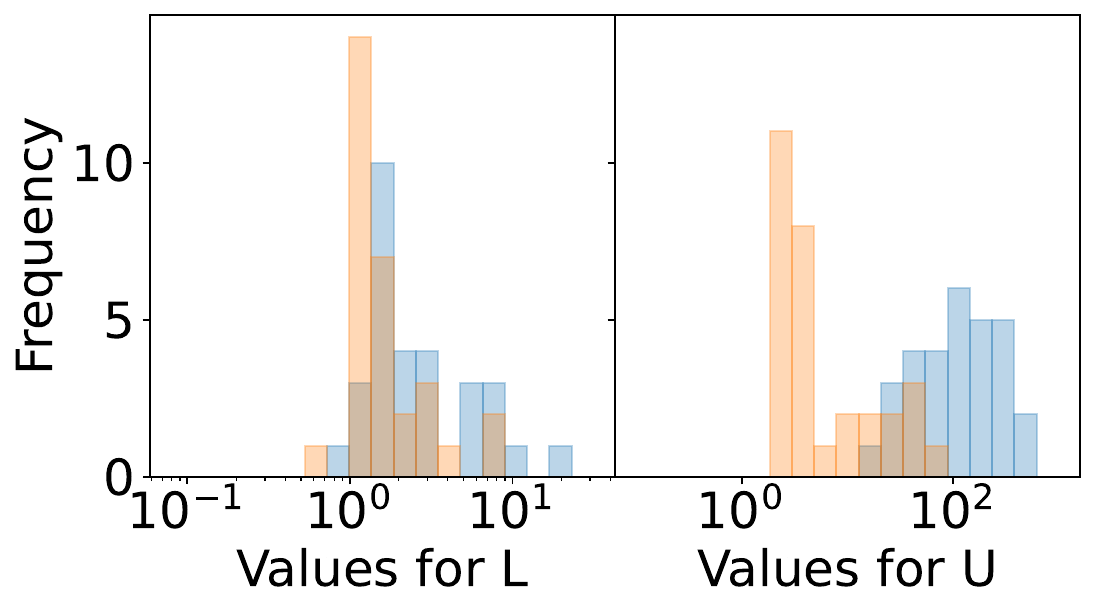} \\
\multicolumn{2}{c}{\includegraphics[clip, trim=0cm 0cm 0cm 0cm, height=0.5cm]{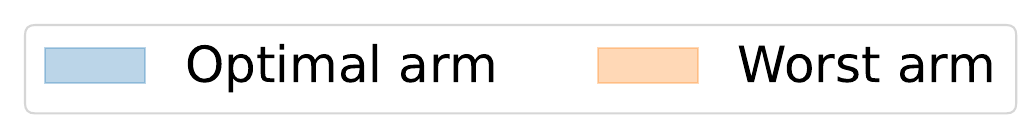}}\\
\end{tabular} 
\captionof{figure}{Histogram of $L$ and $U$ values across individual datasets.}
\label{fig:L_U_histograms}
\end{minipage}
\vspace{0.3cm}

We study the values of $L$ and $U$ in \ourProposition{} on our empirical data and show the distribution of $L$ and $U$ in Figure \ref{fig:L_U_histograms} (see Appendix~\ref{app:empirical_validation_assumption} for more analyses). Next, we focus on the uniqueness of our assumptions before discussing how this impacts our MKB algorithm's performance in Section~\ref{sec:method}.

\textbf{Common Max $K$-armed Bandit assumptions are misaligned with CASH.} 
Existing MKB algorithms can be classified into two main categories.
First, distribution-free approaches do not leverage any assumption on the type of reward distributions~\citep{streeter2006simple,bhatt2022extreme, baudry2022efficient}. Secondly, parametric and semi-parametric approaches typically assume that the reward distributions follow heavy-tailed distributions, typically following a second-order Pareto assumption whose parameters can be estimated using extreme value theory~\citep{carpentier2014extreme,achab2017max}. 

Our empirical analysis shows that existing MKB algorithms' assumptions about reward distributions are either overly broad or misaligned with the CASH problem.
We identify two key differences between our assumptions and those made by current MKB algorithms. 
First, the sub-optimality gap makes the regret definition (\Eqref{eq:extremebandit_regret}) for bounded distributions meaningful. Without the existence of a nonzero gap, the regret definition fails since any policy consistently selecting a single arm can achieve zero regret, as \citet{nishihara2016no} pointed out with Bernoulli distributions.
Second, our reward distributions differ from those used in existing MKB algorithms. \ourProposition{} characterizes the shape of the distribution, with a higher $L$ ensuring more mass near extreme values, making extreme values easier to estimate. In our case, values for $L$ are mostly higher than $1$ while for heavy-tailed distributions, commonly used as the basis for MKB algorithms, are close to $0$ (see the rightmost plot in Figure~\ref{fig:L_U_distributions}). 
In general, our analysis shows that considering the right range of $L$ values unlocks the problem raised by \citet{nishihara2016no}, who focused on constructing counterexamples through the distributions that have unrealistic values for $L$. Thus, under our assumptions, their negative result\footnote{Theorem $11$ in \citep{nishihara2016no}: ``no policy can be guaranteed to perform asymptotically as well as an oracle that plays the single best arm for a given time horizon.'', which means any policy needs to explore all arms for budget T} does not apply here.

\section{\OURALGO{}}
\label{sec:method}
Based on \ourProposition{} and the regret definition (\Eqref{eq:extremebandit_regret}), we introduce \OURALGO{} in Algorithm~\ref{alg:pseudocode_MaxUCB} for $K$ arms with a limited time horizon of $T$ iterations.\footnote{\label{footnote:time_complexity}\OURALGO{} needs to store the number of pulls and the maximum reward for each arm, resulting in a memory requirement of $\mathcal{O}(K)$. The time complexity is $\mathcal{O}(KT)$.}

 \textbf{Description of \OURALGO{}.} Our algorithm balances exploration and exploitation according to the standard optimism principle at the heart of Upper Confidence Bound (UCB) bandit methods \citep{auer2002using, lattimore2020bandit}. The main novelty we introduce is in the computation of a distribution-adapted exploration bonus for \OURALGO{}. 

Our exploration bonus $(\frac{\alpha\log(t)}{n})^2$ deviates from typical UCB literature due to faster concentration of maximum values in bounded distributions. This is because the probability of bad events (violating confidence intervals for the expectation of the max, see Equation \ref{eq:probability_error_two} in Appendix~\ref{app:theorem1_proof}) can be written as:
\begin{align}
P(\text{Bad events}) \leq O\left(e^{-n \sqrt{C(n)}} + n C(n)\right),
\label{eq:convergence}
\end{align}
where distribution-dependent constants are hidden for clarity. 
Then, setting \( C(n) = 1/n^2 \) minimizes the probability of bad events: the first term becomes independent of \( n \), while the second term decreases with \( n \).  Notably, this faster concentration can only be obtained for the reasonably well-behaved distributions we consider following the study of the previous section and it is not a general property of the maxima; more details can be found in Appendix \ref{app:theorem1_proof_extension}.
Furthermore, \OURALGO{} uses $\alpha\geq0$ to control the exploration-exploitation trade-off; a higher $\alpha$ leads to more exploration.

\begin{algorithm}[htbp]
\scriptsize
\caption{\OURALGO}
\label{alg:pseudocode_MaxUCB}
\begin{algorithmic}[1]
\Require  \maxucb{$\alpha$(exploration parameter)}, $T$(time horizon), $K$(arms)
\For {each arm $i \leq K$}
 \State Pull arm $i$, set $n_i \gets 1$, observe reward $r_{i,1}$  \Comment{Evaluating default configuration}
\EndFor

\For {$t=K + 1$ to $T$}
    \For {each arm $i \leq K$}
    \State \parbox[t]{0.4\linewidth}{Update policy \maxucb{$U_i =\max{(r_{i,1},...,r_{i,n_i})} +  (\frac{ \alpha \log(t)}{n_i})^2 $ }}  \Comment{differs from classical UCB, where \ucb{$U_i = \bar{r}_i + \sqrt{\tfrac{\alpha \log(t)}{n_i}}$}}
    \EndFor
    \State Select arm $I_t = \argmax\limits_{i\leq K} U_i$ , \quad $n_{I_t} \gets n_{I_t} +  1$ , then observe reward $r_{I_t, n_{I_t}}$
\EndFor
\end{algorithmic}
\end{algorithm}

\textbf{Analysis of \OURALGO{}.} We first show a regret decomposition result specific to max $K$-armed bandits that directly relates the regret definition in \eqref{eq:extremebandit_regret} with the number of suboptimal trials.

\begin{proposition} (Regret Upper Bound) the regret upper bound up to time $T$ is related to the number of times sub-optimal arms are pulled:
\begin{equation}
R(T) \leq \frac{\max\limits_{i \leq K} b_i} {T}  \sum\limits_{i\neq i^* }^{K} N_i(T)
\end{equation}
where $N_i(T)=  \mathbbm{E}(\sum\nolimits_{t=1}^{T} \mathbbm{1}{\{I_t=i\}})$ is the number of sub-optimal pulls of arm $i$, and $b_i$ is the upper bound on the support of the rewards of arm $i$.
\label{proposition:Regret_ExUCB}
\end{proposition}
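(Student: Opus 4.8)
The plan is to bound the extreme‑bandit regret \eqref{eq:extremebandit_regret} by the single quantity that controls everything: how much the running maximum of the optimal arm loses by receiving only $N_{i^*}(T)$ pulls instead of the full budget $T$. Writing $M_n^{(i)} = \max_{s\le n} r_{i,s}$ for the maximum over the first $n$ draws of arm $i$, the oracle term equals $\mathbb{E}[M_T^{(i^*)}]$, while the observed maximum of the policy is $\mathbb{E}[\max_i M_{N_i(T)}^{(i)}]$. Since the optimal arm is one of the arms, $\max_i M_{N_i(T)}^{(i)} \ge M_{N_{i^*}(T)}^{(i^*)}$, so $R(T) \le \mathbb{E}[M_T^{(i^*)}] - \mathbb{E}[M_{N_{i^*}(T)}^{(i^*)}]$. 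Together with the bookkeeping identity $T - N_{i^*}(T) = \sum_{i\ne i^*} N_i(T)$ (total pulls equal $T$), it then suffices to prove a \emph{missing‑pulls} bound of the form $\mathbb{E}[M_T^{(i)}] - \mathbb{E}[M_n^{(i)}] \le b_i\,(T-n)/T$.

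First I would establish this bound for a \emph{deterministic} number of pulls $n \le T$. Using Assumption~\ref{theorem:assumption} and the layer‑cake (survival‑function) representation $\mathbb{E}[M_n^{(i)}] = a_i + \int_{a_i}^{b_i} \big(1 - (1-G_i(x))^{n}\big)\,dx$, the difference telescopes to $\mathbb{E}[M_T^{(i)}] - \mathbb{E}[M_n^{(i)}] = \int_{a_i}^{b_i} \big((1-G_i(x))^{n} - (1-G_i(x))^{T}\big)\,dx$. The crux is the elementary pointwise inequality $u^{n} - u^{T} \le (T-n)/T$ for all $u \in [0,1]$, obtained by maximizing $u \mapsto u^{n} - u^{T}$: its unique interior critical point $u_\star = (n/T)^{1/(T-n)}$ gives $u_\star^{n}(1 - u_\star^{T-n}) = (n/T)^{n/(T-n)}(T-n)/T \le (T-n)/T$. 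Integrating this over an interval of length $b_i - a_i \le b_i$ (the rewards are normalized and nonnegative) yields the missing‑pulls bound. Notably this step does not use the finer structure of $G$ from \ourProposition{}; the constants $L,U$ enter only the sharper control of $N_i(T)$ itself.

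The main obstacle is that $N_{i^*}(T)$ is \emph{random and data‑dependent}, so I cannot simply substitute it for the deterministic $n$. I would resolve this by conditioning on the trajectory: because \OURALGO{} decides each pull from previously observed rewards only, $N_{i^*}(T)$ is measurable with respect to the optimal arm's \emph{past} draws (together with independent external randomness), whereas the draws $r_{i^*, N_{i^*}(T)+1}, \dots, r_{i^*,T}$ that the oracle still enjoys remain i.i.d.\ and independent of that count. Conditioning on $\{N_{i^*}(T) = n\}$ therefore preserves the i.i.d.\ structure needed for the survival‑function computation, and since the missing‑pulls bound is \emph{linear} in $n$, taking expectation over $N_{i^*}(T)$ gives $\mathbb{E}[M_T^{(i^*)}] - \mathbb{E}[M_{N_{i^*}(T)}^{(i^*)}] \le (b_{i^*}/T)\,\mathbb{E}[T - N_{i^*}(T)]$. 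Substituting the identity for $\sum_{i\ne i^*}N_i(T)$ and bounding $b_{i^*} \le \max_{i\le K} b_i$ completes the proof. I expect verifying this conditioning step — that the oracle's remaining draws are genuinely independent of the realized pull count, so the deterministic inequality survives the expectation — to be the delicate point of the argument.
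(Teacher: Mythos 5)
Your overall decomposition is the same as the paper's: lower-bound the policy's observed maximum by the maximum over the optimal arm's own $N_{i^*}(T)$ draws, reduce the regret to the ``missing pulls'' quantity $\mathbb{E}[\max_{t\le T} r_{i^*,t}] - \mathbb{E}[\max_{t\le N_{i^*}(T)} r_{i^*,t}]$, and close with the identity $T - N_{i^*}(T) = \sum_{i\ne i^*} N_i(T)$ (the paper phrases the first step as a $\min$ over all arms and then instantiates it at $i^*$, using $\Delta_{i^*}=0$, which is equivalent). Where you genuinely diverge is in how the missing-pulls lemma is proved. The paper argues probabilistically: the difference of maxima is nonzero only on the event that the overall argmax among $T$ i.i.d.\ draws falls in the last $T-n$ positions, which by exchangeability has probability $(T-n)/T$, and on that event the maximum is at most $b_i$. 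You instead use the survival-function integral representation $\mathbb{E}[\max_{t \le n} r_{i,t}] = a_i + \int_{a_i}^{b_i}\bigl(1-(1-G_i(x))^{n}\bigr)\,dx$ and the elementary pointwise bound $u^{n}-u^{T}\le (T-n)/T$ on $[0,1]$; this is a clean analytic alternative that yields the marginally sharper constant $b_i - a_i$ in place of $b_i$ (your reduction $b_i - a_i \le b_i$ needs $a_i \ge 0$, just as the paper's event-splitting needs $b_i \ge 0$; both hold for the normalized rewards considered). Both routes face the same delicate point you flag, namely that $N_{i^*}(T)$ is data-dependent so the i.i.d.\ computation cannot literally be performed conditionally on the realized count; the paper sidesteps this by defining $N_i(T)$ as an expectation and invoking the construction of \citet{baudry2022efficient}, so your treatment is at a comparable level of rigor and is, if anything, more explicit about where the care is needed.
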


The proof is provided in Appendix~\ref{app:proposition1_proof} and relies on standard tools in the extreme bandit literature \citep{baudry2022efficient}. From this result, it is clear that we can now obtain an upper bound on the regret by controlling the number of suboptimal arm pulls $(N_i(T))_{i\neq i^\star}$ individually. Our main theoretical result below proves such an upper bound for Algorithm~\ref{alg:pseudocode_MaxUCB}.
\begin{theorem} 
For any suboptimal arm $i\neq i^\star$, the number of suboptimal draws $N_i(T)$ performed by \OURALGO{}~(\textbf{Algorithm \ref{alg:pseudocode_MaxUCB}}) up to time $T$ is bounded by

\begin{align}
  N_i(T) \leq \frac{ T^{ 1 - 2 L_{i^*} \alpha  \sqrt{\Delta_i} }}{1 - 2 L_{i^*} \alpha  \sqrt{\Delta_i}} + 2 \alpha \sqrt{U_i T} \log(T). 
\label{eq:number_suboptimal_MaxUCB}
\end{align}
\label{theorem:number_suboptimal_MaxUCB}
\end{theorem}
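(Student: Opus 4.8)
The target is a bound on the number of suboptimal pulls $N_i(T)$ of a fixed arm $i\neq i^\star$; Proposition~\ref{proposition:Regret_ExUCB} (already available) then turns this into the regret bound, so the whole task reduces to proving \eqref{eq:number_suboptimal_MaxUCB}. I would start from the selection rule: since \OURALGO{} pulls the arm of maximal index, the event $\{I_t=i\}$ forces $U_i(t)\ge U_{i^\star}(t)$, i.e.
\[
\max(r_{i,1},\dots,r_{i,n_i}) + \Big(\tfrac{\alpha\log t}{n_i}\Big)^2 \;\ge\; \max(r_{i^\star,1},\dots,r_{i^\star,n_{i^\star}}) + \Big(\tfrac{\alpha\log t}{n_{i^\star}}\Big)^2 .
\]
Writing $M_{j,n}=\max(r_{j,1},\dots,r_{j,n})$ and $b_j$ for the support upper bound, the only probabilistic inputs I need are two consequences of \ourProposition{} for the i.i.d. rewards of Assumption~\ref{theorem:assumption}: a lower-tail bound on the running maximum, $P(M_{j,n}\le b_j-\epsilon)=(1-G_j(b_j-\epsilon))^n\le e^{-nL_j\epsilon}$, and an upper-proximity bound, $P(M_{j,n}\ge b_j-\epsilon)\le nU_j\epsilon$. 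These are exactly the two contributions appearing in \eqref{eq:convergence}.

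Next I would split $\{I_t=i\}$ into the two events hinted at by \eqref{eq:convergence}. The first is an \emph{under-concentration} event for the optimal arm, $M_{i^\star,n_{i^\star}}\le b_{i^\star}-\delta_t$. The crucial, non-standard choice is to take the deviation $\delta_t$ proportional to $1/n_{i^\star}$, namely $\delta_t\asymp \tfrac{\alpha\sqrt{\Delta_i}\,\log t}{n_{i^\star}}$. Feeding this into the lower-tail bound makes the exponent $n_{i^\star}L_{i^\star}\delta_t$ \emph{independent of the count} and equal to $2L_{i^\star}\alpha\sqrt{\Delta_i}\log t$, so this event has probability at most $t^{-2L_{i^\star}\alpha\sqrt{\Delta_i}}$ uniformly in $n_{i^\star}$. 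Summing over $t\le T$ via $\sum_t t^{-c}\le \tfrac{T^{1-c}}{1-c}$ with $c=2L_{i^\star}\alpha\sqrt{\Delta_i}$ yields the first term of \eqref{eq:number_suboptimal_MaxUCB}. On the complementary event the optimal arm is well concentrated; substituting this back into the index comparison and using $M_{i,n_i}\le b_i$ forces the running maximum of arm $i$ to sit within $(\alpha\log t/n_i)^2$ of its own ceiling $b_i$, a rare event whose probability is controlled by the upper-proximity bound $n_iU_i(\alpha\log t/n_i)^2$. Counting the pulls for which this is not yet negligible produces the threshold $2\alpha\sqrt{U_iT}\log T$, the second term.

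The step I expect to be the main obstacle is precisely the one that makes this algorithm different from classical UCB: with an exploration bonus of order $1/n^2$, the expected deficit $b_{i^\star}-\mathbb{E}[M_{i^\star,n}]$ is of order $1/n$ and therefore \emph{larger} than the bonus, so the usual ``optimism holds with high probability'' argument is simply false here. The whole proof hinges on not asking for optimism with respect to $b_{i^\star}$, but on calibrating $\delta_t\propto 1/n_{i^\star}$ so that the large-deviation rate cancels the count and leaves the clean per-round probability $t^{-2L_{i^\star}\alpha\sqrt{\Delta_i}}$; this is the quantitative content of ``$C(n)=1/n^2$ minimizes the probability of bad events'' in the text. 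A secondary difficulty is translating between the expected-max gap $\Delta_i$ of Definition~\ref{def:suboptimality_gap} and the support-endpoint geometry $b_{i^\star}-b_i$ that the concentration bounds naturally produce: using the envelopes $\tfrac{1}{U_j(n+1)}\le \mathbb{E}[b_j-M_{j,n}]\le\tfrac{1}{L_jn}$, which follow from \ourProposition{}, one shows that $\Delta_i$ and $b_{i^\star}-b_i$ agree up to $O(1/T)$ corrections, and this same calculation is what forces the $\sqrt{U_iT}$ resolution floor in the second term when the two arms share (nearly) the same ceiling.
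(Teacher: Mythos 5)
Your proposal follows essentially the same route as the paper's proof in Appendix~\ref{app:theorem1_proof}: the same decomposition of the selection event into an under-concentration event for the optimal arm and an over-proximity event for arm $i$, the same two probabilistic inputs (Lemma~\ref{theorem:lemma_max} combined with \ourProposition{}), the same mechanism for making the first exponent independent of the pull count (your calibration $\delta_t\propto \alpha\sqrt{\Delta_i}\log t/n_{i^\star}$ is exactly what the paper's AM--GM step $C_t(n_1)+\Delta_i\ge 2\sqrt{C_t(n_1)\Delta_i}$ produces), and the same balancing threshold $l_i=\alpha\sqrt{U_iT}\log T$ for the second term. The only cosmetic difference is that you center the deviations at the support endpoints $b_j$ and then translate to $\Delta_i$ via $O(1/T)$ envelopes, whereas the paper centers directly at the expected maxima by taking the threshold $x=\mathbb{E}[\max_{t\le T}r_{i,t}]$, which absorbs $\Delta_i$ without that extra step.
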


The result of Theorem \ref{theorem:number_suboptimal_MaxUCB} (proof in Appendix~\ref{app:theorem1_proof}) highlights that \OURALGO{} primarily leverages two key properties: the sub-optimality gap $\Delta_i$ and the shape of the distribution, as defined in \ourProposition{}. Specifically, the performance improves with a larger sub-optimality gap $\Delta_i$ and higher values of $L_{i^*}$ ($L$ for the optimal arm), which means that samples drawn from the distribution of the optimal arm are likely to be close to the extreme values. Additionally, smaller values of $U_i$ ($U$ for a sub-optimal arm $i$), which means it is less likely to draw samples close to the extreme values, reduce the number selecting sub-optimal arm $i$, thus enhancing overall performance. 

For our task, as is shown in Figure \ref{fig:L_U_distributions}, the values for $L_{i^*}$ are higher than $1$ and $U_i$ less than $10$ in most cases, yielding high empirical performance. We compare the number of pulls observed in our experiments with the expected values based on our theoretical analysis in Appendix~\ref{app:sec:thoery_vs_reality}, showing that our analysis is not loose.
However, it is essential to note that, in general, finding the optimal arm is challenging if $\Delta_i$ is close to zero or $L_{i^*}$ is very small. We assess the performance of \OURALGO{} on synthetic tasks in Appendix~\ref{app:sec:extreme_bandits_experiments}, showing that the performance of \OURALGO{} deteriorates on tasks that do not satisfy our assumptions.

Finally, we provide a regret upper bound that combines the decomposition in Proposition~\ref{proposition:Regret_ExUCB} with the individual upper bounds above. This requires finding a parameter $\alpha$ that resolves a trade-off between the arms and minimizes the total upper bound, as shown in Corollary~\ref{corollary:choosing_alpha}, whose proof is immediate. 

\begin{corollary} If $L_{i^*}$, $\min\limits_{i \neq i^*}(\Delta_i)$, and $T$ are known in advance, then the total regret $R(T)$ can be bounded as follows by choosing the exploration parameter $\alpha$ appropriately:

\begin{equation}
R(T) \leq \mathcal{O}\bigg( \frac{ K \log T} { \sqrt{T}}  \max\limits_{i \leq K} b_i \bigg), \quad \text{when } \alpha = \frac{1}{4 L_{i^*} \sqrt{\min\limits_{i \neq i^*}\Delta_i} } \bigg(1 - \frac{2\log(\log(T))}{\log(T)}\bigg).
\label{eq:corollary_alpha}
\end{equation}
\label{corollary:choosing_alpha}
\end{corollary}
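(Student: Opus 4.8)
The plan is to combine the two preceding results mechanically and then verify that the proposed $\alpha$ exactly balances the two competing terms. First I would substitute the per-arm bound of Theorem~\ref{theorem:number_suboptimal_MaxUCB} into the regret decomposition of Proposition~\ref{proposition:Regret_ExUCB}, obtaining
\begin{equation}
R(T) \leq \frac{\max_{i\leq K} b_i}{T}\sum_{i\neq i^*}\left(\frac{T^{\,1-2L_{i^*}\alpha\sqrt{\Delta_i}}}{1-2L_{i^*}\alpha\sqrt{\Delta_i}} + 2\alpha\sqrt{U_i T}\,\log T\right).
\end{equation}
The two summands depend on $\alpha$ in opposite directions: the first (polynomial) term shrinks as $\alpha$ grows, because its exponent $1-2L_{i^*}\alpha\sqrt{\Delta_i}$ decreases, while the second (exploration) term grows linearly in $\alpha$. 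The objective is therefore to tune $\alpha$ so that both terms land at the common order $\sqrt{T}\log T$.

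The crux is a short computation of the exponent at the proposed $\alpha$, evaluated at the \emph{hardest} suboptimal arm, i.e.\ the one attaining $\min_{i\neq i^*}\Delta_i$. Plugging $\alpha = \frac{1}{4L_{i^*}\sqrt{\min_{i\neq i^*}\Delta_i}}\bigl(1-\tfrac{2\log\log T}{\log T}\bigr)$ into $1-2L_{i^*}\alpha\sqrt{\Delta_i}$ and cancelling the factor $L_{i^*}\sqrt{\min_{i\neq i^*}\Delta_i}$ leaves the exponent $\tfrac{1}{2}+\tfrac{\log\log T}{\log T}$. The identity $T^{\log\log T/\log T}=\log T$ then converts $T^{1/2+\log\log T/\log T}$ into exactly $\sqrt{T}\,\log T$, matching the exploration term; this is precisely why the peculiar correction factor in $\alpha$ is chosen. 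For any other suboptimal arm, $\Delta_i \geq \min_{i\neq i^*}\Delta_i$ makes the exponent no larger, so the polynomial term is dominated by that of the hardest arm.

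Next I would handle the bookkeeping. For the hardest arm the denominator $1-2L_{i^*}\alpha\sqrt{\Delta_i}$ equals $\tfrac{1}{2}+\tfrac{\log\log T}{\log T}\geq \tfrac{1}{2}$, so it contributes only a constant factor; for the remaining arms it is a positive constant in $T$ (since $\alpha$ and all gaps are constants), absorbed by the $\mathcal{O}(\cdot)$. Because the $T$-dependent factor of $\alpha$ tends to $1$, we have $\alpha=\mathcal{O}(1)$, so each exploration term is $\mathcal{O}(\sqrt{T}\log T)$ as well. Summing over the $K-1$ suboptimal arms gives $\sum_{i\neq i^*}N_i(T)=\mathcal{O}(K\sqrt{T}\log T)$, and multiplying by the prefactor $\frac{\max_{i\leq K} b_i}{T}$ yields the claimed $\mathcal{O}\bigl(\frac{K\log T}{\sqrt{T}}\max_{i\leq K} b_i\bigr)$.

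The main obstacle I anticipate is not any single calculation but the \emph{uniformity} of a single, arm-independent $\alpha$ tuned to the smallest gap: one must confirm it simultaneously controls the polynomial term for every arm. In particular, arms with large gaps may push $2L_{i^*}\alpha\sqrt{\Delta_i}$ toward or past $1$, where the exponent vanishes or turns negative and the first term is only $\mathcal{O}(1)$; one should check that this regime never dominates the $\sqrt{T}\log T$ contribution of the hardest arm and that the denominator does not change sign in a way that invalidates the bound. Verifying that the minimum-gap arm is indeed the bottleneck, together with the clean identity $T^{\log\log T/\log T}=\log T$, is what makes the combination go through and justifies calling the proof immediate.
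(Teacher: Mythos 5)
Your proposal is correct and follows exactly the route the paper intends: the paper declares the proof ``immediate'' from combining Proposition~\ref{proposition:Regret_ExUCB} with Theorem~\ref{theorem:number_suboptimal_MaxUCB}, and your exponent computation $1-2L_{i^*}\alpha\sqrt{\Delta_{\min}}=\tfrac12+\tfrac{\log\log T}{\log T}$ together with the identity $T^{\log\log T/\log T}=\log T$ is precisely the verification that makes it so. Your side remark about arms with large gaps (where the exponent can turn negative and the stated first term of Theorem~\ref{theorem:number_suboptimal_MaxUCB} degenerates) is a legitimate observation that the paper glosses over, but as you note those arms contribute only $\mathcal{O}(\sqrt{T}\log T)$ via the exploration term, so the conclusion stands.
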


This final result helps to understand the role of $\alpha$ and serves as guidelines to choose it in practice.
Specifically, \Eqref{eq:corollary_alpha} shows that either a small value of $L_{i^*}$ or a small sub-optimality gap requires a higher value for $\alpha$. Intuitively, a small $L_i^*$ means that the max value of the best arm needs more samples to be nearly reached, and a small suboptimality gap means that the two best arms are close and so hard to distinguish. Indeed, these problem-dependent quantities are unknown to the practitioner, so a direct approach to calculate $\alpha$ is not feasible. Therefore, evaluating performance robustness under "loose tuning" of  $\alpha$  is essential.

\section{Performance on AutoML tasks}
\label{Sec:Numerical_Experiments}
Finally, we examine the empirical performance of \OURALGO{} in an AutoML setting via reporting average ranking and the number of wins, ties, and losses across tasks for each benchmark (details in Appendix~\ref{app:metric_calculation}). 
We first focus on the impact of the hyperparameter $\alpha$ and then compare our approach to others. Specifically, we show that our two-level approach performs better than single-level HPO on the joint space, and \OURALGO{} outperforms other state-of-the-art bandit methods.  To begin, we will provide a brief overview of the experimental setup used across all experiments.
\textbf{Experimental setup.}
We use \numberOfBenchmarks{} AutoML benchmarks\footnote{\label{footnote:implementation}We used the AutoML Toolkit (AMLTK)~\citep{Bergman2024}.
We ran HPO on a compute cluster equipped with Intel Xeon Gold $6\,240$ CPUs, requiring $20\,000$  CPU hours. We conducted the remaining experiments on a local machine with Intel Core i7-1370P, requiring an additional $32$ CPU hours. 
}, all implementing CASH for tabular supervised learning differing in the considered ML models, HPO method, and datasets, which we detail in Table~\ref{tab:automltasks}.
For \tabrepo{} and \hebo{}, we use available pre-computed HPO trajectories, whereas for \tabreporaw{} and \yahpogym{}, we use SMAC~\citep{hutter-lion11a,lindauer-jmlr22a} as a Bayesian optimization method to run HPO ourselves.

\textbf{Competitive Baselines.} We compare against \QuantileBayesUCB{}~\citep{balef2024towards}, \ERUCBS{}~\citep{hu2021cascaded} and \RisingBandits{}~\citep{li2020efficient} which have been developed for the decomposed CASH task. We consider extreme bandits (\QoMaxSDA{}~\citep{baudry2022efficient}, \MaxMedian{}~\citep{bhatt2022extreme}) and classic \UCB{} as general bandit methods. We use default hyperparameter settings for all methods.
As \combinedsearch{} baselines, we consider  Bayesian optimization (\SMAC{}) and \randomsearch{}.\footnote{
Only available for \tabreporaw{} and \yahpogym{}, where we computed HPO trajectories ourselves.}
Additionally, we report the performance of the best (oracle) arm.

\begin{wrapfigure}{l}{0.5\textwidth}
\begin{tabular}{c@{\hskip 0mm}c}
    \small\yahpogym{}
    & \small\tabreporaw{} \\
    \includegraphics[clip, trim=0.25cm 0cm  0.3cm 0cm,height=3.5cm]{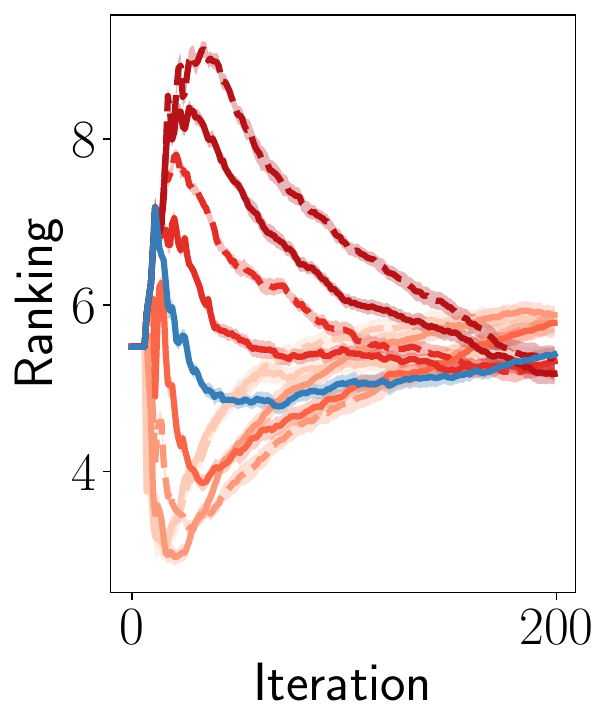}
    & \includegraphics[clip, trim=0.3cm 0cm  0cm 0cm,height=3.5cm]{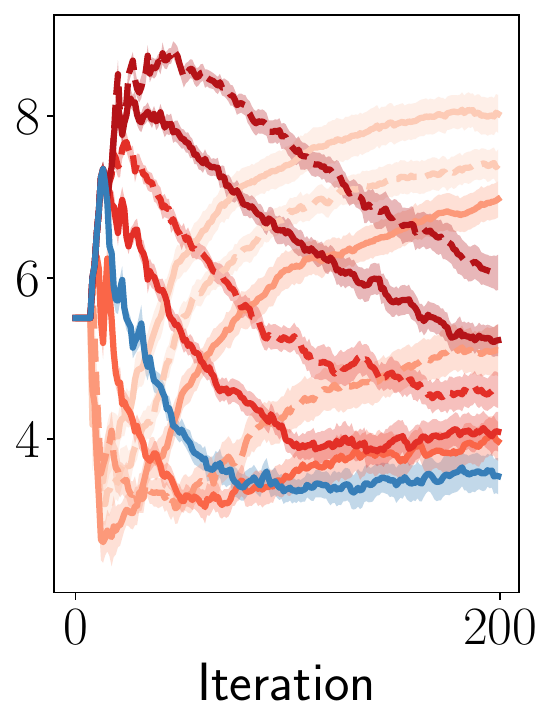} \\
    \multicolumn{2}{c}{\includegraphics[width=0.95\linewidth]{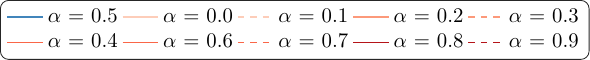}} \\
\end{tabular}
\caption{The sensitivity of \OURALGO{} to hyperparameter $\alpha$, lower is better.}
\label{fig:sensitivity_to_alpha} 
\end{wrapfigure}

\textbf{How sensitive is \OURALGO{} to the choice of $\alpha$?}
Figure~\ref{fig:sensitivity_to_alpha}, shows that the choice of  $\alpha$ impacts performance. Lower values of  $\alpha$ (light red lines) lead to good performance with small budgets, whereas higher values (dark red lines) achieve stronger final performance when sufficient time is available. An $\alpha$ around $0.5$ yields a balanced trade-off, offering robust anytime performance across tasks (see Appendix~\ref{app:ablation_study} for detailed results). Another insight from this study is that the right choice of $\alpha$ may depend on characteristics of the datasets, such as the support of the losses, and could be meta-learned.

\textbf{How does the two-level approach compare against \combinedsearch{}?}
We first compare the average rank over time in Figure~\ref{fig:average_rank} using \combinedsearch{} (black lines; \randomsearch{} and \SMAC{} if available) to the two-level approach. We observe that all methods outperform \randomsearch{} (dotted line) and that while SMAC quickly catches up, some bandit methods continuously achieve a better (lower) rank. Additionally, we observe that most MAB algorithms (except \RisingBandits{} and \QoMaxSDA{}) lead to superior performance in the early stages ($T = 50$). This demonstrates that the decomposition is particularly useful when the number of iterations is limited.  Additionally, Table~\ref{tab:sign_test} shows that the difference in final performance between \combinedsearch{} and the two-level approach is significant.

\textbf{How does \OURALGO{} compare against other bandit methods?}
Figure~\ref{fig:average_rank} shows a substantial difference in the ranking.
In the beginning, many methods perform competitively, but \OURALGO{} yields the best anytime and final performance.
Classical \UCB{} (red) and extreme bandits (\QoMaxSDA{}; brown) perform worst. The \MaxMedian{} algorithm (purple) shows strong initial performance, but its effectiveness declines with more iterations. While  
\MaxMedian{} identifies and avoids the worst arm; it sometimes struggles to select the optimal arm, resulting in non-robust performance.

Next, we look at methods that were originally designed for AutoML. Both \ERUCBS{} (pink) and \QuantileBayesUCB{} (orange) focus on estimating the higher region of the reward distribution rather than the extreme values. \ERUCBS{} assumes a Gaussian reward distribution and is consistently outperformed by \QuantileBayesUCB{}, a distribution-free algorithm.
\RisingBandits{} (green) underperforms initially 
due to its costly initialization
but reaches a competitive final performance. 
This is especially pronounced for the \yahpogym{} benchmark, where \RisingBandits{} outperforms \OURALGO{} with respect to normalized average performance (see Figure~\ref{app:fig:average_norm_loss} in Appendix~\ref{app:detailed_experiments}). This benchmark contains datasets where the optimal arm changes over time. Since \RisingBandits{} models non-stationary rewards, it performs better for these instances.\footnote{A burn-in phase, i.e., pulling each arm for a few rounds at the beginning without observing the rewards, yields a competitive solution as we assess in Appendix~\ref {app:sec:burning_extUCB}.}
 Finally, \OURALGO{} and \QuantileBayesUCB{} are the only ones that significantly outperform \combinedsearch{} in Table~\ref{tab:sign_test}. And looking at 
 \tabreporaw{} and \hebo{}, as depicted in Figure~\ref{fig:average_rank}, demonstrates that \OURALGO{} is a robust method for CASH problems. 

\begin{figure*}[tb]
    \begin{tabular}{c c c c c}
    \scriptsize\tabrepo{[RS]}
    &\hspace{-1.5em}\scriptsize\tabreporaw{[SMAC]}
    & \hspace{-1em}\scriptsize\yahpogym{[SMAC]}
    &\hspace{-1.5em}\scriptsize\hebo{[HEBO]}
    & \\
    \hspace{-1.5em} \includegraphics[clip, trim=0.2cm 0cm 0.3cm 0.25cm,height=3.9cm]{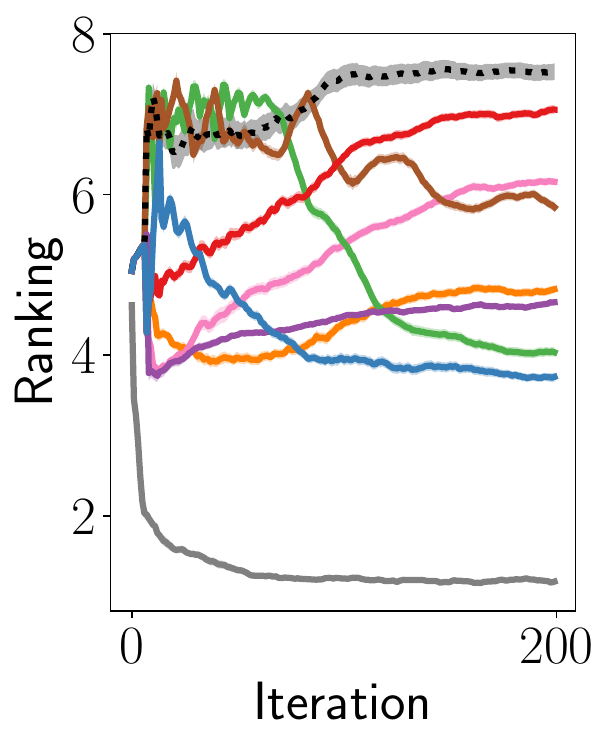}
    & \hspace{-1em} \includegraphics[clip, trim=0.3cm 0cm 0cm 0.25cm, height=3.8cm]{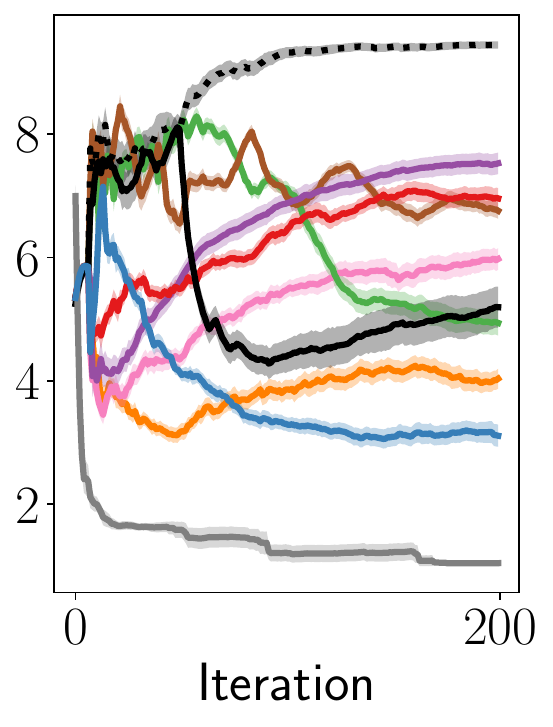}
    &  \hspace{-1em}\includegraphics[clip, trim=0.3cm 0cm 0cm 0.25cm, height=3.85cm]{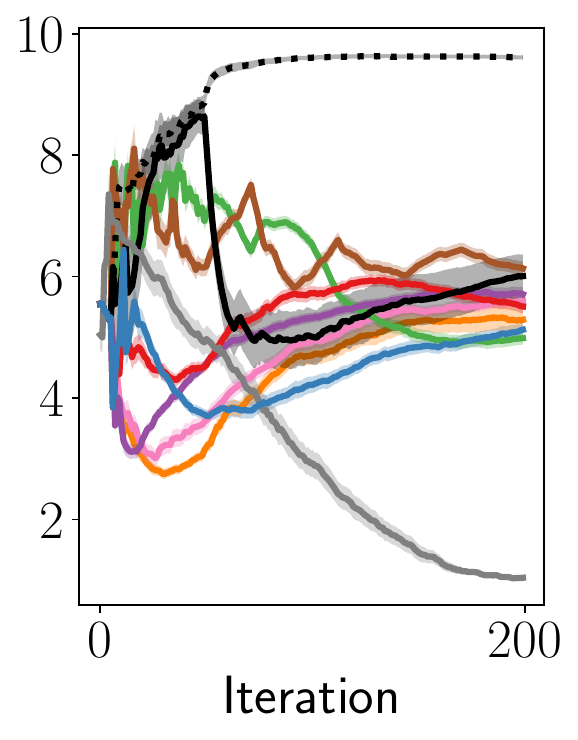}
    & \hspace{-1.5em} \includegraphics[clip, trim=0.3cm 0cm 0cm 0.25cm, height=3.8cm]{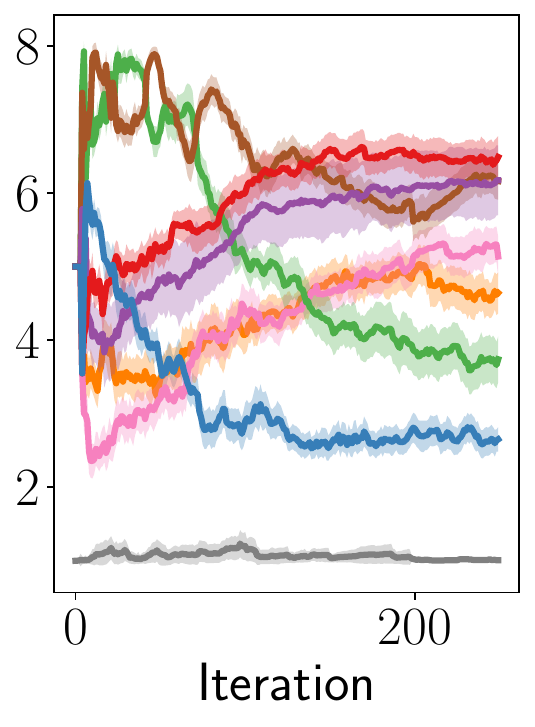}
    &\hspace{-1.4em} \includegraphics[clip, trim=0.3cm -4.0cm 0.0cm 0.0cm, height=3.5cm]{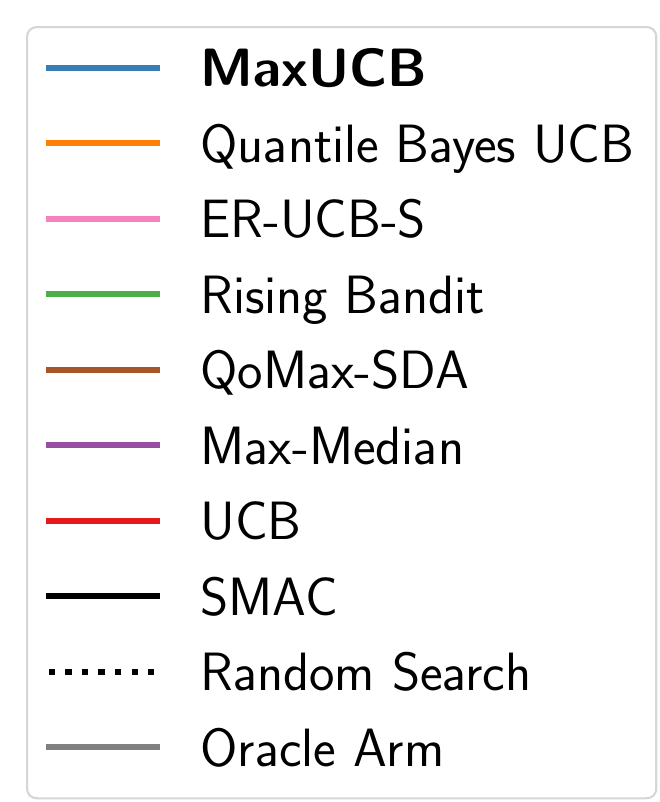}
    \end{tabular}
\caption{Average rank of algorithms on different benchmarks, lower is better. \SMAC{} and \randomsearch{} perform \combinedsearch{} across the joint space.}
\label{fig:average_rank}
\end{figure*}

\begin{table*}[tb]
\centering
\scriptsize
\begin{tabular}{@{\hskip 0mm}l@{\hskip 1mm}lccccccc@{\hskip 0mm}}
Benchmark &  & \textbf{MaxUCB } &  \makecell{Quantile\\Bayes UCB} & ER-UCB-S & Rising Bandit & QoMax-SDA & Max-Median & UCB\\
\midrule 
\multicolumn{1}{l}{\multirow{2}{*}{\makecell{TabRepo\\{[RS]}}}} & p-value  & $\mathbf{\underline{0.00000}}$ & $\mathbf{\underline{0.00000}}$ & $\mathbf{\underline{0.00000}}$ & $\mathbf{\underline{0.00000}}$ & $\mathbf{\underline{0.00000}}$ & $\mathbf{\underline{0.00006}}$ & $\mathbf{\underline{0.00000}}$ \\
 &  w/t/l  & $186$/$4$/$10$ & $179$/$6$/$15$ & $135$/$5$/$60$ & $185$/$4$/$11$ & $172$/$5$/$23$ & $126$/$3$/$71$ & $135$/$5$/$60$\\
\midrule 
\multicolumn{1}{l}{\multirow{2}{*}{\makecell{TabRepoRaw\\{[SMAC]}}}} & p-value  & $\mathbf{\underline{0.00072}}$ & $\mathbf{\underline{0.00261}}$ & $0.95063$ & $0.42777$ & $0.99194$ & $0.99984$ & $0.99194$ \\
 &  w/t/l  & $24$/$0$/$6$ & $23$/$0$/$7$ & $11$/$0$/$19$ & $16$/$0$/$14$ & $9$/$0$/$21$ & $6$/$0$/$24$ & $9$/$0$/$21$\\
\midrule 
\multicolumn{1}{l}{\multirow{2}{*}{\makecell{YaHPOGym\\{[SMAC]}}}} & p-value  & $\mathbf{0.00880}$ & $\mathbf{\underline{0.00503}}$ & $0.31038$ & $\mathbf{\underline{0.00074}}$ & $0.08372$ & $0.50000$ & $\mathbf{0.02412}$ \\
 &  w/t/l  & $64$/$0$/$39$ & $65$/$0$/$38$ & $54$/$1$/$48$ & $68$/$0$/$35$ & $59$/$0$/$44$ & $52$/$0$/$51$ & $62$/$0$/$41$\\
\bottomrule
\end{tabular}
\caption{P-values from a sign test assessing whether bandit methods outperform \combinedsearch{}. P-values below $\alpha = 0.05$ are underlined, while those below $\alpha = 0.05$ after multiple comparison correction (adjusting $\alpha$ by \#comparisons) are boldfaced, indicating that the two-level approach is superior to \combinedsearch{}. 
Additionally, we report the number of wins, ties, and losses (w/t/l).}
\vspace{-0.4cm}
\label{tab:sign_test}
\end{table*}

\section{Conclusions, Discussions and Future Work} 
\label{Sec:Conclusion}

This paper addresses the CASH problem, proposing \OURALGO{}, an MKB method. Our data-driven analysis answers an open question on the applicability of extreme bandits to CASH. We provide a novel theoretical analysis and show state-of-the-art performance on several important benchmarks.
\textbf{Limitations.} Though our method can be applied beyond AutoML in principle, it is finely tuned for this setting with bounded and skewed distributions and for maximal value optimization.
Our analysis relies on stationary distributions, which might not always be accurate, especially at the beginning of the HPO run, so a short burn-in phase may be needed to reach this regime. Our approach may not be distributionally optimal, as optimality in bounded extreme bandits remains an open question, and establishing lower bounds is left for future work.
Lastly, we provide a default value for our hyperparameter $\alpha$ that might need adjusting for other applications.

\textbf{Impact on AutoML systems.} Our approach complements prior work on AutoML systems and increases their flexibility. First, our approach allows to choose any HPO method for each model at the lower level and thus it may integrate recent progress in HPO methods for some ML model classes, e.g., multi-fidelity or meta-learned methods ~\citep{gyorgy2011efficient,li2018hyperband,falkner-icml18a,muller2023pfns4bo,chen2022towards}. Second, 
some AutoML systems~\citep{swearingen2017atm,li2023volcanoml} decompose the search space into smaller subspaces to scale to a distributed setting and use Bandit methods to select promising subspaces~\citep{levine2017rotting,li2020efficient}. 
While we focus on applying bandits to select promising ML models, our methods could also be applied in this setting. Finally, beyond CASH, \OURALGO{} is well-suited for sub-supernet selection in Neural Architecture Search (NAS) \citep{hu2022generalizing}, showing similar reward distributions \citep{ly2024analyzing} (see Appendix~\ref{app:sec:supernet_experiments}). 

\textbf{Choosing $\alpha$ adaptively.}  Figure~\ref{fig:sensitivity_to_alpha} suggests that one could try to tune $\alpha$ online. However, it is known that, in theory, without additional information, data-adaptive parameters cannot be found at a reasonable exploration cost in bandit optimization settings~\citep{Locatelli2018AdaptivityTS}. In AutoML systems, though, supplementary data, like estimates of the sub-optimality gap, reward distribution shape, and HPO convergence rate, can help adjust $\alpha$ adaptively.

\textbf{Future Directions.}  
To extend our extreme bandit setting, one could further refine the reward modeling by incorporating the non-stationarity, especially in AutoML for data streams, where optimal models shift with data distributions~\citep{VermaASMLAS}.
Incorporating cost-aware optimization is another promising direction, as computational resources and time, rather than iteration counts, often define budgets in AutoML; this would require estimating model training times and factoring them into the decision process. 
Addressing the growing complexity of heterogeneous ML tasks, such as those involving pre-training, fine-tuning, or prompt engineering, may benefit from a hierarchical approach that allocates resources effectively across diverse~\citep{balefposterior, balef2025context}. Additionally, exploiting structural similarities among algorithms and their hyperparameters could reduce exploration costs by sharing information across arms in the CASH problem, further enhancing efficiency.

\begin{ack}
The authors are funded by the Deutsche Forschungsgemeinschaft (DFG, German Research Foundation) under Germany’s Excellence Strategy – EXC number 2064/1 – Project number 390727645. Additionally, C. Vernade acknowledges funding from the DFG under the project 468806714 of the Emmy Noether Programme.  The authors also thank the International Max Planck Research School for Intelligent Systems (IMPRS-IS).
\end{ack}

\bibliographystyle{unsrtnat}
\bibliography{strings,references,lib,myproc,proc}
\newpage
\appendix 
\counterwithin{figure}{section}
\counterwithin{table}{section}
\counterwithin{algorithm}{section}
\onecolumn
\vspace*{1cm}
\section*{\centering Table of Contents for the Appendices}
\vspace{1cm}
\begin{itemize}
\item \textbf{\hyperref[app:preliminiaries]{Appendix A: Preliminaries}} \dotfill \pageref{app:preliminiaries}
\item \textbf{\hyperref[app:proofs]{Appendix B: Proofs}} \dotfill \pageref{app:proofs}
\begin{itemize}
    \item \hyperref[app:our_proposition_proof]{B.1 Proof of Lemma \ref{theorem:lemma}} \dotfill \pageref{app:our_proposition_proof}
    \item \hyperref[app:proposition1_proof]{B.2 Proof of Proposition \ref{proposition:Regret_ExUCB}} \dotfill \pageref{app:proposition1_proof}
    \item \hyperref[app:theorem1_proof]{B.3 Proof of Theorem \ref{theorem:number_suboptimal_MaxUCB}} \dotfill \pageref{app:theorem1_proof}
     \item \hyperref[app:theorem1_proof_extension]{B.4 Extension of Proof of Theorem \ref{theorem:number_suboptimal_MaxUCB}} \dotfill \pageref{app:theorem1_proof_extension}
\end{itemize}
\item \textbf{\hyperref[app:details_on_assumptions]{Appendix C: More Details on Reward Distribution}} \dotfill \pageref{app:details_on_assumptions}
    \begin{itemize}
        \item \hyperref[app:reward_distribution_analysis]{C.1 Reward Distribution Analysis} \dotfill \pageref{app:reward_distribution_analysis}
        \item \hyperref[app:examples_validation_assumption]{C.2 More Details on Lemma \ref{theorem:lemma}} \dotfill \pageref{app:examples_validation_assumption}
        \item \hyperref[app:empirical_validation_assumption]{C.3 Empirical Validation of Lemma \ref{theorem:lemma}} \dotfill \pageref{app:empirical_validation_assumption}
    \end{itemize}
\item \textbf{\hyperref[app:details_on_experiments]{Appendix D: More Details on the Experiments}} \dotfill \pageref{app:details_on_experiments}
    \begin{itemize}
        \item \hyperref[app:metric_calculation]{D.1 Metric Calculation} \dotfill \pageref{app:metric_calculation}
        \item \hyperref[app:experimentalsetup]{D.2 Experimental Setup} \dotfill \pageref{app:experimentalsetup}
        \item \hyperref[app:baselines_hyperparameters]{D.3 Baselines and Their Hyperparameters} \dotfill \pageref{app:baselines_hyperparameters}
        \item \hyperref[app:ablation_study]{D.4 More Results on the Sensitivity Analysis of Hyperparameter $\alpha$} \dotfill \pageref{app:ablation_study}
        \item \hyperref[app:detailed_experiments]{D.5 More Results for the Empirical Evaluation} \dotfill \pageref{app:detailed_experiments}
        \item \hyperref[app:more_baselines_experiments]{D.6 More Baselines for the Empirical Evaluation} \dotfill \pageref{app:more_baselines_experiments}
    \end{itemize}
\item \textbf{\hyperref[app:algorithm_behaviour]{Appendix E: More Details on the Empirical Behaviour of MaxUCB}} \dotfill \pageref{app:algorithm_behaviour}
    \begin{itemize}
        \item \hyperref[app:number_pulls_arms]{E.1 The Number of Times Each Arm is Pulled} \dotfill \pageref{app:number_pulls_arms}
        \item \hyperref[app:sec:thoery_vs_reality]{E.2 From Theory to Practice} \dotfill \pageref{app:sec:thoery_vs_reality}
        \item \hyperref[app:sec:burning_extUCB]{E.3 Addressing Non-Stationary Rewards} \dotfill \pageref{app:sec:burning_extUCB}
        \item \hyperref[app:sec:extreme_bandits_experiments]{E.4 Toy Examples from the Extreme Bandit's Literature} \dotfill \pageref{app:sec:extreme_bandits_experiments}
        \item \hyperref[app:sec:supernet_experiments]{E.5 Supernet Selection in Few-Shot Neural Architecture Search} \dotfill \pageref{app:sec:supernet_experiments}
    \end{itemize}
\item \textbf{\hyperref[app:checklist]{Appendix F: NeurIPS Paper Checklist}} \dotfill \pageref{app:checklist}
\end{itemize}

\newpage

\section{Preliminiaries}
\label{app:preliminiaries}

\begin{lemma}
\label{theorem:lemma_max}
Let \(X_1, \ldots, X_n\) be \(n\) samples independently drawn from distribution \(d\), and let \(G(x) = P(X > x)\) be the survival function. We have:
\begin{align}
    P\left( \max_{1 \leq t \leq n} X_t \leq x \right) &\leq e^{-n G(x)}, \notag \\
    P\left( \max_{1 \leq t \leq n} X_t > x \right) &\leq n G(x).
    \label{eq:lemma_max}
\end{align}
\end{lemma}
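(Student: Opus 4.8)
The plan is to prove the two bounds separately, since each follows from a single elementary observation about the event structure of the maximum of independent samples, together with one standard inequality.

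For the first bound, I would begin by rewriting the event $\{\max_{1 \leq t \leq n} X_t \leq x\}$ as the intersection $\bigcap_{t=1}^{n} \{X_t \leq x\}$, which holds because the maximum is at most $x$ exactly when every sample is at most $x$. Using the independence of $X_1, \ldots, X_n$ together with the identity $P(X_t \leq x) = 1 - G(x)$, this factorizes into $P\!\left(\max_{t} X_t \leq x\right) = (1 - G(x))^n$. The only remaining step is to apply the classical inequality $1 - u \leq e^{-u}$, valid for all real $u$ and in particular for $u = G(x) \in [0,1]$, which immediately gives $(1 - G(x))^n \leq e^{-n G(x)}$.

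For the second bound, I would instead write $\{\max_{t} X_t > x\} = \bigcup_{t=1}^{n} \{X_t > x\}$, since the maximum exceeds $x$ precisely when at least one sample does, and then apply the union bound: $P\!\left(\max_{t} X_t > x\right) \leq \sum_{t=1}^{n} P(X_t > x) = n G(x)$. Note that independence is not even required for this half of the statement.

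The argument is entirely elementary, so I do not anticipate any genuine obstacle; the only point worth verifying is that $G(x)$ is a bona fide probability lying in $[0,1]$, which is what licenses the exponential inequality in the first part and keeps both estimates meaningful (the second being informative only in the regime $n G(x) \leq 1$, i.e.\ near the right end of the support, which is exactly where these bounds are later invoked).
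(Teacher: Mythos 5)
Your proposal is correct and follows essentially the same route as the paper's proof: factorizing $P(\max_t X_t \leq x) = (1-G(x))^n$ by independence and applying $1-u \leq e^{-u}$ for the first bound, and the union bound for the second. Your added remark that independence is not needed for the second inequality is accurate but does not change the argument.
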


\begin{proof}
Let \(F(x) = P(X \leq x)\) be the cumulative distribution function, so \(G(x) = 1 - F(x) = P(X > x)\). First, consider the probability that the maximum of the \(n\) samples is less than or equal to \(x\):
\begin{align}
    P\left( \max_{1 \leq t \leq n} X_t \leq x \right) &= \prod_{i=1}^{n} P(X_i \leq x) = (F(x))^n = (1 - G(x))^n \leq e^{-n G(x)},
\end{align}
using the inequality \((1-x)^n \leq e^{-nx}\). Next, consider the probability that the maximum of the \(n\) samples is greater than \(x\):
\begin{align}
    P\left( \max_{1 \leq t \leq n} X_t > x \right) &\leq \sum_{i=1}^{n} P( X > x) = n G(x).
\end{align}
\end{proof}

\section{Proofs}
\label{app:proofs}

\subsection{Proof of \ourProposition{}}
\label{app:our_proposition_proof}

\textbf{\ourProposition{}.} Suppose Assumption~\ref{theorem:assumption} holds. Then, there exists $L,U \geq 0$ such that the survival function \( G \) can be bounded near $b$ by two linear functions. 
\begin{align}
\forall  \epsilon \in (0,b-a), \quad L \epsilon \leq   G( b - \epsilon) \leq U\epsilon
\end{align}

\begin{proof}
By applying the Mean Value Theorem (MVT) to the survival function $G$ over an interval $[b - \epsilon,b]$, there exists a point $c \in (b - \epsilon,b)$ such that:
\begin{align}
G(b)- G(b - \epsilon) =  G^\prime(c) (b - (b - \epsilon)) = G^\prime(c) \epsilon
\end{align}
Since $G^\prime(x) = -f(x)$ where $f(x)$ the probability density function (PDF) and $G(b) = 0$ we have:
\begin{align}
 G(b - \epsilon) =  f(c) \epsilon
\end{align}
Let $L$ and $U$ be the minimum and maximum values of the PDF $f(c) = \frac{G(b - \epsilon)}{\epsilon}$ over $\epsilon \in (0,b-a)$.
\begin{align}
  L \epsilon \leq G(b - \epsilon) \leq  U \epsilon
\end{align}

Notably, in cases where we are interested in the survival function near some $b_1<b$ and $a_1>a$ i.e., over $(b_1 - \epsilon,b_1)$ applying the MVT again, there exists  $c \in (b_1 - \epsilon,b_1)$  such that:
\begin{align}
G(b_1)- G(b_1 - \epsilon) =  G^\prime(c)  \epsilon = -f(c) \epsilon
\end{align}
which rearranges to:
\begin{align}
 G(b_1 - \epsilon) =  f(c) \epsilon +  G(b_1)
\end{align}
For any  $\epsilon \in (\delta,b_1-a_1)$ with $\delta>0$ we can bound  $G(b_1 - \epsilon)$ as:
\begin{align}
   L \epsilon \leq L \epsilon +  G(b_1)  \leq G(b_1 - \epsilon) \leq   ( f(c) + \frac{G(b_1)}{\delta}) \epsilon  \leq  U \epsilon
\end{align}
\end{proof}

\subsection{Proof of Proposition \ref{proposition:Regret_ExUCB}}
\label{app:proposition1_proof}
\textbf{Proposition~\ref{proposition:Regret_ExUCB}.}  (Upper Regret Bound) the upper regret bound up to time $T$ is related to the number of times sub-optimal arms are pulled,
\begin{equation}
R(T) \leq \frac{\max\limits_{i \leq K} b_i} {T}  \sum\limits_{i\neq i^* }^{K} N_i(T)
\end{equation}
Where $N_i(T)=  \mathbbm{E}(\sum\limits_{t=1}^{T} \mathbbm{1}{\{I_t=i\}})$ is the number of sub-optimal pulls of arm $i$, arm $i^*$ is the optimal arm and $b_i$ is the upper bound on the reward of arm $i$ as given by Assumption \ref{theorem:assumption}, i.e., the reward of arm $i$ lies within the interval $[a_i,b_i]$.
\begin{proof}
This proof is inspired by Assumption 1 of \citet{baudry2022efficient}. First, we need to determine an upper bound for the difference in the highest observed reward for arm $i$ when it has been pulled for $N_{i}(T)$ times compared to when it has been pulled for $T$ times.

\begin{align}
    & \mathbbm{E} \left[ \max_{t \leq T} r_{i,t} \right] - \mathbbm{E} \left[ \max_{t \leq N_i(T)} r_{i,t} \right] = \mathbbm{E} \left[ \mathbbm{1}\left\{ \max_{N_i(T) + 1 \leq t \leq T} r_{i,t} = \max_{t \leq T} r_{i,t} \right\} 
    \max_{N_i(T) + 1 \leq t \leq T} r_{i,t} \right] \notag \\
    & \leq \mathbbm{E} \Bigg[ 
        \mathbbm{1}\left\{ \max_{N_i(T) + 1 \leq t \leq T} r_{i,t} = \max_{t \leq T} r_{i,t} \right\} 
        \underbrace{\mathbbm{1}\left\{ \max_{N_i(T) + 1 \leq t \leq T} r_{i,t} \leq B \right\} 
        \max_{N_i(T) + 1 \leq t \leq T} r_{i,t}}_{\leq B} 
    \Bigg] \notag \\
    & \quad + \mathbbm{E} \Bigg[ 
        \mathbbm{1}\left\{ \max_{N_i(T) + 1 \leq t \leq T} r_{i,t} = \max_{t \leq T} r_{i,t} \right\} 
        \mathbbm{1}\left\{ \max_{N_i(T) + 1 \leq t \leq T} r_{i,t} > B \right\} 
        \max_{N_i(T) + 1 \leq t \leq T} r_{i,t} 
    \Bigg] \notag \\
    & \leq P\left(\max_{N_i(T) + 1 \leq t \leq T} r_{i,t} = \max_{t \leq T} r_{i,t}\right) B \notag \\
    & \quad + \mathbbm{E} \Bigg[ 
        \mathbbm{1}\left\{ \max_{N_i(T) + 1 \leq t \leq T} r_{i,t} = \max_{t \leq T} r_{i,t} \right\} 
        \mathbbm{1}\left\{ \max_{N_i(T) + 1 \leq t \leq T} r_{i,t} > B \right\} 
        \max_{N_i(T) + 1 \leq t \leq T} r_{i,t} 
    \Bigg].
\end{align}

Since always \(\max\limits_{N_i(T) + 1 \leq t \leq T} r_{i,t} \leq b_i \) by choosing \( B = b_i \) we ensure that \(\mathbbm{1}\left\{ \max\limits_{N_i(T) + 1 \leq t \leq T} r_{i,t} > B \right\} = 0\), leading to:

\begin{align}
    \mathbbm{E} \left[ \max_{t \leq T} r_{i,t} \right] - \mathbbm{E} \left[ \max_{t \leq N_i(T)} r_{i,t} \right] 
    & \leq P\left(\max_{N_i(T) + 1 \leq t \leq T} r_{i,t} = \max_{t \leq T} r_{i,t}\right) B \notag \\
    & \leq \left(1 - \frac{N_i(T)}{T}\right) B = \left(1 - \frac{N_i(T)}{T}\right) b_i.
    \label{eq:num_pulls_regret}
\end{align}

Using this and according to the regret definition, we obtain the following:
\begin{align}
R(T) &= \mathop{\mathbbm{E}[\max r_{i^*,t}]}_{t \leq T} - \mathop{\mathbbm{E}[\max r_{I_t,t}]}_{t \leq T}\notag \\ 
&\leq \mathop{\mathbbm{E} \left[ \max_{t \leq T} r_{i^*,t} \right]} - \max_{i \leq K} \mathbbm{E} \left[ \max_{t \leq N_i(T)} r_{i,t} \right] \notag \\ 
&= \min_{i \leq K} (\mathop{\mathbbm{E} \left[ \max_{t \leq T} r_{i^*,t} \right]} - \mathbbm{E} \left[ \max_{t \leq N_i(T)} r_{i,t} \right]) \notag \\ 
&= \min_{i \leq K} ( \Delta_i + \underset{\Eqref{eq:num_pulls_regret}}{\underbrace{{\mathop{\mathbbm{E} \left[ \max_{t \leq T} r_{i,t} \right]} - \mathbbm{E} \left[ \max_{t \leq N_i(T)} r_{i,t} \right]}}})\notag \\ 
&  \leq \min_{i \leq K} ( \Delta_i + (1- \frac{N_i(T)}{T}) b_i )\notag \\ 
& \leq \min_{i \leq K} ( \Delta_i + (1- \frac{N_i(T)}{T}) \max\limits_{i \leq K} b_i )
\label{eq:regret_proposition1_proof}
\end{align}
Based on Definition \ref{def:suboptimality_gap}, the suboptimality gap for the optimal arm $i^*$ is zero  ($\Delta_{i^*} = 0$ ). Additionally, the total number of pulls for the optimal arm $N_{i^*}(T)$ can be calculated as the difference between the total number of pulls across all arms $T$ and the pulls for all suboptimal arms ($i \neq i^*$), i.e. $N_{i^*}(T) =  T - \sum\limits_{i\neq i^* }^{K} N_i(T)$. We upper bound the min in \eqref{eq:regret_proposition1_proof} by the specific value for the optimal arm $i^*$:
\begin{align}
R(T) \leq \frac{\max\limits_{i \leq K} b_i} {T}  \sum\limits_{  i\neq i^* }^{K} N_i(T)
\end{align}

\end{proof}

\subsection{Proof of Theorem \ref{theorem:number_suboptimal_MaxUCB}}
\label{app:theorem1_proof}
\textbf{Theorem~\ref{theorem:number_suboptimal_MaxUCB}.} For any suboptimal arm $i\neq i^\star$, the number of suboptimal draws $N_i(T)$ performed by \textbf{Algorithm \ref{alg:pseudocode_MaxUCB}} up to time $T$ is bounded by
\begin{align}
  N_i(T) \leq \frac{ T^{ 1 - 2 L_{i^*} \alpha  \sqrt{\Delta_i} }}{1 - 2 L_{i^*} \alpha  \sqrt{\Delta_i}} + 2 \alpha \sqrt{U_i T} \log(T) 
\end{align}

\begin{proof}
In order to find the upper bound for the number of sub-optimal pulls of arm $i$ for the algorithm~\ref{alg:pseudocode_MaxUCB}, without loss of generality, we assume that arm $1$ is the optimal arm, i.e. $i^* = 1$. Let $\Delta_i=\mathbbm{E}[\max\limits_{t \leq T} r_{1,t}]- \mathop{\mathbbm{E}[\max\limits_{t \leq T} r_{i,t}]}$ be the suboptimality gap. Our goal is to determine an upper bound on $N_i(T)$, the number of times the sub-optimal arm $i$ has been pulled up to time $T$. First, we identify the event that the algorithm pulls the sub-optimal arm $i$ at time $t$:
\begin{align}
     S &= \{ \max{(r_{1,1},...,r_{1,n_1(t)})} +  C_t(n_1(t))  \leq \max{(r_{i,1},...,r_{i,n_i(t)})} +  C_t(n_i(t)) \} \notag \\
       &= \{ \max\limits_{ 1 \leq n \leq n_1(t)} r_{1,n} +  C_t(n_1(t))  \leq \max\limits_{ 1 \leq n \leq n_i(t)} r_{i,n} +  C_t(n_i(t) \}
    \label{eq:probability1}
\end{align}

where $S$ is the event of selecting a sub-optimal arm $i$ with a padding function $C_t(n)$.  The exploration bonus $C_t(n)$ is a function that is designed to account for the exploration-exploitation trade-off and typically depends on $t$ and the number of times each arm has been pulled $n$. Let $n_1(t)$ and $n_i(t)$  represent the number of times the optimal arm $1$ and an sub-optimal arm $i$ have been pulled, respectively, where $n_1(t) \leq t$ and $n_i(t)\leq t$. We want to express $S$ in a union of events that covers all possible scenarios leading to $S$. Thus, we split $S$ into two complementary conditions as follows:

\begin{align}
    S \subseteq \{  \max\limits_{ 1 \leq n \leq n_1(t)} r_{1,n} + C_t(n_1(t)) \leq  x \}   \cup  \{ \max\limits_{ 1 \leq n \leq n_i(t)} r_{i,n} +  C_t(n_i(t)) >  x \} 
    \label{eq:probability2}
\end{align}
Where $x$ is a threshold value, we take $x = \mathbbm{E}[ \max\limits_{t \leq T} r_{i,t}]$,
\begin{align}
    S &\subseteq \left\{ \max_{1 \leq n \leq n_1(t)} r_{1,n} + C_t(n_1(t)) \leq \mathbbm{E}[\max_{t \leq T} r_{i,t}] \right\}  \notag \\
    &\quad \cup \left\{ \max_{1 \leq n \leq n_i(t)} r_{i,n} + C_t(n_i(t)) >\mathbbm{E}[\max_{t \leq T} r_{i,t}] \right\}  \notag \\
    &= \left\{ \max_{1 \leq n \leq n_1(t)}  r_{1,n}   \leq \mathop{\mathbbm{E}[\max_{t \leq T} r_{1,t}]} - \Delta_i - C_t(n_1(t))\right\}  \notag \\
    &\quad \cup \left\{ \max_{1 \leq n \leq n_i(t)} r_{i,n} > \mathop{\mathbbm{E}[\max_{t \leq T} r_{i,t}]} - C_t(n_i(t)) \right\}
    \label{eq:error_probability}
\end{align}

Thus, the event $S$ can be contained within the union of two bad events:
\begin{itemize}
    \item Underestimating the upper confidence bound of extreme values for the optimal arm $1$
    \item Overestimating the upper confidence bound of extreme values for the sub-optimal arm $i$
\end{itemize}

Now we use Lemma \ref{theorem:lemma_max} to calculate the probability of \ref{eq:error_probability}:
\begin{align}
    P(S) &\leq P\left( \left\{ \max_{1 \leq n \leq n_1(t)} r_{1,n} \leq \mathbbm{E}[\max_{t \leq T} r_{1,t}] - C_t(n_1(t)) - \Delta_i \right\} \right) \notag \\
    &\quad +  P\left( \left\{ \max_{1 \leq n \leq n_i(t)} r_{i,n} >\mathbbm{E}[\max_{t \leq T} r_{i,t}] - C_t(n_i(t)) \right\} \right)  \notag \\
    &\leq e^{-n_1(t) G_{1}\left( \mathbbm{E}[\max\limits_{t \leq T} r_{1,t}] - C_t(n_1(t)) - \Delta_i \right)} \notag \\
    &\quad + n_i(t) G_i\left( \mathbbm{E}[\max_{t \leq T}  r_{i,t}]- C_t(n_i(t)) \right).
\end{align}

Now, by applying \ourProposition, we can simplify the analysis by eliminating the complexities associated with survival functions $G_{1}$ and $G_i$.
\begin{align}
     P(S) \leq e^{-n_1(t) L_1 ( C_t(n_1(t)) + \Delta_i )} +    n_i(t) U_i C_t(n_i(t))
     \label{eq:probability_error_two}
\end{align}

For the first term of the right-hand side, by the Arithmetic Mean-Geometric Mean (AM-GM) inequality (\(a + b \geq 2\sqrt{ab}\)) of \eqref{eq:probability_error_two}, we have:
\begin{align}
    e^{-n_1(t) L_1 (C_t(n_1(t)) + \Delta_i)} \leq e^{-2 L_1 n_1(t) \sqrt{C_t(n_1(t)) \Delta_i}}
    \label{eq:probability_error_left_hand}
\end{align}

In this stage, we want to find a proper padding function $C_t(n)$, which controls the right-hand side of \eqref{eq:probability_error_two} and  \eqref{eq:probability_error_left_hand}. By choosing $C_t(n)=  (\frac{ \alpha \log(t)}{n})^2$, we have:
\begin{align}
    e^{-n_1(t) L_1 ( C_t(n_1(t)) + \Delta_i )} &\leq e^{-2 L_1 n_1(t) \sqrt{C_t(n_1(t)) \Delta_i}} =  e^{- 2 L_1 \alpha \log(t) \sqrt{\Delta_i }} = t^{ - 2 L_1 \alpha  \sqrt{\Delta_i} }  \\
    n_i(t) U_i C_t(n_i(t))  &\leq \frac{\alpha^2 U_i \log^2(t)}{n_i(t)}
\end{align}

This selection of the function of the exploration bonus results in two significant advantages. First, it provides an upper bound for the right-hand side of \eqref{eq:probability_error_two} that remains independent of $n_1$. Furthermore, \Eqref{eq:probability_error_left_hand} shows a decreasing trend as the number of pulls for the sub-optimal arm $i$ increases.\footnote{We note that this choice is not based on the inherent property of maximum values. In general one can use $C_t(n)=  (\frac{ \alpha \log(t)}{n})^m$ for $m > 1$ with the optimal $m$ depending on the setting. In Appendix~\ref{app:theorem1_proof_extension} we show how $m$ affects the regret.}

Now, we assume that the sub-optimal arm $i$ has been played for $l_i$ times, so $n_i(t) \geq l_i$. We want to calculate the number of sub-optimal pulls of arm $i$ up to time $T$:
\begin{align}
    N_i(T)  &\leq   l_i + \sum_{t= l_i}^{T} P(S) \leq   l_i +  \sum_{t=l_i}^{T}  t^{ - 2 L_1 \alpha  \sqrt{\Delta_i} }  + \sum_{t=l_i}^{T} \frac{\alpha^2 U_i \log^2(t)}{l_i}   \notag \\
     &\leq l_i + \frac{ T^{ 1 - 2 L_1 \alpha  \sqrt{\Delta_i} }}{1 - 2 L_1 \alpha  \sqrt{\Delta_i}} +  \frac{\alpha^2 U_i }{l_i} T  \log^2(T) 
\end{align}

By choosing $l_i = \alpha \sqrt{U_i T}  \log(T)  $, we have:

\begin{align}
    N_i(T) &\leq \frac{ T^{ 1 - 2 L_1 \alpha  \sqrt{\Delta_i} }}{1 - 2 L_1 \alpha  \sqrt{\Delta_i}} + 2 \alpha \sqrt{U_i T} \log(T) 
\end{align}

\end{proof}

\subsection{Extension of Proof of Theorem \ref{theorem:number_suboptimal_MaxUCB} }
\label{app:theorem1_proof_extension}
As we discuss in Section~\ref{sec:dataanalysis}, the reward distribution in our setting is left-skewed.  We now show that under the assumption that the survival function decays rapidly near the maximum i.e., $G_i(\mathbbm{E}[\max_{t \leq T} r_{i,t}]) =  \mathcal{O}(\frac{1}{T^2})$, a tighter bound can be derived. This assumption means that the reward distribution has a very light-tail near its upper extreme, which \textit{may} often hold for our left-skewed distributions. Furthermore, we generalize our algorithm by assuming   $C_t = (\frac{\alpha\log(t)}{n_i})^m$  as a exploration bonus function, where $m \geq 1$ is a hyperparameter. 

\begin{proof}
We begin with \Eqref{eq:probability2} and we take  $x = \mathbbm{E}[ \max\limits_{t \leq T} r_{i,t}] + c \Delta_i$, where $ c$ is an arbitrary variable $c\in [0, 1]$. We have: 

\begin{align}
    S &\subseteq \left\{ \max_{1 \leq n \leq n_1(t)} r_{1,n} + C_t(n_1(t)) \leq \mathbbm{E}[\max_{t \leq T} r_{i,t}]  + c \Delta_i \right\}\notag \\ 
    & \cup \left\{ \max_{1 \leq n \leq n_i(t)} r_{i,n} + C_t(n_i(t)) >\mathbbm{E}[\max_{t \leq T} r_{i,t}]  + c \Delta_i \right\}  \notag \\
    &= \left\{ \max_{1 \leq n \leq n_1(t)}  r_{1,n}   \leq \mathop{\mathbbm{E}[\max_{t \leq T} r_{1,t}]} - (1-c)\Delta_i - C_t(n_1(t)) \right\} \notag \\ 
    & \cup \left\{ \max_{1 \leq n \leq n_i(t)} r_{i,n} > \mathop{\mathbbm{E}[\max_{t \leq T} r_{i,t}]} - C_t(n_i(t))  + c \Delta_i \right\} \notag \\
    &= \underset{S_1}{ \underbrace{\left\{ \max_{1 \leq n \leq n_1(t)}  r_{1,n}   \leq \mathop{\mathbbm{E}[\max_{t \leq T} r_{1,t}]} - (1-c)\Delta_i - C_t(n_1(t)) \right\}}} \\
    &\cup  \underset{S_2}{ \underbrace{\left\{ \max_{1 \leq n \leq n_i(t)} r_{i,n} > \mathop{\mathbbm{E}[\max_{t \leq T} r_{i,t}]} - C_t(n_i(t))  + c \Delta_i, \quad   C_t(n_i(t))  \leq c \Delta_i \right\}}}\\
    &\cup \underset{S_3}{\underbrace{\left\{ \max_{1 \leq n \leq n_i(t)} r_{i,n} > \mathop{\mathbbm{E}[\max_{t \leq T} r_{i,t}]} - C_t(n_i(t))  + c \Delta_i, \quad C_t(n_i(t))  > c \Delta_i \right\}}}
    \label{eq:error_probability_extend}
\end{align}

First, we calculate the probability of event $S_2$:

\begin{align}
&P(S_2)\leq P(\left\{ \max_{1 \leq n \leq n_i(t)} r_{i,n} > \mathop{\mathbbm{E}[\max_{t \leq T} r_{i,t}]} \right\}) \leq n_i G_i(\mathbbm{E}[\max_{t \leq T} r_{i,t}]) \leq T G_i(\mathbbm{E}[\max_{t \leq T} r_{i,t}).
\end{align}

 By calculating the number of sub-optimal pulls of arm $i$ up to time $T$, we know the third part of the event ($S_3$) can happen at most $C_T^{-1}( c\Delta_i) = (\frac{\alpha \log(T)}{c \Delta_i})^{\frac{1}{m}}$ times:
\begin{align}
    N_i(T)  &\leq    (\frac{\alpha \log(T)}{c \Delta_i})^{\frac{1}{m}} + \sum_{t= 1}^{T} P(S_1)+ \sum_{t= 1}^{T} P(S_2) \\  &\leq   (\frac{\alpha \log(T)}{c \Delta_i})^{\frac{1}{m}} +T^2 G_i(\mathbbm{E}[\max_{t \leq T} r_{i,t}])  + \sum_{t= 1}^{T} P(S_1) \\
    &\leq   (\frac{\alpha \log(T)}{c \Delta_i})^{\frac{1}{m}} +M  + \sum_{t= 1}^{T} P(S_1)
\end{align}
Where $M$ is a constant as we assume $G_i(\mathbbm{E}[\max_{t \leq T} r_{i,t}]) =  \mathcal{O}(\frac{1}{T^2})$. Finally, we need to find an upper bound for $P(S_1)$.  We need to differentiate between two situations, when $m=1$ and when $m>1$

\textbf{For $m=1$.} We set $c=1$ and then we have:
\begin{align}
P(S_1)  &\leq e^{- n_1 G_1(\mathop{\mathbbm{E}[\max_{t \leq T} r_{1,t}]} - (1-c)\Delta_i - C_t(n_1(t))) } \leq \\  & e^{- L_1 ( C_t(n_1(t)) + (1-c) \Delta_i )} \leq   e^{- \alpha L_1 \log(T)} \leq T^{- \alpha L_1}
\end{align}

Finally, we have:
\begin{align}
    N_i(T)  &\leq M  + \frac{\alpha \log(T)}{\Delta_i} + \frac{T^{1-\alpha L_1}}{1-\alpha L_1}
\end{align}

 With $ \alpha > \frac{1}{L_1} $:

\begin{align}
    N_i(T)  = \mathcal{O}(\frac{\log(T)}{L_1\Delta_i}) 
\end{align}

\textbf{For $m>1$.} We know $n( (\frac{a}{n})^m + b ) \geq  a  b^{\frac{m -1}{m}} [m(m -1)^{\frac{1}{m} - 1}  ] $. We have:

\begin{align}
P(S_1) &\leq e^{-n_1(t) L_1 ( C_t(n_1(t)) + (1-c) \Delta_i )} \leq   e^{- \alpha L_1 \log(T) ((1-c)\Delta_i)^{\frac{m -1}{m}} [m(m -1)^{\frac{1}{m} - 1}  ]} \\ & \leq T^{- \alpha L_1 ((1-c)\Delta_i)^{\frac{m -1}{m}} [m(m -1)^{\frac{1}{m} - 1}  ]}
\end{align}

For simplicity, we set $c = \frac{1}{2}$. Then:

\begin{align}
    N_i(T)  &\leq    (\frac{\alpha \log(T)}{c \Delta_i})^{\frac{1}{m}} + M + \sum_{t= 1}^{T}  P(S_2) \\  &\leq   (\frac{2\alpha \log(T)}{ \Delta_i})^{\frac{1}{m}} + M  +  \frac{T^{ 1 - \alpha L_1 (\frac{\Delta_i}{2})^{\frac{m -1}{m}} [m(m -1)^{\frac{1}{m} - 1}  ]}}{1 - \alpha L_1 (\frac{\Delta_i}{2})^{\frac{m -1}{m}} [m(m -1)^{\frac{1}{m} - 1}  ]} 
\end{align}

Finally, we have:

\begin{align}
    N_i(T) =  (\frac{2 \alpha \log(T)}{\Delta_i})^{\frac{1}{m}} + \mathcal{O}(T^{ 1 - \alpha L_1 (\frac{\Delta_i}{2})^{\frac{m -1}{m}} [m(m -1)^{\frac{1}{m} - 1}  ]}).
\end{align}

And by choosing $\alpha$ as below 
\begin{align}
\alpha = \mathcal{O}\left(\frac{1}{L_1  (\Delta_i)^{\frac{m -1}{m}} } \right), 
\label{eq:optimal_alpha}
\end{align}
We have:
\begin{align}
    N_i(T) =   \mathcal{O}\left(\frac{\log(T)}{L_1 \Delta_i^{{\frac{2m -1}{m}}}}\right)^{\frac{1}{m}}.
    \label{eq:N_i_with_m}
\end{align}

\end{proof}
We would like to emphasize again that this result only holds when $G_i(\mathbbm{E}[\max_{t \leq T} r_{i,t}])$ is sufficiently small. By using a weaker assumption, controlling $G_i(\mathbbm{E}[\max_{t \leq T} r_{i,t}]) $ is necessary. In Theorem \ref{app:theorem1_proof}, we address this for our method by considering $C_t(n_i(t)) \geq G_i(\mathbbm{E}[\max_{t \leq T} r_{i,t}])$ to ensure proper control.

Notably, in general, $\Delta_i$ depends on $T$ (see Definition \ref{def:suboptimality_gap}). Meaning that $\frac{\log(T)}{\Delta_i}$ does not necessarily lead to a logarithmic regret. However, in some special scenarios where the reward distributions have different supports, we can ensure that $\Delta_i = b_1 - b_i$ as $T$ approaches infinity, and a logarithmic regret is achievable.

Furthermore, \Eqref{eq:N_i_with_m} shows that the parameter $m$ controls $N_i(T)$, the number of times a suboptimal arm is pulled asymptotically. When $T$ is very large, a higher value of $m$ (along with the optimal $\alpha$) improves performance. However, it makes the algorithm more sensitive to the choice of $\alpha$. As shown in \Eqref{eq:optimal_alpha}, with increasing $m$, $\Delta_i$ has a greater influence on the optimal $\alpha$. Noting that $\Delta_i$ varies across arms and is typically unknown for an unseen task, finding the optimal $\alpha$ is not feasible in practice. Therefore, there is a trade-off between performance and sensitivity. In the CASH setting, $ m=2$ performs empirically well, while exhibiting low sensitivity to $\alpha$. Furthermore, as shown in Appendix~\ref{app:ablation_study}, we can find a range of $\alpha$ for which the \OURALGO{} works well across different CASH tasks.

\clearpage
\section{More Details on Reward Distribution}
\label{app:details_on_assumptions}

\subsection{Reward distribution analysis}
\label{app:reward_distribution_analysis}
In addition to the analysis in the main paper in Figure~\ref{fig:HPO_ecdf_arms_distributions} in Section~\ref{sec:dataanalysis}, we collected the observed rewards (the output of HPO) for all arms on each model class. We calculate each dataset's empirical survival function $G$ and provide the reward distribution analysis for all benchmark tasks. 
Notably, the shift in distribution (indicated by the thin lines) is low for all tasks, not contradicting the i.i.d. assumptions. For our method, we design our algorithm based on analyzing the distribution of raw rewards (in contrast to the distribution of maximum values over time). 

Over time, the maximum value of samples generated from an i.i.d. distribution has an increasing trend, i.e., the extreme values get better. Notably, this is not contradictory with the \RisingBandits{} strategy~\cite{liu2020admm}, which assumes that the maximum observed value over time is not decreasing (and then analyses the trend of this maximum observed value as a (non i.i.d) reward).

\begin{figure}[hb!]
\centering
\begin{tabular}{c c c}

&\tabrepo{}
&\\
\includegraphics[width=0.3\linewidth]{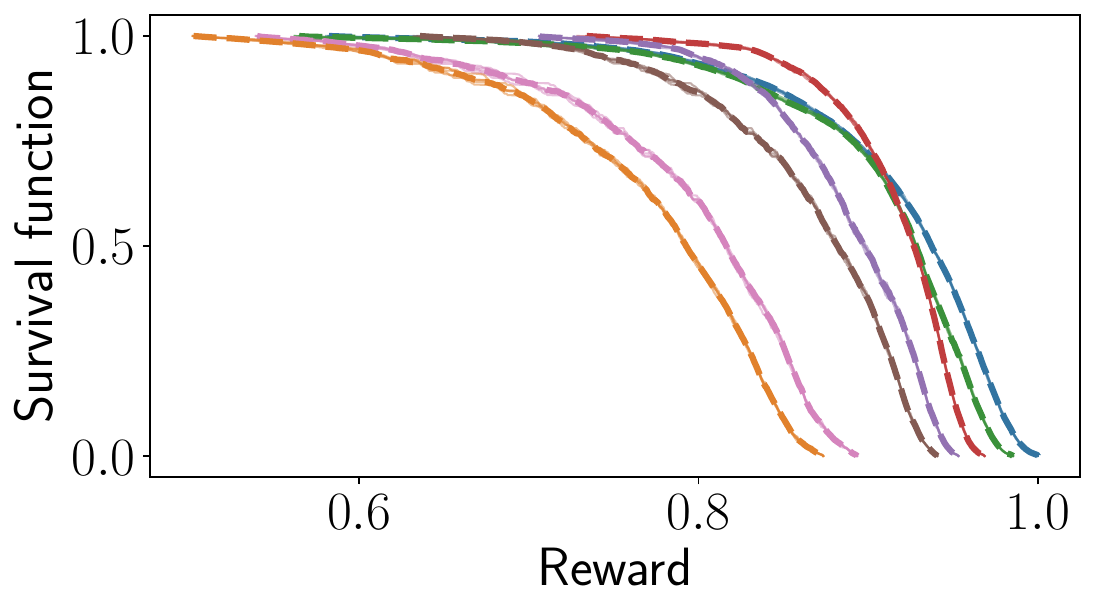}
&\includegraphics[width=0.3\linewidth]{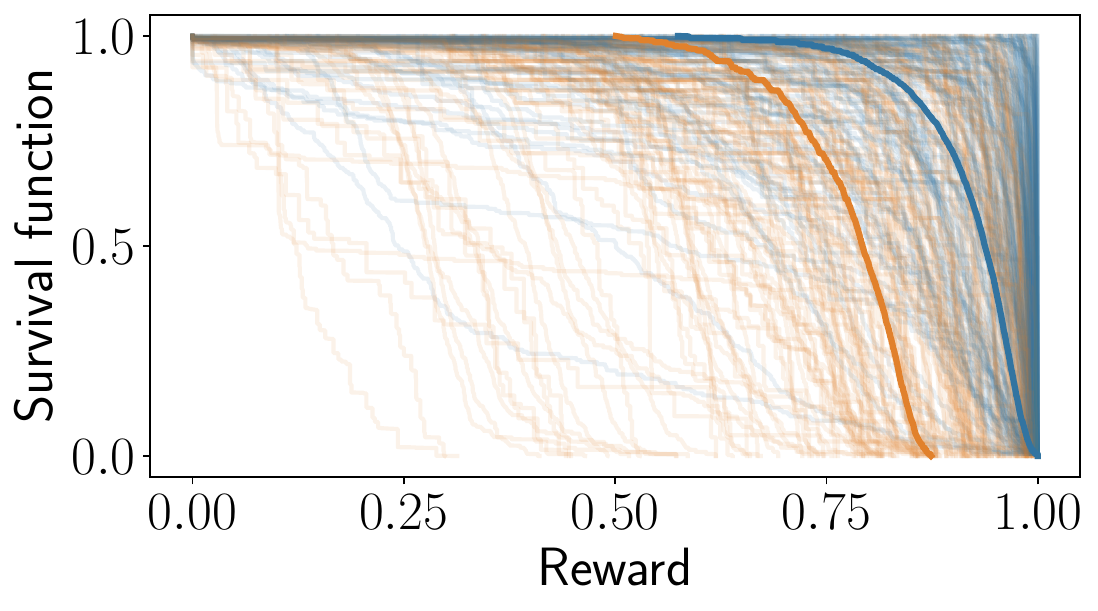}
&\includegraphics[width=0.3\linewidth]{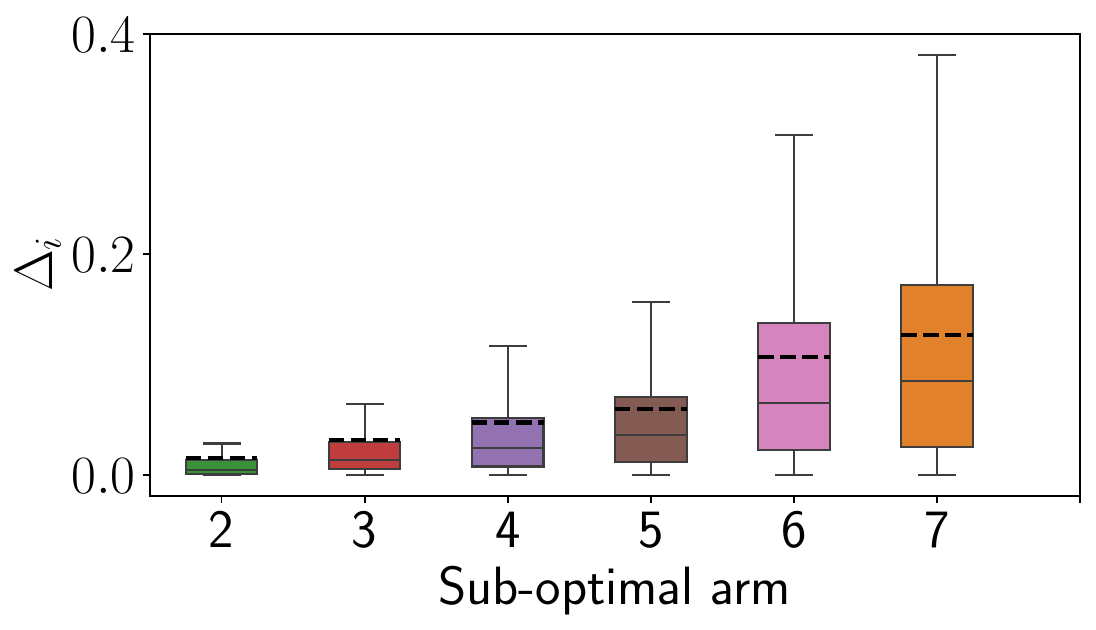}\\

&\tabreporaw{}
&\\
\includegraphics[width=0.3\linewidth]{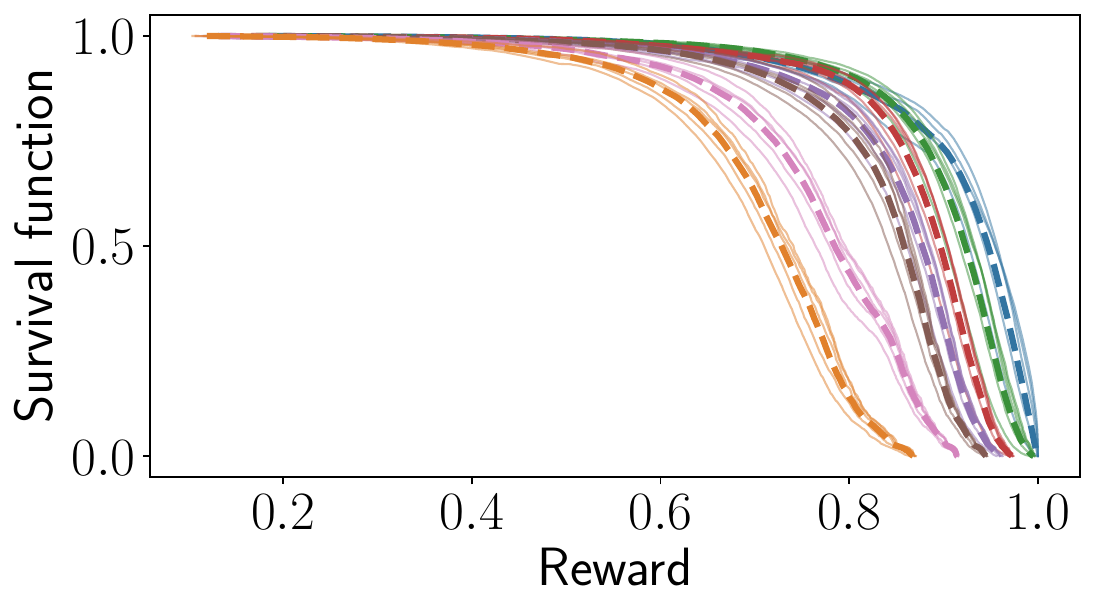}
&\includegraphics[width=0.3\linewidth]{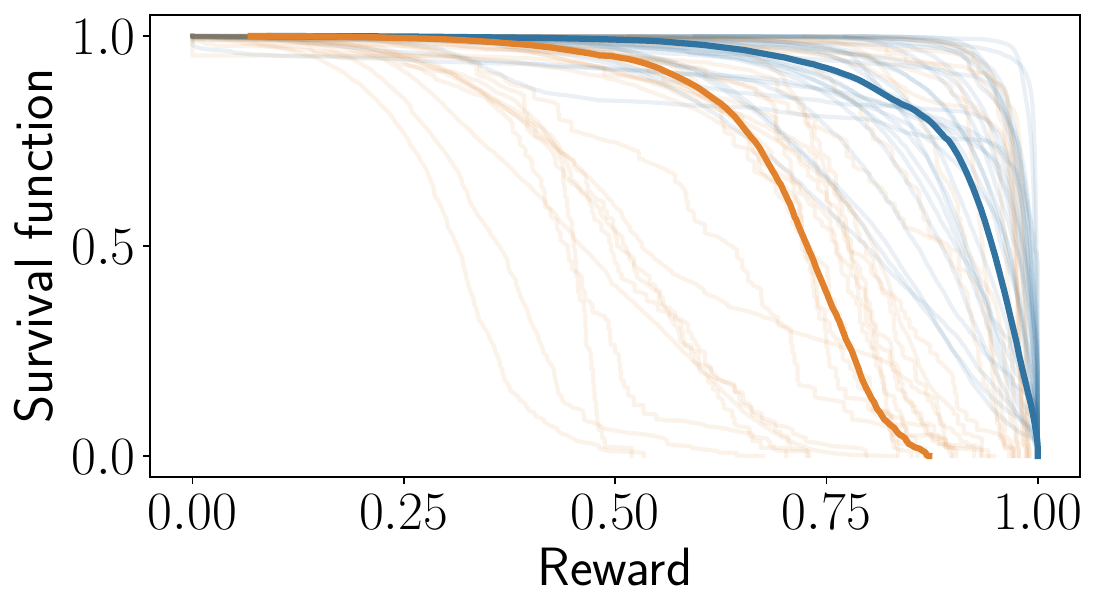}
&\includegraphics[width=0.3\linewidth]{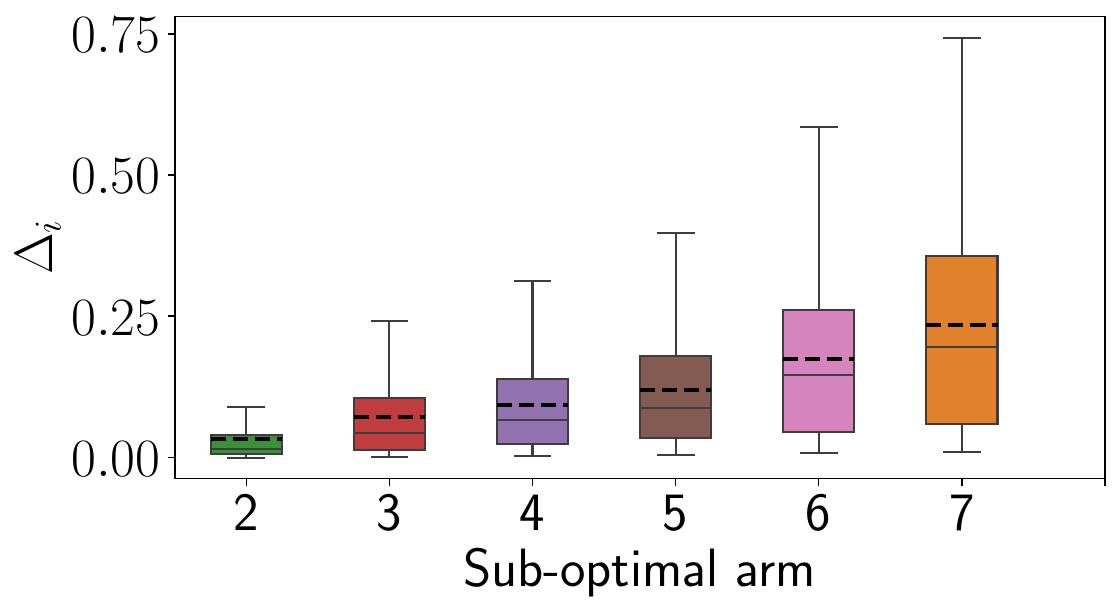}\\

&\yahpogym{}
&\\
\includegraphics[width=0.3\linewidth]{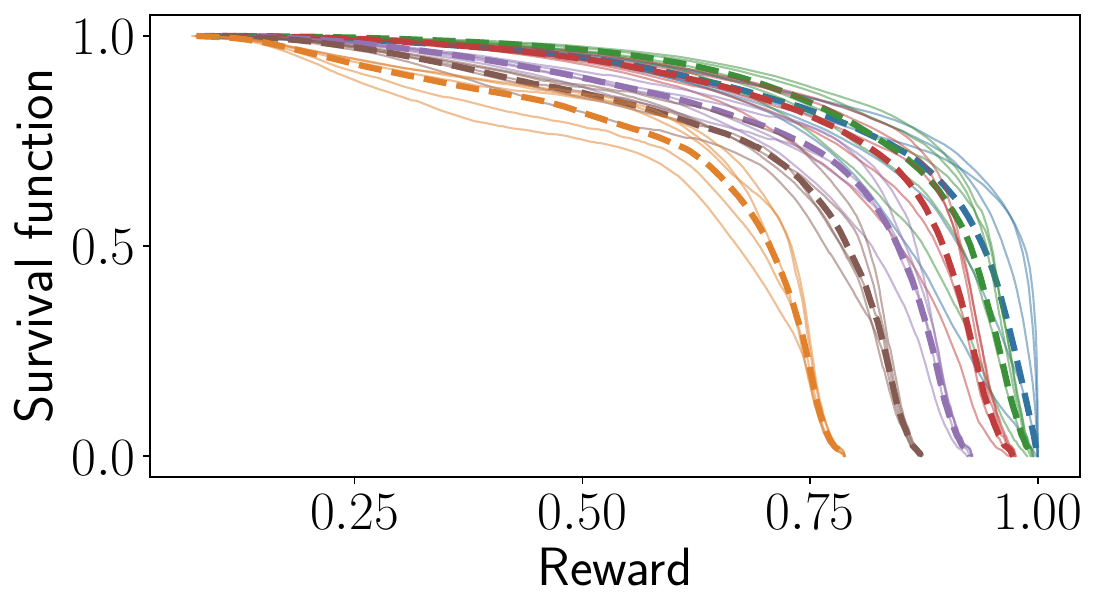}
&\includegraphics[width=0.3\linewidth]{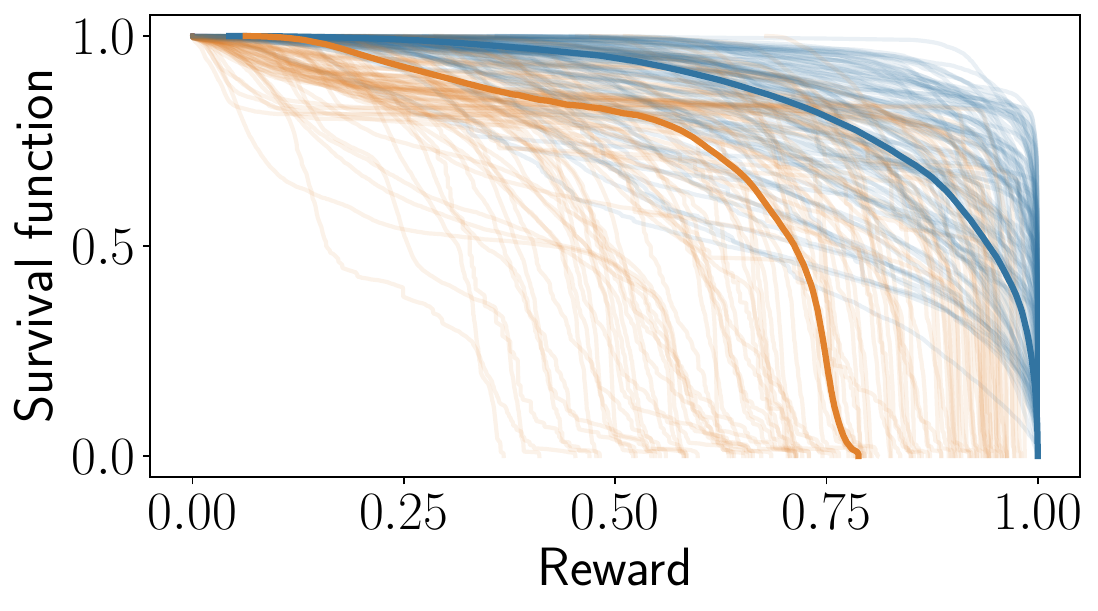}
&\includegraphics[width=0.3\linewidth]{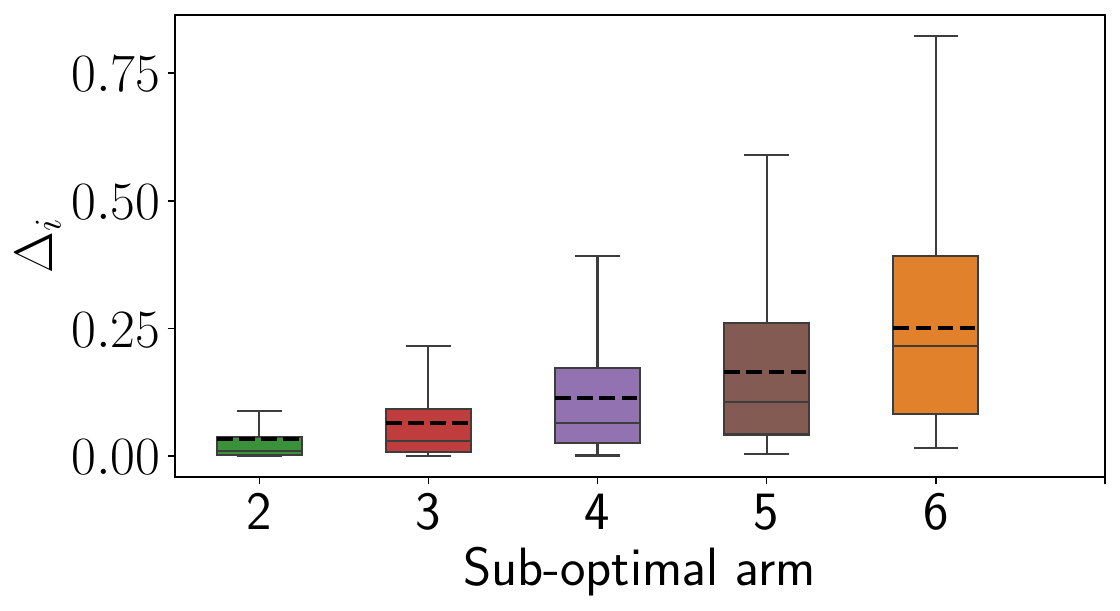}\\

&\hebo{}
&\\
\includegraphics[width=0.3\linewidth]{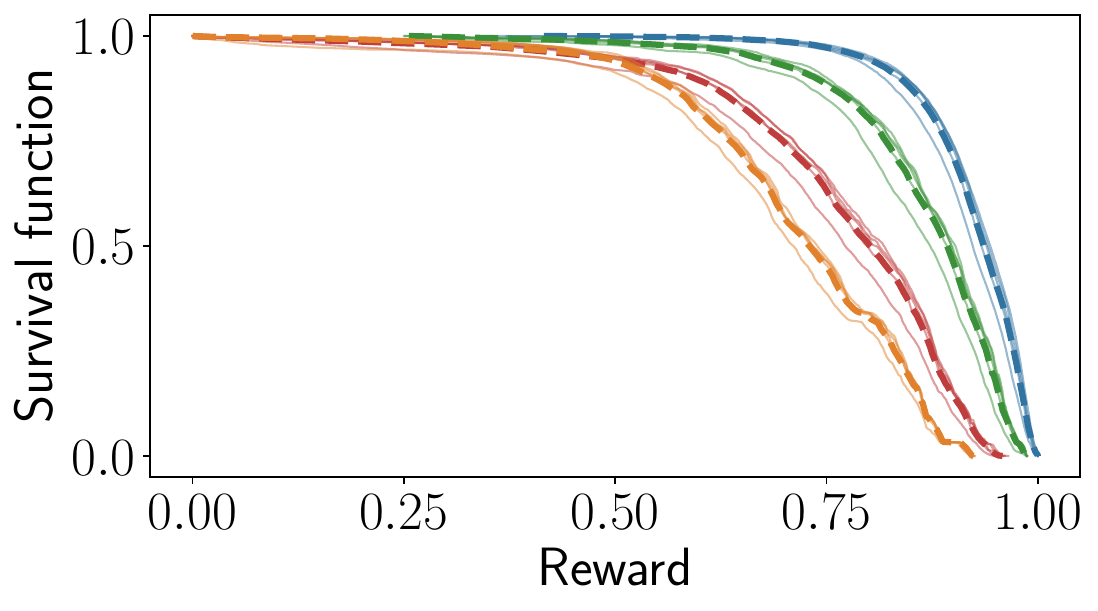}
&\includegraphics[width=0.3\linewidth]{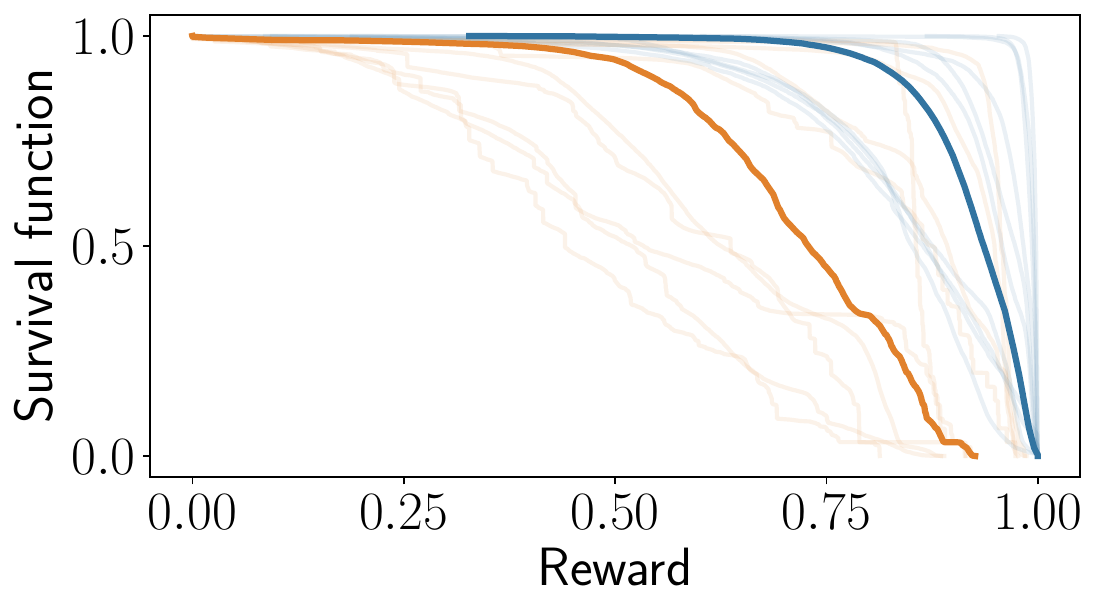}
&\includegraphics[width=0.3\linewidth]{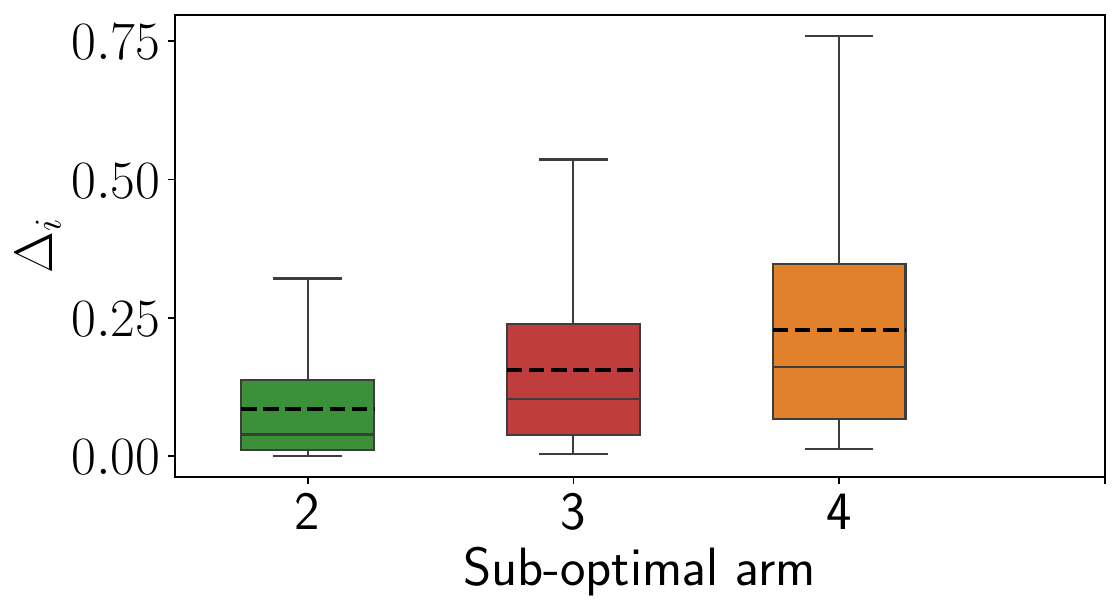}\\

\multicolumn{3}{c}{\includegraphics[width=1.0\linewidth]{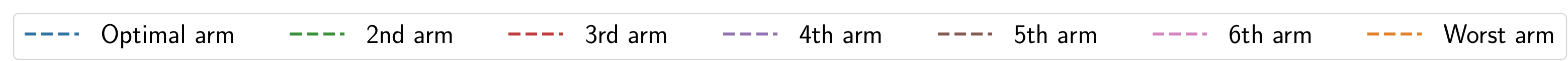}}
 \end{tabular}
\caption{(Left) The average empirical survival function of rewards (observed performances) per arm ranked per dataset. 
We divided the reward sequence into five segments over the budget (time horizon) to show the distribution change over time. Thin lines correspond to empirical survival functions for different segments, visualizing the change over time. (Middle) The average empirical survival function per dataset for the best and worst arm with thin lines corresponding to individual datasets. (Right) The sub-optimality gap $\Delta_i$.}
\label{app:fig:HPO_ecdf_reward_distributions}
\end{figure}

\clearpage

\subsection{More Details on  \ourProposition}
\label{app:examples_validation_assumption}

$L$ and $U$ are lower and upper bounds for the tangent line approximation of the survival function $G$ near the maximum value, indicating the shape of the distribution. We provide three examples to demonstrate this numerically.

\textbf{Toy example}: Assume two simple survival functions $G_1(x) = 1 - x^2$ (left skewed, blue curve) and $G_2(x) = (1-x)^2$ (right skewed, orange curve) with support $[0,1]$. We calculate $G(1- \epsilon)/\epsilon$ over some values of $\epsilon$ in the Figure \ref{app:fig:toy_survival_example}. To compute L and U, we determine the minimum and maximum values of $G(1- \epsilon)/\epsilon$ over the range $0<\epsilon<1$. For clarity, we restricted $\epsilon$ to iterate only over the set $\{0.1, 0.3, 0.5, 0.7, 0.9\}$. It implies that the calculated values of $L$ and $U$ are valid for $\epsilon \in [0.1, 0.9]$.

\begin{figure}[hb!]
\centering
\begin{tabular}{c c}

\begin{minipage}{0.30\linewidth}
    \vspace{-5pt}
    \renewcommand{\arraystretch}{1.5}
    \begin{tabular}{|c|c|c|}
        \hline
        \textbf{$\epsilon$} & \textbf{$\frac{G_1(1-\epsilon)}{\epsilon}$} & \textbf{$\frac{G_2(1-\epsilon)}{\epsilon}$} \\ 
        \hline
        0.10 & 1.90 & 0.10 \\ 
        0.30 & 1.70 & 0.30 \\ 
        0.50 & 1.50 & 0.50 \\ 
        0.70 & 1.30 & 0.70 \\ 
        0.90 & 1.10 & 0.90 \\ 
        \hline
        L &  1.10 & 0.10\\
        \hline
        U &  1.90 & 0.90\\
        \hline
    \end{tabular}
\end{minipage}
&\begin{minipage}{0.6\linewidth}
    \centering
    \includegraphics[width=\linewidth]{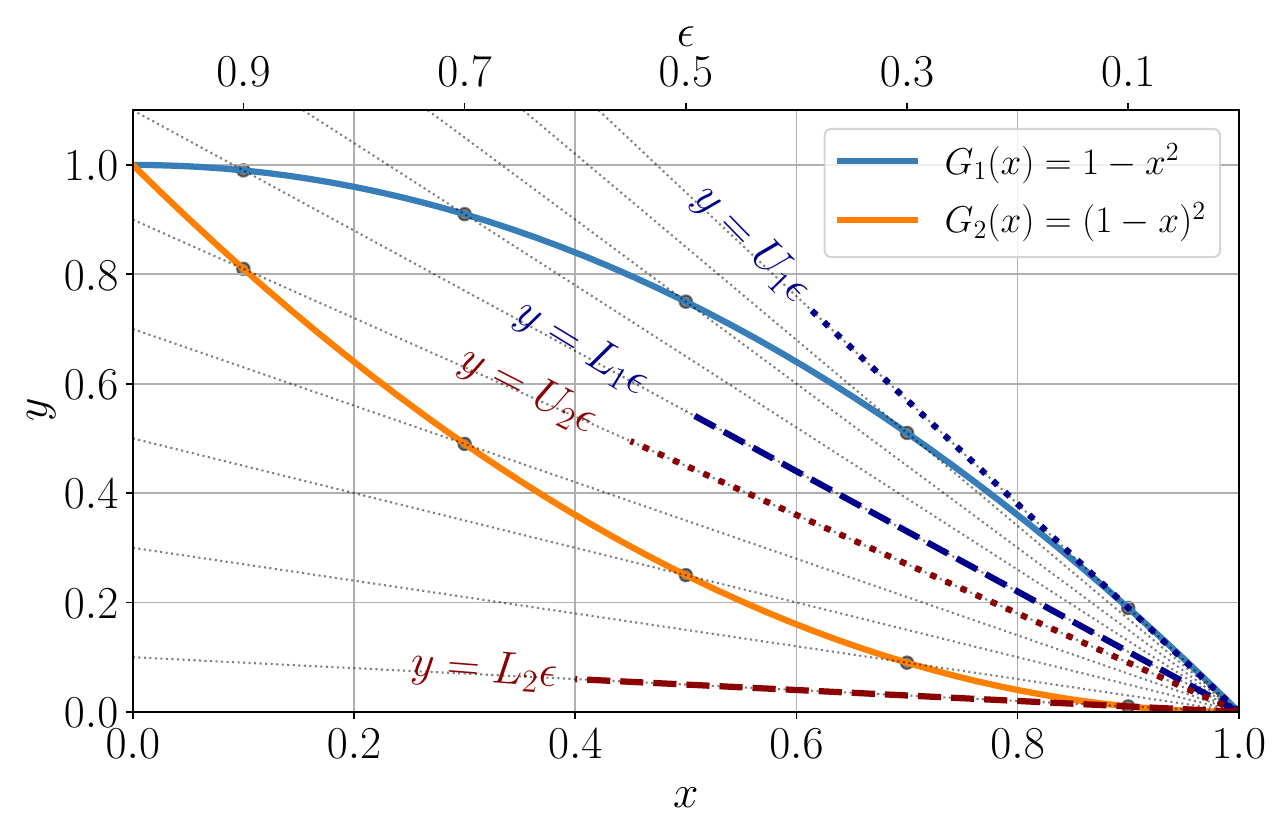}
\end{minipage}
\end{tabular}
\caption{(Left) Determining $L$ and $U$ for $G_1$ and $G_2$. We calculate $\frac{G(1-\epsilon)}{\epsilon}$ over different ranges of 
 $\epsilon$. The minimum and maximum values obtained from this ratio are assigned as $L$ and $U$, respectively. 
 (Right)  Showing two survival functions $G_1$ and $G_2$ along with their linear line approximations (gray lines). These tangent lines illustrate how $L$ and $U$  effectively bound the survival function $G$ near its maximum value.}
\label{app:fig:toy_survival_example}
\end{figure}

\textbf{Truncated uniform distribution}: Assume that we have a truncated uniform distribution with support $[a,b]$. We know $G(x)= \frac{b-x}{b-a}$ for $x \in (a,b)$. For every $\epsilon \in [a,b]$ we have $\frac{G(b - \epsilon) }{ \epsilon} = \frac{1}{b-a}$, which means $L=U= \frac{1}{b-a}$.

\textbf{Truncated Gaussian distribution}: There is no closed-form solution to formulate $L$ and $U$ based on the parameters of the truncated Gaussian distribution. Thus, we show the results of simulations to estimate $L$ and $U$ for truncated Gaussian within [0, 1] with various values $\mu$ and $\sigma$, averaging over $1000$ runs in Figure \ref{app:fig:L_U_Gaussian}.

\begin{figure}[hb!]
\centering
\begin{tabular}{c c}

\begin{minipage}{0.40\linewidth}
    \vspace{-5pt}
    \renewcommand{\arraystretch}{1.5}
    \small
        \begin{tabular}{|c|c|c|c|}
        \hline
        {$\mu$} & {$\sigma$} &{$L$} & {$U$} \\ 
        \hline
        0.25 & 0.5 & $0.58 \pm 0.06$ & $1.70 \pm 0.48$ \\ 
        0.50 &  0.5 & $0.85 \pm 0.07$ & $1.53 \pm 0.34$ \\ 
        0.75 &  0.5 & $1.01 \pm 0.01$ & $1.64 \pm 0.25$ \\ 
        0.25 & 0.2 & $0.34 \pm 0.07$ & $1.54 \pm 0.28$ \\ 
        0.50 & 0.2 & $0.44 \pm 0.08$ & $1.36 \pm 0.04$ \\ 
        0.75 & 0.2 & $1.01 \pm 0.00$ & $1.95 \pm 0.04$ \\ 
        \hline
    \end{tabular}
\end{minipage}
&\begin{minipage}{0.5\linewidth}
    \centering
    \includegraphics[width=\linewidth]{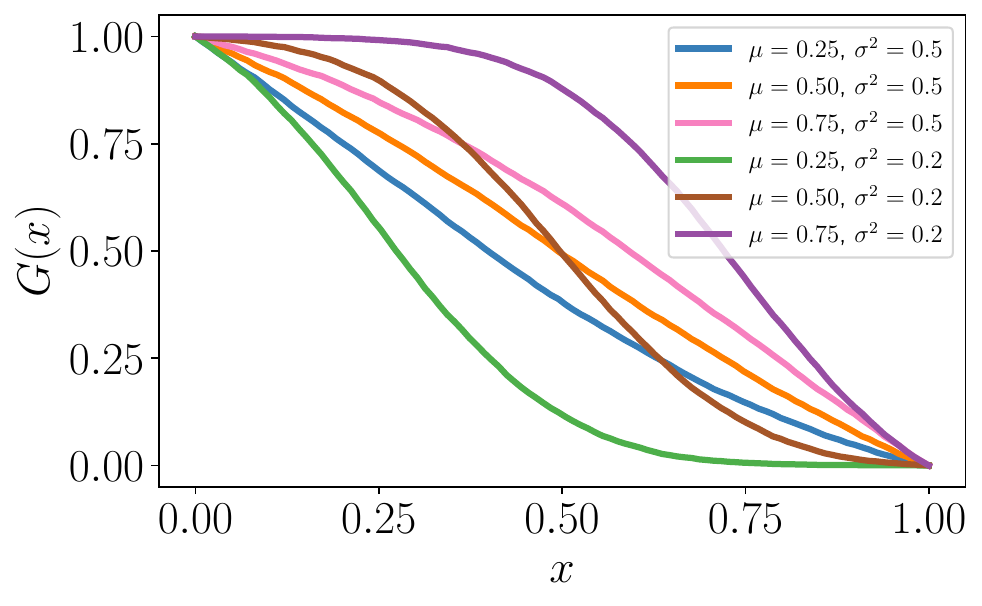}
\end{minipage}
\end{tabular}
\caption{(Left) Determining L and U for truncated Gaussian within [0, 1] with different values for $\mu$ and $\sigma$. Averaged over $1\,000$ runs. (Right) Showing survival function of truncated Gaussian distribution with different values for $\mu$ and $\sigma$.}
\label{app:fig:L_U_Gaussian}
\end{figure}

\clearpage
\subsection{Empirical Validation of \ourProposition}
\label{app:empirical_validation_assumption}
To study the empirical values for $L$ and $U$ in \ourProposition, we leverage the calculated empirical survival function $G$ for each dataset in Appendix \ref{app:reward_distribution_analysis}. Specifically, we evaluate $\frac{G(b - \epsilon)}{\epsilon}$ over the range $G^{-1}(0.99) < \epsilon < G^{-1}(0.01)$, where $G^{-1}(x)$ denotes the inverse of the survival function $G(x)$. Focusing on this range allows us to achieve a more robust estimation. Additionally, for \tabrepo{} dataset, we exclude 23 datasets containing an arm with a standard deviation smaller than $0.001$, further enhancing the robustness of our analysis. In Figures \ref{app:fig:L_U_histogram_tabrepo}, \ref{app:fig:L_U_histogram_tabreporaw}, \ref{app:fig:L_U_histogram_YaHPOGym}, and \ref{app:fig:L_U_histogram_Reshuffling}, the evaluated values of $\frac{G(b - \epsilon)}{\epsilon}$ for different benchmarks are shown. Additionally, the values for $L$ and $U$, corresponding to $\min\limits_{\epsilon}(\frac{G(b - \epsilon)}{\epsilon})$ and $\max\limits_{\epsilon}(\frac{G(b - \epsilon)}{\epsilon})$, respectively, are presented. Finally, the histograms of these two variables are also included.

\begin{figure}[]
\centering
\textbf{\tabrepo{}}\\
\vspace{0.5em} 
\includegraphics[clip, trim=0.0cm 0cm 0cm 0cm, width=0.9\textwidth]{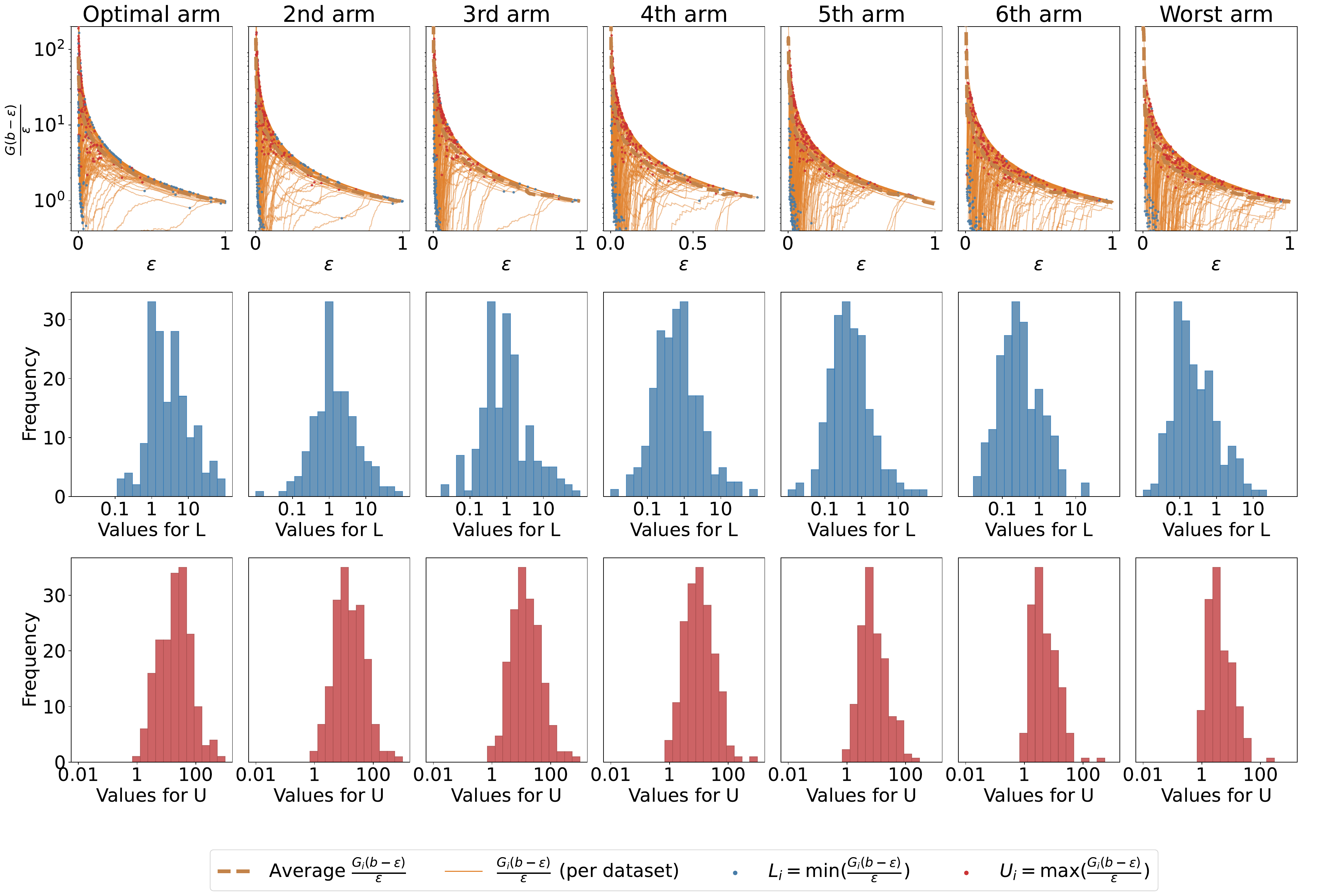}
\caption{ Arms are ordered by sub-optimality gap. (Top) Thin orange lines represent $\frac{G(b - \epsilon)}{\epsilon}$, while the blue and red points correspond to $L$ and $U$ for our empirical reward distributions (see \ourProposition{} for details). (Middle) Histogram of values for $L$. (Bottom) Histogram of values for $U$. }
\label{app:fig:L_U_histogram_tabrepo}
\end{figure}

\begin{figure}[]
\centering
\textbf{\tabreporaw{}}\\
\vspace{0.5em} 
\includegraphics[clip, trim=0.0cm 0cm 0cm 0cm, width=0.9\textwidth]{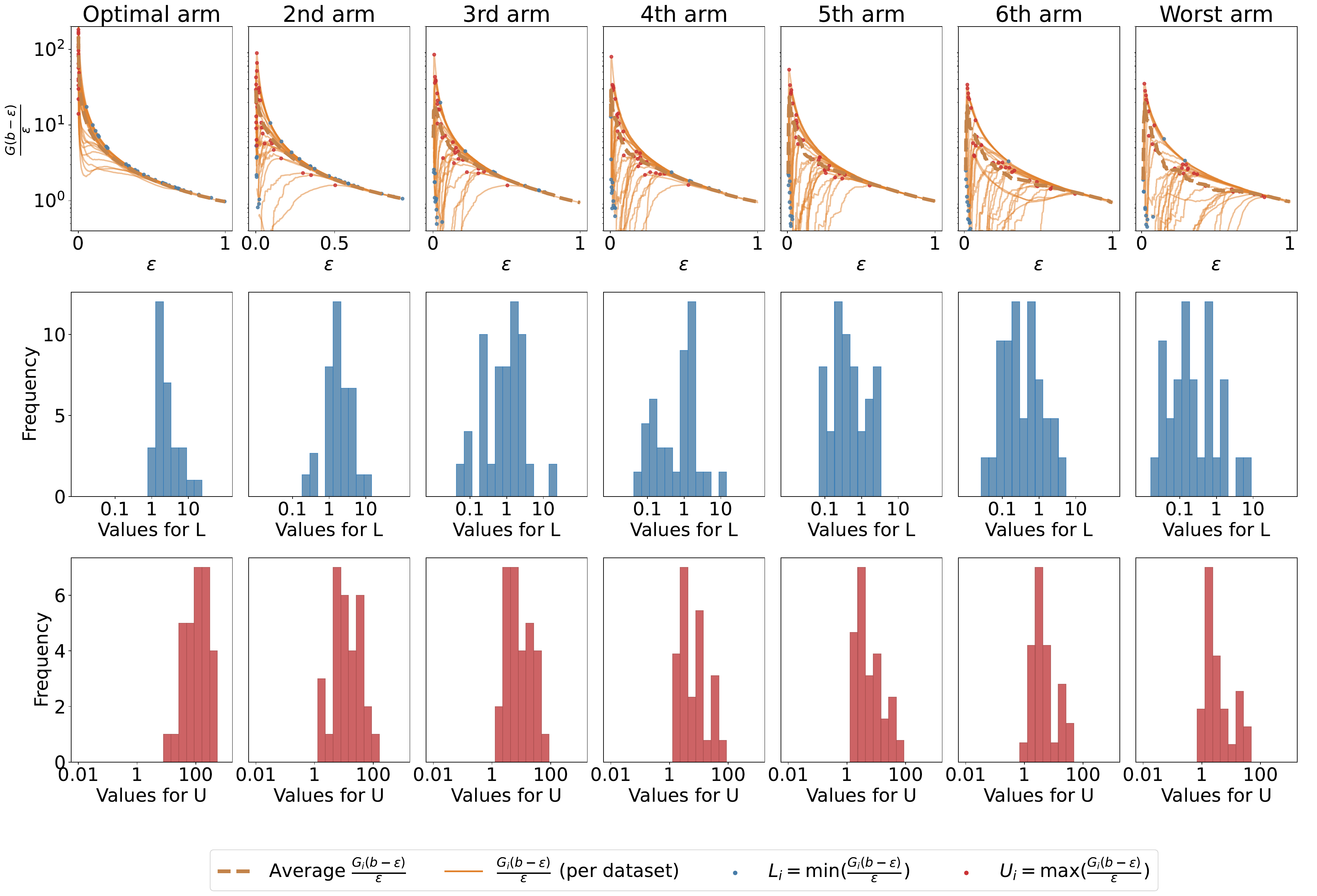}
\caption{ Arms are ordered by sub-optimality gap. (Top) Thin orange lines represent $\frac{G(b - \epsilon)}{\epsilon}$, while the blue and red points correspond to $L$ and $U$ for our empirical reward distributions (see \ourProposition{} for details). (Middle) Histogram of values for $L$. (Bottom) Histogram of values for $U$. }
\label{app:fig:L_U_histogram_tabreporaw}
\end{figure}

\begin{figure}[]
\centering
\textbf{\yahpogym{}}\\
\vspace{0.5em} 
\includegraphics[clip, trim=0.0cm 0cm 0cm 0cm, width=0.9\textwidth]{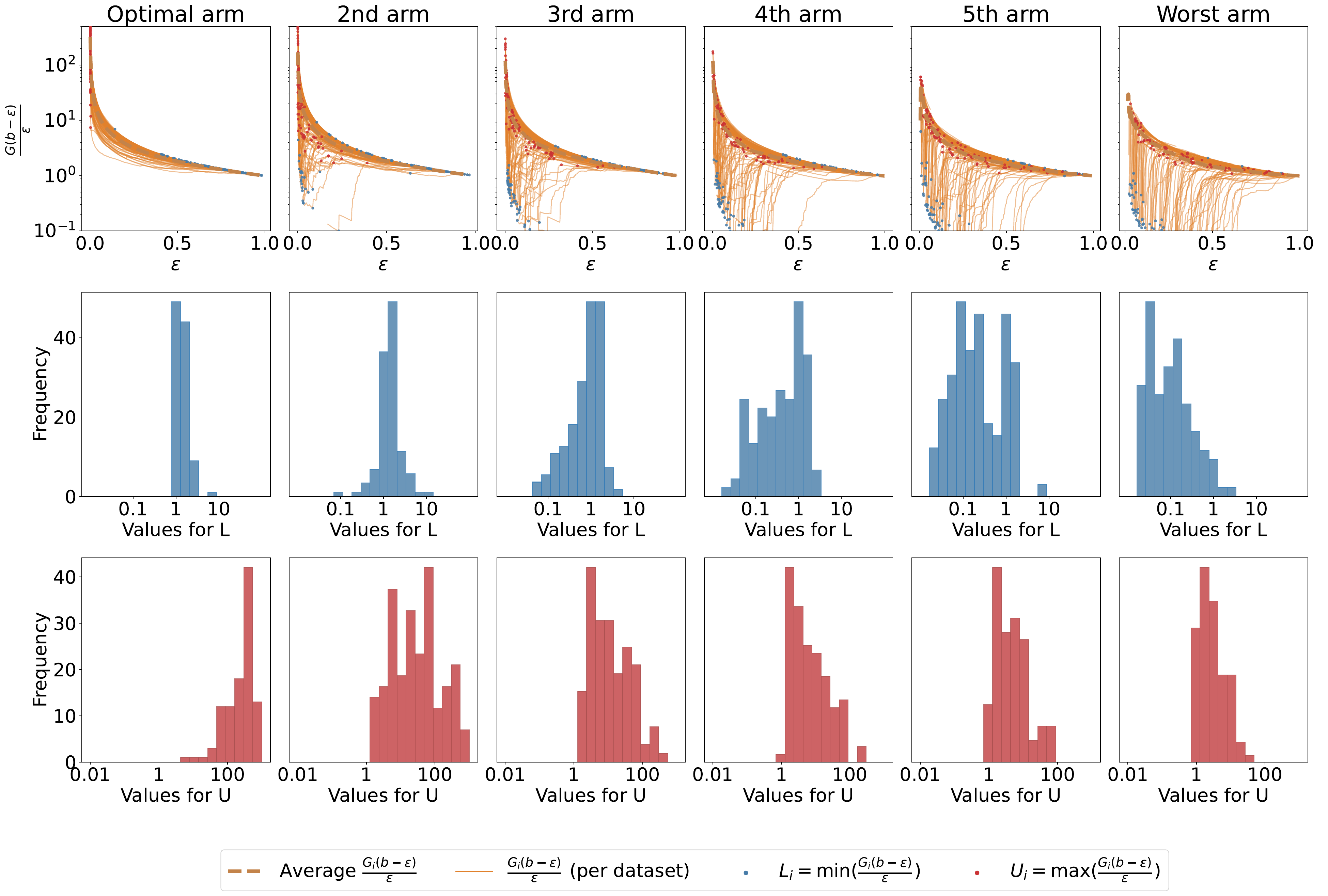}
\caption{ Arms are ordered by sub-optimality gap. (Top) Thin orange lines represent $\frac{G(b - \epsilon)}{\epsilon}$, while the blue and red points correspond to $L$ and $U$ for our empirical reward distributions (see \ourProposition{} for details). (Middle) Histogram of values for $L$. (Bottom) Histogram of values for $U$. }
\label{app:fig:L_U_histogram_YaHPOGym}
\end{figure}

\begin{figure}[]
\centering
\textbf{\hebo{}}\\
\vspace{0.5em} 
\includegraphics[clip, trim=0.0cm 3.3cm 0cm 0cm, width=0.9\textwidth]{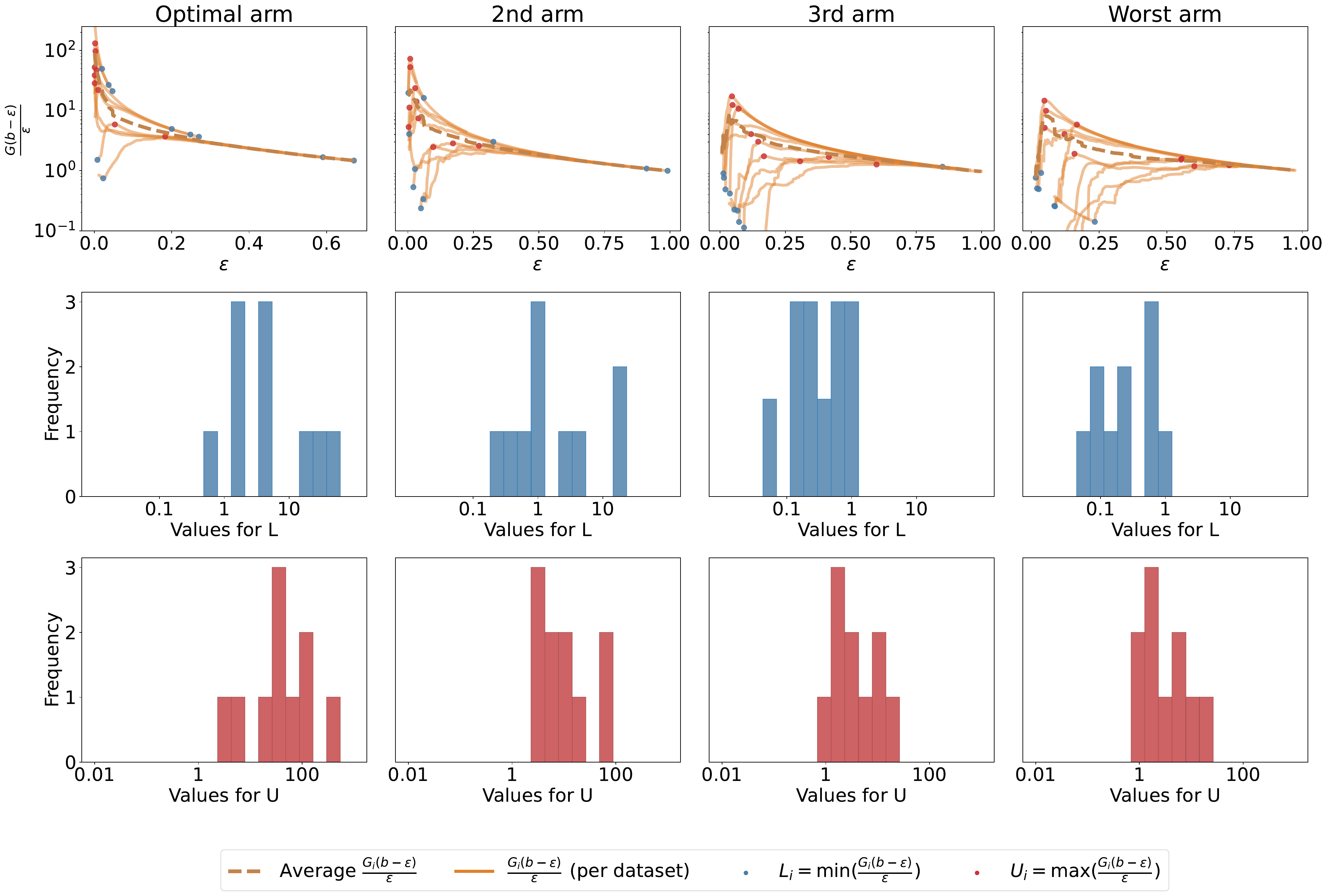}
\caption{ Arms are ordered by sub-optimality gap. (Top) Thin orange lines represent $\frac{G(b - \epsilon)}{\epsilon}$, while the blue and red points correspond to $L$ and $U$ for our empirical reward distributions (see \ourProposition{} for details). (Middle) Histogram of values for $L$. (Bottom) Histogram of values for $U$. }
\label{app:fig:L_U_histogram_Reshuffling}
\end{figure}

\clearpage
\section{More Details on the Experiments}
\label{app:details_on_experiments}

\subsection{Metric calculation}
\label{app:metric_calculation}
\textbf{Average ranking calculation.} We use bootstrapping with Monte Carlo sampling to calculate the average ranking plot with confidence intervals. For each time step and each task in every dataset, we resample the performance of each repetition (with replacement) and compute the average performance. We then rank the algorithms based on these averaged performances and repeat this process for all tasks. Finally, we average the rankings across tasks. This entire procedure was repeated 1000 times to estimate the confidence interval.

\textbf{Number of wins, ties, and losses.} To determine the number of wins, ties, and losses for each task in every dataset, we first compute the average performance of each algorithm over all repetitions at the final time step. We then perform pairwise comparisons of these averaged performances among all algorithms versus \combinedsearch{}. To account for negligible differences that are not statistically significant, we consider two performances to be tied if they are sufficiently close. Specifically, we use NumPy's \texttt{isclose} function to compare the averaged performances, treating values within a default tolerance of $1 \times 10^{-8}$ as equal.

\subsection{Experimental Setup}
\label{app:experimentalsetup}
Here, we provide details on our experimental setups as described in Table~\ref{tab:automltasks}. 

\begin{table*}[tbp]
\centering
\footnotesize
\begin{tabular}{@{\hskip 0mm}l@{\hskip 1mm}ccll@{\hskip 1mm}c@{\hskip 1mm}l@{\hskip 0mm}}
     name & \#models & \#tasks & type & HPO meth. (rep.) & budget & reference \\
     \midrule
     \yahpogym{} & 6 & 103 & surrogate & \SMAC{} ($32$) & 200 & \citep{pfisterer-automl22a} \\
     \tabrepo{} & 7 & 200 & tabular & \randomsearch{} ($32$) & 200 & \citep{salinas2024tabrepo} \\
     \tabreporaw{} & 7 & 30 & raw & \SMAC{} ($32$) & 200 & - \\
     \hebo{} & 4 & 10 & raw & HEBO ($30$) & 250 & \citep{nagler2024reshuffling} \\
     \bottomrule
\end{tabular}
\caption{Overview of AutoML tasks. For \tabrepo{} and \hebo{}, we use pre-computed HPO trajectories. 
\tabreporaw{} resembles the same model space as \tabrepo{}, but instead of \randomsearch{}, we run HPO ourselves. Similarly, we run HPO across provided surrogate HPO benchmark tasks \yahpogym{}. We use SMAC~\citep{lindauer-jmlr22a,hutter-lion11a} implementing Bayesian optimization using Random forests for both tasks.}
\label{tab:automltasks}
\end{table*}

\textbf{\yahpogym{}}~\citep{pfisterer-automl22a}, a surrogate benchmark, covers $6$ ML models (details in Table \ref{app:tab:yahpogym_spaces}) on $103$ datasets and uses a regression model (surrogate model) to predict performances for queried hyperparameter settings. We use Bayesian optimization as implemented by SMAC~\citep{lindauer-jmlr22a} using Random Forests to conduct HPO. Additionally, we compare our two-level approach to \combinedsearch{} using SMAC, SMAC without initial design (SMAC-no-init), and Random Search. We run $32$ repetitions and use a budget of $200$ iterations for each evaluation.

\begin{table}[htbp]
\caption{Hyperparameter spaces for ML models in \yahpogym{}.}
\label{app:tab:yahpogym_spaces}
\centering
\small
\begin{tabular}{lllll}
\toprule
ML model &  Hyperparameter & Type & Range & Info \\
\midrule
\multirow{2}{*}{-} & trainsize & continuous & [0.03, 1] &  =0.525 (fixed) \\ 
    & imputation & categorical & \{mean, median, hist\} &  =mean (fixed)\\ 
\midrule
\multirow{2}{*}{Glmnet} & alpha & continuous & [0, 1] & \\
    & s & continuous & [0.001, 1097] & log \\ 
\midrule
\multirow{4}{*}{Rpart} & cp & continuous & [0.001, 1] & log \\ 
    & maxdepth & integer & [1, 30] &  \\ 
    & minbucket & integer & [1, 100] &  \\
    & minsplit & integer & [1, 100] &  \\ 
\midrule
\multirow{5}{*}{SVM} & kernel & categorical & \{linear, polynomial, radial\} &  \\ 
    & cost & continuous & [4.5e-05, 2.2e4] & log \\ 
    & gamma & continuous & [4.5e-05, 2.2e4] & log, kernel \\ 
    & tolerance & continuous & [4.5e-05, 2] & log \\ 
    & degree & integer & [2, 5] &  kernel \\ 
\midrule
\multirow{5}{*}{AKNN} & k & integer & [1, 50] &  \\ 
    & distance & categorical & \{l2, cosine, ip\} &  \\ 
    & M & integer & [18, 50] &  \\ 
    & ef & integer & [7, 403] & log \\ 
    & ef\_construction & integer & [7, 403] & log \\ 
\midrule
\multirow{7}{*}{Ranger} & num.trees & integer & [1, 2000] &  \\ 
    & sample.fraction & continuous & [0.1, 1] &  \\ 
    & mtry.power & integer & [0, 1] &  \\ 
    & respect.unordered.factors & categorical & \{ignore, order, partition\} &  \\ 
    & min.node.size & integer & [1, 100] &  \\ 
    & splitrule & categorical & \{gini, extratrees\} &  \\ 
    & num.random.splits & integer & [1, 100] & splitrule \\ 
\midrule
\multirow{13}{*}{XGBoost} & booster & categorical & \{gblinear, gbtree, dart\} &  \\ 
    & nrounds & integer & [7, 2980] & log \\ 
    & eta & continuous & [0.001, 1] & log, booster \\ 
    & gamma & continuous & [4.5e-05, 7.4] & log, booster\\ 
    & lambda & continuous & [0.001, 1097] & log \\ 
    & alpha & continuous & [0.001, 1097] & log \\ 
    & subsample & continuous & [0.1, 1] &  \\ 
    & max\_depth & integer & [1, 15] &  booster \\ 
    & min\_child\_weight & continuous & [2.72, 148.4] & log,  booster \\ 
    & colsample\_bytree & continuous & [0.01, 1] &   booster \\ 
    & colsample\_bylevel & continuous & [0.01, 1] &  booster \\ 
    & rate\_drop & continuous & [0, 1] &  booster \\ 
    & skip\_drop & continuous & [0, 1] &  booster \\ 
\bottomrule
\end{tabular}
\end{table}

\textbf{\tabrepo{}}~\citep{salinas2024tabrepo} consists of pre-evaluated performance scores for $200$ iterations of random search for $7$ ML models (details in Table \ref{app:tab:tabrepo_spaces})  on $200$ datasets (context name: $D244\_F3\_C1530\_200$).
We run $32$ repetitions and use a budget of $200$ iterations for each task.

\begin{table}[htbp]
\caption{Hyperparameter spaces for ML models in \tabrepo{} and \tabreporaw{}.}
\label{app:tab:tabrepo_spaces}
\small
\centering
\begin{tabular}{llllll}
\toprule
ML model &  Hyperparameter & Type & Range & Info  & Default value \\
\midrule
\multirow{7}{*}{NN(PyTorch)} & learning rate & continuous & [1e-4, 3e-2] & log  & 3e-4 \\
    & weight decay & continuous & [1e-12, 0.1] & log  & 1e-6 \\
    & dropout prob & continuous & [0, 0.4] &  & 0.1 \\
    & use batchnorm & categorical & False, True  &  & \\  
    & num layers & integer & [1, 5]  &  & 2\\  
    & hidden size & integer & [8, 256]  &  & 128\\  
    & activation & categorical & relu, elu &  & \\  
\midrule
\multirow{9}{*}{NN(FastAI)}  & learning rate & continuous & [5e-4, 1e-1] & log  & 1e-2 \\
    & \multirow[t]{3}{*}{layers}  & \multirow[t]{3}{*}{categorical}   & [200], [400], [200, 100],  &  & \\  
    &   &    &  [400, 200], [800, 400], &  & \\  
    &   &  &   [200, 100, 50], [400, 200, 100] &  & \\  
    & emb drop & continuous & [0.0, 0.7] &   & 0.1 \\
    & ps & continuous &  [0.0, 0.7] &   & 0.1 \\
    & bs  & categorical & 256, 128, 512, 1024, 2048  &  & \\  
    & epochs & integer & [20, 50]  &  & 30\\    
\midrule
\multirow{6}{*}{CatBoost} & learning rate & continuous & [5e-3 ,0.1] & log  & 0.05 \\
    & depth & integer & [4, 8]  &  & 6\\  
    & l2 leaf reg & continuous & [1, 5] &  & 3\\
    & max ctr complexity & integer & [1, 5]  &  & 4\\  
    & one hot max size & categorical & 2, 3, 5, 10 &  & \\  
    & grow policy & categorical & SymmetricTree, Depthwise &  & \\  
\midrule
\multirow{5}{*}{LightGBM} & learning rate & continuous & [5e-3 ,0.1] & log  & 0.05 \\
    & feature fraction & continuous & [0.4, 1.0] &   & 1.0\\
    & min data in leaf & integer & [2, 60]  &  & 20\\  
    & num leaves & integer & [16, 255]  &  & 31\\  
    & extra trees & categorical & False, True &  & \\  
\midrule
\multirow{5}{*}{XGBoost} & learning rate & continuous & [5e-3 ,0.1] & log  & 0.1 \\
    & max depth & integer & [4, 10]  &  & 6\\  
    & min child weight & continuous & [0.5, 1.5] &   & 1.0\\
    & colsample bytree & continuous & [0.5, 1.0] &   & 1.0\\
    & enable categorical & categorical & False, True &  & \\  
\midrule
\multirow{3}{*}{Extra-trees} &  max leaf nodes& integer & [5000, 50000]  &  & \\ 
    & min samples leaf & categorical & 1, 2, 3, 4, 5, 10, 20, 40, 80 &  & \\  
    & max features & categorical & sqrt, log2, 0.5, 0.75, 1.0 &  & \\  
\midrule
\multirow{3}{*}{Random-forest} &  max leaf nodes& integer & [5000, 50000]  &  & \\ 
    & min samples leaf & categorical & 1, 2, 3, 4, 5, 10, 20, 40, 80 &  & \\  
    & max features & categorical & sqrt, log2, 0.5, 0.75, 1.0 &  & \\  
\bottomrule
\end{tabular}
\end{table}

\textbf{\tabreporaw{}} which uses the search space from \tabrepo{} (details in Table \ref{app:tab:tabrepo_spaces})  and allows HPO to evaluate all configurations. For constructing \tabrepo{}~\citep{salinas2024tabrepo}, each configuration was evaluated with a one-hour time limit and $8$-fold cross-validation. To reduce computational requirements for \tabreporaw{}, we reduced this to $5$ minutes and $4$-fold cross-validation, and we provide it for $30$ datasets (context name: $D244\_F3\_C1530\_30$). We use Bayesian optimization as implemented by SMAC~\citep{lindauer-jmlr22a} using Random Forests to conduct HPO. Additionally, we compare our two-level approach to \combinedsearch{} using SMAC, Random Search. We run $32$ repetitions and use a budget of $200$ iterations for the mentioned task.

To enable a fair comparison, we always evaluate the default configuration for each model first and then allow SMAC to run an initial design of $50-\#arms$ configurations in the upper level and $\frac{50}{\#arms-1}$ in the lower level.

\textbf{\hebo{}}~\citep{nagler2024reshuffling} which uses Heteroscedastic and Evolutionary Bayesian Optimization solver (HEBO)~\citep{cowenrivers-jair22a} for HPO. This benchmark includes HPO runs for $4$ ML models (details in Table \ref{app:tab:hebo_spaces})  across $10$ datasets, with $10$ repetitions and $3$ different validation split ratios within a budget of $250$ iterations.  Although the benchmark does not support HPO over the entire search space, it offers a valuable opportunity to compare the performance of bandit methods in a realistic setting.

\begin{table}[htbp]
\caption{Hyperparameter spaces for ML models in \hebo{}.}
\label{app:tab:hebo_spaces}
\centering
\begin{tabular}{llllll}
\toprule
ML model &  Hyperparameter & Type & Range & Info  \\
\midrule
\multirow{6}{*}{Funnel-Shaped MLP} & learning rate & continuous & [1e-4, 1e-1] & log  \\
    & num layers & integer & [1, 5]  & \\ 
    & max units & categorical & 64, 128, 256, 512  & \\ 
    & batch size & categorical. & {16, 32, ..., max\_batch\_size}  & \\ 
    & momentum & continuous. & [0.1, 0.99] &  \\
    & alpha & continuous. &  [1e-6, 1e-1] & log \\
\midrule
\multirow{2}{*}{Elastic Net} & C & continuous & [1e-6, 10e4] & log  \\
     & l1 ratio & continuous & [0.0, 1.0] &   \\
\midrule
\multirow{4}{*}{XGBoost} & max depth & integer & [2, 12]  &  log \\  
    & alpha & continuous & [1e-8, 1.5] &  log\\
    & lambda & continuous & [1e-8, 1.0] &  log\\
    & eta & continuous & [0.01, 0.3] &  log\\
\midrule
\multirow{3}{*}{CatBoost} & learning rate & continuous & [0.01 ,0.3] & log  \\
    & depth & integer & [2, 12]  & \\  
    & l2 leaf reg & continuous & [0.5, 30] & \\
\bottomrule
\end{tabular}
\end{table}

\clearpage
\subsection{Baselines and their hyperparameters}
\label{app:baselines_hyperparameters}
We use several bandit algorithms as baselines. Table~\ref{app:tab:bandit_parameters} summarizes the hyperparameters and their values.

\begin{table}[htbp]
\caption{Hyperparameters of Bandit Baselines.}
\label{app:tab:bandit_parameters}
\centering
\small
\begin{tabular}{lllll}
   \toprule
    Algorithm &  Hyperparameter & Value & Reference\\
    \midrule
        \OURALGO{} & $\alpha$ & 0.5  &  Ours\\
    \midrule
    \multirow{3}{*}{\textit{\makecell[l]{Quantile\\Bayes UCB}}} & $\alpha$ & 1.0 & \multirow{5}{*}{\cite{balef2024towards}} \\
          & $\beta$ & 0.2  \\
          & $\tau$ & 0.95  \\
    \cmidrule(lr){1-3}
    \multirow{2}{*}{\textit{Quantile UCB}} & $\alpha$ & 0.5 \\
         & $\tau$ & 0.95  \\
    \midrule
    \multirow{2}{*}{\ERUCBS{}} & $\beta$ & 0.6 &  \multirow{6}{*}{\cite{hu2021cascaded}} \\
         & $\theta$ & 0.01  \\
         & $\gamma$ & 20.0  \\
    \cmidrule(lr){1-3}
    \multirow{3}{*}{\textit{ER-UCB-N}} & $\alpha$ & 1.0 &  \\
         & $\theta$ & 0.01  \\
         & $\gamma$ & 20.0  \\
    \midrule
    \multirow{2}{*}{\RisingBandits} & $C$ & 7 &  \multirow{2}{*}{\cite{li2020efficient}}  \\
         & $T$ & Time horizon \\
    \midrule
    \MaxMedian & $\epsilon$ & $1/(t)$, $t$ is iteration &  \cite{bhatt2022extreme} \\
    \midrule
    \QoMaxSDA & $q$ & 0.5 &  \multirow{6}{*}{\cite{baudry2022efficient}} \\
    & $\gamma$ & 2/3 \\
    \cmidrule(lr){1-3}
    \textit{QoMax-ETC} & $q$ & 0.5  \\
    & $b_T$ & 4 \\
    & $n_T$ & 3 \\
    & $T$ & Time horizon \\
    \midrule
    \UCB & $\alpha$ & 0.5 & \cite{auer2002using} \\
    \midrule
    \textit{ThresholdAscent} & $\delta$ & 0.1  &  \multirow{3}{*}{\cite{streeter2006simple}}   \\
    & $s$ & 20 \\
    & $T$ & Time horizon \\
    \midrule
    \multirow{2}{*}{\SuccessiveHalving} & $\eta$ & 2.0 & \multirow{2}{*}{\cite{karnin2013almost}}   \\
         & $T$ & Time horizon \\
    \midrule
    \textit{R-SR} & $\epsilon$ & 0.25 & \multirow{6}{*}{\cite{mussi2024best}}  \\
    & $T$ & Time horizon \\
    \cmidrule(lr){1-3}
    \textit{R-UCBE} & $\alpha$ & 57.12  \\
    & $\epsilon$ & 0.25 \\
    & $\sigma$ & 0.05 \\
    & $T$ & Time horizon \\
    \midrule
    \textit{\makecell[l]{MaxSearch\\(Gaussian)}} & $c$ & 1.0 &  \multirow{2}{*}{\cite{kikkawa2024materials}}   \\
    \cmidrule(lr){1-3}
    \textit{\makecell[l]{MaxSearch\\(SubGaussian)}} & $c$ & 0.27 \\
    \bottomrule
\end{tabular}
\end{table}

\clearpage
\subsection{More Results on the Sensitivity Analysis of Hyperparameter \texorpdfstring{$\alpha$}{Lg}}
\label{app:aplation_study}
In addition to the results shown in Figure~\ref{fig:sensitivity_to_alpha} in Section~\ref{Sec:Numerical_Experiments}, we provide additional results here. We evaluated the performance of \OURALGO{} for $\alpha \in [0, 2.9]$ with step size $0.1$. We plot the performance over the number of iterations for different values of $\alpha$ in Figures~\ref{app:fig:ablation_TabRepo}, \ref{app:fig:ablation_TabRepoRaw}, \ref{app:fig:ablation_yahpo_gym}, 
 \ref{app:fig:ablation_hebo}. Green indicates better performance, showing the impact of $\alpha$ at different stages of the optimization procedure. Furthermore, we provide a comparison between \OURALGO{} with different values of $\alpha$ with \combinedsearch{}(\SMAC{}) in Table \ref{app:ablation_table}. For the experiments in the main paper, we choose $\alpha = 0.5$ as a robust choice over all datasets. Notably, $\alpha=0.5$ is selected based on the assumption that the reward distribution support is $[0,1]$. For other supports, we recommend scaling $\alpha$ according to the range of the support.
 
\label{app:ablation_study}
\begin{figure}[htbp]
    \centering
    \begin{subfigure}{0.48\textwidth}
        \includegraphics[height=6cm]{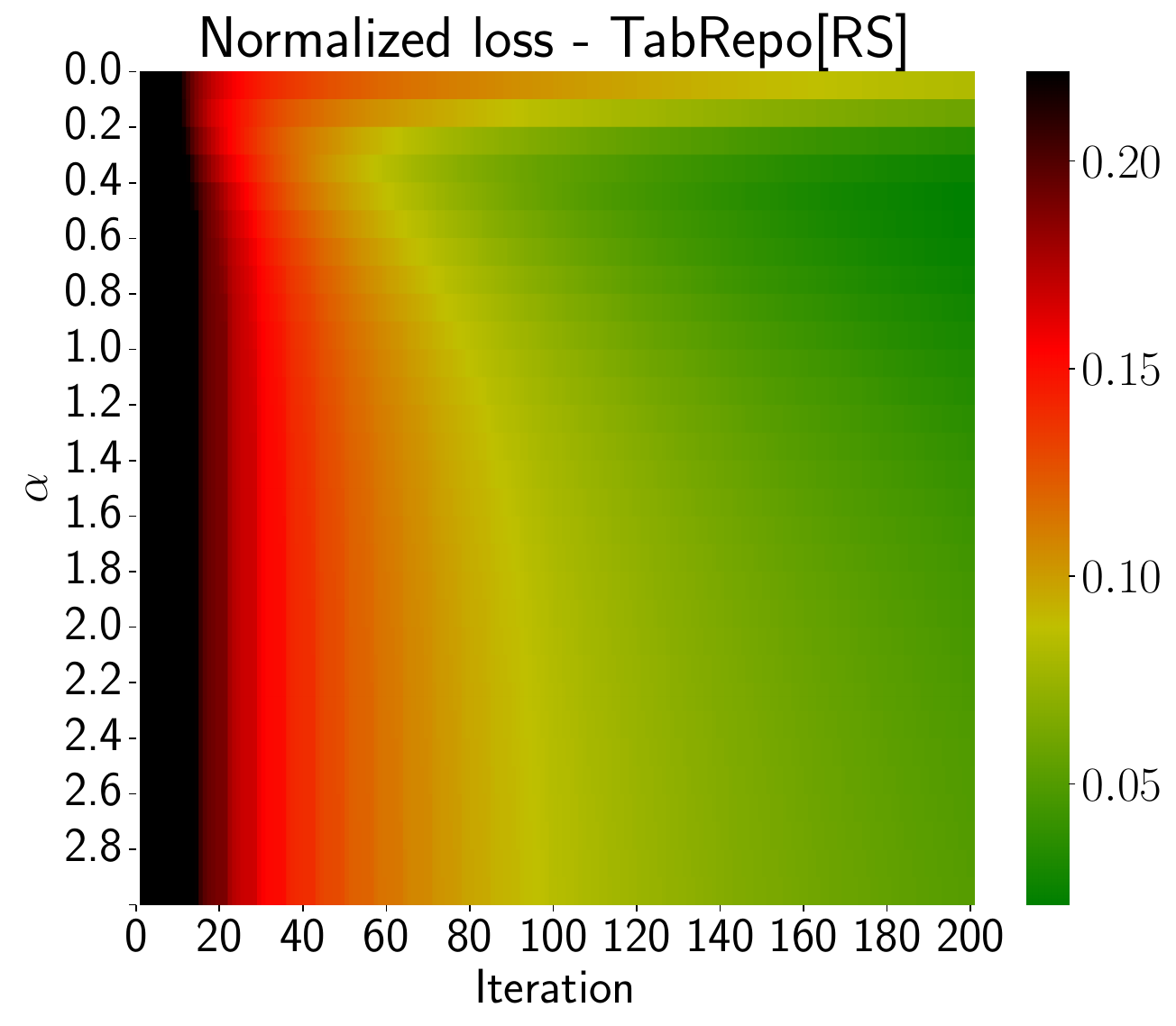}
    \end{subfigure}
    \begin{subfigure}{0.48\textwidth}
        \includegraphics[height=6cm]{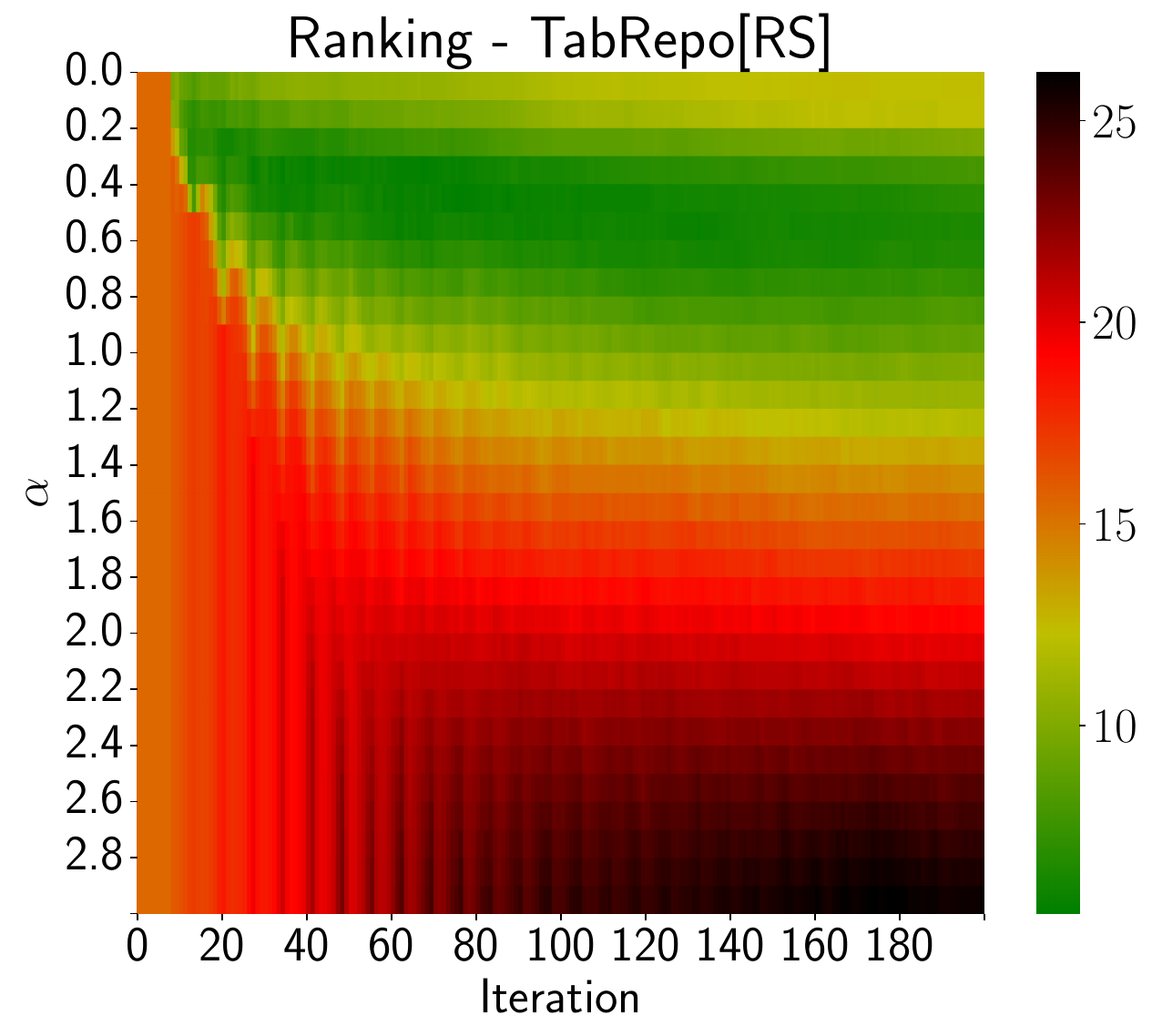}
    \end{subfigure}
    \caption{ Heatmap showing the performance of our algorithm with different values of $\alpha$ for \tabrepo{} dataset.  }
    \label{app:fig:ablation_TabRepo}
\end{figure}

\begin{figure}[htbp]
    \centering
    \begin{subfigure}{0.48\textwidth}
        \includegraphics[height=6cm]{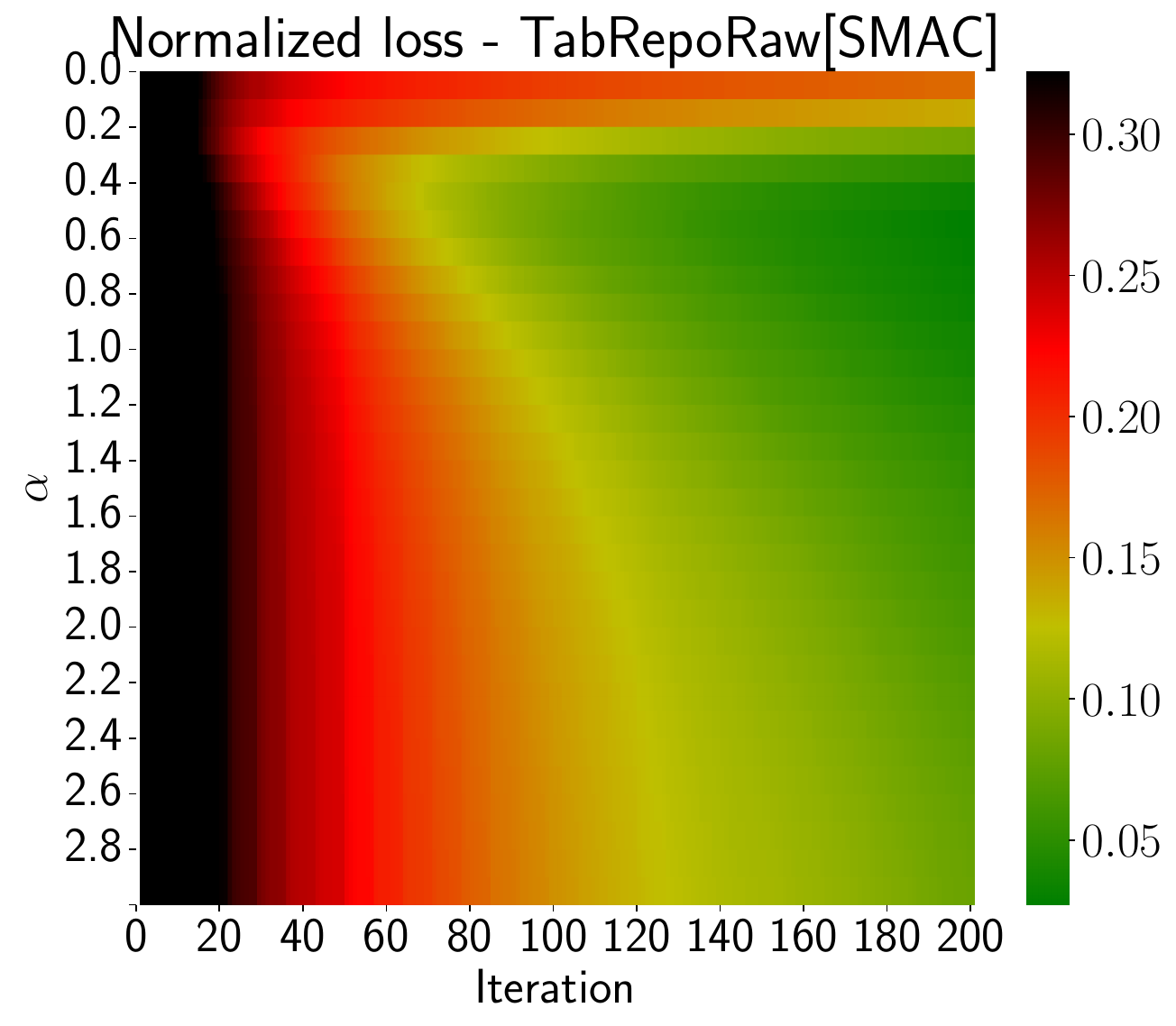}
    \end{subfigure}
    \begin{subfigure}{0.48\textwidth}
        \includegraphics[height=6cm]{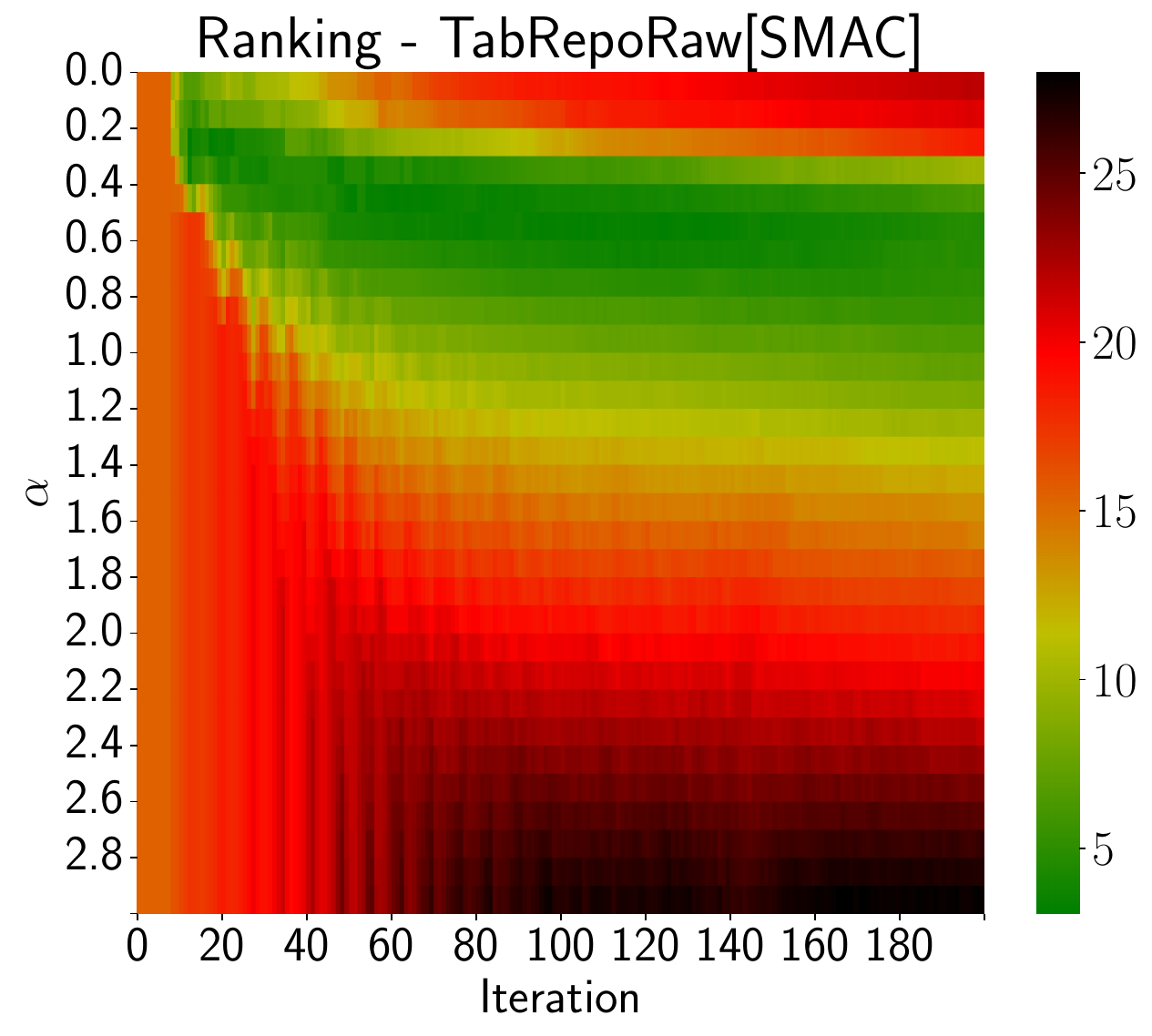}
    \end{subfigure}
    \caption{ Heatmap showing the performance of our algorithm with different values of $\alpha$ for \tabreporaw{} dataset.  }
    \label{app:fig:ablation_TabRepoRaw}
\end{figure}

\begin{figure}[htbp]
    \centering
    \begin{subfigure}{0.48\textwidth}
        \includegraphics[height=6cm]{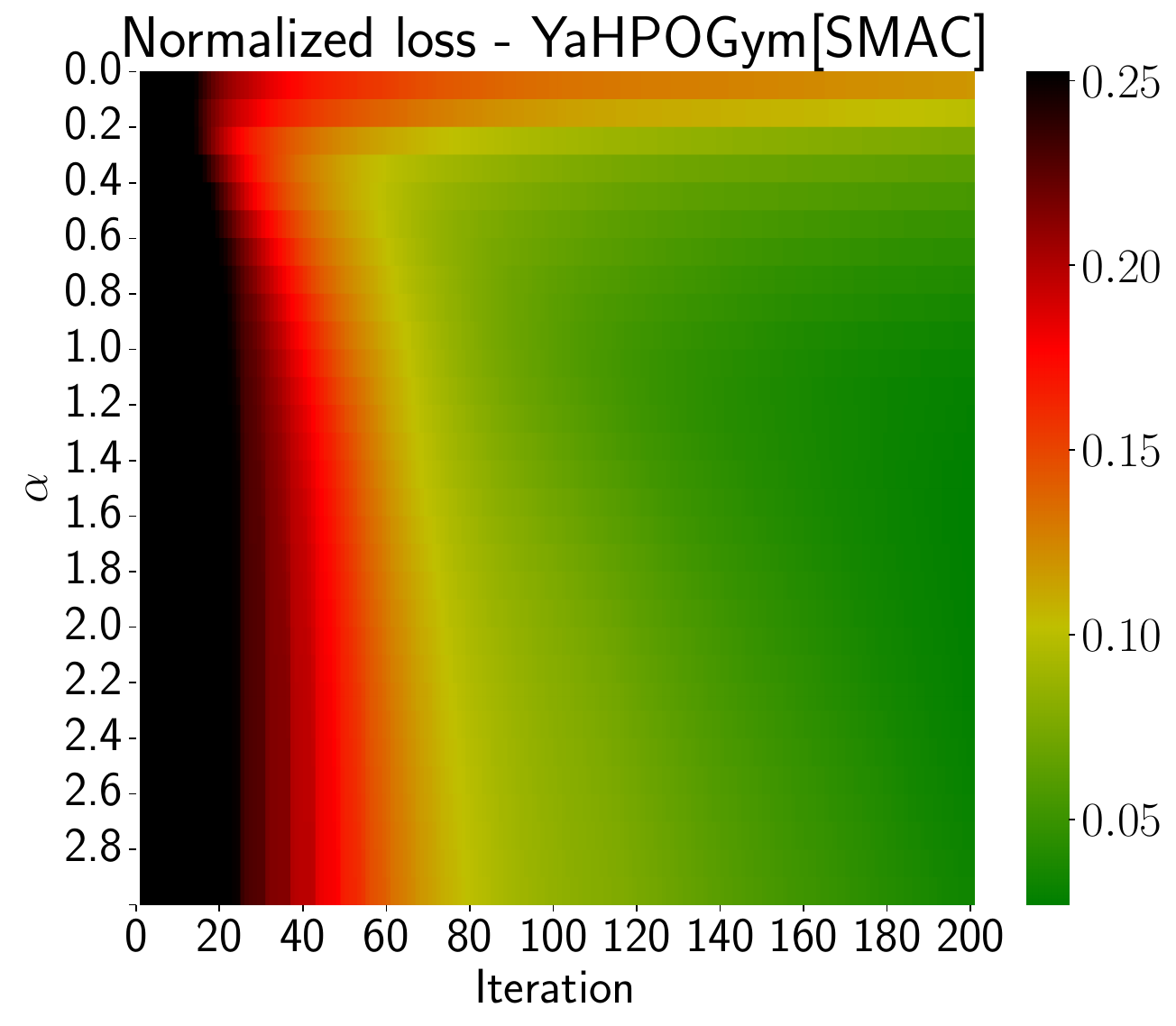}
    \end{subfigure}%
    \begin{subfigure}{0.48\textwidth}
        \includegraphics[height=6cm]{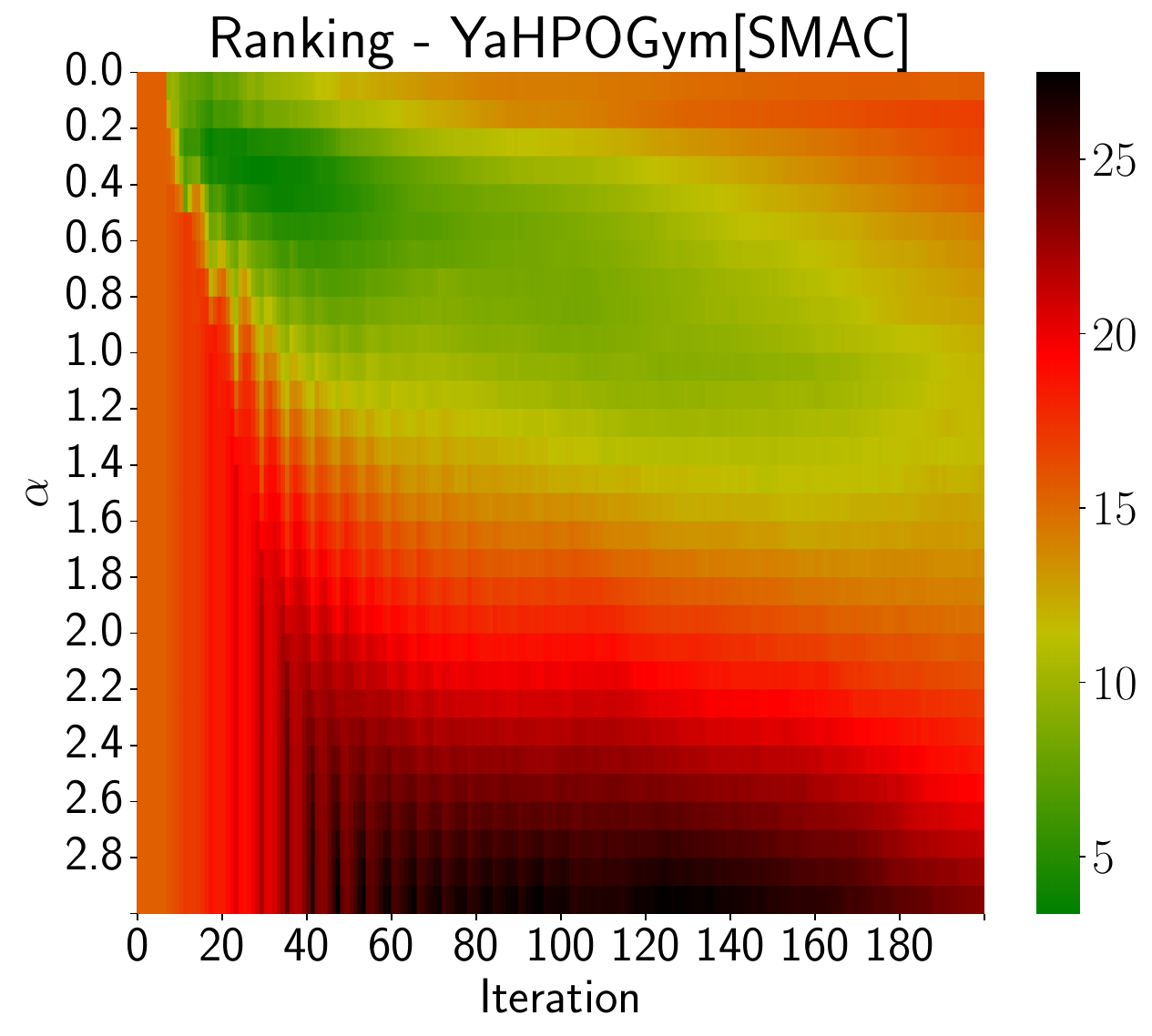}
    \end{subfigure}
    \caption{Heatmap showing the performance of our algorithm with different values of $\alpha$ for \yahpogym{} dataset. }
    \label{app:fig:ablation_yahpo_gym}
\end{figure}

\begin{figure}[htbp]
    \centering
    \begin{subfigure}{0.48\textwidth}
        \includegraphics[height=6cm]{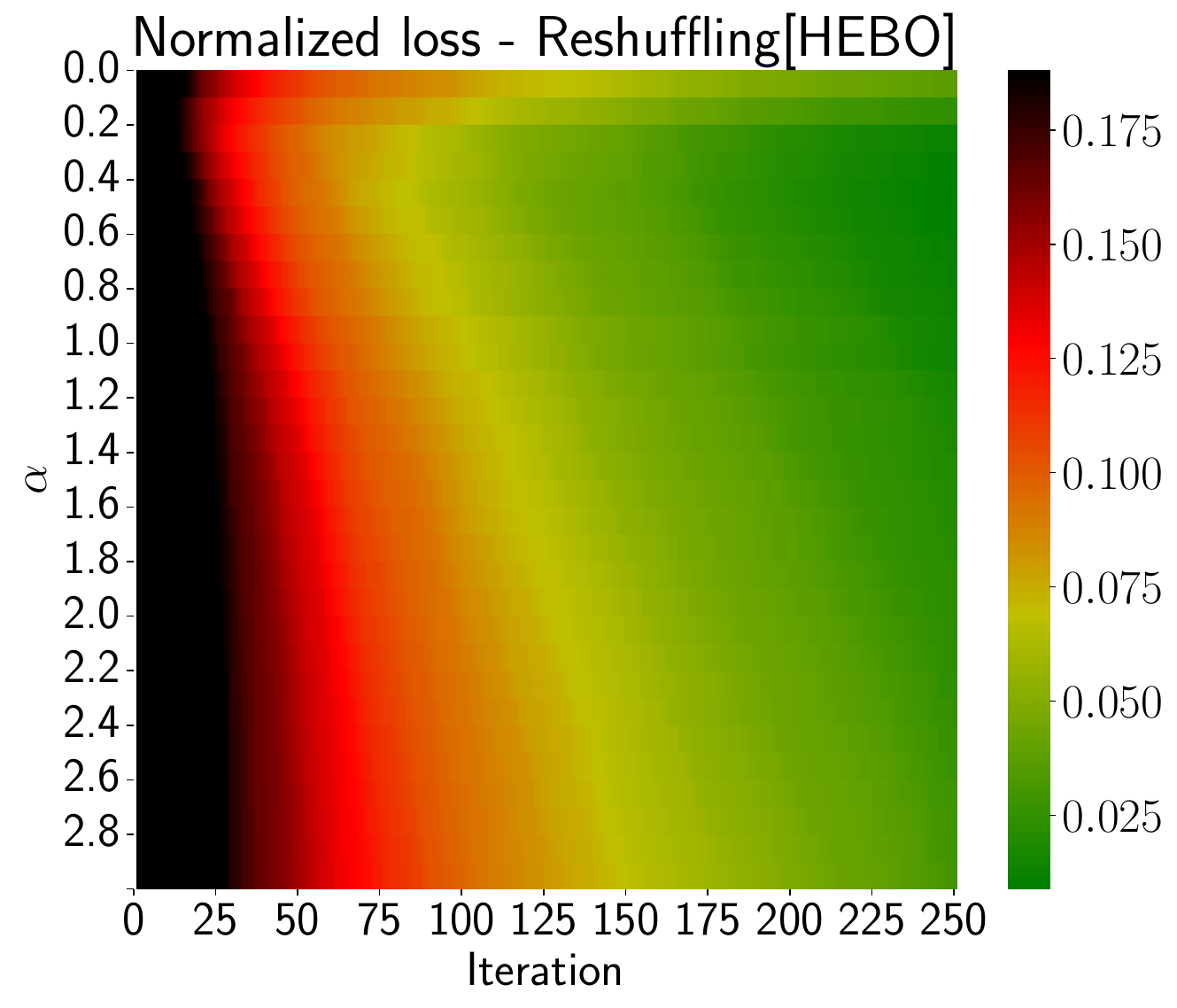}
    \end{subfigure}%
    \begin{subfigure}{0.48\textwidth}
        \includegraphics[height=6cm]{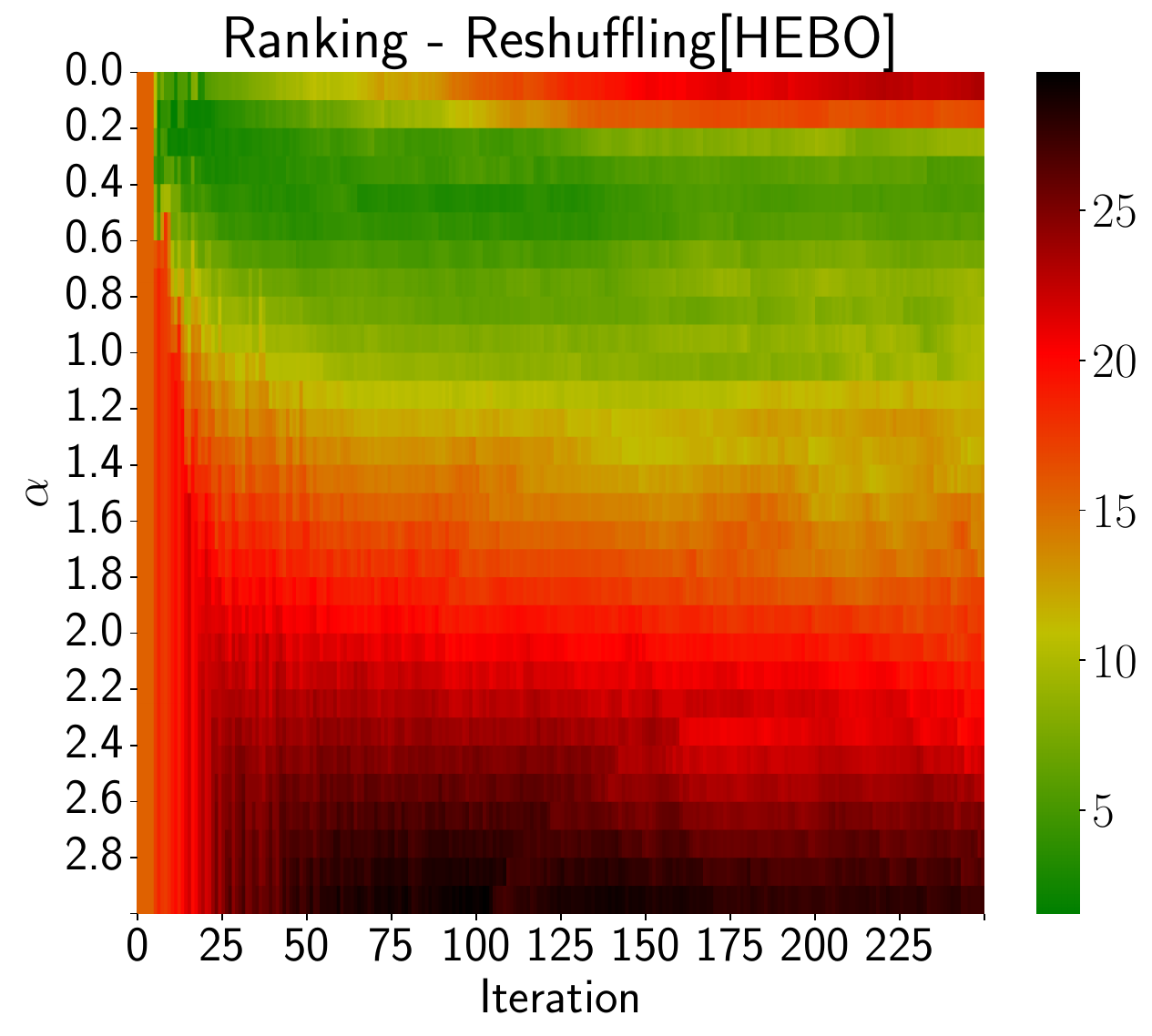}
    \end{subfigure}
    \caption{ Heatmap showing the performance of our algorithm with different values of $\alpha$ for \hebo{} dataset. }
    \label{app:fig:ablation_hebo}
\end{figure}

\begin{table}[htbp]
\centering
\caption{Comparing \OURALGO{}  with \combinedsearch{}(\SMAC{}) for different values of $\alpha$ and time steps. P-values from a sign test assessing whether bandit methods outperform \combinedsearch{}. P-values below $\alpha = 0.05$ are underlined, while those below $\alpha = 0.05$ after multiple comparison correction (adjusting $\alpha$ by the number of comparisons) are boldfaced and underlined indicating that the two-level approach is superior to \combinedsearch{}. 
Additionally, we report the normalized loss and the number of wins, ties, and losses (w/t/l) of bandit methods.}
\resizebox{\textwidth}{!}{%
\begin{tabular}{lllllllllllll}
& Time &  & $\alpha$=0.0 & $\alpha$=0.1 & $\alpha$=0.2 & $\alpha$=0.3 & $\alpha$=0.4 & $\alpha$=0.5 & $\alpha$=0.6 & $\alpha$=0.7 & $\alpha$=0.8 & $\alpha$=0.9\\
\midrule 
\multirow{12}{*}{\rotatebox[origin=c]{90}{TabRepoRaw[SMAC]}}\\
\cmidrule{2-13} 
& \multicolumn{1}{l}{\multirow{3}{*}{50}} & p-value  & $0.1002$ & $\mathbf{0.0214}$ & $\mathbf{\underline{0.0000}}$ & $\mathbf{\underline{0.0000}}$ & $\mathbf{\underline{0.0000}}$ & $\mathbf{\underline{0.0000}}$ & $\mathbf{\underline{0.0000}}$ & $\mathbf{\underline{0.0000}}$ & $\mathbf{\underline{0.0000}}$ & $\mathbf{\underline{0.0000}}$ \\
& &  w/t/l  & $19$/$0$/$11$ & $21$/$0$/$9$ & $27$/$0$/$3$ & $29$/$0$/$1$ & $29$/$0$/$1$ & $29$/$0$/$1$ & $29$/$0$/$1$ & $28$/$0$/$2$ & $27$/$0$/$3$ & $27$/$0$/$3$ \\
& &  loss & $0.2555$ & $0.2446$ & $0.2172$ & $0.1946$ & $0.1965$ & $0.2134$ & $0.2203$ & $0.2229$ & $0.2249$ & $0.2296$\\
\cmidrule{2-13} 
& \multicolumn{1}{l}{\multirow{3}{*}{100}} & p-value  & $0.9974$ & $0.9919$ & $0.8998$ & $0.1808$ & $\mathbf{0.0081}$ & $\mathbf{0.0214}$ & $0.1002$ & $0.5722$ & $0.8192$ & $0.9919$ \\
& &  w/t/l  & $8$/$0$/$22$ & $9$/$0$/$21$ & $12$/$0$/$18$ & $18$/$0$/$12$ & $22$/$0$/$8$ & $21$/$0$/$9$ & $19$/$0$/$11$ & $15$/$0$/$15$ & $13$/$0$/$17$ & $9$/$0$/$21$ \\
& &  loss & $0.2138$ & $0.2053$ & $0.1832$ & $0.1346$ & $0.1283$ & $0.1113$ & $0.1164$ & $0.1208$ & $0.1279$ & $0.1512$\\
\cmidrule{2-13} 
& \multicolumn{1}{l}{\multirow{3}{*}{150}} & p-value  & $0.9993$ & $0.9919$ & $0.9506$ & $0.2923$ & $\mathbf{\underline{0.0007}}$ & $\mathbf{\underline{0.0007}}$ & $\mathbf{\underline{0.0026}}$ & $\mathbf{0.0081}$ & $0.1002$ & $0.7077$ \\
& &  w/t/l  & $7$/$0$/$23$ & $9$/$0$/$21$ & $11$/$0$/$19$ & $17$/$0$/$13$ & $24$/$0$/$6$ & $24$/$0$/$6$ & $23$/$0$/$7$ & $22$/$0$/$8$ & $19$/$0$/$11$ & $14$/$0$/$16$ \\
& &  loss & $0.1994$ & $0.1911$ & $0.1671$ & $0.1020$ & $0.0898$ & $0.0752$ & $0.0775$ & $0.0849$ & $0.0887$ & $0.0973$\\
\cmidrule{2-13} 
& \multicolumn{1}{l}{\multirow{3}{*}{200}} & p-value  & $0.9998$ & $0.9993$ & $0.9786$ & $0.1808$ & $\mathbf{0.0214}$ & $\mathbf{\underline{0.0007}}$ & $\mathbf{\underline{0.0026}}$ & $\mathbf{\underline{0.0026}}$ & $\mathbf{0.0081}$ & $\mathbf{0.0081}$ \\
& &  w/t/l  & $6$/$0$/$24$ & $7$/$0$/$23$ & $10$/$0$/$20$ & $18$/$0$/$12$ & $21$/$0$/$9$ & $24$/$0$/$6$ & $23$/$0$/$7$ & $23$/$0$/$7$ & $22$/$0$/$8$ & $22$/$0$/$8$ \\
& &  loss & $0.1864$ & $0.1790$ & $0.1489$ & $0.0686$ & $0.0651$ & $0.0563$ & $0.0622$ & $0.0698$ & $0.0703$ & $0.0751$\\
\bottomrule
\multirow{12}{*}{\rotatebox[origin=c]{90}{YaHPOGym[SMAC]}}\\
\cmidrule{2-13} 
& \multicolumn{1}{l}{\multirow{3}{*}{50}}& p-value  & $\mathbf{\underline{0.0000}}$ & $\mathbf{\underline{0.0000}}$ & $\mathbf{\underline{0.0000}}$ & $\mathbf{\underline{0.0000}}$ & $\mathbf{\underline{0.0000}}$ & $\mathbf{\underline{0.0000}}$ & $\mathbf{\underline{0.0000}}$ & $\mathbf{\underline{0.0000}}$ & $\mathbf{\underline{0.0000}}$ & $\mathbf{\underline{0.0000}}$ \\
& &  w/t/l  & $74$/$1$/$28$ & $83$/$1$/$19$ & $95$/$1$/$7$ & $102$/$1$/$0$ & $102$/$1$/$0$ & $102$/$1$/$0$ & $102$/$1$/$0$ & $102$/$1$/$0$ & $102$/$1$/$0$ & $101$/$1$/$1$ \\
& &  loss & $0.1853$ & $0.1532$ & $0.1071$ & $0.0930$ & $0.0942$ & $0.0978$ & $0.1047$ & $0.1101$ & $0.1151$ & $0.1190$\\
\cmidrule{2-13} 
& \multicolumn{1}{l}{\multirow{3}{*}{100}}& p-value  & $0.3833$ & $0.3833$ & $0.3833$ & $\mathbf{0.0459}$ & $\mathbf{0.0112}$ & $\mathbf{0.0112}$ & $\mathbf{0.0088}$ & $0.0572$ & $0.2153$ & $0.4220$ \\
& &  w/t/l  & $53$/$1$/$49$ & $53$/$1$/$49$ & $53$/$1$/$49$ & $60$/$1$/$42$ & $63$/$1$/$39$ & $63$/$1$/$39$ & $64$/$0$/$39$ & $60$/$0$/$43$ & $56$/$0$/$47$ & $53$/$0$/$50$ \\
& &  loss & $0.1494$ & $0.1177$ & $0.0876$ & $0.0813$ & $0.0782$ & $0.0713$ & $0.0668$ & $0.0702$ & $0.0718$ & $0.0749$\\
\cmidrule{2-13} 
& \multicolumn{1}{l}{\multirow{3}{*}{150}}& p-value  & $0.3833$ & $0.6167$ & $0.8135$ & $0.0686$ & $\mathbf{\underline{0.0036}}$ & $\mathbf{0.0065}$ & $\mathbf{\underline{0.0028}}$ & $\mathbf{\underline{0.0005}}$ & $\mathbf{\underline{0.0028}}$ & $\mathbf{0.0148}$ \\
& &  w/t/l  & $53$/$1$/$49$ & $50$/$1$/$52$ & $47$/$1$/$55$ & $59$/$1$/$43$ & $65$/$1$/$37$ & $64$/$1$/$38$ & $66$/$0$/$37$ & $68$/$1$/$34$ & $66$/$0$/$37$ & $63$/$0$/$40$ \\
& &  loss & $0.1378$ & $0.1013$ & $0.0758$ & $0.0722$ & $0.0716$ & $0.0551$ & $0.0518$ & $0.0507$ & $0.0520$ & $0.0500$\\
\cmidrule{2-13} 
& \multicolumn{1}{l}{\multirow{3}{*}{200}}& p-value  & $0.5394$ & $0.8135$ & $0.6896$ & $0.2442$ & $\mathbf{0.0185}$ & $\mathbf{0.0088}$ & $\mathbf{\underline{0.0015}}$ & $\mathbf{\underline{0.0002}}$ & $\mathbf{\underline{0.0000}}$ & $\mathbf{\underline{0.0001}}$ \\
& &  w/t/l  & $51$/$1$/$51$ & $47$/$1$/$55$ & $49$/$1$/$53$ & $55$/$1$/$47$ & $62$/$1$/$40$ & $64$/$0$/$39$ & $67$/$0$/$36$ & $70$/$0$/$33$ & $75$/$0$/$28$ & $71$/$0$/$32$ \\
& &  loss & $0.1190$ & $0.0856$ & $0.0697$ & $0.0660$ & $0.0614$ & $0.0457$ & $0.0443$ & $0.0418$ & $0.0423$ & $0.0421$\\
\bottomrule
\end{tabular}
}
\label{app:ablation_table}
\end{table}

\clearpage
\subsection{More Results for the Empirical Evaluation}
\label{app:detailed_experiments}
In addition to the analysis in the main paper in Figure~\ref{fig:average_rank} in Section~\ref{Sec:Numerical_Experiments}, we report the averaged normalized loss over time in Figure~\ref{app:fig:average_norm_loss}, the average ranking in Figure~\ref{app:fig:average_ranking}, the normalized loss per task in Figure~\ref{app:fig:heatmap_norm_loss}, the ranking per task in Figure~\ref{app:fig:heatmap_ranking} and critical distance plots as described by~\cite{demsar-06a} in Figure~\ref{app:fig:res_autorank}. Additionally, we report results for a \textit{Random Policy} (yellow) that selects arms to pull at random.

\begin{figure*}[htbp]
    \begin{tabular}{c c c c c}
    \scriptsize\tabrepo{[RS]}
    &\hspace{-1.0em}\scriptsize\tabreporaw{[SMAC]}
    & \hspace{-1em}\scriptsize\yahpogym{[SMAC]}
    &\hspace{-1.5em}\scriptsize\hebo{[HEBO]}
    & \\
    \hspace{-1.5em}\includegraphics[clip, trim=0.2cm 0cm 0.3cm 0.25cm,height=3.8cm]{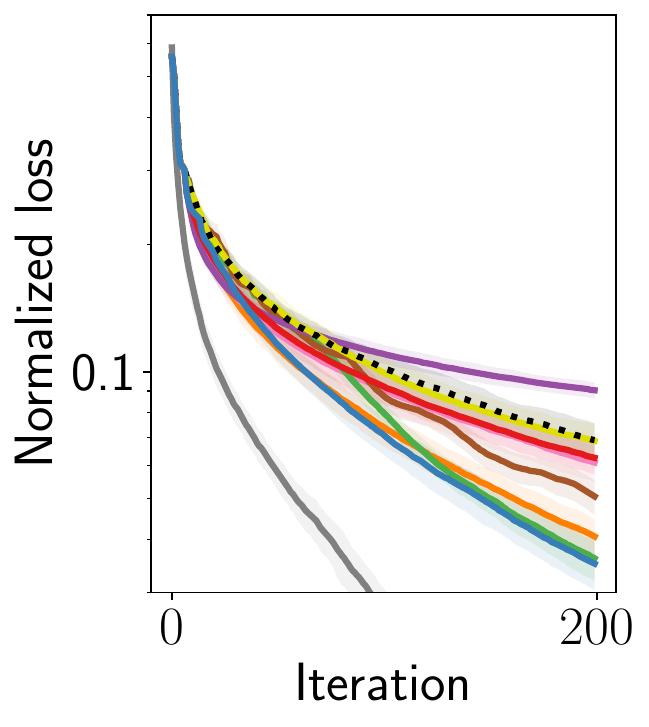}
    & \hspace{-1em} \includegraphics[clip, trim=0.3cm 0cm 0cm 0.25cm, height=3.8cm]{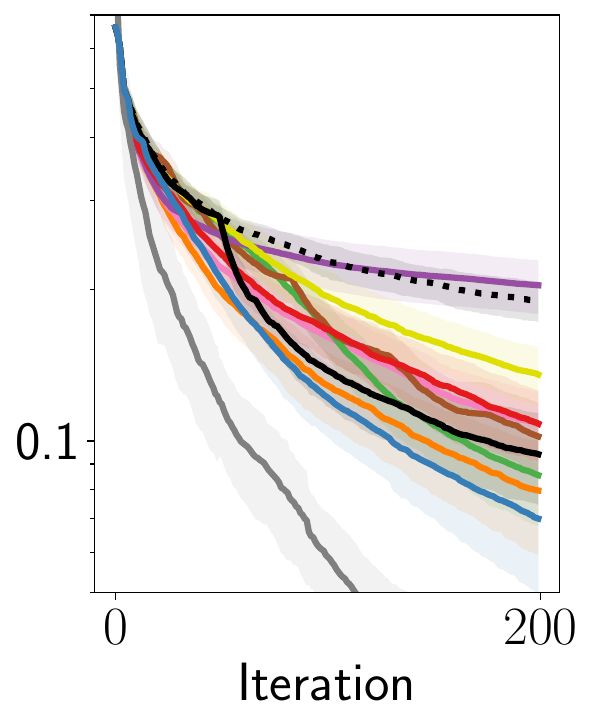}
    &  \hspace{-1em}\includegraphics[clip, trim=0.3cm 0cm 0cm 0.25cm, height=3.8cm]{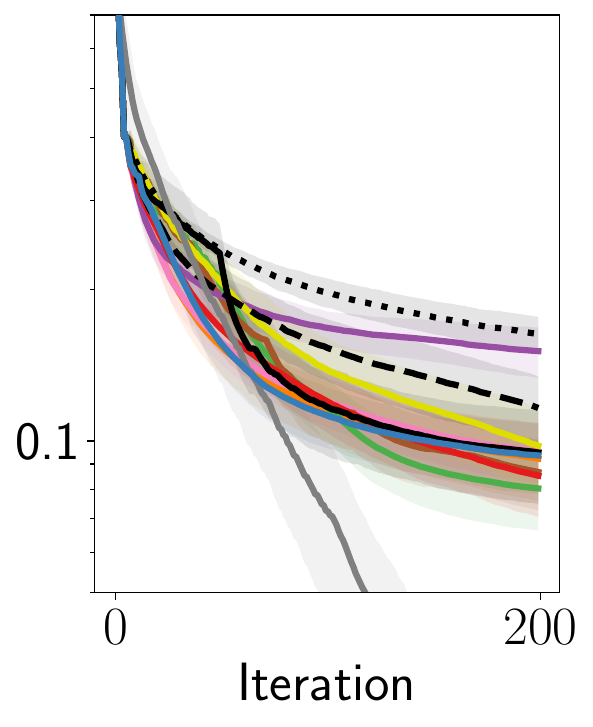}
    & \hspace{-1.5em} \includegraphics[clip, trim=0.3cm 0cm 0cm 0.25cm, height=3.8cm]{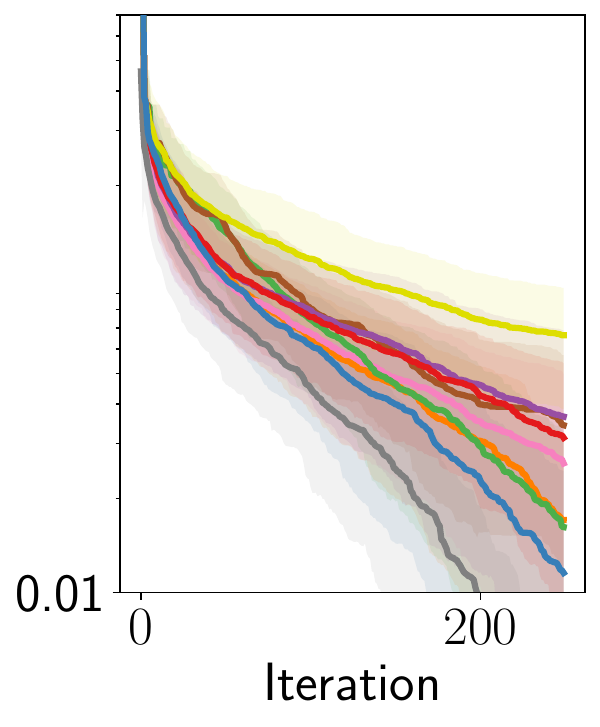}
    &\hspace{-1.4em} \includegraphics[clip, trim=0.3cm -4.5cm 0.0cm 0.0cm, height=3.8cm]{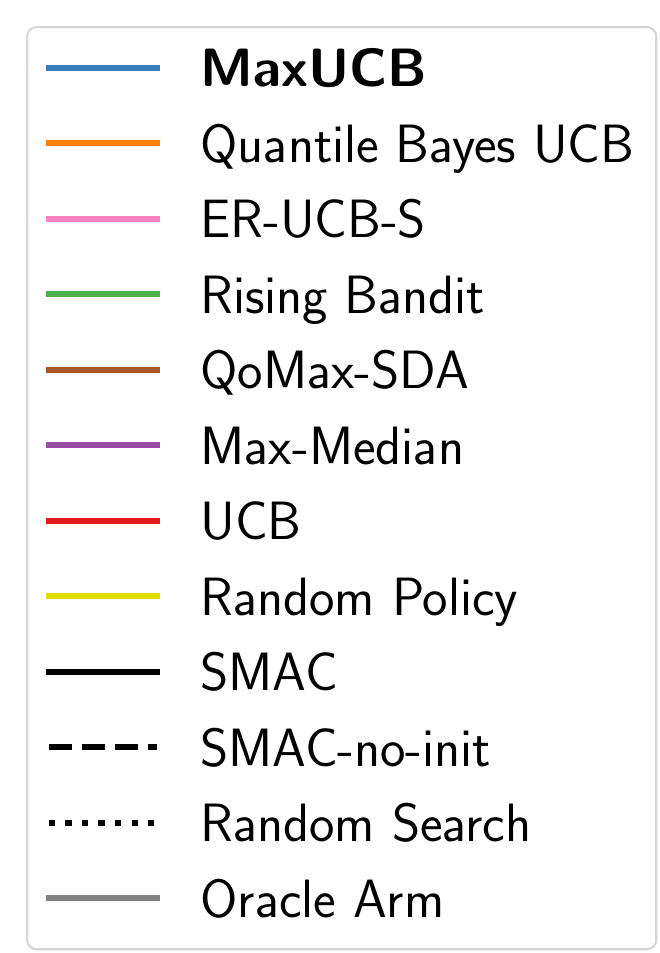}
    \end{tabular}
\caption{Average normalized loss of algorithms on different benchmarks, lower is better. \SMAC{} and \randomsearch{} perform \combinedsearch{} across the joint space.}
\label{app:fig:average_norm_loss}
\end{figure*}

\begin{figure*}[htbp]
    \begin{tabular}{c c c c c}
    \scriptsize\tabrepo{[RS]}
    &\hspace{-0.5em}\scriptsize\tabreporaw{[SMAC]}
    & \hspace{-1em}\scriptsize\yahpogym{[SMAC]}
    &\hspace{-1.5em}\scriptsize\hebo{[HEBO]}
    & \\
    \includegraphics[clip, trim=0.2cm 0cm 0.3cm 0.25cm,height=3.8cm]{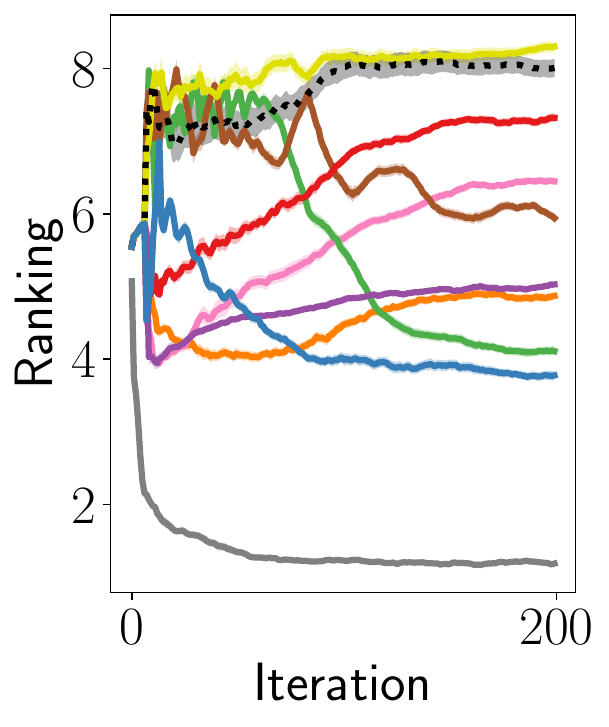}
    & \hspace{-1em} \includegraphics[clip, trim=0.3cm 0cm 0cm 0.25cm, height=3.8cm]{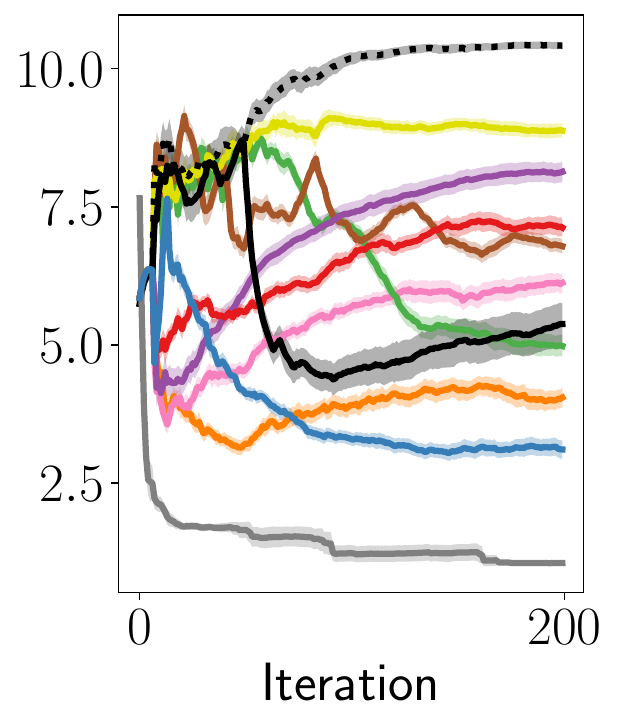}
    &  \hspace{-1em}\includegraphics[clip, trim=0.3cm 0cm 0cm 0.25cm, height=3.8cm]{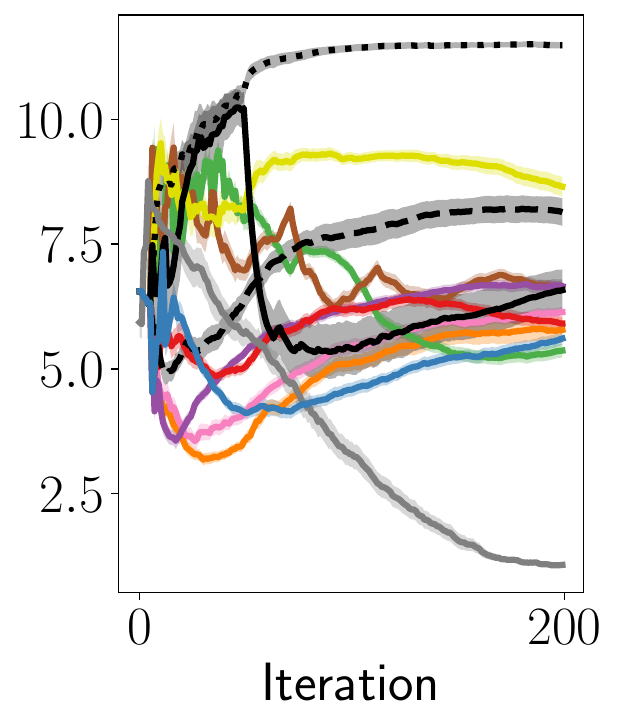}
    & \hspace{-1.5em} \includegraphics[clip, trim=0.3cm 0cm 0cm 0.25cm, height=3.8cm]{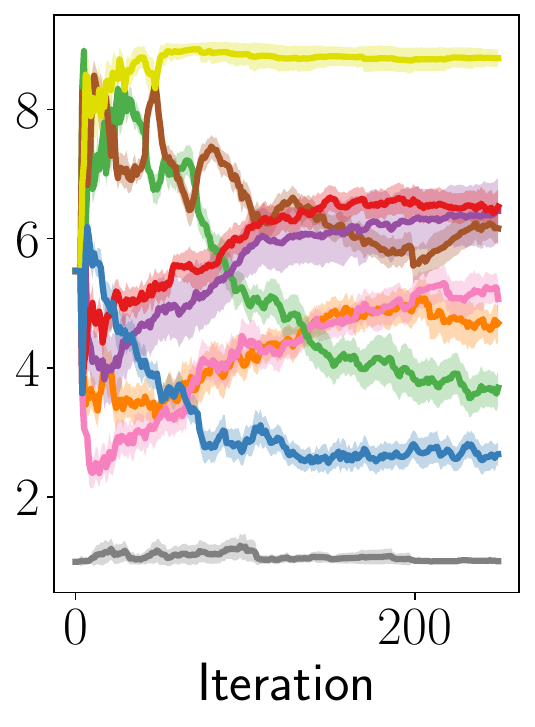}
    &\hspace{-1.4em} \includegraphics[clip, trim=0.3cm -4.5cm 0.0cm 0.0cm, height=3.8cm]{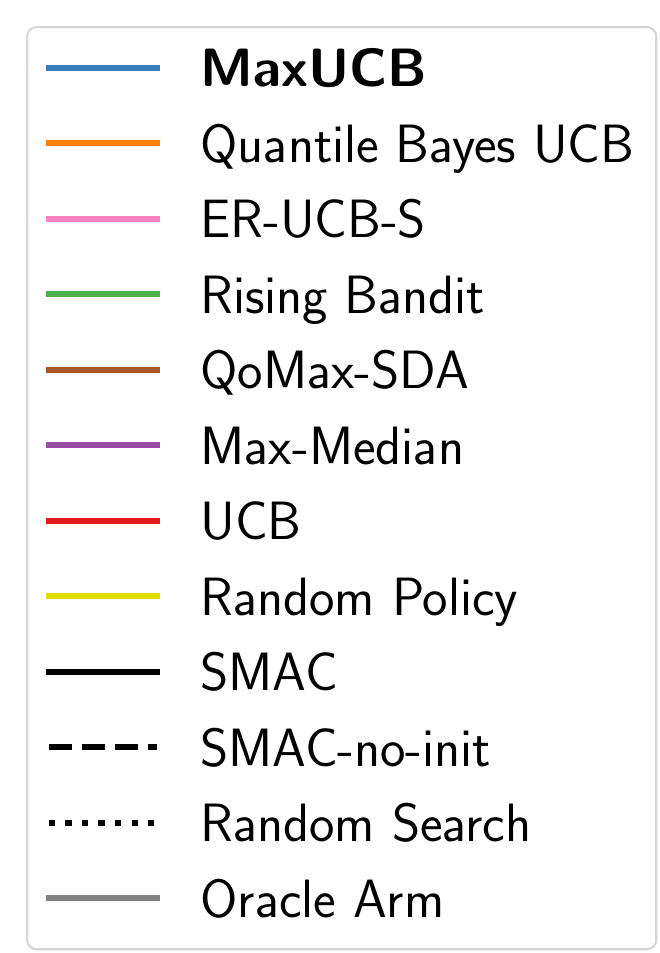}
    \end{tabular}
\caption{Average ranking of algorithms on different benchmarks, lower is better. \SMAC{} and \randomsearch{} perform \combinedsearch{} across the joint space.}
\label{app:fig:average_ranking}
\end{figure*}

\begin{figure}[h!]
\includegraphics[clip, trim=0.0cm 0cm 0.0cm 0.0cm,height=3cm]{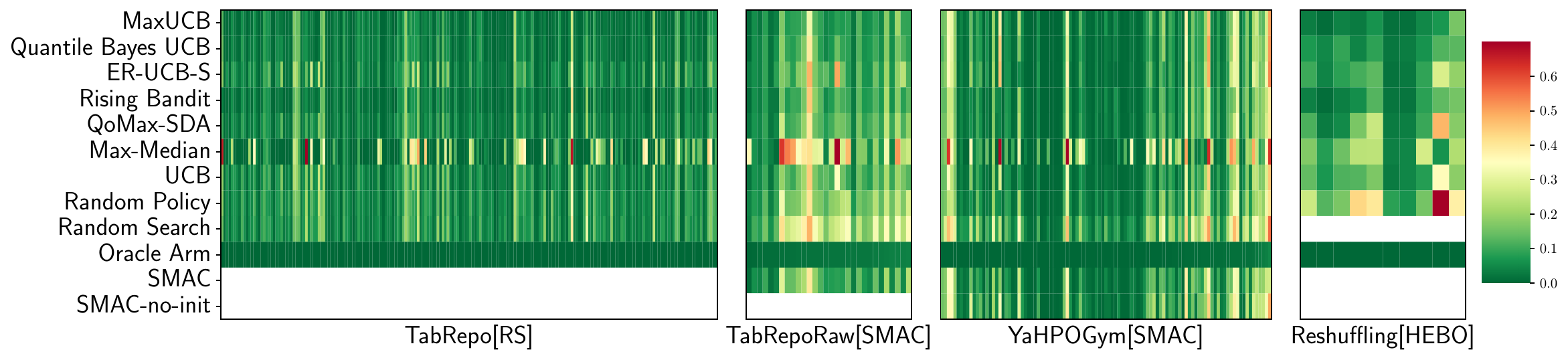}
\caption{Heatmap showing the normalized loss of algorithms per task of each benchmark, sorted by the oracle arm performance, lower is better. \SMAC{} and \randomsearch{} perform \combinedsearch{} across the joint space.}
\label{app:fig:heatmap_norm_loss}
\end{figure}

\begin{figure}[tbp]
\includegraphics[clip, trim=0.0cm 0cm 0.0cm 0.0cm,height=3cm]{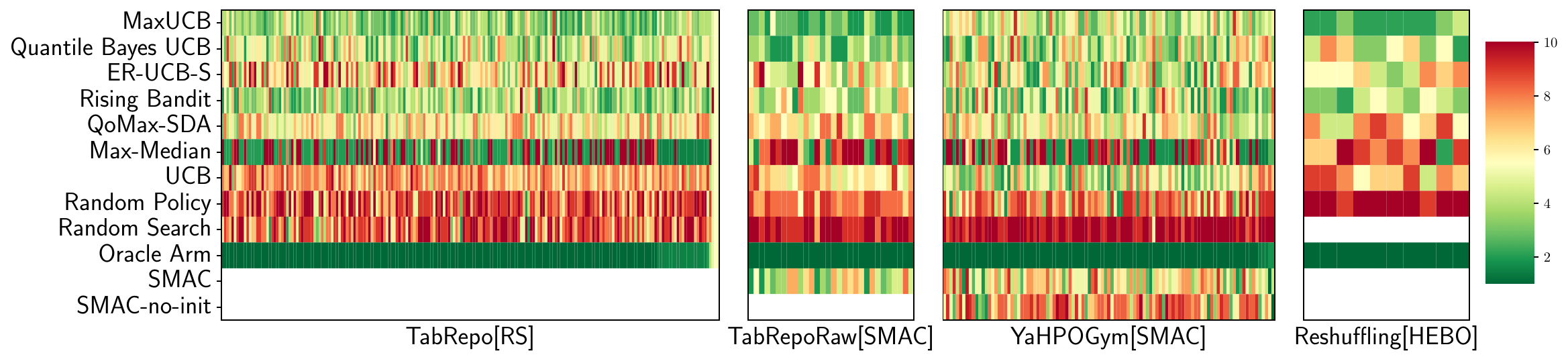}
\caption{Heatmap showing the ranking of algorithms per task of each benchmark, sorted by the oracle arm performance, lower is better. \SMAC{} and \randomsearch{} perform \combinedsearch{} across the joint space.}
\label{app:fig:heatmap_ranking}
\end{figure}

\begin{figure*}[tbp]
\centering
\begin{tabular}{c|c}
\tabrepo{}
&\tabreporaw{}\\
\includegraphics[width=0.40\linewidth]{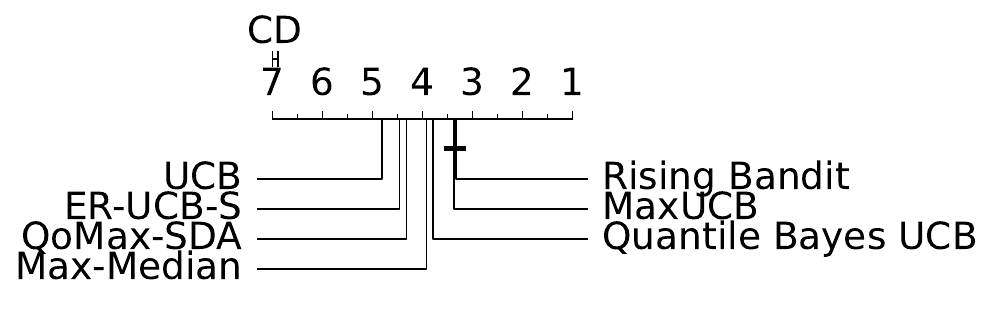}
&\includegraphics[width=0.40\linewidth]{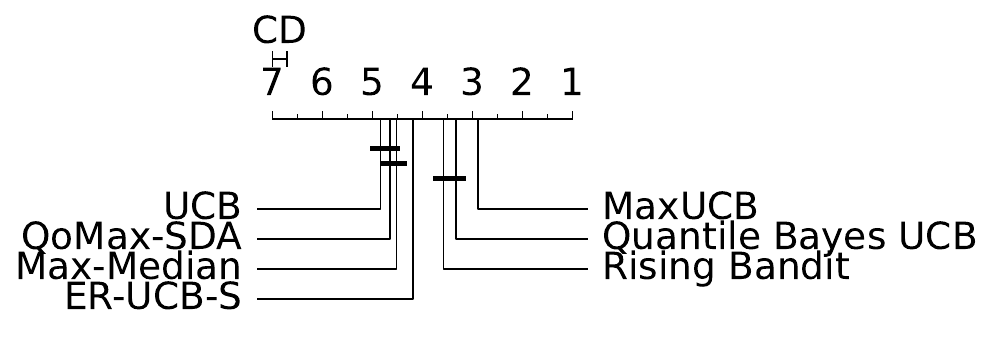}\\
\includegraphics[width=0.40\linewidth]{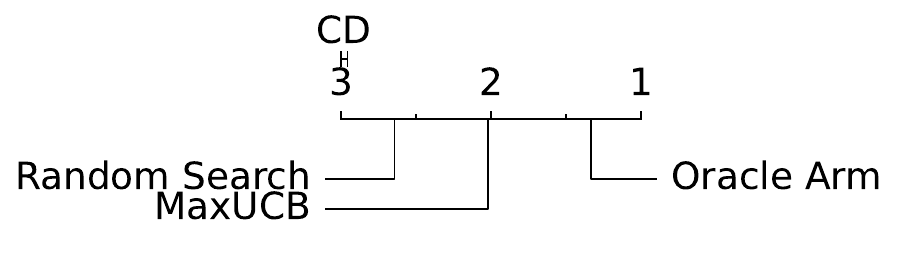}
&\includegraphics[width=0.40\linewidth]{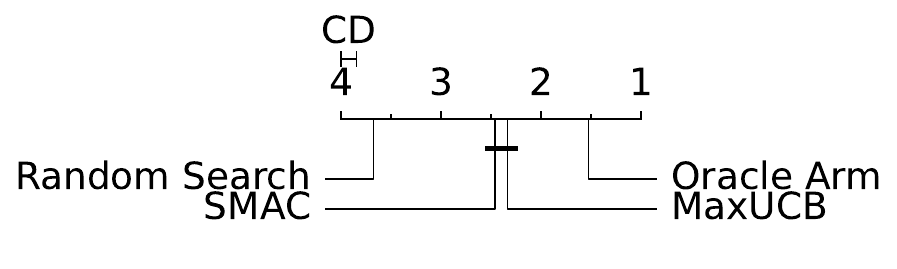}\\
\midrule
\yahpogym{}
&\hebo{}\\
\includegraphics[width=0.40\linewidth]{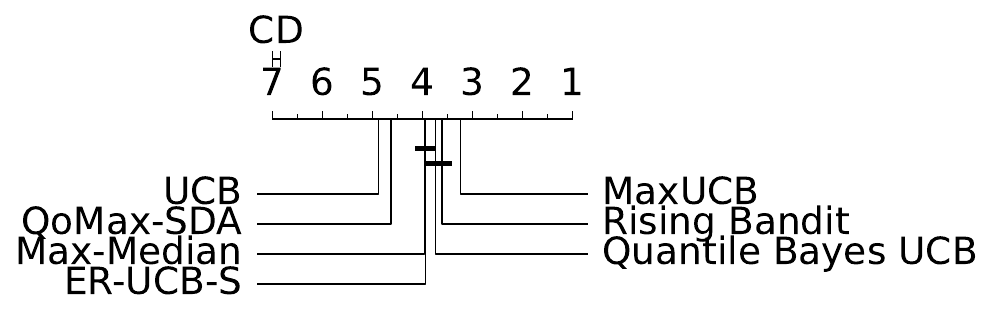}
&\includegraphics[width=0.40\linewidth]{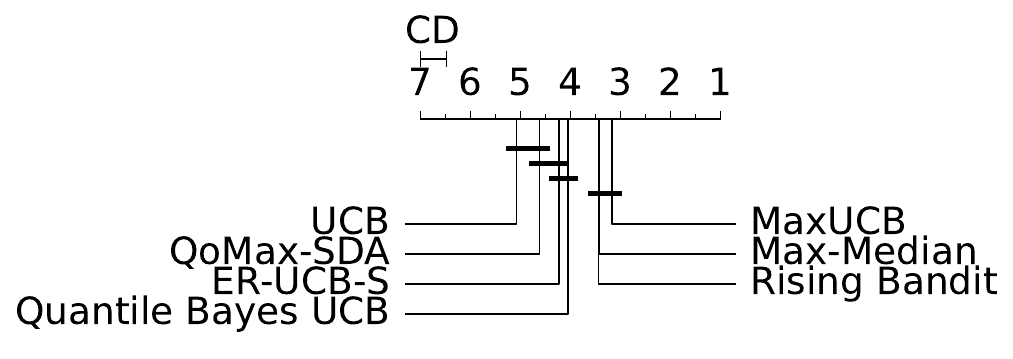}\\
\includegraphics[width=0.40\linewidth]{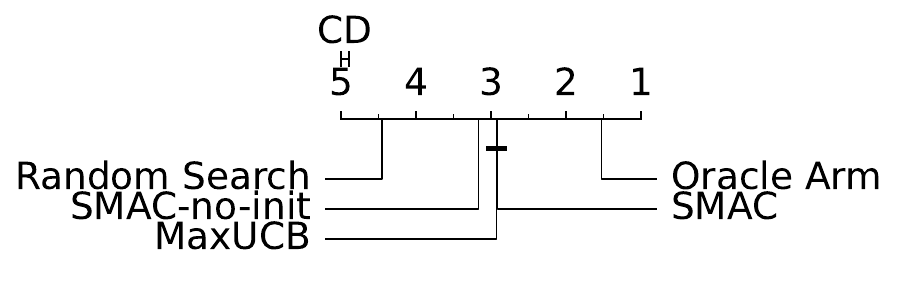}
&
\end{tabular}
\caption{Diagrams to compare the performance (ranking) of different algorithms using the Critical
Distance (CD). For each benchmark on top, we compare bandit methods, and on the bottom, we compare \OURALGO{} against \combinedsearch{} and the oracle arm.}
\label{app:fig:res_autorank}
\end{figure*}

\clearpage
\subsection{More Baselines for the Empirical Evaluation}
\label{app:more_baselines_experiments}
\textbf{Max $K$-armed Bandit Baselines.} We compare \OURALGO{} against \textit{MaxSearch Gaussian}~\citep{kikkawa2024materials},  \textit{MaxSearch SubGaussian}~\citep{kikkawa2024materials},  \textit{QoMax-ETC}~\citep{baudry2022efficient},  \QoMaxSDA{}~\citep{baudry2022efficient}, \MaxMedian{}~\citep{bhatt2022extreme} and \textit{Threshold Ascent}~\citep{streeter2006simple}. We report the averaged normalized loss over time in Figure~\ref{app:fig:average_norm_loss_part_1}, the average ranking in Figure~\ref{app:fig:average_ranking_part_1}.  As shown, our algorithm outperforms all extreme bandit algorithms in these benchmarks.

\begin{figure*}[htbp]
    \begin{tabular}{c c c c c}
    \scriptsize\tabrepo{[RS]}
    &\hspace{-0.5em}\scriptsize\tabreporaw{[SMAC]}
    & \hspace{-1em}\scriptsize\yahpogym{[SMAC]}
    &\hspace{-1.5em}\scriptsize\hebo{[HEBO]}
    & \\
    \includegraphics[clip, trim=0.2cm 0cm 0.3cm 0.25cm,height=3.8cm]{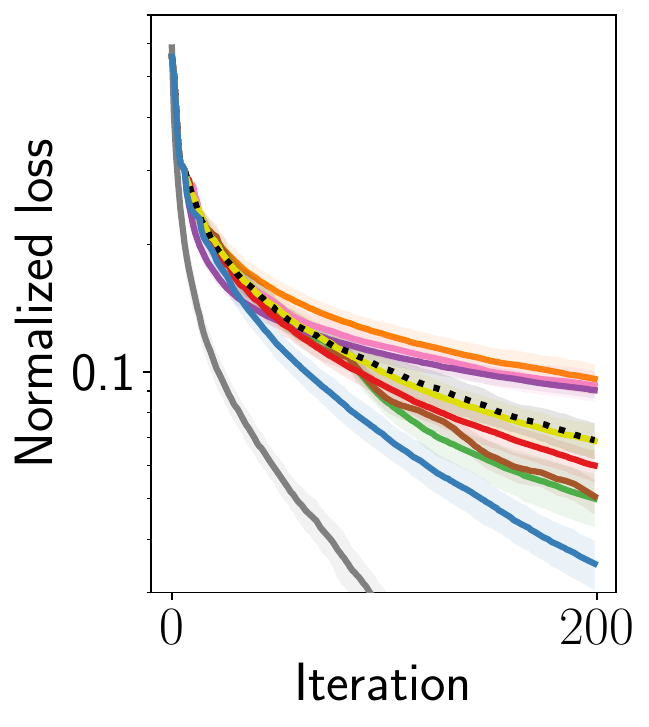}
    & \hspace{-1em} \includegraphics[clip, trim=0.3cm 0cm 0cm 0.25cm, height=3.8cm]{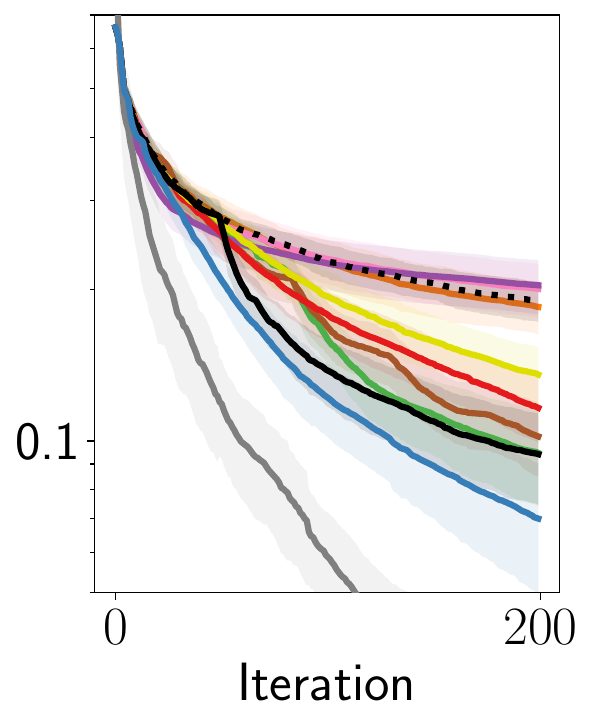}
    &  \hspace{-1em}\includegraphics[clip, trim=0.3cm 0cm 0cm 0.25cm, height=3.8cm]{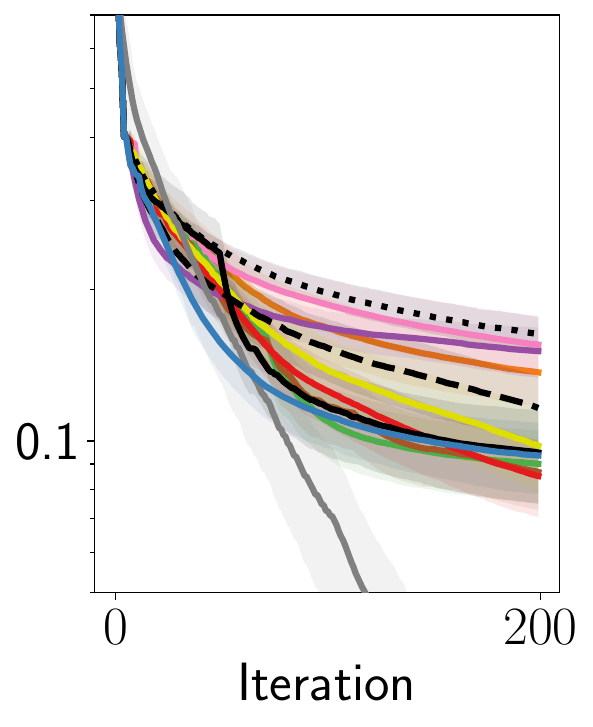}
    & \hspace{-1.5em} \includegraphics[clip, trim=0.3cm 0cm 0cm 0.25cm, height=3.8cm]{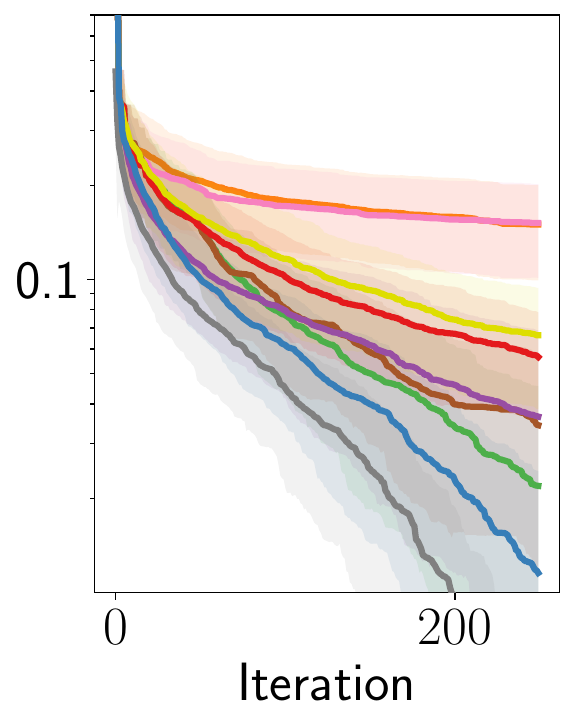}
    &\hspace{-1.4em} \includegraphics[clip, trim=0.3cm -4.5cm 0.0cm 0.0cm, height=3.8cm]{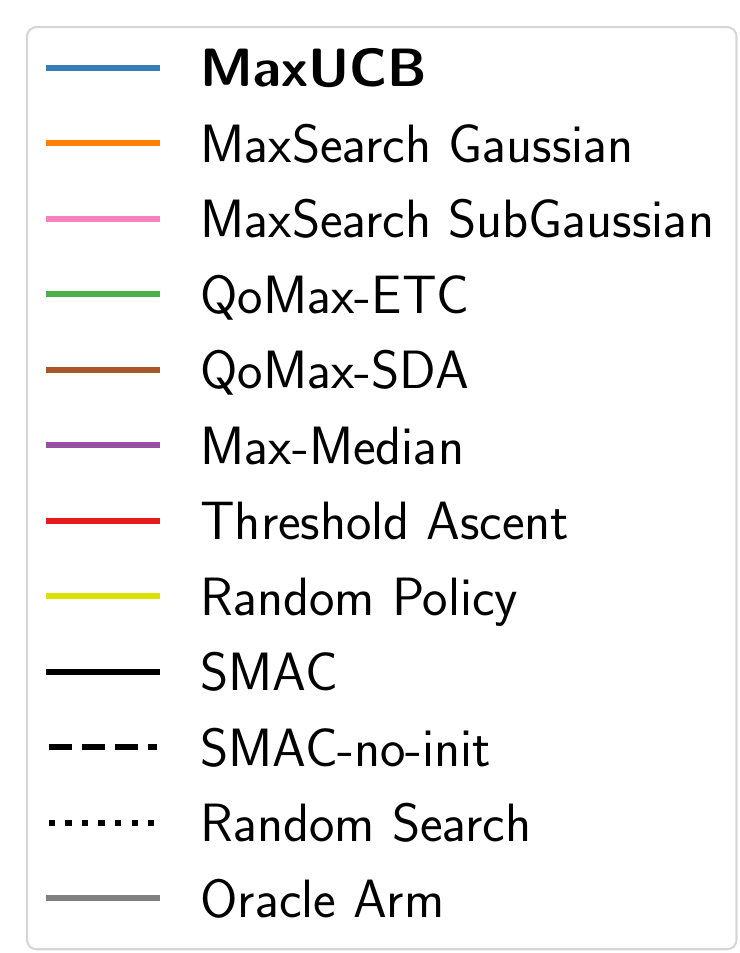}
    \end{tabular}
\caption{Average normalized loss of MKB algorithms on different benchmarks, lower is better. \SMAC{} and \randomsearch{} perform \combinedsearch{} across the joint space.}
\label{app:fig:average_norm_loss_part_1}
\end{figure*}

\begin{figure*}[htbp]
    \begin{tabular}{c c c c c}
    \scriptsize\tabrepo{[RS]}
    &\hspace{-0.5em}\scriptsize\tabreporaw{[SMAC]}
    & \hspace{-1em}\scriptsize\yahpogym{[SMAC]}
    &\hspace{-1.5em}\scriptsize\hebo{[HEBO]}
    & \\
    \includegraphics[clip, trim=0.2cm 0cm 0.3cm 0.25cm,height=3.8cm]{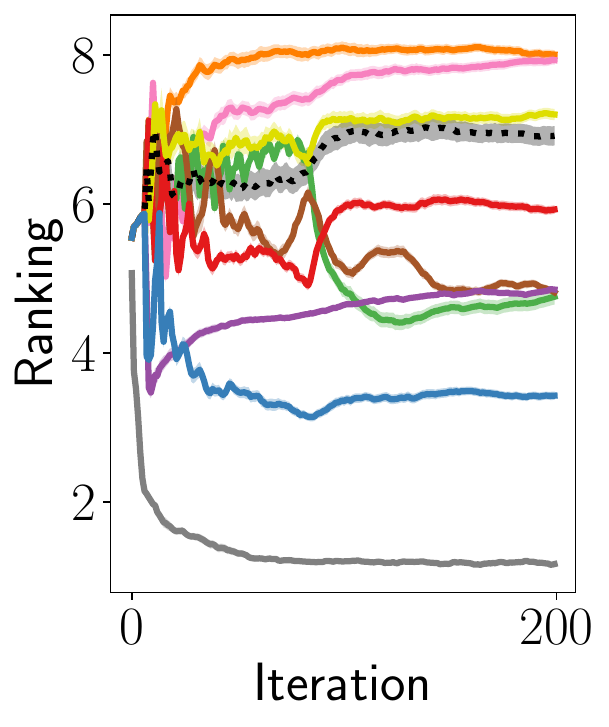}
    & \hspace{-1em} \includegraphics[clip, trim=0.3cm 0cm 0cm 0.25cm, height=3.8cm]{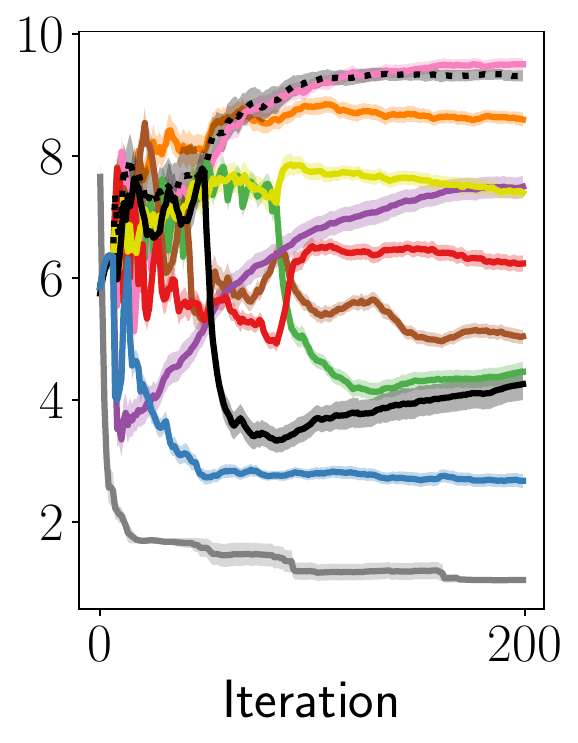}
    &  \hspace{-1em}\includegraphics[clip, trim=0.3cm 0cm 0cm 0.25cm, height=3.8cm]{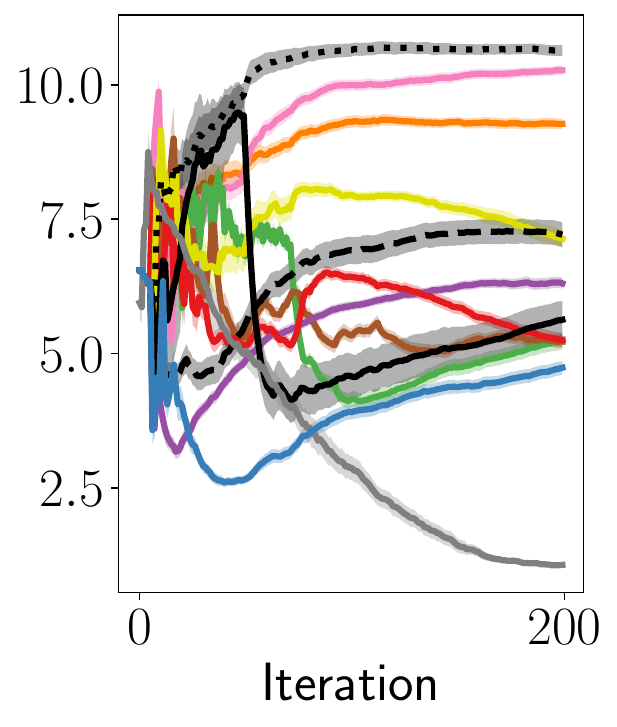}
    & \hspace{-1.5em} \includegraphics[clip, trim=0.3cm 0cm 0cm 0.25cm, height=3.8cm]{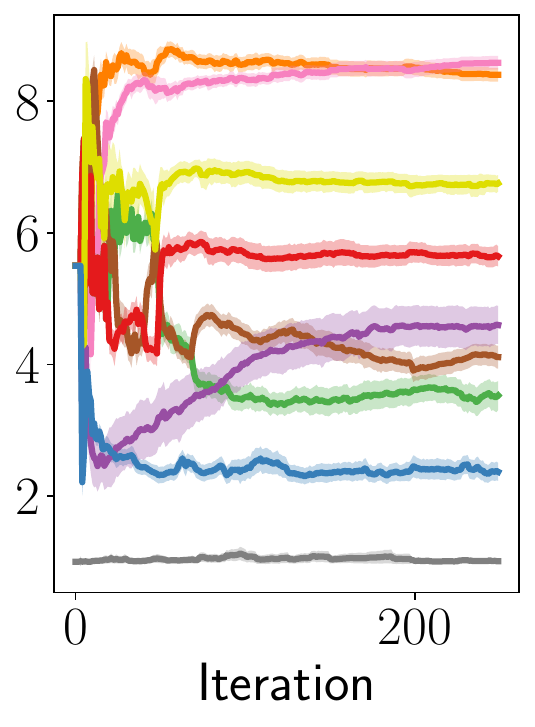}
    &\hspace{-1.4em} \includegraphics[clip, trim=0.3cm -4.5cm 0.0cm 0.0cm, height=3.8cm]{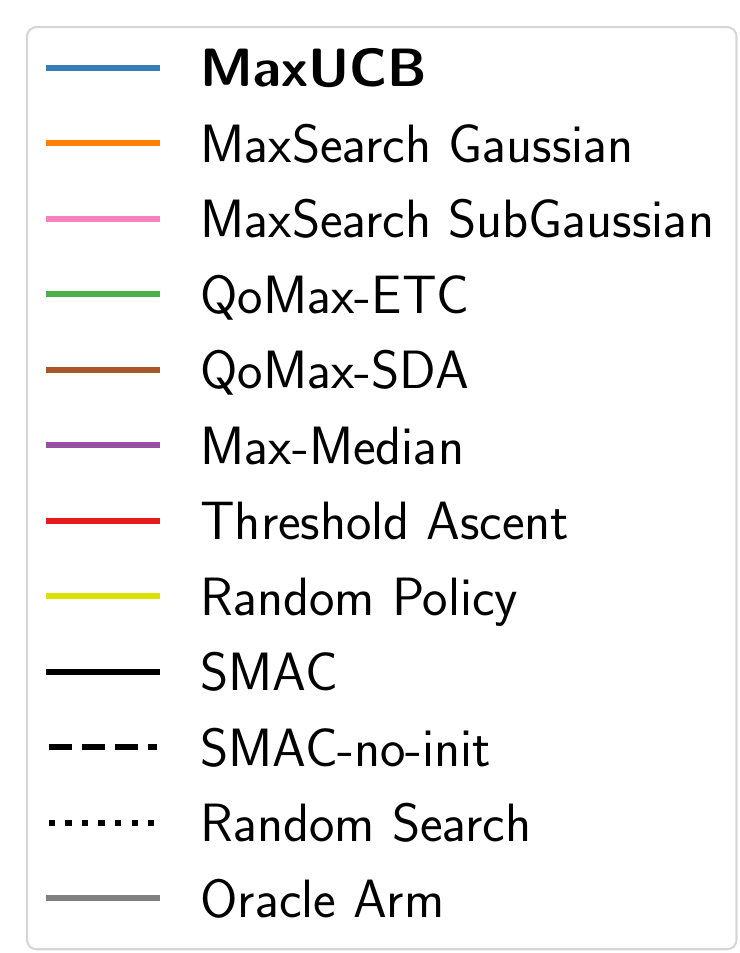}
    \end{tabular}
\caption{Average ranking of MKB algorithms on different benchmarks, lower is better. \SMAC{} and \randomsearch{} perform \combinedsearch{} across the joint space.}
\label{app:fig:average_ranking_part_1}
\end{figure*}

\textbf{A Few More Relevant Bandit Baselines.} We compare \OURALGO{} against \textit{Quantile UCB}~\citep{balef2024towards},  \textit{ER-UCB-N}~\citep{hu2021cascaded},  \textit{R-SR}~\citep{mussi2024best}, \textit{R-UCBE}~\citep{mussi2024best}, \SuccessiveHalving~\citep{karnin2013almost} and \textit{EXP3}~\citep{auer2002nonstochastic}. We report the averaged normalized loss over time in Figure~\ref{app:fig:average_norm_loss_part_2}, the average ranking in Figure~\ref{app:fig:average_ranking_part_2}. 

\begin{figure*}[htbp]
    \begin{tabular}{c c c c c}
    \scriptsize\tabrepo{[RS]}
    &\hspace{-0.5em}\scriptsize\tabreporaw{[SMAC]}
    & \hspace{-1em}\scriptsize\yahpogym{[SMAC]}
    &\hspace{-1.5em}\scriptsize\hebo{[HEBO]}
    & \\
    \includegraphics[clip, trim=0.2cm 0cm 0.3cm 0.25cm,height=3.8cm]{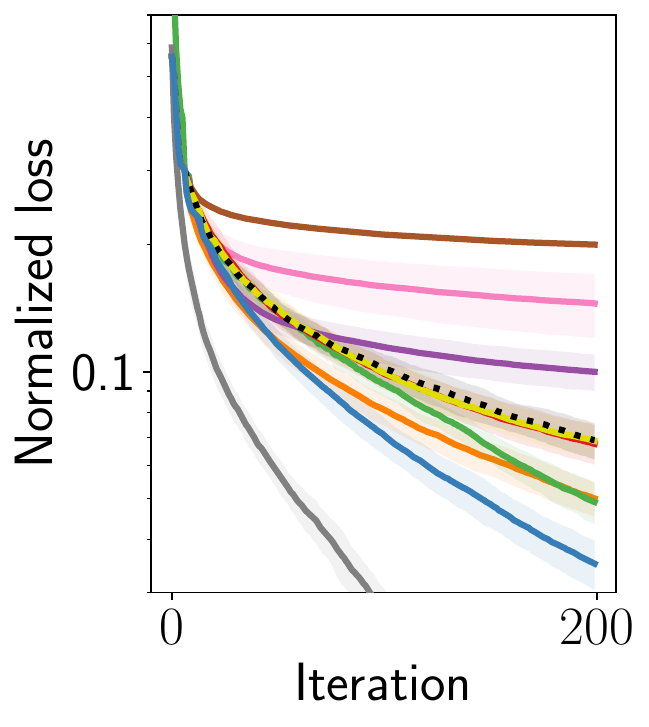}
    & \hspace{-1em} \includegraphics[clip, trim=0.3cm 0cm 0cm 0.25cm, height=3.8cm]{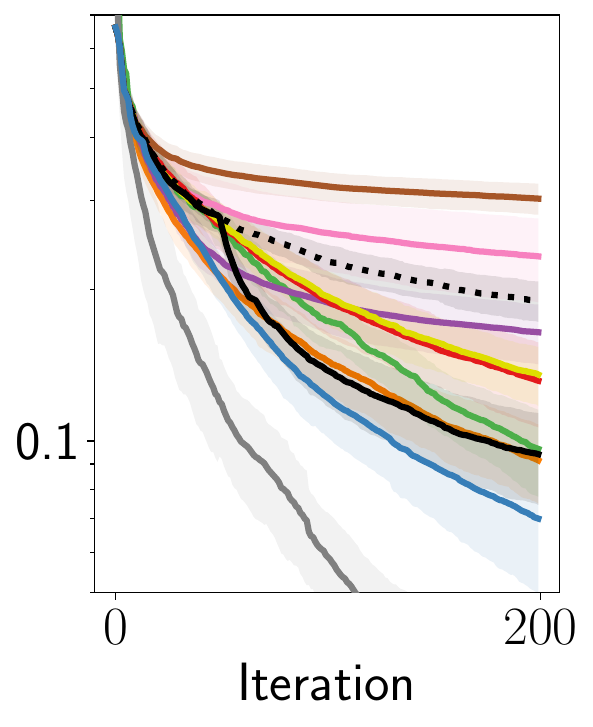}
    &  \hspace{-1em}\includegraphics[clip, trim=0.3cm 0cm 0cm 0.25cm, height=3.8cm]{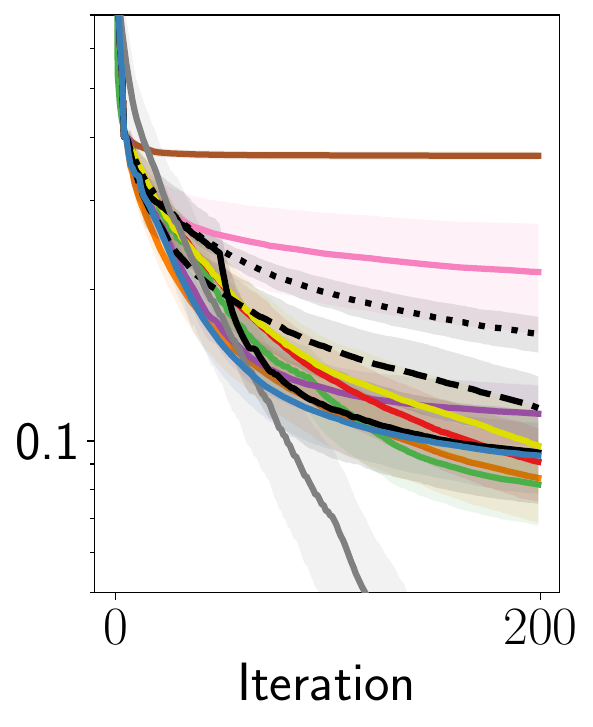}
    & \hspace{-1.5em} \includegraphics[clip, trim=0.3cm 0cm 0cm 0.25cm, height=3.8cm]{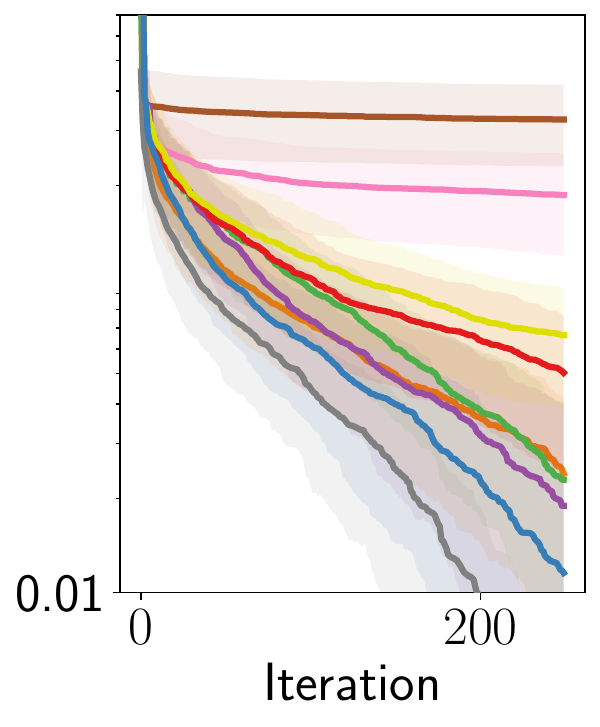}
    &\hspace{-1.4em} \includegraphics[clip, trim=0.3cm -4.5cm 0.0cm 0.0cm, height=3.8cm]{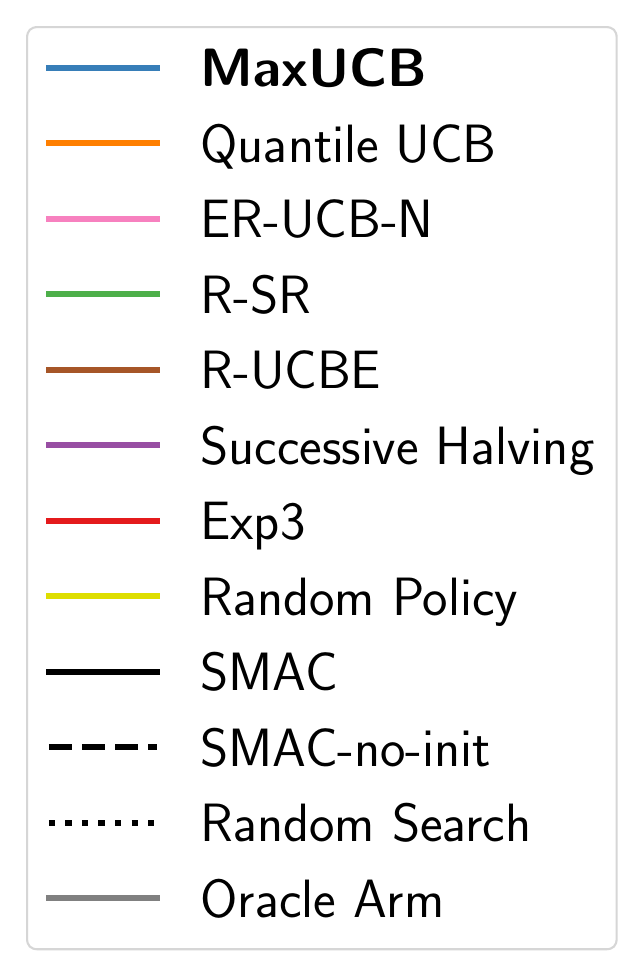}
    \end{tabular}
\caption{Average normalized loss of algorithms on different benchmarks, lower is better. \SMAC{} and \randomsearch{} perform \combinedsearch{} across the joint space.}
\label{app:fig:average_norm_loss_part_2}
\end{figure*}

\begin{figure*}[htbp]
    \begin{tabular}{c c c c c}
    \scriptsize\tabrepo{[RS]}
    &\hspace{-0.5em}\scriptsize\tabreporaw{[SMAC]}
    & \hspace{-1em}\scriptsize\yahpogym{[SMAC]}
    &\hspace{-1.5em}\scriptsize\hebo{[HEBO]}
    & \\
    \includegraphics[clip, trim=0.2cm 0cm 0.3cm 0.25cm,height=3.8cm]{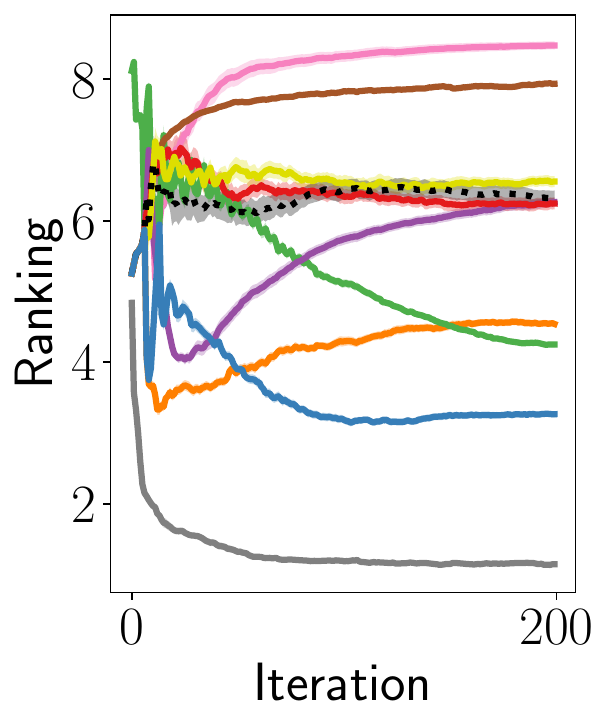}
    & \hspace{-1em} \includegraphics[clip, trim=0.3cm 0cm 0cm 0.25cm, height=3.8cm]{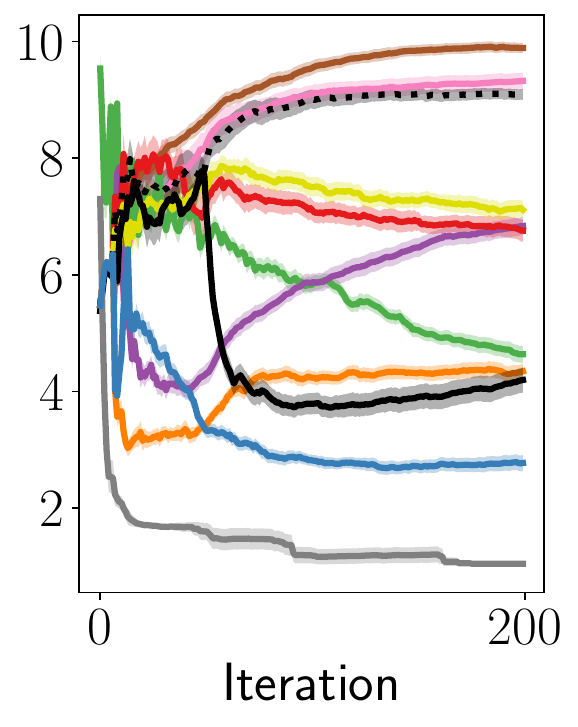}
    &  \hspace{-1em}\includegraphics[clip, trim=0.3cm 0cm 0cm 0.25cm, height=3.8cm]{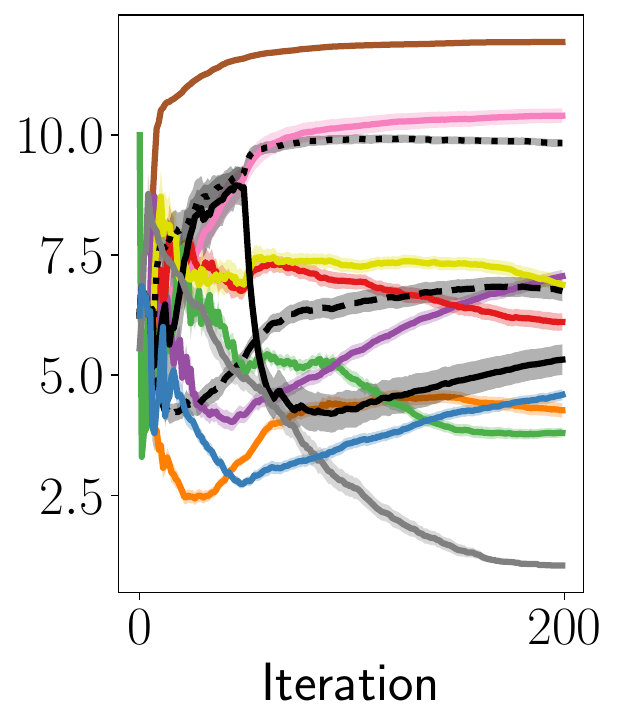}
    & \hspace{-1.5em} \includegraphics[clip, trim=0.3cm 0cm 0cm 0.25cm, height=3.8cm]{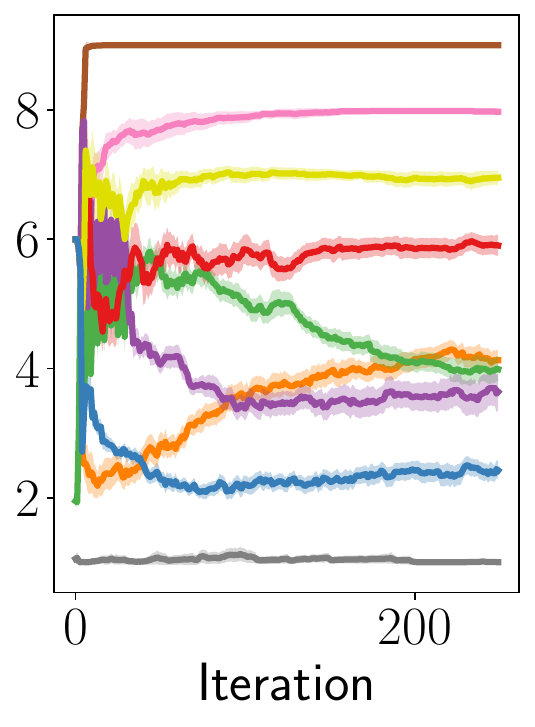}
    &\hspace{-1.4em} \includegraphics[clip, trim=0.3cm -4.5cm 0.0cm 0.0cm, height=3.8cm]{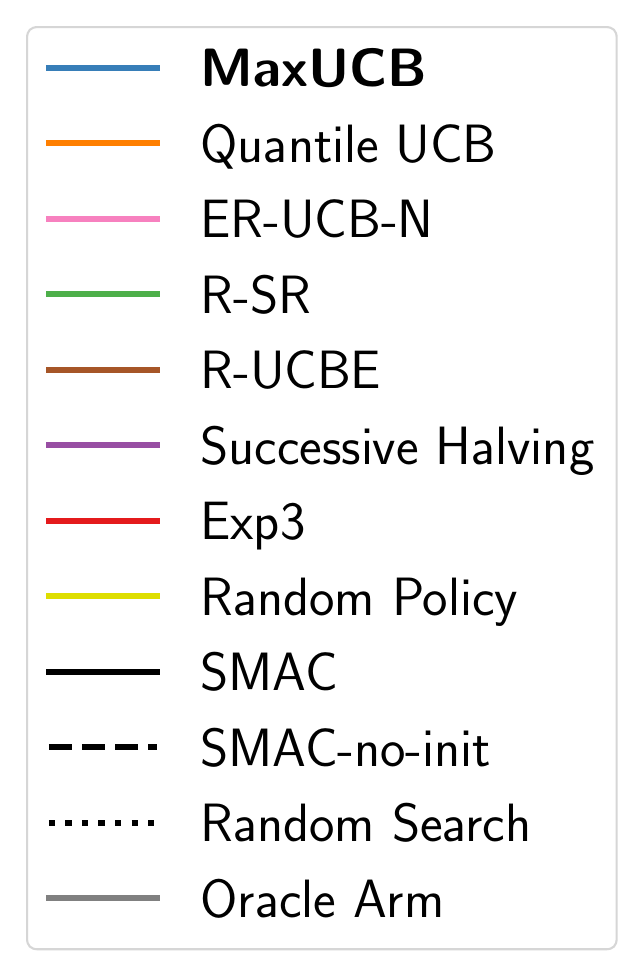}
    \end{tabular}
\caption{Average ranking of algorithms on different benchmarks, lower is better. \SMAC{} and \randomsearch{} perform \combinedsearch{} across the joint space.}
\label{app:fig:average_ranking_part_2}
\end{figure*}

\clearpage
\section{More Details on the Empirical Behaviour of MaxUCB}
\label{app:algorithm_behaviour}
Here, we provide further analysis of \OURALGO{}. Concretely, we study how often our algorithm pulls the optimal arm and compare it to the theoretical results. Furthermore, we evaluate an extension of \OURALGO{} to handle non-stationary rewards and finally study \OURALGO{} performance on common synthetic benchmarks used in the extreme bandit literature.

\subsection{The number of times each arm is pulled}
\label{app:number_pulls_arms}
Proposition~\ref{proposition:Regret_ExUCB} shows that the number of times the optimal arm is pulled can be viewed as a good metric for measuring the performance of algorithms. Figure \ref{app:fig:pulls_arms_TabRepo_RS}, \ref{app:fig:pulls_arms_TabRepoRaw},\ref{app:fig:pulls_arms_yahpo_gym},\ref{app:fig:pulls_arms_hebo} shows the average number of pulling arms on different benchmarks. They indicate that, on average, \OURALGO, \RisingBandits{}, and  \MaxMedian{} algorithms often choose the optimal arm. However, for \MaxMedian{}, the number of pulls of the optimal arm is either very close to $0$ or to $T$, leading to a non-robust performance, which has already been observed in \citet{baudry2022efficient} experiments. \UCB{} and \ERUCBS{} perform almost similarly.

\begin{figure}[htbp]
\centering
\includegraphics[clip, trim=0cm 0cm 11cm 0cm, height=6cm]{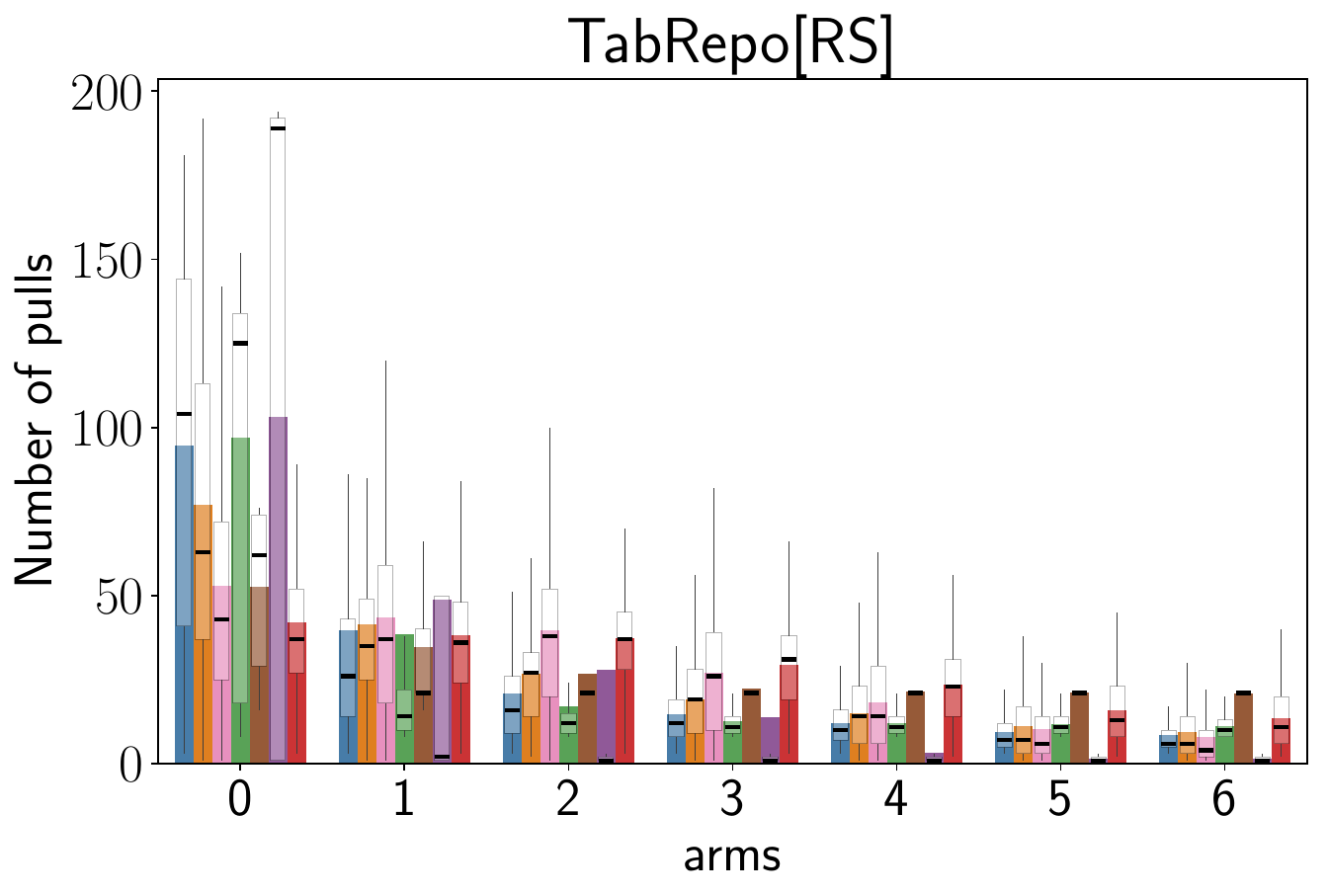}
\includegraphics[clip, trim=0cm 1cm 0cm 0.0cm, height=5.5cm]{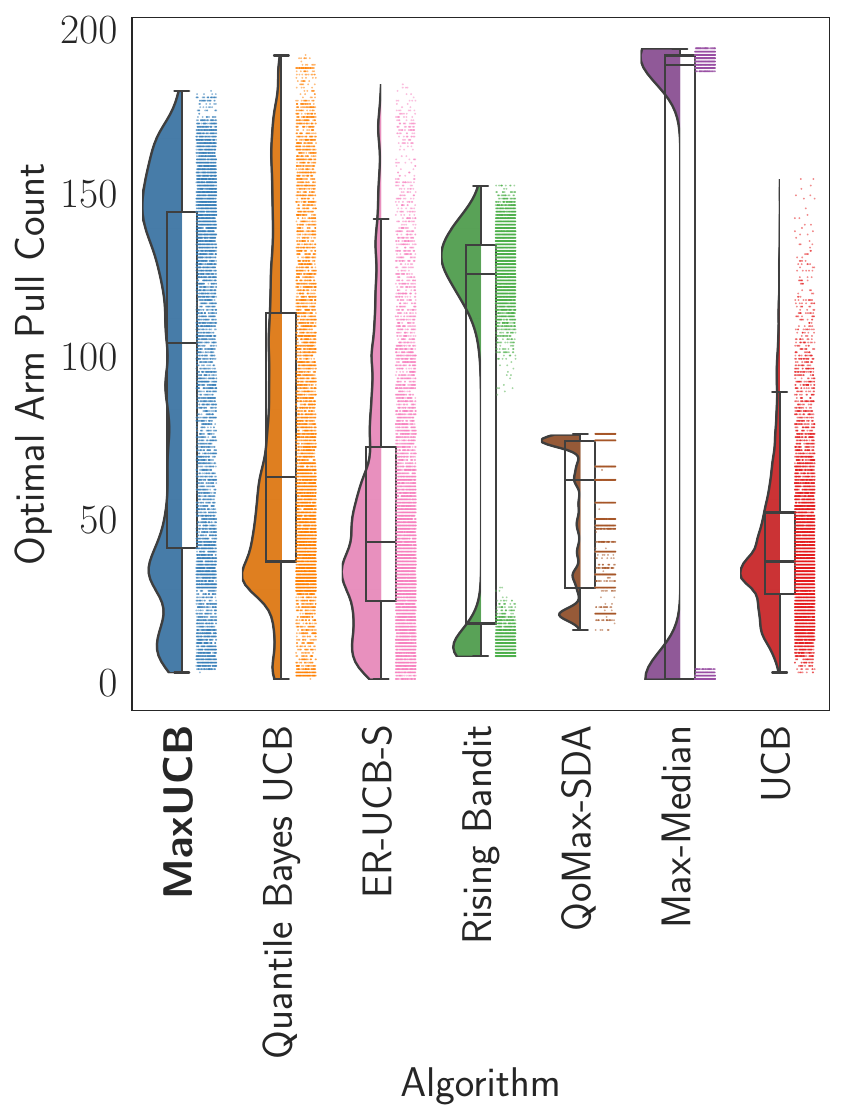}
\caption{ (Right)  The number of all arm pulls, with each bar graph showing the average and the error bars indicating additional statistical information.  (Left) The number of best arm pulls for different bandit algorithms.}
\label{app:fig:pulls_arms_TabRepo_RS}
\end{figure}

\begin{figure}[htbp]
\centering
\includegraphics[clip, trim=0cm 0cm 11cm 0cm, height=6cm]{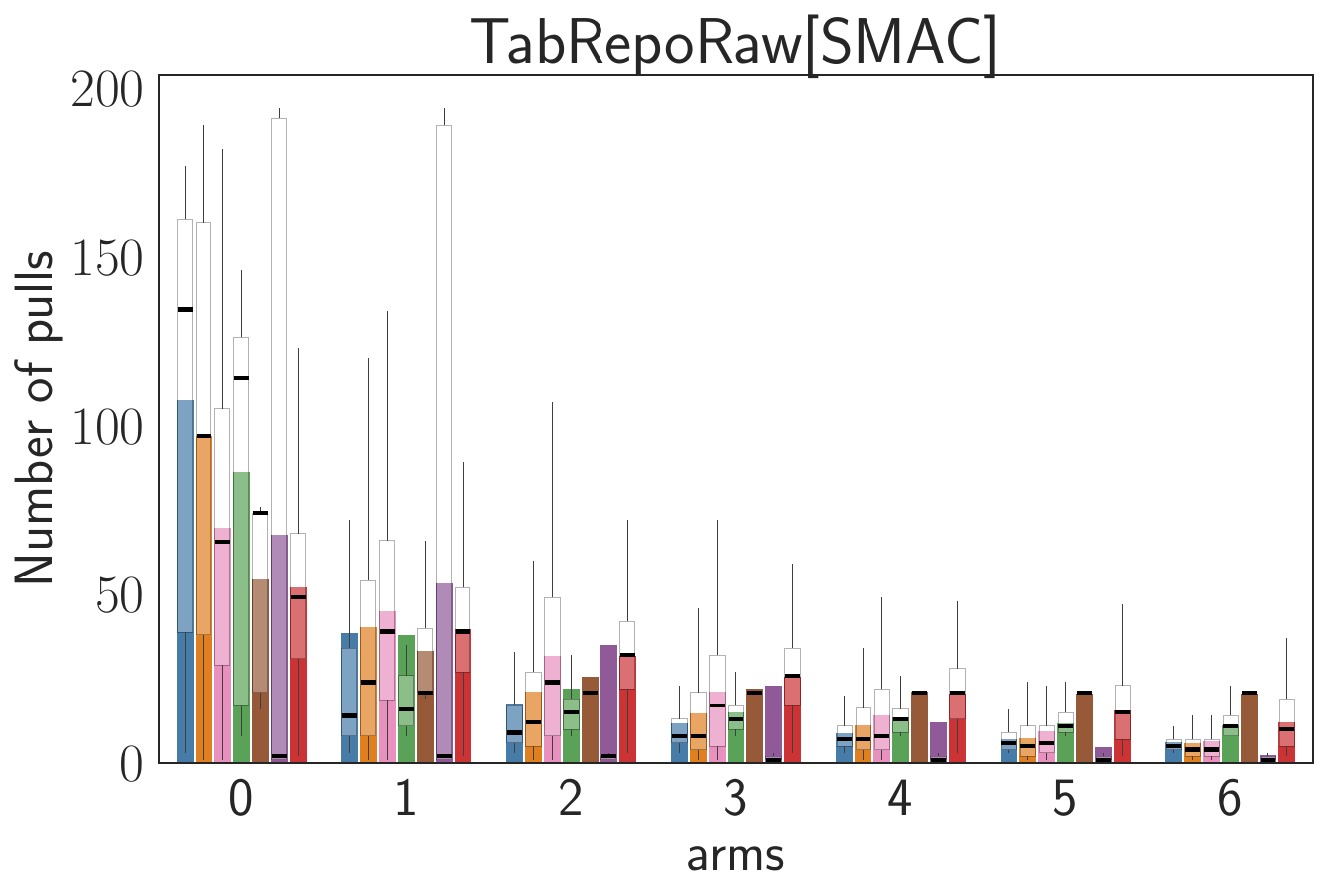}
\includegraphics[clip, trim=0cm 1cm 0cm 0.0cm, height=5.5cm]{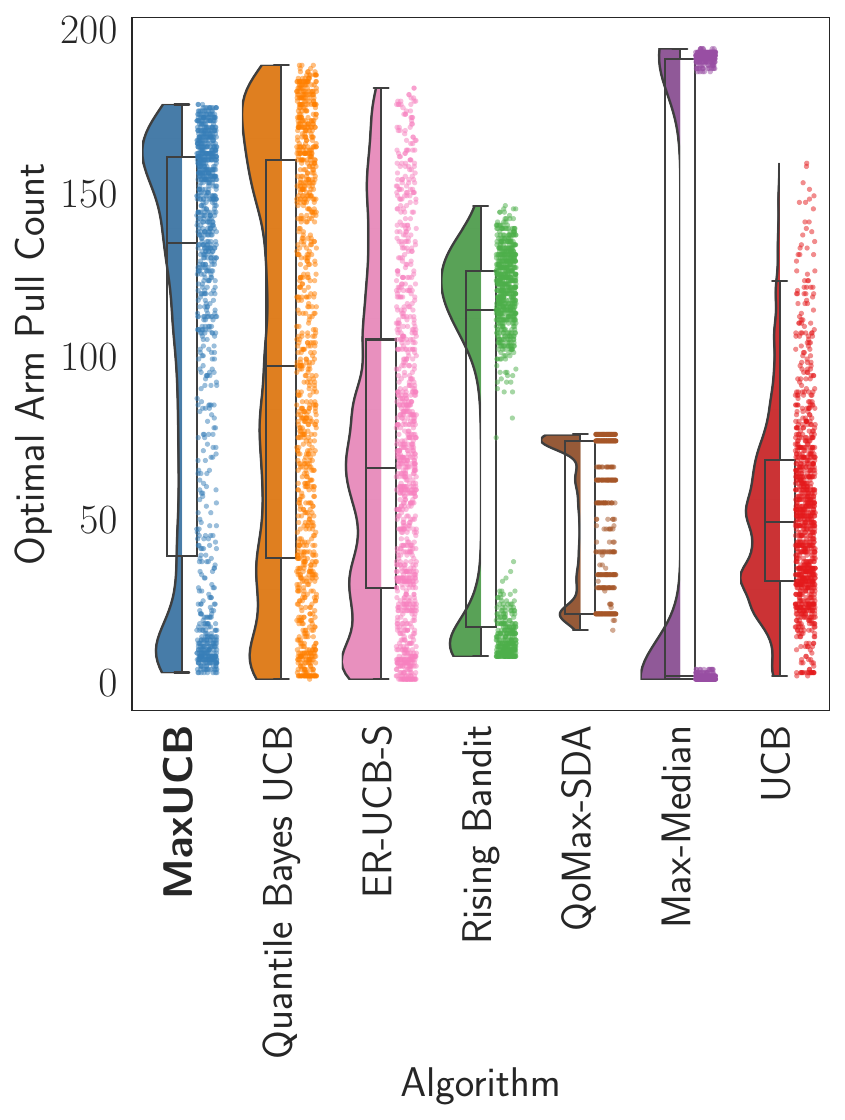}
\caption{ (Right)  The number of all arm pulls, with each bar graph showing the average and the error bars indicating additional statistical information.  (Left) The number of best arm pulls for different bandit algorithms.}
\label{app:fig:pulls_arms_TabRepoRaw}
\end{figure}

\begin{figure}[htbp]
\centering
\includegraphics[clip, trim=0cm 0cm 11cm 0cm, height=6cm]{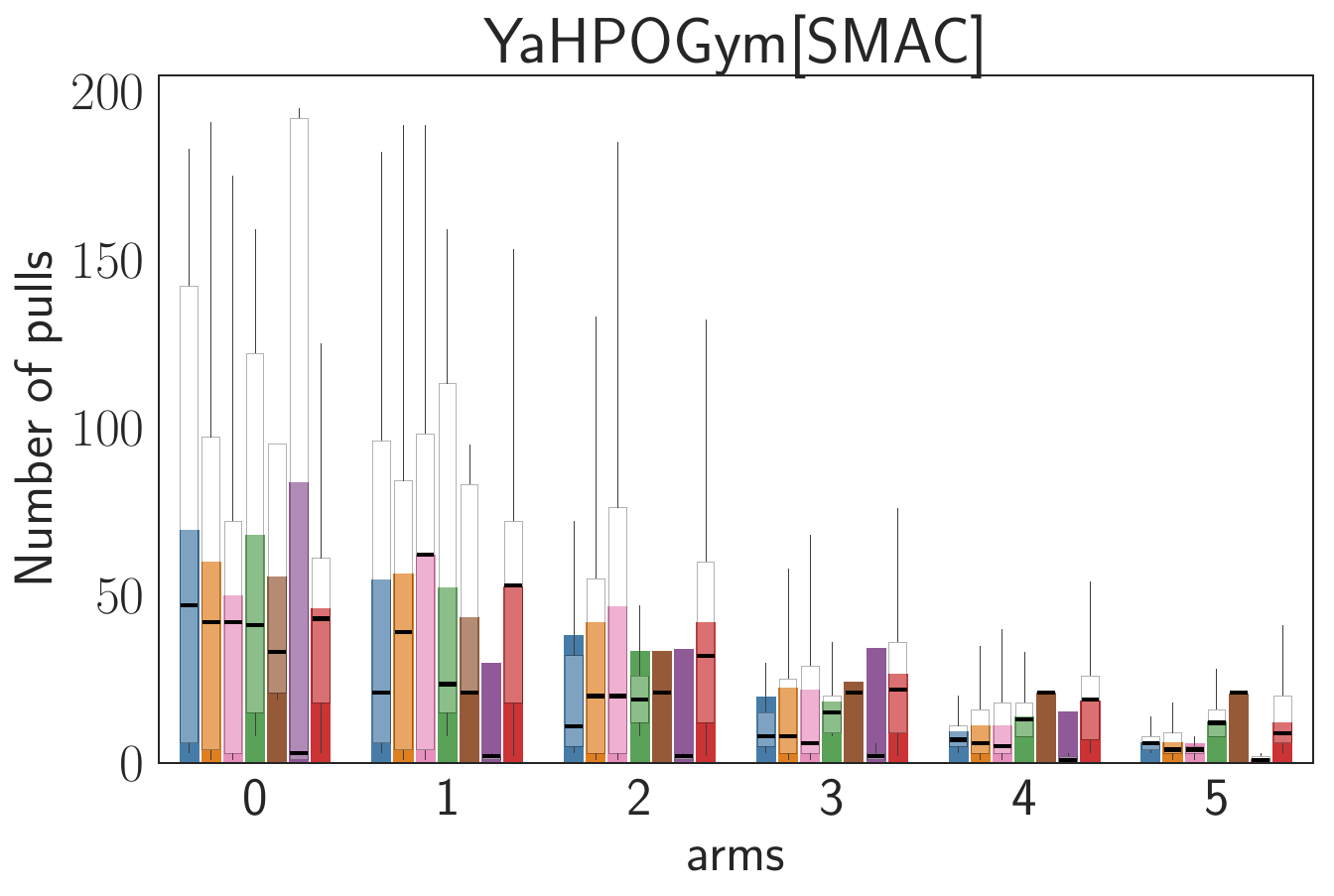}
\includegraphics[clip, trim=0cm 1cm 0cm 0.0cm, height=5.5cm]{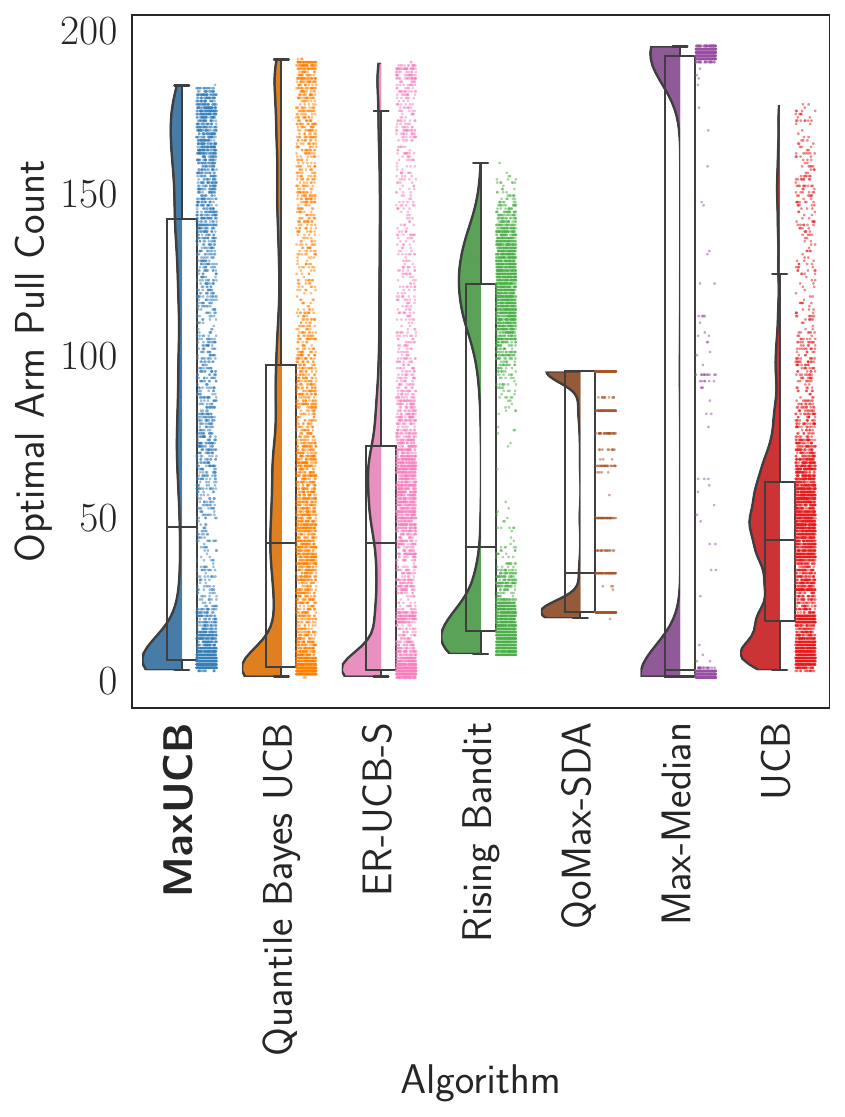}
\caption{ (Right)  The number of all arm pulls, with each bar graph showing the average and the error bars indicating additional statistical information.  (Left) The number of best arm pulls for different bandit algorithms.}
\label{app:fig:pulls_arms_yahpo_gym}
\end{figure}

\begin{figure}[htbp]
\centering
\includegraphics[clip, trim=0cm 0cm 11cm 0cm, height=6cm]{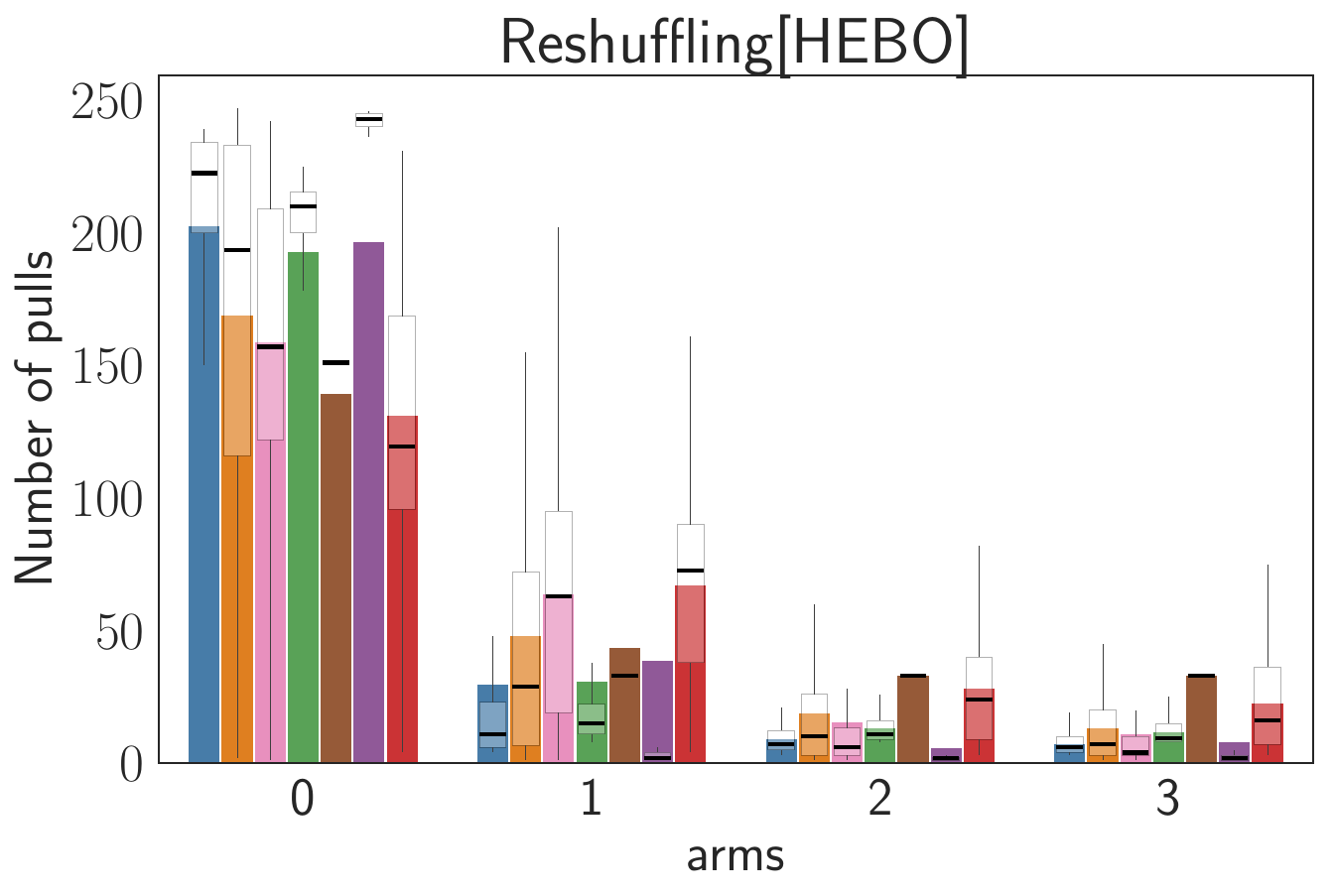}
\includegraphics[clip, trim=0cm 1cm 0cm 0.0cm, height=5.5cm]{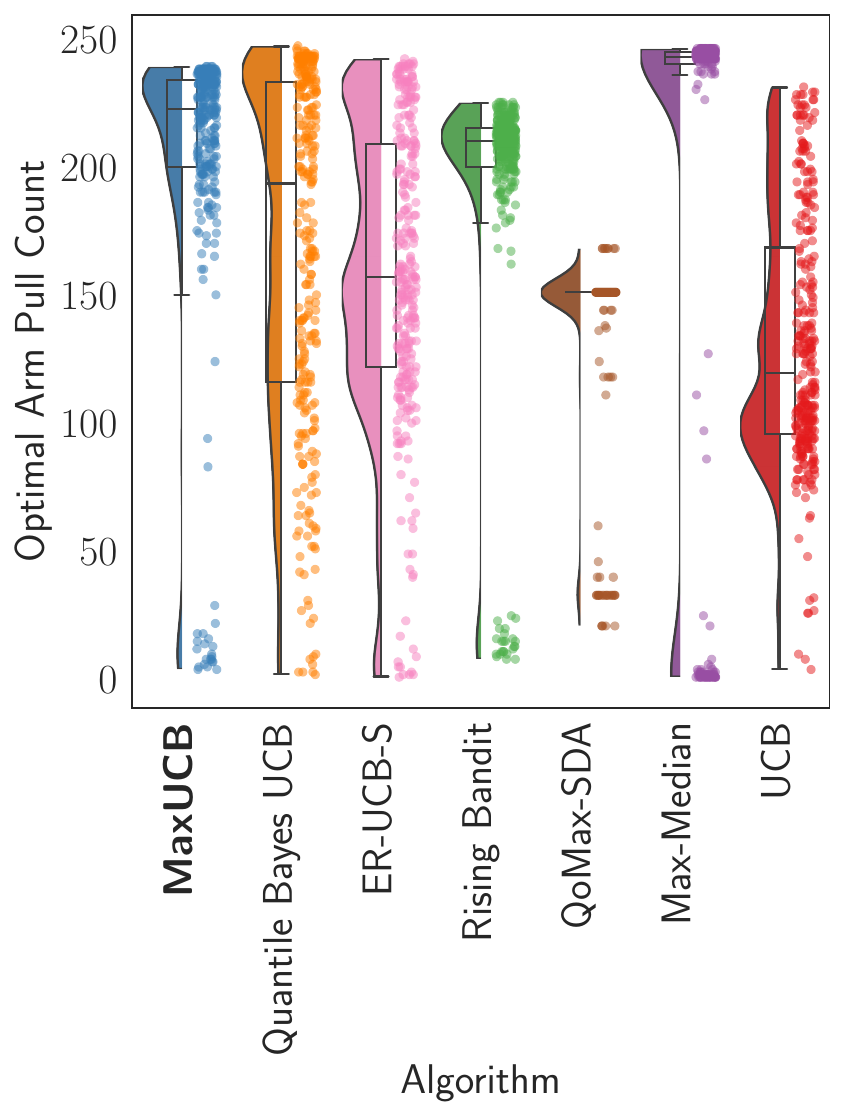}
\caption{ (Right)  The number of all arm pulls, with each bar graph showing the average and the error bars indicating additional statistical information.  (Left) The number of best arm pulls for different bandit algorithms.}
\label{app:fig:pulls_arms_hebo}
\end{figure}

\clearpage

\subsection{From theory to practice}
\label{app:sec:thoery_vs_reality}
To validate our theorem against practical outcomes, we applied our algorithm to all benchmarks and plotted the number of pulls for each arm, denoted as "Real Experiment." Additionally, we computed the upper bound on the number of pulls by using the empirical values of $L_1$ and $U_i$ and $\Delta_i$. Notably, we report the first term of \Eqref{eq:num_pulls_regret} since the second term is nearly constant across all arms according to calculation. The results demonstrate that although the empirical pull counts are much less than the theoretical bounds of \Eqref{eq:num_pulls_regret}, both follow a similar decreasing pattern as the rank of suboptimality increases.

\begin{figure}[ht]
\centering
\hspace*{-2.25cm}\includegraphics[height=4.5cm]{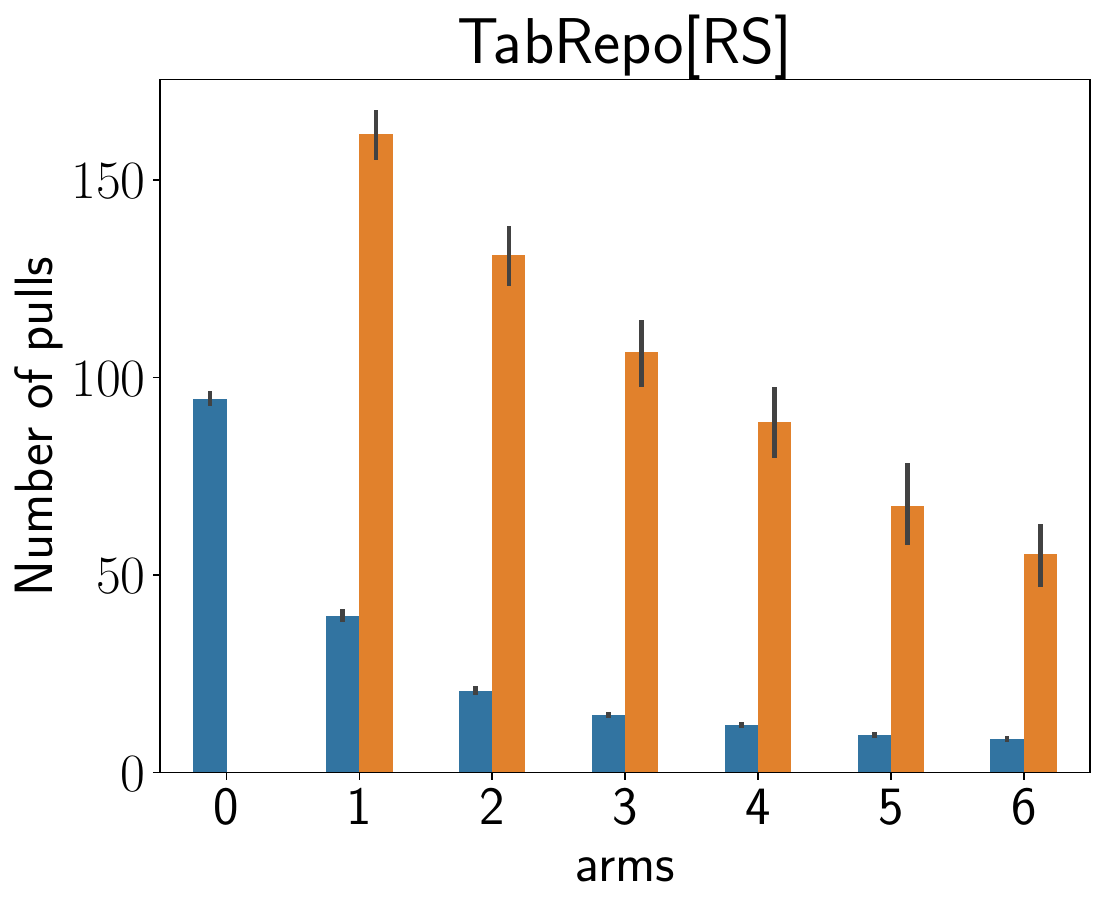}
\includegraphics[height=4.5cm]{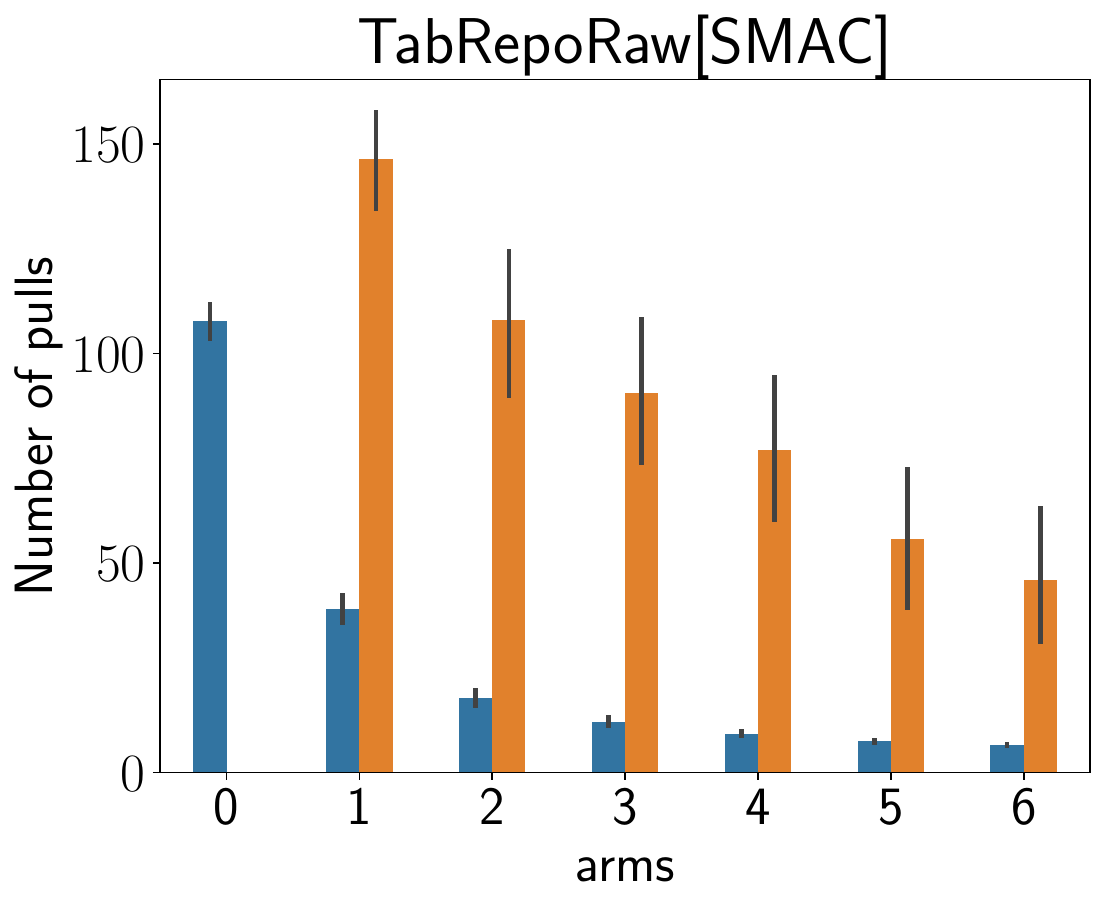}
\includegraphics[height=4.5cm]{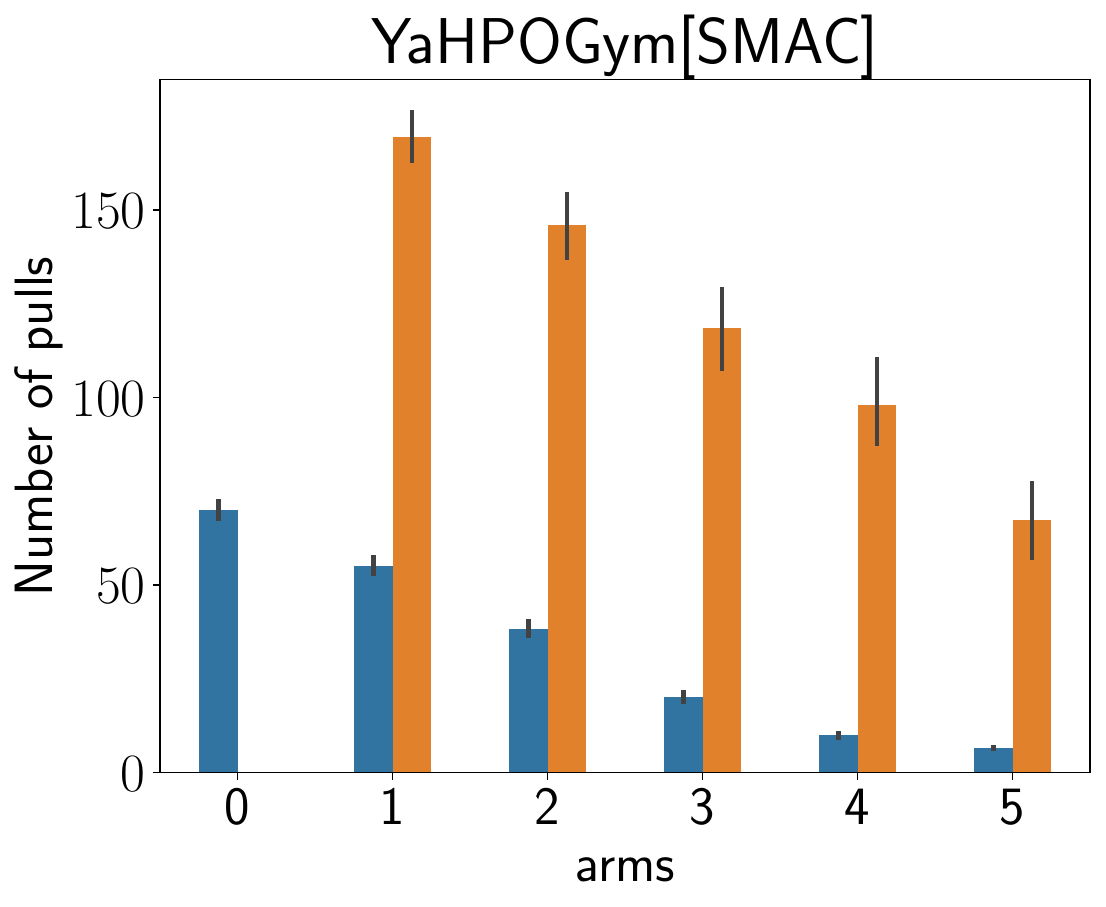}
\includegraphics[height=4.5cm]{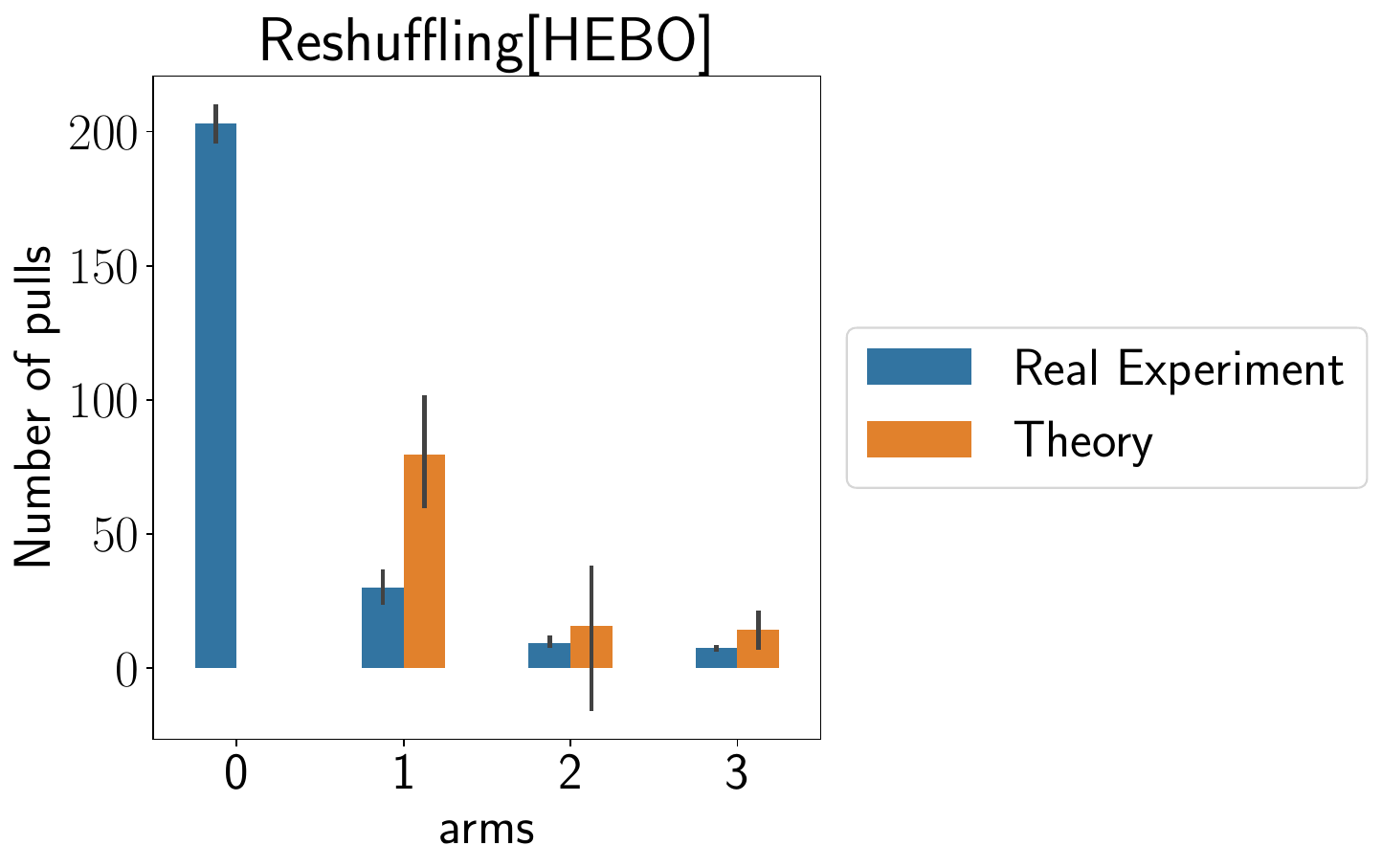}
\label{app:fig:thoery_vs_reality}
\caption{The number of pulls for each arm in our algorithm, labeled as "Real Experiment" and the theoretical values of this number}
\end{figure}

\subsection{Addressing Non-stationary Rewards}
\label{app:sec:burning_extUCB}
To handle non-stationary rewards, we pull each arm $C$ times without observing the rewards before running \OURALGO{}. This "burn-in" allows the Markov Chain to reach equilibrium, especially from a poor starting point. Algorithm \ref{alg:pseudocode_MaxUCB_Burnin} shows the adapted version of our algorithm.
Therefore, empirically, this allows \OURALGO{} to operate after a fixed exploration phase of all arms until the reward distribution is stationary.

We run Algorithm \ref{alg:pseudocode_MaxUCB_Burnin} with different parameters of $C \in \{5, 6, 7, 8\}$ using up to $48$ iterations corresponding to almost $25\%$ of the total budget. Figure \ref{app:fig:burning_heatmap} shows normalized loss per task where columns are sorted by the maximum change of the mean 
of the reward distributions of the optimal arm computed every $10$ HPO iterations (as an indicator of non-stationarity; shown at the top panel in Figure~\ref{app:fig:burning_heatmap}. 
Figure \ref{app:fig:burn_in_results} shows the average ranking and normalized loss over time for different values of the hyperparameter $C$.  

The initial burn-in improves final performance for the few tasks where we observe a high shift (right part of Figure~\ref{app:fig:burning_heatmap}) while the initial performance is worse across all tasks (as shown in Figure~\ref{app:fig:burn_in_results}). However, the results are not sensitive to the exact value of $C$. Overall, this naive solution can improve performance for some tasks at the cost of not using potentially valuable information obtained from initial exploration. Thus, optimally addressing non-stationary rewards could be a promising direction for future work.

\begin{algorithm}[htbp]
\scriptsize
\caption{\OURALGO-Burn-in}
\label{alg:pseudocode_MaxUCB_Burnin}
\begin{algorithmic}[1]
\Require  \maxucb{$\alpha$(exploration parameter), $C$ (burn-in rounds)}, $T$(time horizon), $K$(arms)
\Comment{Burn-in phase}
\For {$j \leq C$, for each arm $i \leq K$}
    \State Pull arm $i$
\EndFor
 \Comment{Initial phase}
\For {each arm $i \leq K$}
 \State Pull arm $i$ 
 \State  set $n_i \gets 1$, observe reward $r_{i,1}$ 
\EndFor
 \Comment{Main phase}
\For {$t=(CK + K + 1)$ to $T$}
    \For {each arm $i \leq K$}
    \State \parbox[t]{0.8\linewidth}{Update policy \maxucb{$U_i =\max{(r_{i,1},...,r_{i,n_i})} +  (\frac{ \alpha \log(t)}{n_i})^2 $ }}
    \EndFor
    \State Select arm $I_t = \argmax\limits_{i\leq K} U_i$ 
    \State  $n_{I_t} \gets n_{I_t} +  1$ 
    \State  Observe reward $r_{I_t, n_{I_t}}$
\EndFor
\end{algorithmic}
\end{algorithm}

\begin{figure}[htbp]
\begin{subfigure}{1.0\textwidth}
        \includegraphics[width=\textwidth]{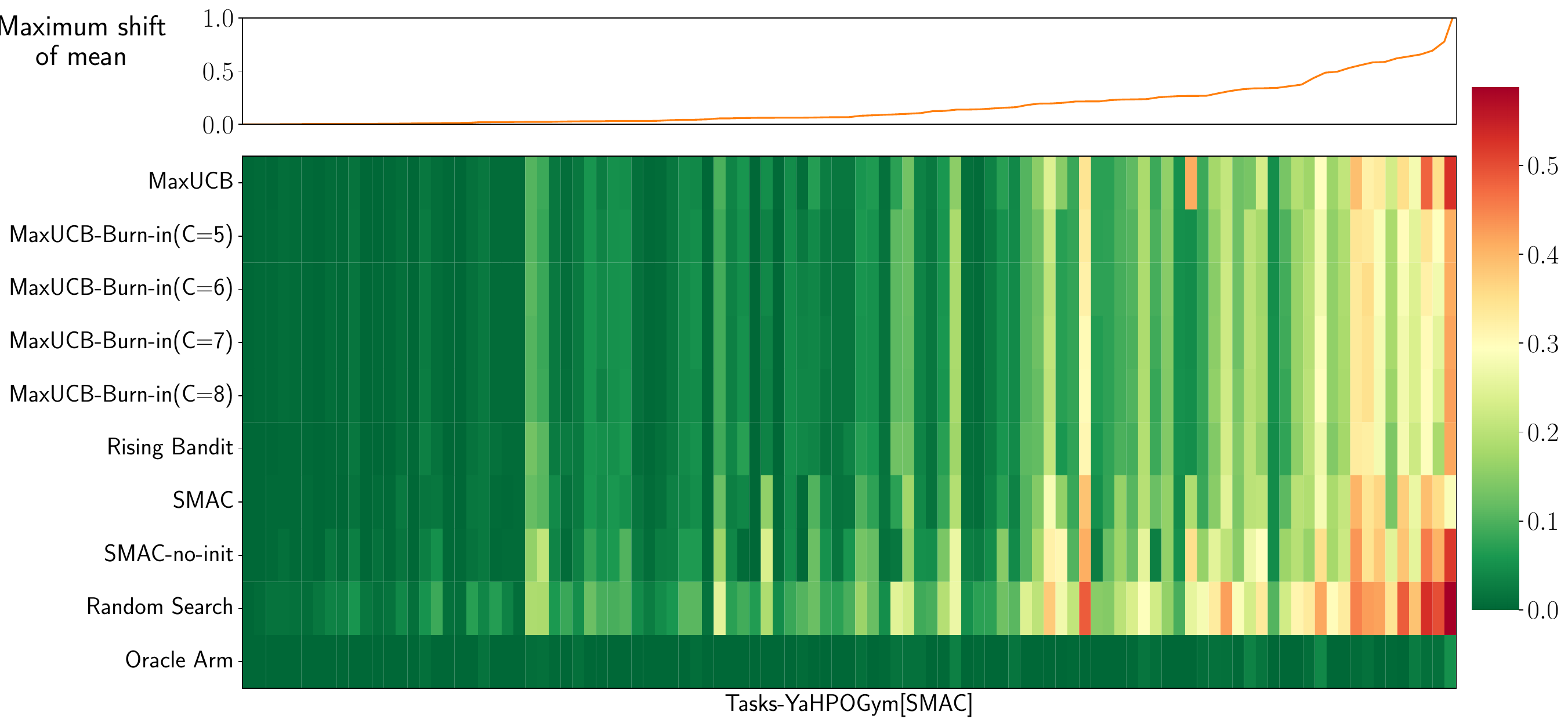}
\end{subfigure}
\caption{Heat map shows normalized loss per task, sorted by the distribution shift. }
\label{app:fig:burning_heatmap}
\end{figure}

\begin{figure*}[htbp]
\begin{tabular}{c c c}
\begin{subfigure}{0.35\textwidth}
        \includegraphics[clip, trim=0cm 0cm 0cm 0cm, height=4.5cm]{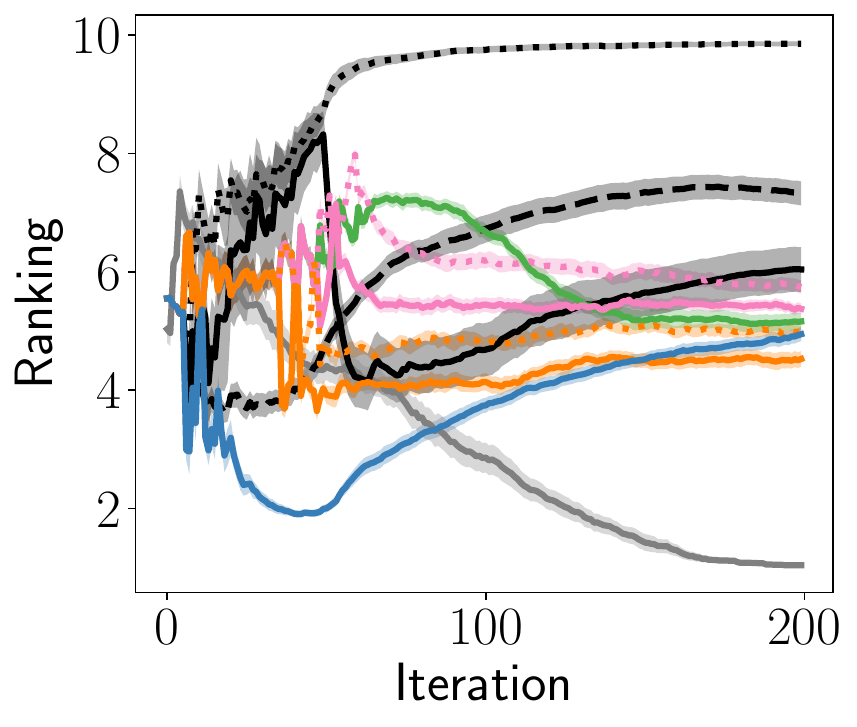}
\end{subfigure}
&     \begin{subfigure}{0.35\textwidth}
        \includegraphics[clip, trim=0cm 0cm  0cm 0cm, height=4.5cm]{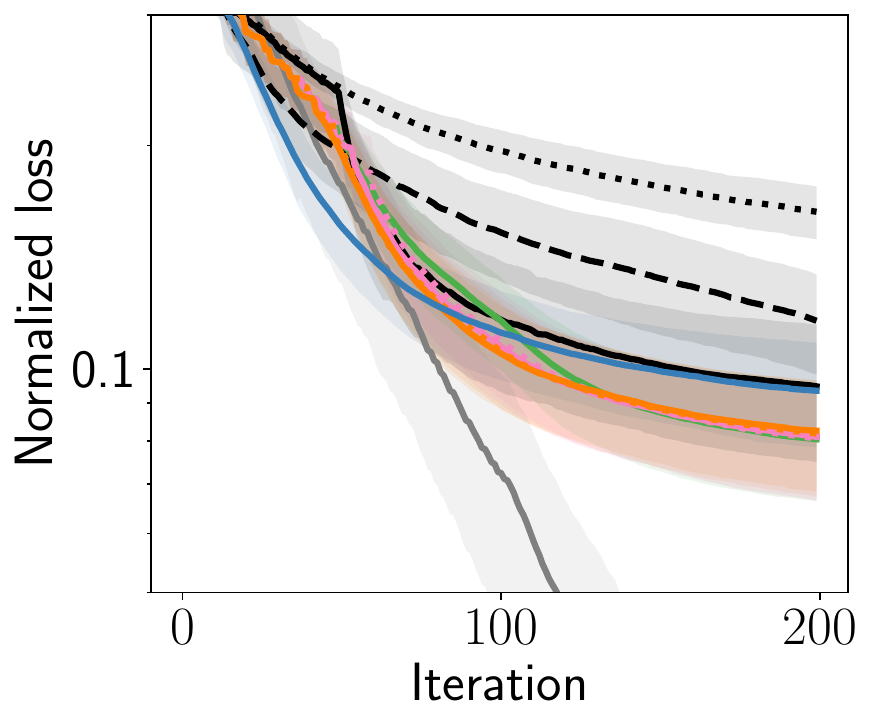}
\end{subfigure}
&     \begin{subfigure}{0.4\textwidth}
        \includegraphics[clip, trim=0cm -8cm 0.0cm 0cm, height=4cm]{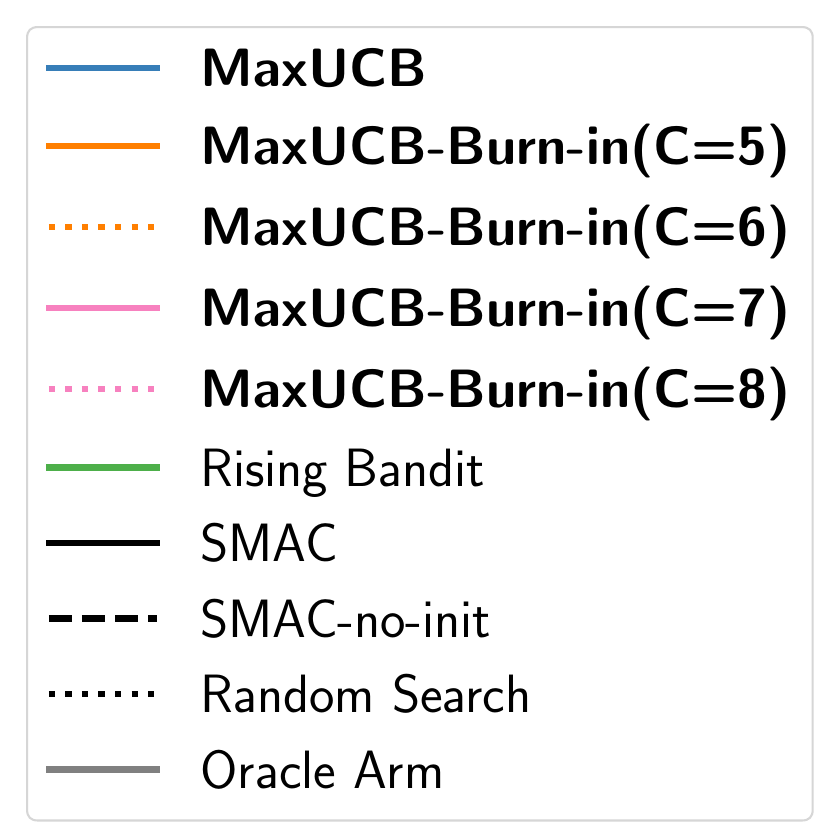}
\end{subfigure}%
\end{tabular}
\caption{Average rank and normalized loss of algorithms on \yahpogym[] benchmark, lower is better. }
\label{app:fig:burn_in_results}
\end{figure*}

\pagebreak[4]
\subsection{Toy examples from the extreme bandit's literature}
\label{app:sec:extreme_bandits_experiments}
In this section, we provide additional results on commonly used benchmark functions used in the extreme bandit literature. Concretely, we use a similar setup to \citep{baudry2022efficient} and report the following four tasks: 
\begin{enumerate}
    \item $K = 5$ Pareto distributions with tail parameters parameters $\lambda_k = [2.1, 2.3, 1.3, 1.1, 1.9]$. Results are shown in Figure~\ref{app:fig:xp_1}.
    \item $K = 10$ Exponential arms with a survival function $G_k(x) = e^{-\lambda_k x}$ with parameters $\lambda_k = [2.1, 2.4, 1.9, 1.3, 1.1, 2.9, 1.5, 2.2, 2.6, 1.4]$. Results are shown in Figure~\ref{app:fig:xp_3}.
     \item $K = 20$ Gaussian arms, with same mean $\mu_k = 1,\forall k$, and different variances {\scriptsize$ \sigma_k = [1.64, 2.29, 1.79, 2.67, 1.70, 1.36, 1.90, 2.19, 0.80, 0.12, 1.65, 1.19, 1.88, 0.89, 3.35, 1.5, 2.22, 3.03, 1.08, 0.48]$}. The dominant arm has a standard deviation of $3.35$. Results are shown in Figure~\ref{app:fig:xp_4}.
      \item (our toy example) $K = 5$ power distributions with domain parameter $[3, 4, 5, 5, 4]$ and shape parameter $[1.01, 1.01, 1.01, 1.1, 1]$. Results are shown in Figure~\ref{app:fig:xp_11}.
\end{enumerate}
For each task, we run $N = 1000$ independent repetitions for six time horizons $T \in \{ 50 100, 200, 500, 1000, 2000 \}$. We show the CDF of the rewards for each arm, the number of times the optimal arm was pulled, and the proxy empirical regret. Notably, the proxy empirical regret is introduced by \citet{baudry2022efficient} to overcome the issue of high variance in the maximum values of distributions.

\begin{figure}[hbp]
\centering
\includegraphics[height=4.5cm]{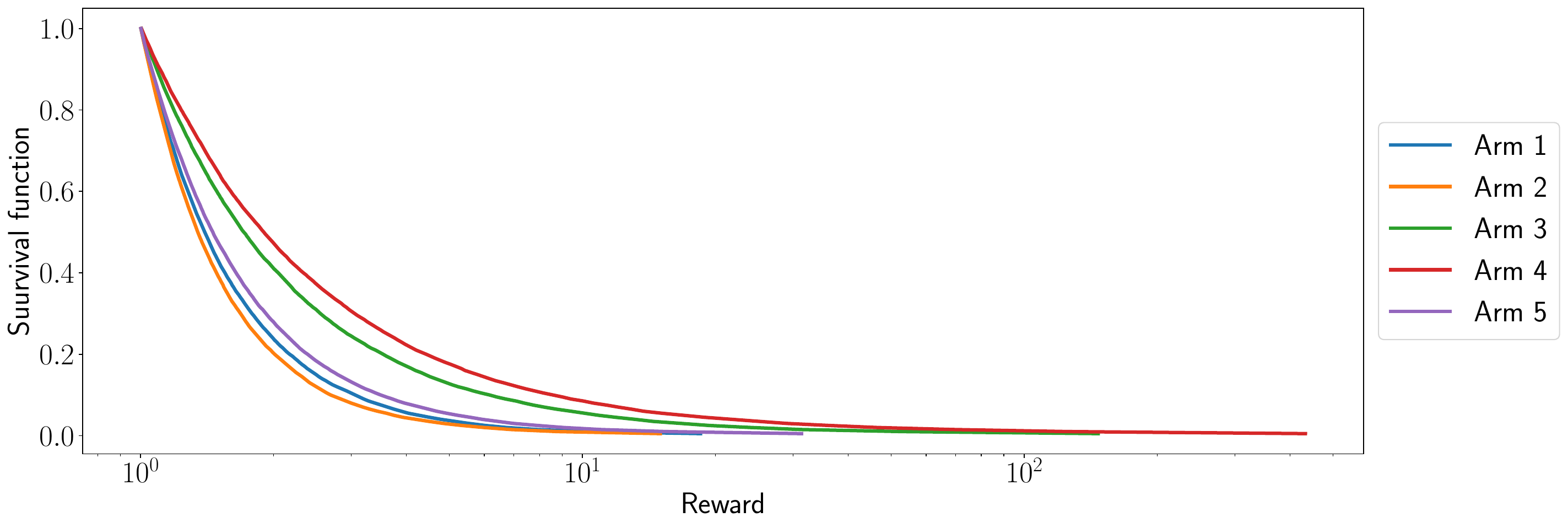}\\
\includegraphics[height=4cm]{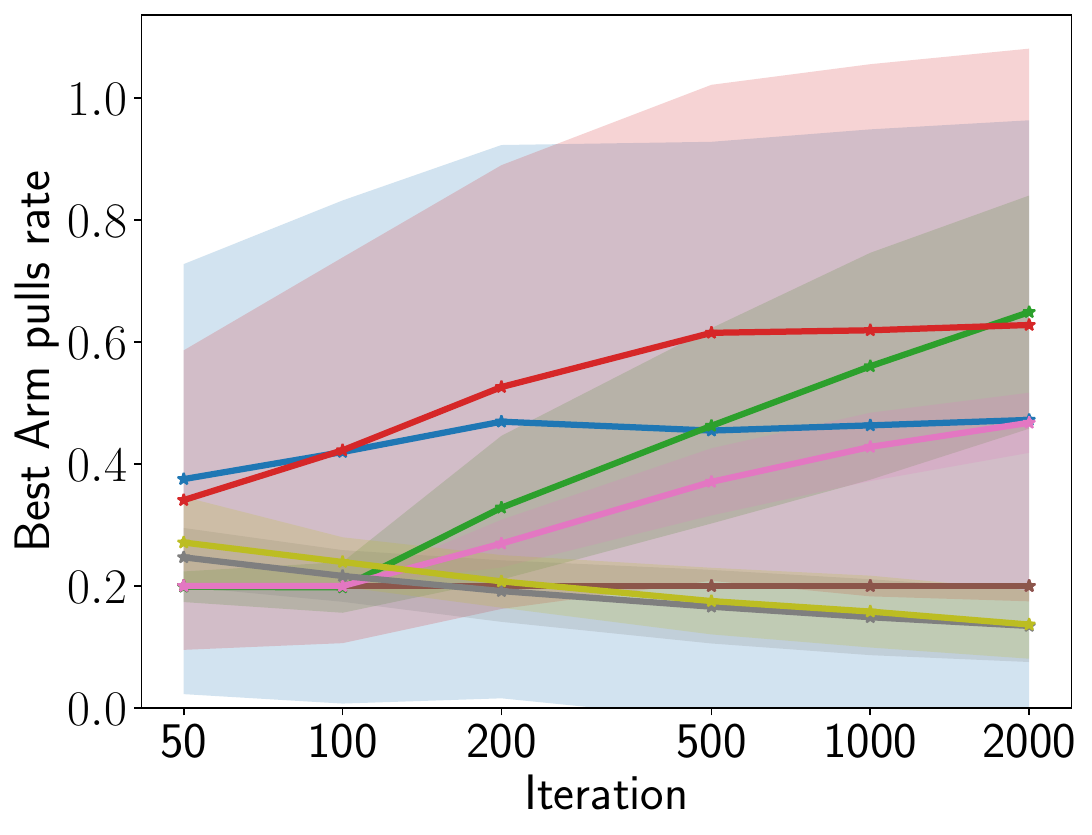}
\includegraphics[height=4cm]{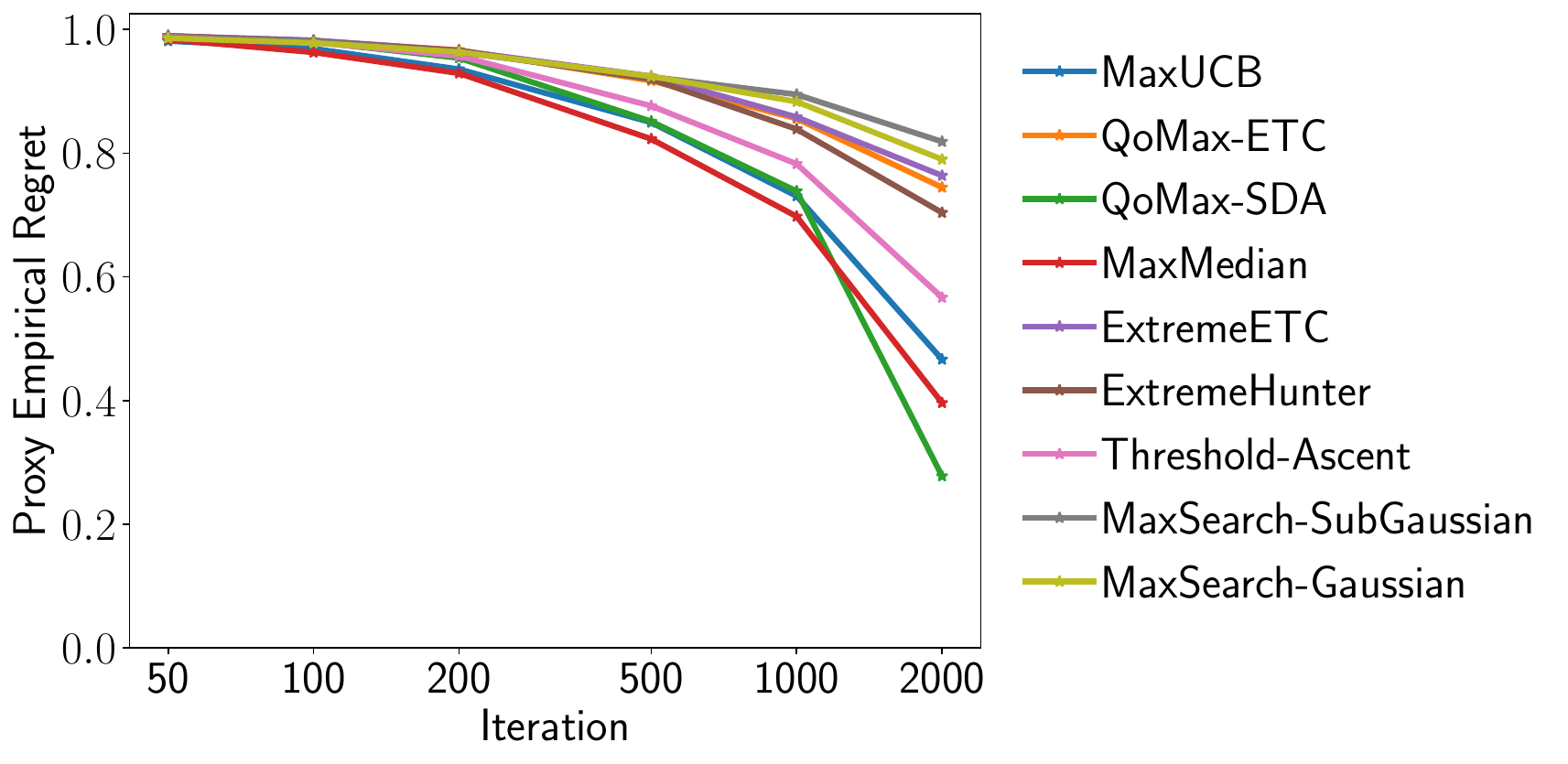}
\caption{Experiment 1: (Top) Survival function of distribution of each arm (left) Number of pulls of the optimal arm. (Right) Proxy Empirical Regret}
\label{app:fig:xp_1}
\end{figure}

\begin{figure}[tbp]
\centering
\includegraphics[height=4.5cm]{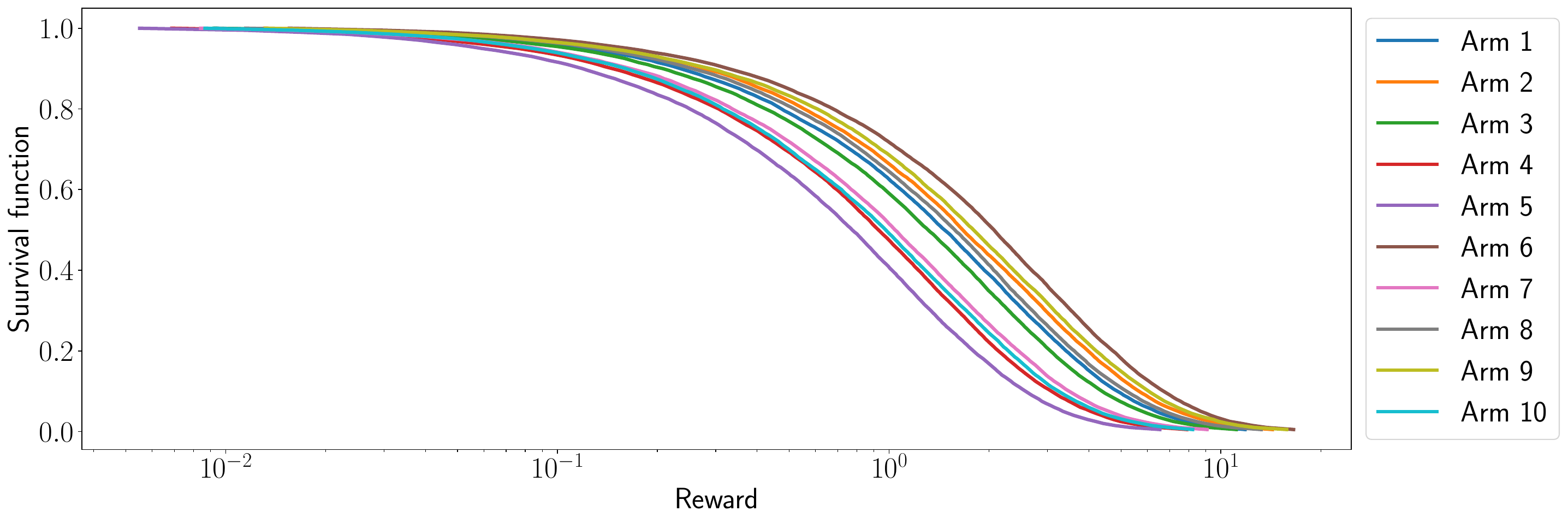}\\
\includegraphics[height=4cm]{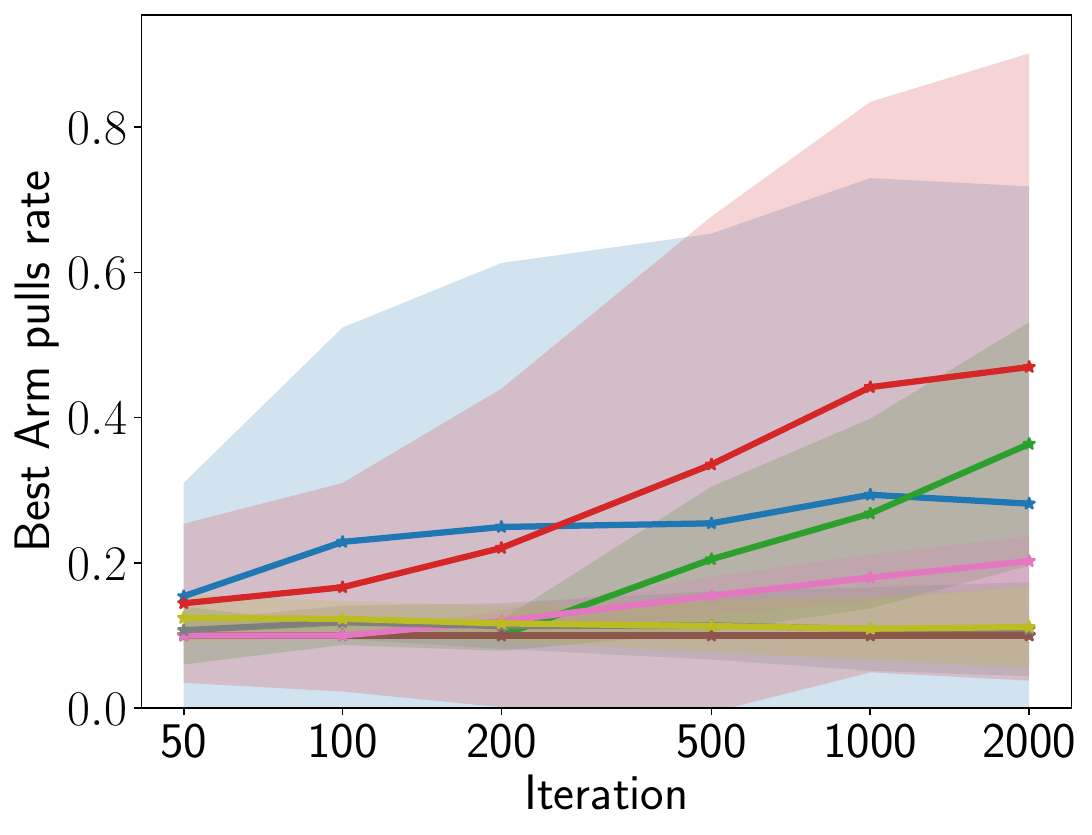}
\includegraphics[height=4cm]{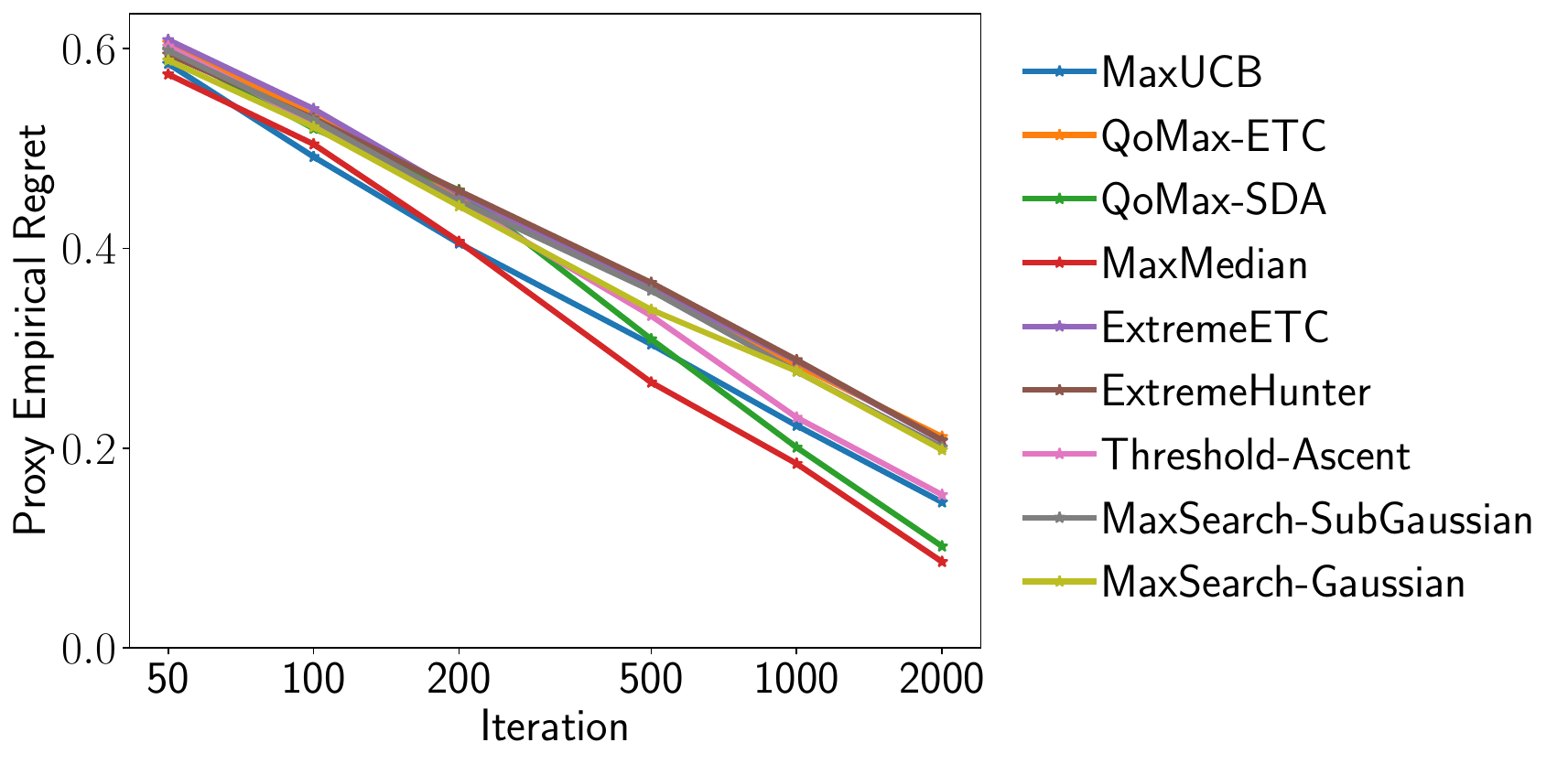}
\caption{Experiment 2: (Top) Survival function of distribution of each arm (left) Number of pulls of the optimal arm. (Right) Proxy Empirical Regret  }
\label{app:fig:xp_3}
\end{figure}

\begin{figure}[tbp]
\centering
\includegraphics[height=4.5cm]{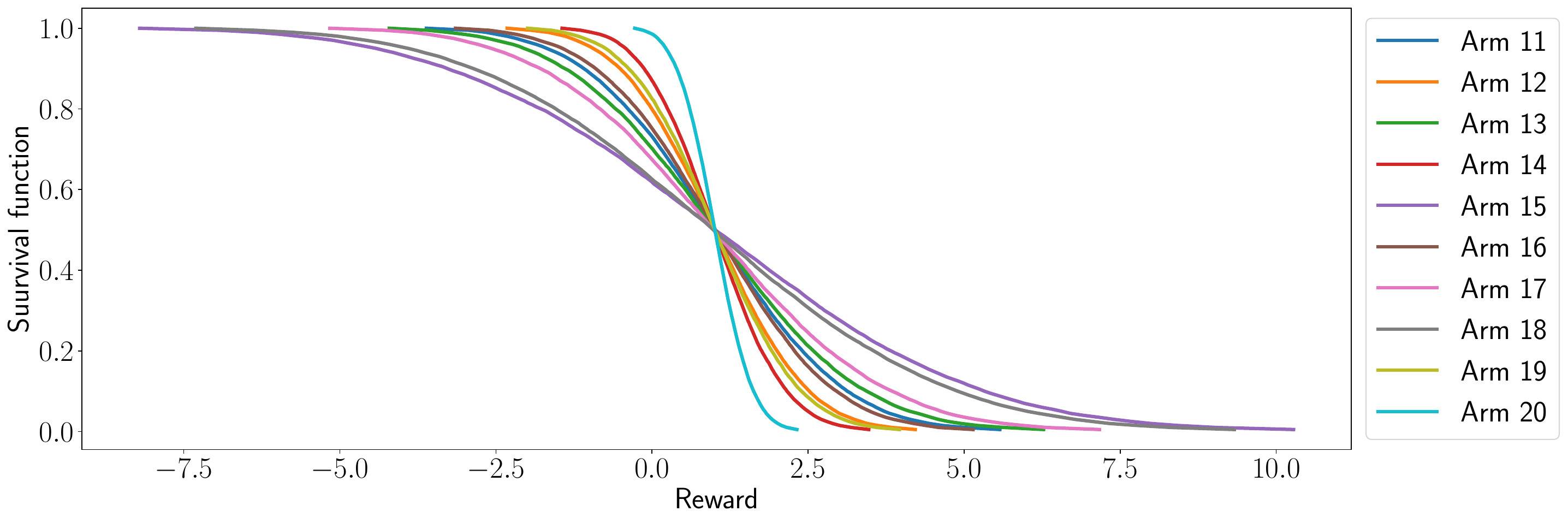}\\
\includegraphics[height=4cm]{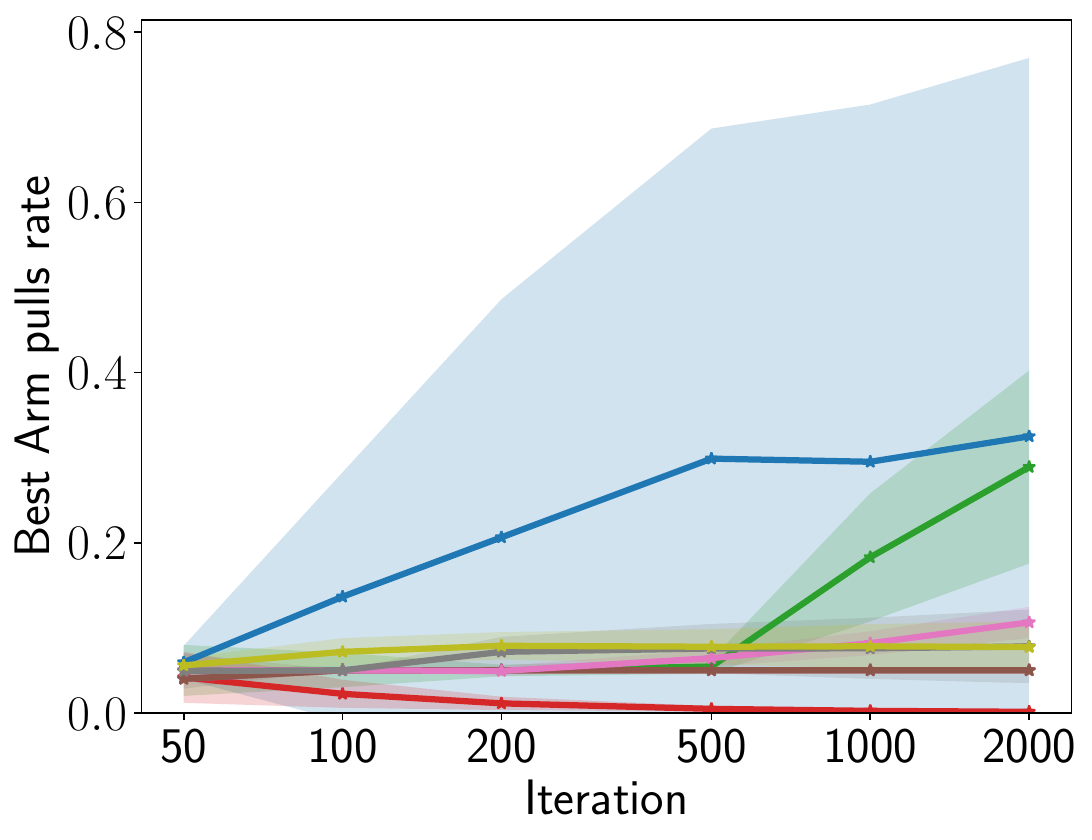}
\includegraphics[height=4cm]{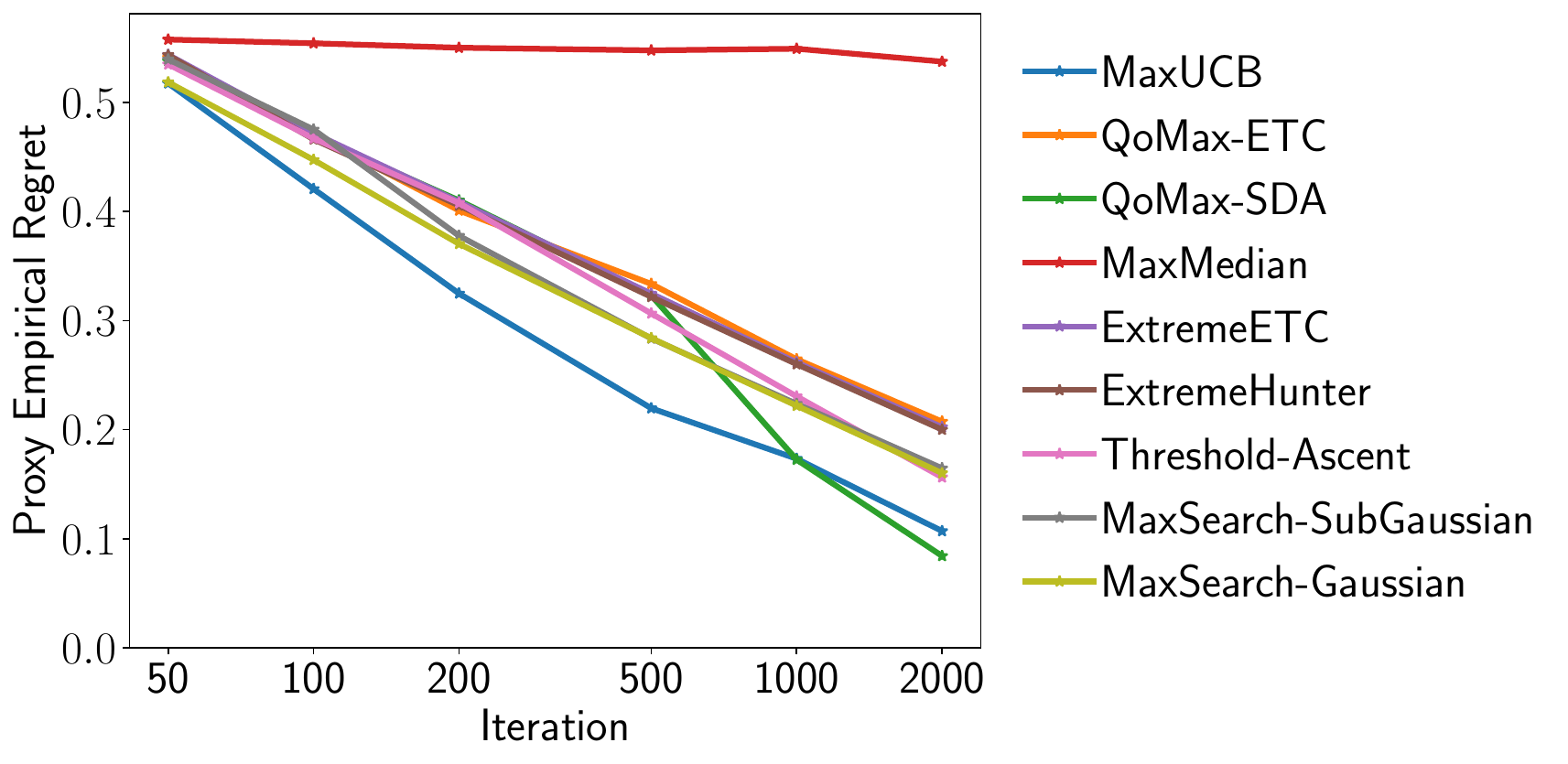}
\caption{Experiment 3: (Top) Survival function of distribution of some arms (left) Number of pulls of the optimal arm. (Right) Proxy Empirical Regret  }
\label{app:fig:xp_4}
\end{figure}

\begin{figure}[tbp]
\centering
\includegraphics[height=4.5cm]{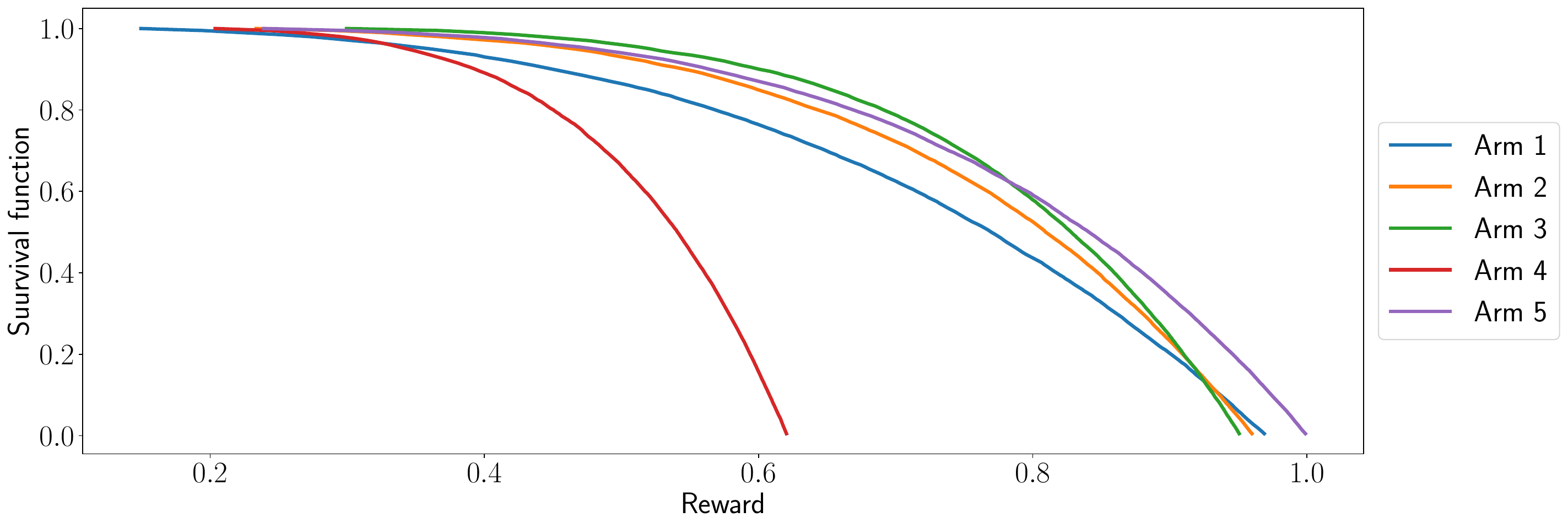}\\
\includegraphics[height=4.0cm]{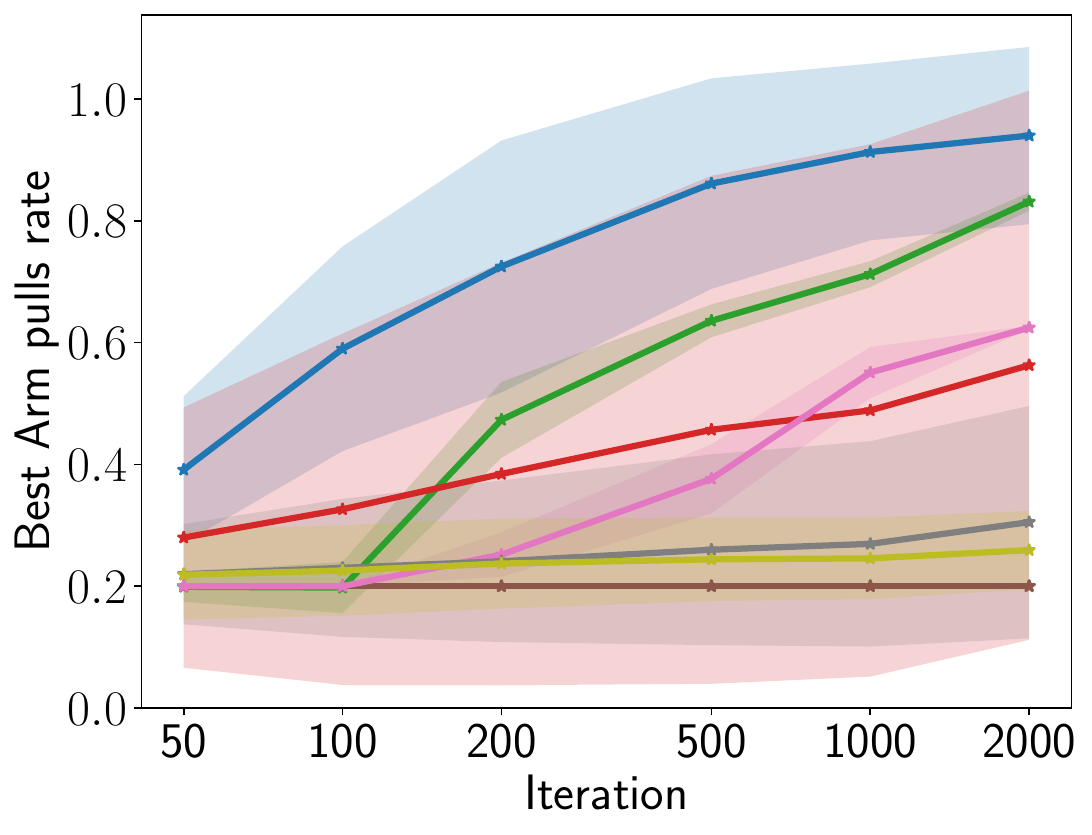}
\includegraphics[height=4.0cm]{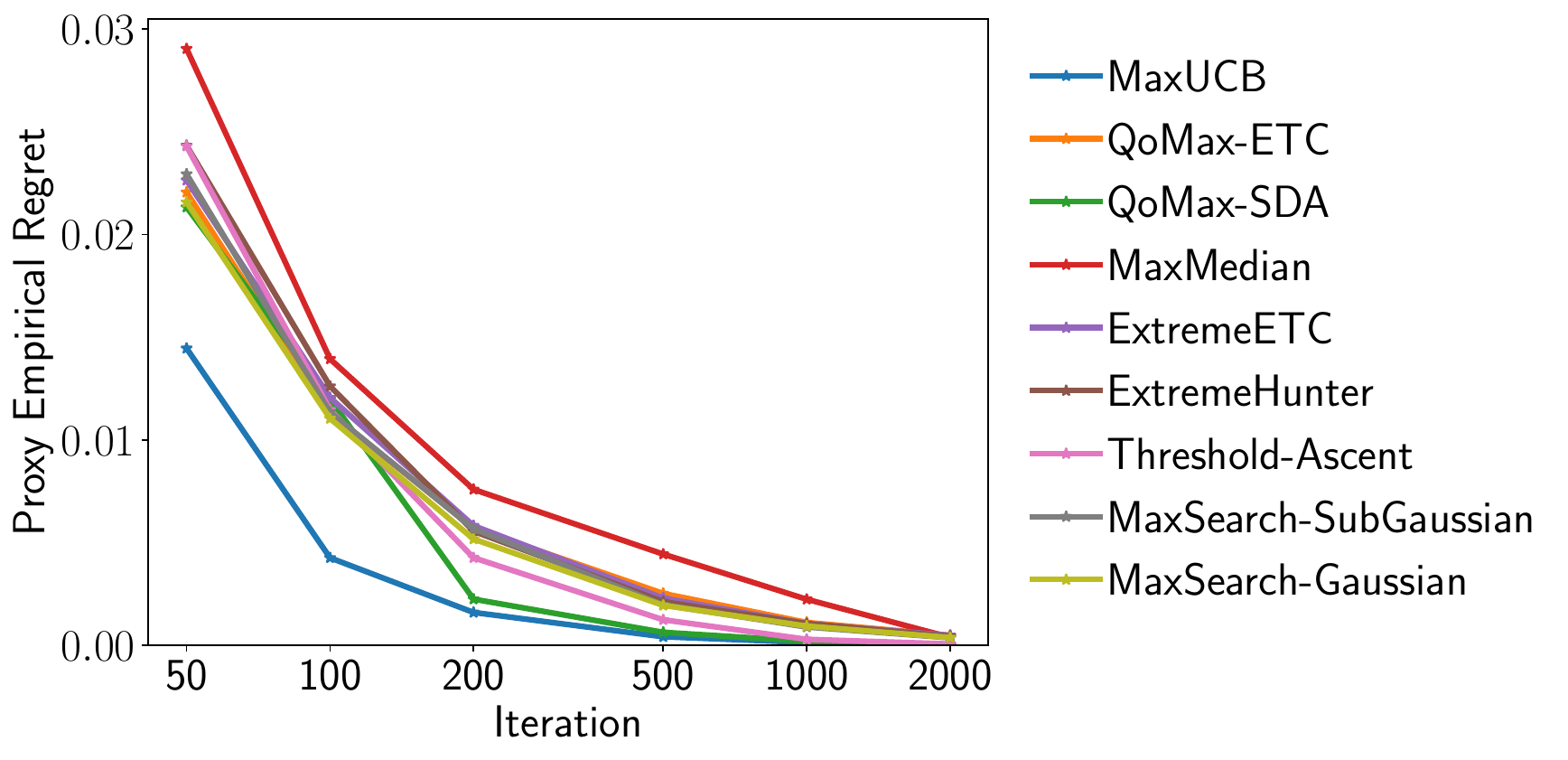}
\caption{Experiment 4: (Top) Survival function of distribution of each arm (left) Number of pulls of the optimal arm. (Right) Proxy Empirical Regret  }
\label{app:fig:xp_11}
\end{figure}
\vfill

\newpage
\subsection{Supernet selection in Few-Shot Neural Architecture Search}
\label{app:sec:supernet_experiments}
In one-shot NAS, a single supernet approximates all architectures. However, this estimation could be
inaccurate. To address this, few-shot NAS splits the supernet into smaller sub-supernets  \citep{hu2022generalizing, ly2024analyzing}. 
To further improve performance, \citep{hu2022generalizing} introduced supernet selection, which identifies the most promising sub-supernet and its optimal architecture using techniques such as \SuccessiveHalving{} . We aim to identify the best-performing architecture among several subspaces, each corresponding to a sub-supernet. This problem is analogous to the MKB problems, where each arm represents a sub-supernet.

\textbf{Dataset Preparation.} We use data provided in \citep{ly2024analyzing} for three benchmark datasets: \textit{CIFAR-10}, \textit{CIFAR-100}, and \textit{ImageNet16-120}. Each dataset's search space is split using $10$ different metrics. The splitting follows a binary tree structure with a depth of $3$, where operations are divided into two groups at each branch. This process results in $8$ sub-supernets per metric. Consequently, for each dataset, we have one full search space and $8$ sub-search spaces.

Each combination of dataset and splitting metric is treated as a separate task, yielding a total of $30$ tasks ($3$ datasets × $10$ metrics), each with $8$ arms. Following the setup of \citep{ly2024analyzing}, we randomly sample $600$ architectures from both the full search space and each sub-search space. This process is repeated $32$ times using different random seeds to ensure variability and robustness in the results.

\textbf{Analyzing the reward distribution.}
Figure~\ref{app:fig:SubSupernet_ecdf_reward_distributions} illustrates the empirical survival functions of the rewards and sub-optimality gaps for the benchmark. As shown, the distribution shape is similar to that of HPO tasks: both are bounded and left-skewed. However, the sub-optimality gap is considerably smaller than HPO tasks, suggesting that identifying the optimal arm is more challenging and may require additional iterations. In Figure~\ref{app:fig:SubSupernet_L_U_distributions}, we show values of $L$ and $U$ from \ourProposition{} for this benchmark.

\begin{figure}[hb!]
\centering
\begin{tabular}{c c c}
&Supernet
&\\
\includegraphics[width=0.3\linewidth]{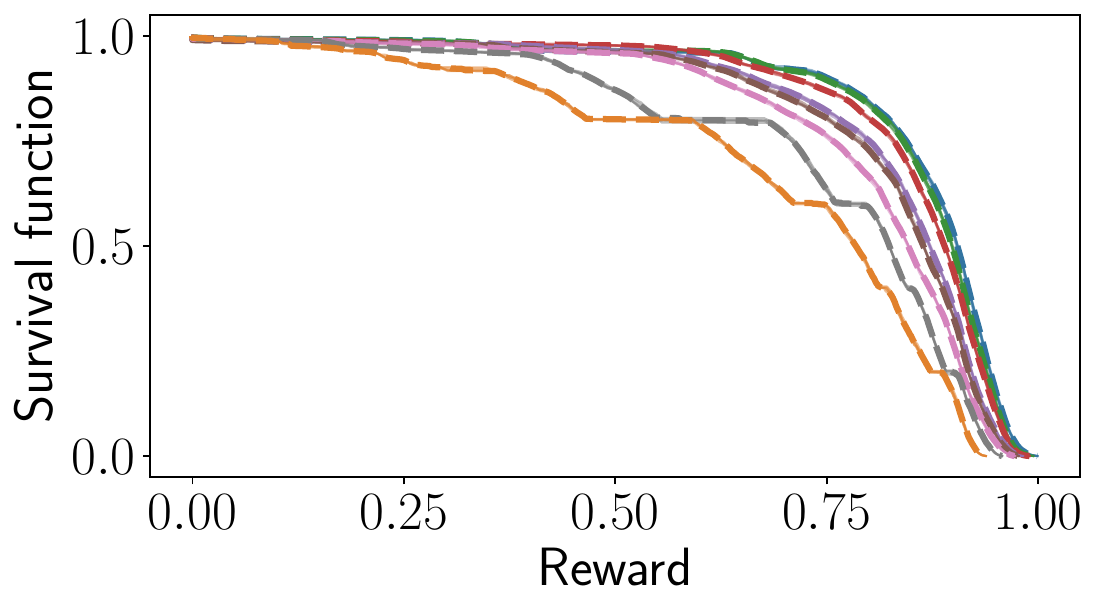}
&\includegraphics[width=0.3\linewidth]{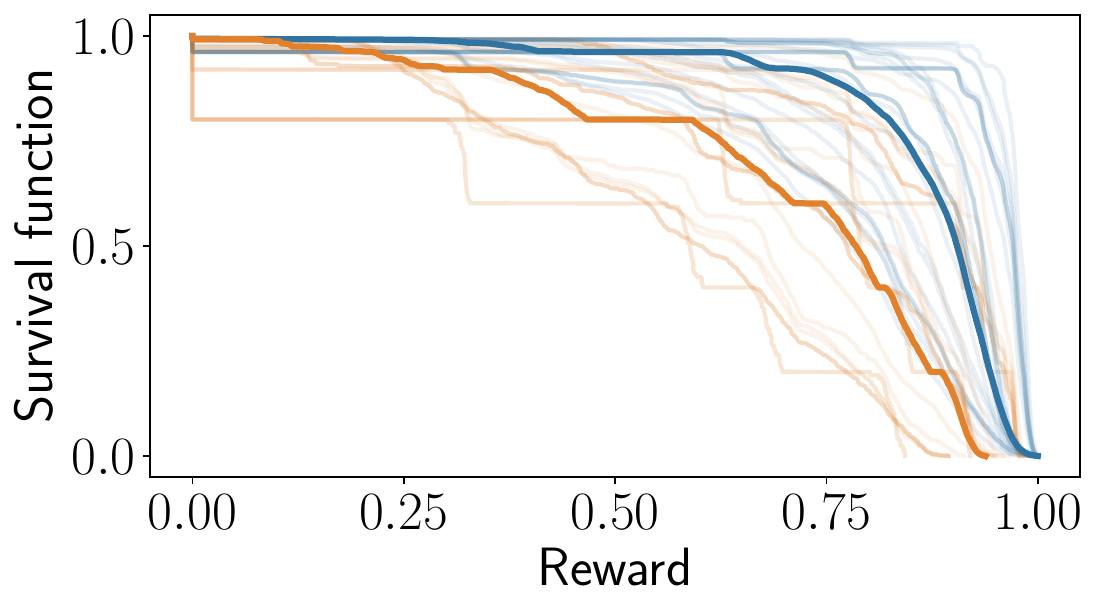}
&\includegraphics[width=0.3\linewidth]{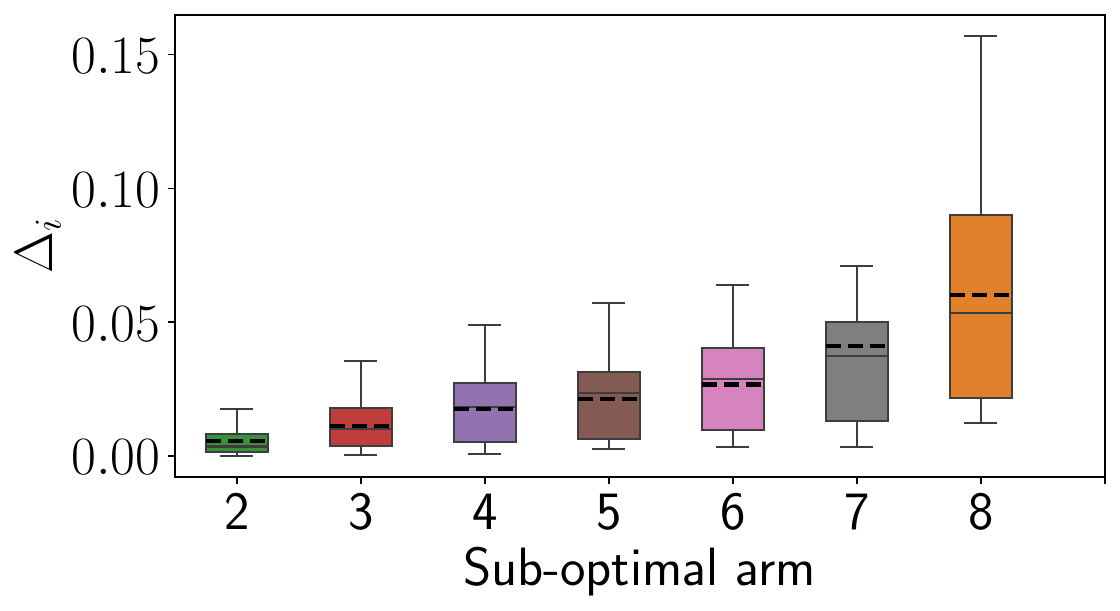}\\
\multicolumn{3}{c}{\includegraphics[width=1.0\linewidth]{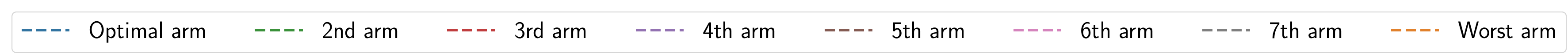}}
 \end{tabular}
\caption{(Left) The average empirical survival function of the reward (observed performances) per arm ranked per dataset.
We divided the reward sequence into five segments over the budget to show the distribution change over time.
Thin lines correspond to the survival function of different segments, visualizing the change over time. (Middle) The average ECDF per dataset for the best and worst arm with thin lines corresponding to individual datasets. (Right) The sub-optimality gap $\Delta_i$.}
\label{app:fig:SubSupernet_ecdf_reward_distributions}
\end{figure}

\begin{figure}[ht]
\centering
\textbf{Supernet}\\
\includegraphics[width=1.0\textwidth]{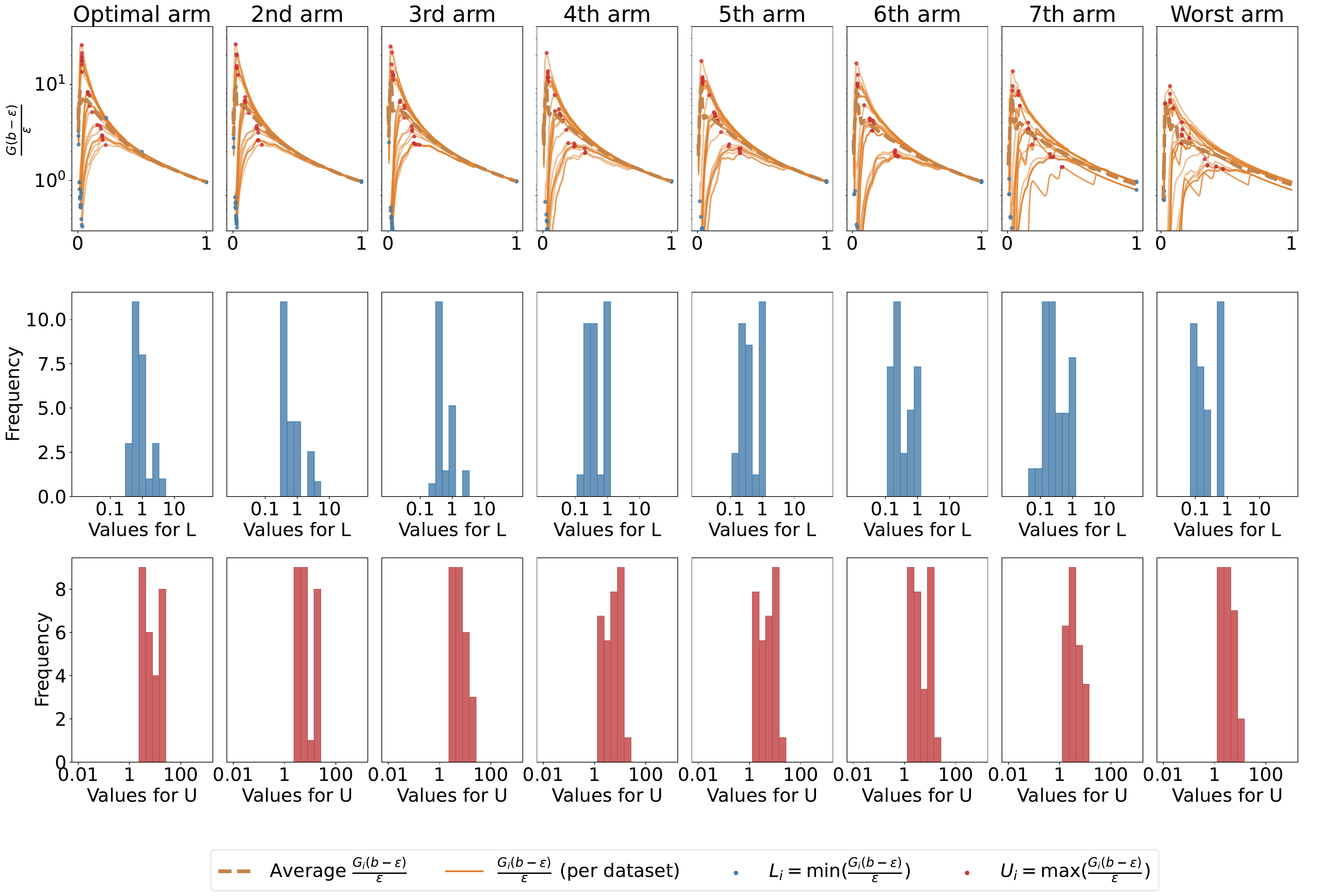}
\caption{ Arms are ordered by sub-optimality gap. (Top) Thin orange lines represent $\frac{G(b - \epsilon)}{\epsilon}$, while the blue and red points correspond to $L$ and $U$ for our empirical reward distributions (see \ourProposition{} for details). (Middle) Histogram of values for $L$. (Bottom) Histogram of values for $U$. }
\label{app:fig:SubSupernet_L_U_distributions}
\end{figure}

\textbf{Performance Analysis.}
Figure~\ref{app:fig:SubSupernet_results}  presents the average ranking and normalized loss of various bandit algorithms in this benchmark. As shown, \SuccessiveHalving{}, \MaxMedian{}, and \ERUCBS{} perform well with a small time budget but fail to explore sufficiently to identify the optimal arm. \RisingBandits{}, as a fixed-confidence best-arm identification method, struggles to find the optimal arm. In contrast, \OURALGO{} outperforms all other baselines, demonstrating better performance when searching the entire search space with a higher budget.

\begin{figure*}[htbp]
\begin{tabular}{c c c}
\begin{subfigure}{0.4\textwidth}
        \includegraphics[clip, trim=0cm 0cm 0cm 0cm, height=4.5cm]{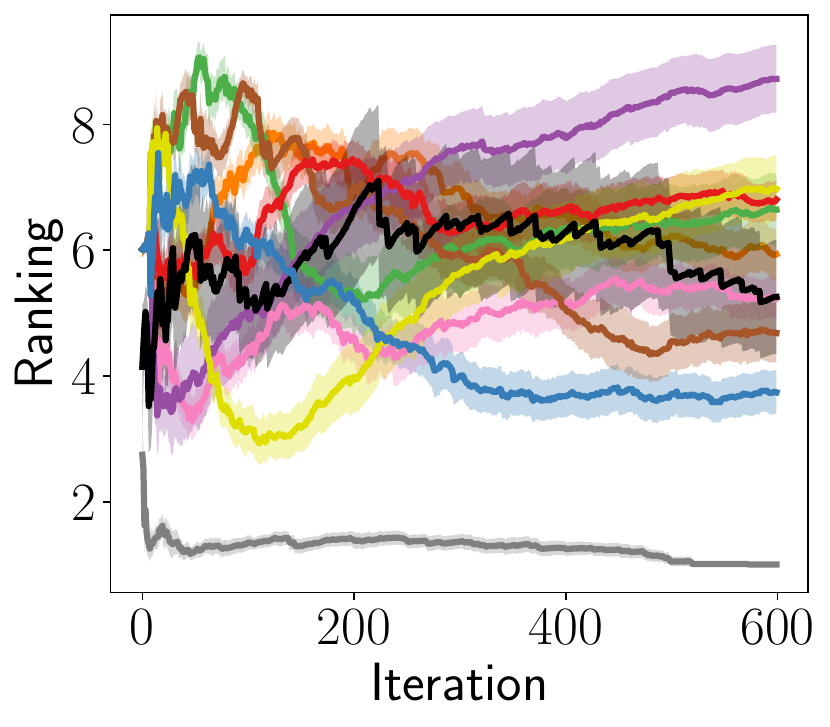}
\end{subfigure}
&     \begin{subfigure}{0.4\textwidth}
        \includegraphics[clip, trim=0cm 0cm  0cm 0cm, height=4.5cm]{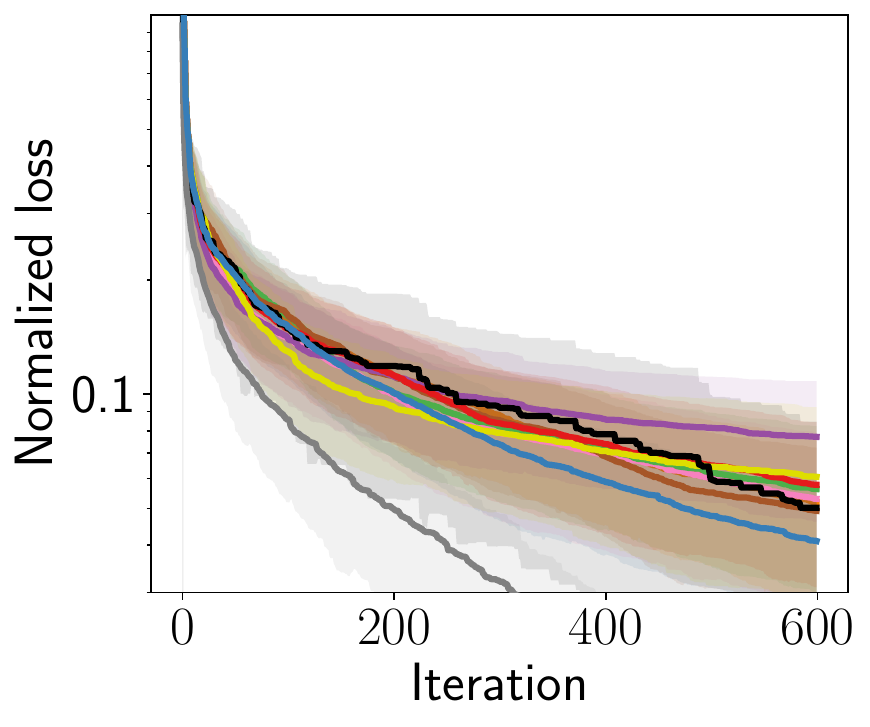}
\end{subfigure}
&     \begin{subfigure}{0.35\textwidth}
        \includegraphics[clip, trim=0cm -4cm 0.0cm 0cm, height=4cm]{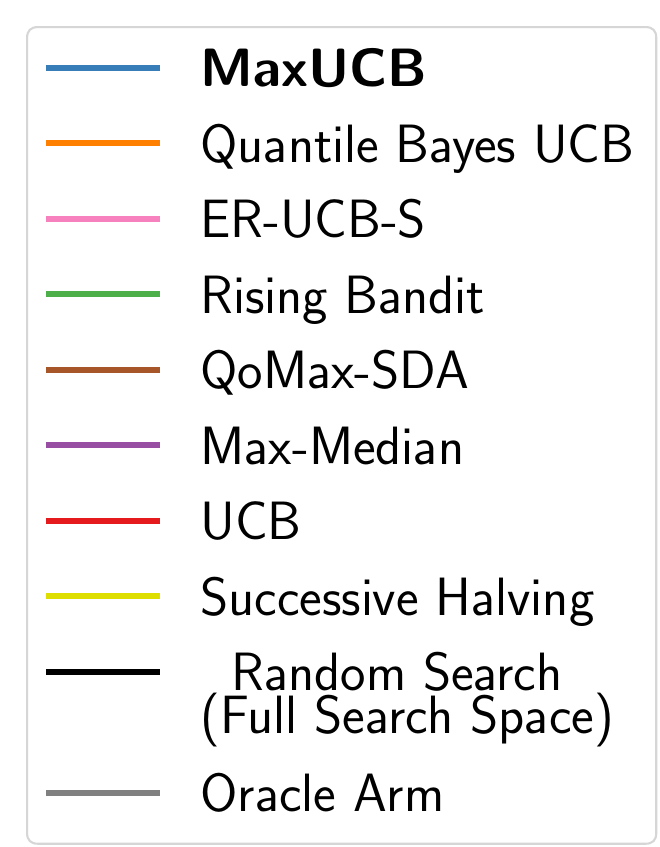}
\end{subfigure}%
\end{tabular}
\caption{Average rank and normalized loss of algorithms on \textit{Supernet Selection} benchmark, lower is better. }
\label{app:fig:SubSupernet_results}
\end{figure*}

\clearpage
\section*{NeurIPS Paper Checklist}
\label{app:checklist}
\begin{enumerate}

\item {\bf Claims}
    \item[] Question: Do the main claims made in the abstract and introduction accurately reflect the paper's contributions and scope?
    \item[] Answer: \answerYes{}
    \item[] Justification: We wrote the abstract and introduction to reflect the paper's contribution (a practical Bandit strategy for decomposed CASH) as detailed as possible and discuss it ability to generalize to other settings in the Conclusion (Section~\ref{Sec:Conclusion}) and in Appendix~\ref{app:theorem1_proof_extension}.
    \item[] Guidelines:
    \begin{itemize}
        \item The answer NA means that the abstract and introduction do not include the claims made in the paper.
        \item The abstract and/or introduction should clearly state the claims made, including the contributions made in the paper and important assumptions and limitations. A No or NA answer to this question will not be perceived well by the reviewers. 
        \item The claims made should match theoretical and experimental results, and reflect how much the results can be expected to generalize to other settings. 
        \item It is fine to include aspirational goals as motivation as long as it is clear that these goals are not attained by the paper. 
    \end{itemize}

\item {\bf Limitations}
    \item[] Question: Does the paper discuss the limitations of the work performed by the authors?
    \item[] Answer: \answerYes{} 
    \item[] Justification: We provide a separate paragraph for the limitations of our methods in Section~\ref{Sec:Conclusion}. 
    \item[] Guidelines:
    \begin{itemize}
        \item The answer NA means that the paper has no limitation while the answer No means that the paper has limitations, but those are not discussed in the paper. 
        \item The authors are encouraged to create a separate "Limitations" section in their paper.
        \item The paper should point out any strong assumptions and how robust the results are to violations of these assumptions (e.g., independence assumptions, noiseless settings, model well-specification, asymptotic approximations only holding locally). The authors should reflect on how these assumptions might be violated in practice and what the implications would be.
        \item The authors should reflect on the scope of the claims made, e.g., if the approach was only tested on a few datasets or with a few runs. In general, empirical results often depend on implicit assumptions, which should be articulated.
        \item The authors should reflect on the factors that influence the performance of the approach. For example, a facial recognition algorithm may perform poorly when image resolution is low or images are taken in low lighting. Or a speech-to-text system might not be used reliably to provide closed captions for online lectures because it fails to handle technical jargon.
        \item The authors should discuss the computational efficiency of the proposed algorithms and how they scale with dataset size.
        \item If applicable, the authors should discuss possible limitations of their approach to address problems of privacy and fairness.
        \item While the authors might fear that complete honesty about limitations might be used by reviewers as grounds for rejection, a worse outcome might be that reviewers discover limitations that aren't acknowledged in the paper. The authors should use their best judgment and recognize that individual actions in favor of transparency play an important role in developing norms that preserve the integrity of the community. Reviewers will be specifically instructed to not penalize honesty concerning limitations.
    \end{itemize}

\item {\bf Theory assumptions and proofs}
    \item[] Question: For each theoretical result, does the paper provide the full set of assumptions and a complete (and correct) proof?
    \item[] Answer: \answerYes{} 
    \item[] Justification:  Yes, we provide all necessary assumptions for the theoretical results in Section~\ref{sec:dataanalysis} and detail proofs for theories in Appendix~\ref{app:proofs}. We also discuss the validity of our assumptions using on empirical data analysis in Section~\ref{sec:dataanalysis}.
    \item[] Guidelines:
    \begin{itemize}
        \item The answer NA means that the paper does not include theoretical results. 
        \item All the theorems, formulas, and proofs in the paper should be numbered and cross-referenced.
        \item All assumptions should be clearly stated or referenced in the statement of any theorems.
        \item The proofs can either appear in the main paper or the supplemental material, but if they appear in the supplemental material, the authors are encouraged to provide a short proof sketch to provide intuition. 
        \item Inversely, any informal proof provided in the core of the paper should be complemented by formal proofs provided in appendix or supplemental material.
        \item Theorems and Lemmas that the proof relies upon should be properly referenced. 
    \end{itemize}

    \item {\bf Experimental result reproducibility}
    \item[] Question: Does the paper fully disclose all the information needed to reproduce the main experimental results of the paper to the extent that it affects the main claims and/or conclusions of the paper (regardless of whether the code and data are provided or not)?
    \item[] Answer: \answerYes{} 
    \item[] Justification:  Yes, we provide detailed information about all the methods we used. This includes a description of the computing infrastructure in Footnote~\ref{footnote:implementation}, an ablation study for hyperparameter selection in Section~\ref{Sec:Numerical_Experiments} and Appendix~\ref{app:aplation_study}, as well as metrics and experimental details in Appendix~\ref{app:details_on_experiments}. 
    \item[] Guidelines:
    \begin{itemize}
        \item The answer NA means that the paper does not include experiments.
        \item If the paper includes experiments, a No answer to this question will not be perceived well by the reviewers: Making the paper reproducible is important, regardless of whether the code and data are provided or not.
        \item If the contribution is a dataset and/or model, the authors should describe the steps taken to make their results reproducible or verifiable. 
        \item Depending on the contribution, reproducibility can be accomplished in various ways. For example, if the contribution is a novel architecture, describing the architecture fully might suffice, or if the contribution is a specific model and empirical evaluation, it may be necessary to either make it possible for others to replicate the model with the same dataset, or provide access to the model. In general. releasing code and data is often one good way to accomplish this, but reproducibility can also be provided via detailed instructions for how to replicate the results, access to a hosted model (e.g., in the case of a large language model), releasing of a model checkpoint, or other means that are appropriate to the research performed.
        \item While NeurIPS does not require releasing code, the conference does require all submissions to provide some reasonable avenue for reproducibility, which may depend on the nature of the contribution. For example
        \begin{enumerate}
            \item If the contribution is primarily a new algorithm, the paper should make it clear how to reproduce that algorithm.
            \item If the contribution is primarily a new model architecture, the paper should describe the architecture clearly and fully.
            \item If the contribution is a new model (e.g., a large language model), then there should either be a way to access this model for reproducing the results or a way to reproduce the model (e.g., with an open-source dataset or instructions for how to construct the dataset).
            \item We recognize that reproducibility may be tricky in some cases, in which case authors are welcome to describe the particular way they provide for reproducibility. In the case of closed-source models, it may be that access to the model is limited in some way (e.g., to registered users), but it should be possible for other researchers to have some path to reproducing or verifying the results.checklist
        \end{enumerate}
    \end{itemize}

\item {\bf Open access to data and code}
    \item[] Question: Does the paper provide open access to the data and code, with sufficient instructions to faithfully reproduce the main experimental results, as described in supplemental material?
    \item[] Answer: \answerYes{}{} 
    \item[] Justification: We make our code and data available at \url{https://anonymous.4open.science/r/CASH_with_Bandits/README.md}.
    \item[] Guidelines:
    \begin{itemize}
        \item The answer NA means that paper does not include experiments requiring code.
        \item Please see the NeurIPS code and data submission guidelines (\url{https://nips.cc/public/guides/CodeSubmissionPolicy}) for more details.
        \item While we encourage the release of code and data, we understand that this might not be possible, so “No” is an acceptable answer. Papers cannot be rejected simply for not including code, unless this is central to the contribution (e.g., for a new open-source benchmark).
        \item The instructions should contain the exact command and environment needed to run to reproduce the results. See the NeurIPS code and data submission guidelines (\url{https://nips.cc/public/guides/CodeSubmissionPolicy}) for more details.
        \item The authors should provide instructions on data access and preparation, including how to access the raw data, preprocessed data, intermediate data, and generated data, etc.
        \item The authors should provide scripts to reproduce all experimental results for the new proposed method and baselines. If only a subset of experiments are reproducible, they should state which ones are omitted from the script and why.
        \item At submission time, to preserve anonymity, the authors should release anonymized versions (if applicable).
        \item Providing as much information as possible in supplemental material (appended to the paper) is recommended, but including URLs to data and code is permitted.
    \end{itemize}

\item {\bf Experimental setting/details}
    \item[] Question: Does the paper specify all the training and test details (e.g., data splits, hyperparameters, how they were chosen, type of optimizer, etc.) necessary to understand the results?
    \item[] Answer: \answerYes{} 
    \item[] Justification:  Yes, in addition to making code available, we provide detailed information about all the methods we used, including the choice of hyperparameters. This includes a description of the computing infrastructure in Footnote~\ref{footnote:implementation}, an ablation study for hyperparameter selection in Section~\ref{Sec:Numerical_Experiments} and Appendix~\ref{app:aplation_study}, as well as metrics and experimental details in Section~\ref{app:details_on_experiments}. 
    \item[] Guidelines:
    \begin{itemize}
        \item The answer NA means that the paper does not include experiments.
        \item The experimental setting should be presented in the core of the paper to a level of detail that is necessary to appreciate the results and make sense of them.
        \item The full details can be provided either with the code, in appendix, or as supplemental material.
    \end{itemize}

\item {\bf Experiment statistical significance}
    \item[] Question: Does the paper report error bars suitably and correctly defined or other appropriate information about the statistical significance of the experiments?
    \item[] Answer: \answerYes{} 
    \item[] Justification: We report confidence intervals concerning the random seed after running the experiments multiple times (see Figure~\ref{fig:average_rank} and Table~\ref{tab:sign_test} in the main paper), and we discuss this further in the Appendix~\ref{app:details_on_experiments}.
    \item[] Guidelines:
    \begin{itemize}
        \item The answer NA means that the paper does not include experiments.
        \item The authors should answer "Yes" if the results are accompanied by error bars, confidence intervals, or statistical significance tests, at least for the experiments that support the main claims of the paper.
        \item The factors of variability that the error bars are capturing should be clearly stated (for example, train/test split, initialization, random drawing of some parameter, or overall run with given experimental conditions).
        \item The method for calculating the error bars should be explained (closed form formula, call to a library function, bootstrap, etc.)
        \item The assumptions made should be given (e.g., Normally distributed errors).
        \item It should be clear whether the error bar is the standard deviation or the standard error of the mean.
        \item It is OK to report 1-sigma error bars, but one should state it. The authors should preferably report a 2-sigma error bar than state that they have a 96\% CI, if the hypothesis of Normality of errors is not verified.
        \item For asymmetric distributions, the authors should be careful not to show in tables or figures symmetric error bars that would yield results that are out of range (e.g. negative error rates).
        \item If error bars are reported in tables or plots, The authors should explain in the text how they were calculated and reference the corresponding figures or tables in the text.
    \end{itemize}

\item {\bf Experiments compute resources}
    \item[] Question: For each experiment, does the paper provide sufficient information on the computer resources (type of compute workers, memory, time of execution) needed to reproduce the experiments?
    \item[] Answer: \answerYes{} 
    \item[] Justification: We provide a description of the computing infrastructure in Footnote~\ref{footnote:implementation}.
    \item[] Guidelines:
    \begin{itemize}
        \item The answer NA means that the paper does not include experiments.
        \item The paper should indicate the type of compute workers CPU or GPU, internal cluster, or cloud provider, including relevant memory and storage.
        \item The paper should provide the amount of compute required for each of the individual experimental runs as well as estimate the total compute. 
        \item The paper should disclose whether the full research project required more compute than the experiments reported in the paper (e.g., preliminary or failed experiments that didn't make it into the paper). 
    \end{itemize}
    
\item {\bf Code of ethics}
    \item[] Question: Does the research conducted in the paper conform, in every respect, with the NeurIPS Code of Ethics \url{https://neurips.cc/public/EthicsGuidelines}?
    \item[] Answer: \answerYes{} 
    \item[] Justification: Our work complies with the NeurIPS Code of Ethics. It involves no human subjects, sensitive data, or misuse risks, and uses only public datasets.
    \item[] Guidelines:
    \begin{itemize}
        \item The answer NA means that the authors have not reviewed the NeurIPS Code of Ethics.
        \item If the authors answer No, they should explain the special circumstances that require a deviation from the Code of Ethics.
        \item The authors should make sure to preserve anonymity (e.g., if there is a special consideration due to laws or regulations in their jurisdiction).
    \end{itemize}

\item {\bf Broader impacts}
    \item[] Question: Does the paper discuss both potential positive societal impacts and negative societal impacts of the work performed?
    \item[] Answer: \answerNA{} 
    \item[] Justification: This paper focuses on advancing the field of Machine Learning without direct societal impacts or specific applications that would need to be discussed.
    \item[] Guidelines:
    \begin{itemize}
        \item The answer NA means that there is no societal impact of the work performed.
        \item If the authors answer NA or No, they should explain why their work has no societal impact or why the paper does not address societal impact.
        \item Examples of negative societal impacts include potential malicious or unintended uses (e.g., disinformation, generating fake profiles, surveillance), fairness considerations (e.g., deployment of technologies that could make decisions that unfairly impact specific groups), privacy considerations, and security considerations.
        \item The conference expects that many papers will be foundational research and not tied to particular applications, let alone deployments. However, if there is a direct path to any negative applications, the authors should point it out. For example, it is legitimate to point out that an improvement in the quality of generative models could be used to generate deepfakes for disinformation. On the other hand, it is not needed to point out that a generic algorithm for optimizing neural networks could enable people to train models that generate Deepfakes faster.
        \item The authors should consider possible harms that could arise when the technology is being used as intended and functioning correctly, harms that could arise when the technology is being used as intended but gives incorrect results, and harms following from (intentional or unintentional) misuse of the technology.
        \item If there are negative societal impacts, the authors could also discuss possible mitigation strategies (e.g., gated release of models, providing defenses in addition to attacks, mechanisms for monitoring misuse, mechanisms to monitor how a system learns from feedback over time, improving the efficiency and accessibility of ML).
    \end{itemize}
    
\item {\bf Safeguards}
    \item[] Question: Does the paper describe safeguards that have been put in place for responsible release of data or models that have a high risk for misuse (e.g., pretrained language models, image generators, or scraped datasets)?
    \item[] Answer: \answerNA{} 
    \item[] Justification: Our work poses no known risks of misuse and does not involve high-risk models or datasets.
    \item[] Guidelines:
    \begin{itemize}
        \item The answer NA means that the paper poses no such risks.
        \item Released models that have a high risk for misuse or dual-use should be released with necessary safeguards to allow for controlled use of the model, for example by requiring that users adhere to usage guidelines or restrictions to access the model or implementing safety filters. 
        \item Datasets that have been scraped from the Internet could pose safety risks. The authors should describe how they avoided releasing unsafe images.
        \item We recognize that providing effective safeguards is challenging, and many papers do not require this, but we encourage authors to take this into account and make a best faith effort.
    \end{itemize}

\item {\bf Licenses for existing assets}
    \item[] Question: Are the creators or original owners of assets (e.g., code, data, models), used in the paper, properly credited and are the license and terms of use explicitly mentioned and properly respected?
    \item[] Answer: \answerYes{} 
    \item[] Justification: All external assets (datasets and code) used in our work are publicly available, properly cited, and used in accordance with their respective licenses.
    \item[] Guidelines:
    \begin{itemize}
        \item The answer NA means that the paper does not use existing assets.
        \item The authors should cite the original paper that produced the code package or dataset.
        \item The authors should state which version of the asset is used and, if possible, include a URL.
        \item The name of the license (e.g., CC-BY 4.0) should be included for each asset.
        \item For scraped data from a particular source (e.g., website), the copyright and terms of service of that source should be provided.
        \item If assets are released, the license, copyright information, and terms of use in the package should be provided. For popular datasets, \url{paperswithcode.com/datasets} has curated licenses for some datasets. Their licensing guide can help determine the license of a dataset.
        \item For existing datasets that are re-packaged, both the original license and the license of the derived asset (if it has changed) should be provided.
        \item If this information is not available online, the authors are encouraged to reach out to the asset's creators.
    \end{itemize}

\item {\bf New assets}
    \item[] Question: Are new assets introduced in the paper well documented, and is the documentation provided alongside the assets?
    \item[] Answer: \answerNA{} 
    \item[] Justification: This paper does not introduce any new assets.
    \item[] Guidelines:
    \begin{itemize}
        \item The answer NA means that the paper does not release new assets.
        \item Researchers should communicate the details of the dataset/code/model as part of their submissions via structured templates. This includes details about training, license, limitations, etc. 
        \item The paper should discuss whether and how consent was obtained from people whose asset is used.
        \item At submission time, remember to anonymize your assets (if applicable). You can either create an anonymized URL or include an anonymized zip file.
    \end{itemize}

\item {\bf Crowdsourcing and research with human subjects}
    \item[] Question: For crowdsourcing experiments and research with human subjects, does the paper include the full text of instructions given to participants and screenshots, if applicable, as well as details about compensation (if any)? 
    \item[] Answer: \answerNA{} 
    \item[] Justification: This work does not involve crowdsourcing or research with human subjects.
    \item[] Guidelines:
    \begin{itemize}
        \item The answer NA means that the paper does not involve crowdsourcing nor research with human subjects.
        \item Including this information in the supplemental material is fine, but if the main contribution of the paper involves human subjects, then as much detail as possible should be included in the main paper. 
        \item According to the NeurIPS Code of Ethics, workers involved in data collection, curation, or other labor should be paid at least the minimum wage in the country of the data collector. 
    \end{itemize}

\item {\bf Institutional review board (IRB) approvals or equivalent for research with human subjects}
    \item[] Question: Does the paper describe potential risks incurred by study participants, whether such risks were disclosed to the subjects, and whether Institutional Review Board (IRB) approvals (or an equivalent approval/review based on the requirements of your country or institution) were obtained?
    \item[] Answer: \answerNA{}{} 
    \item[] Justification:  This research does not involve human subjects or require IRB approval.
    \item[] Guidelines:
    \begin{itemize}
        \item The answer NA means that the paper does not involve crowdsourcing nor research with human subjects.
        \item Depending on the country in which research is conducted, IRB approval (or equivalent) may be required for any human subjects research. If you obtained IRB approval, you should clearly state this in the paper. 
        \item We recognize that the procedures for this may vary significantly between institutions and locations, and we expect authors to adhere to the NeurIPS Code of Ethics and the guidelines for their institution. 
        \item For initial submissions, do not include any information that would break anonymity (if applicable), such as the institution conducting the review.
    \end{itemize}

\item {\bf Declaration of LLM usage}
    \item[] Question: Does the paper describe the usage of LLMs if it is an important, original, or non-standard component of the core methods in this research? Note that if the LLM is used only for writing, editing, or formatting purposes and does not impact the core methodology, scientific rigorousness, or originality of the research, declaration is not required.
    \item[] Answer: \answerNA{} 
    \item[] Justification: We only used LLMs for text polishing and coding assistance.
    \item[] Guidelines:
    \begin{itemize}
        \item The answer NA means that the core method development in this research does not involve LLMs as any important, original, or non-standard components.
        \item Please refer to our LLM policy (\url{https://neurips.cc/Conferences/2025/LLM}) for what should or should not be described.
    \end{itemize}

\end{enumerate}

\end{document}